\newcommand{\codeurl}{\url{https://github.com/cognizant-ai-labs/aquasurf}}
\newcommand{\dataurl}{\url{https://github.com/cognizant-ai-labs/act-bench}}
\newcommand{\technique}{AQuaSurF\xspace}
\newcommand{\techniqueexpanded}{Activation Quality with a Surrogate Function\xspace}
\author{Garrett Joseph Bingham}
\address{binghamgarrett@gmail.com}
\title{Optimizing Neural Networks through Activation Function Discovery and Automatic Weight Initialization}
\theoremstyle{plain} 
\newtheorem{theorem}{Theorem}
\newtheorem{lemma}{Lemma}
\newcommand{\inn}{\mathrm{in}}
\newcommand{\out}{\mathrm{out}}
\newcommand{\E}{\mathrm{E}}
\newcommand{\Var}{\mathrm{Var}}
\newcommand*\diff{\mathop{}\!\mathrm{d}}
\newcommand{\vect}{\textrm{vec}}
\newcommand{\latexe}{{\LaTeX\kern.125em2%
                      \lower.5ex\hbox{$\varepsilon$}}}
\chardef\bslash=`\\	
\def\square{\RIfM@\bgroup\else$\bgroup\aftergroup$\fi
  \vcenter{\hrule\hbox{\vrule\@height.6em\kern.6em\vrule}%
                                              \hrule}\egroup}
\begin{document}

\copyrightpage          

%
%
%
\commcertpage           

\titlepage              

%
\begin{dedication}
\index{Dedication@\emph{Dedication}}%
Dedicated to anyone on a mental health journey;
\end{dedication}

\begin{acknowledgments}		
\index{Acknowledgments@\emph{Acknowledgments}}%

This dissertation wouldn't have been possible without incredible support from my family, friends, mentors, and coworkers.

To Mom, for understanding when no one else does.
To Dad, for time spent in the mountains.
To Hunter, for listening.
To Kate, for being my friend.
To Cole, for making me laugh.
To Luke, for being yourself.
To Blake, for reminding me how fun it is to explore.
To Drew, for giving the best hugs.

To all my wonderful friends from Ogden, Cambridge, New Haven, Bilbao, Wilmington, Budapest, New York, Seattle, Austin, and Oakland: thank you for making me smile.

Thank you to all of my professors, internship advisors, and especially the teachers from my childhood.  I love learning because of you.

The LEAF team at Cognizant directly supported my research over the last few years.  I am especially grateful to Elliot and Xin for our brainstorming sessions, to Mohak, Sid, and Dan for their help implementing my ideas, and to Olivier and Talia for making me feel at home away from home.

Finally, to Risto: thank you for giving me all the space to explore and all the support to succeed.

\end{acknowledgments}

%
\utabstract
\index{Abstract}%
\indent
Automated machine learning (AutoML) methods improve upon existing models by optimizing various aspects of their design.  While present methods focus on hyperparameters and neural network topologies, other aspects of neural network design can be optimized as well.  To further the state of the art in AutoML, this dissertation introduces techniques for discovering more powerful activation functions and establishing more robust weight initialization for neural networks.  These contributions improve performance, but also provide new perspectives on neural network optimization.  First, the dissertation demonstrates that discovering solutions specialized to specific architectures and tasks gives better performance than reusing general approaches.  Second, it shows that jointly optimizing different components of neural networks is synergistic, and results in better performance than optimizing individual components alone.  Third, it demonstrates that learned representations are easier to optimize than hard-coded ones, creating further opportunities for AutoML. The dissertation thus makes concrete progress towards fully automatic machine learning in the future.

\tableofcontents   

\listoftables      
\listoffigures     

%
%

\chapter{Introduction}

\epigraph{We want AI agents that can discover like we can, not which contain what we have discovered.}{Rich Sutton, \textit{The Bitter Lesson}}

Recursive self-improvement is hypothesized to be one means to artificial general intelligence (AGI) \cite{bostrom2014superintelligence, yudkowsky2007levels}.  The idea is straightforward: if an AI is sophisticated enough, it could design an improved version of itself.  The improved AI, which is smarter than the original, could then design an even more capable version of itself.  This process could continue, potentially indefinitely, resulting in an arbitrarily capable agent.  

Currently, AI systems do not exhibit this kind of recursive self-improvement.  In the field of automated machine learning (AutoML), however, a single step of self-improvement is possible.  Research in AutoML could therefore conceivably lead to recursive self-improvement.  This dissertation improves upon the state of the art in AutoML, and thus provides a stepping stone towards eventual AGI.

\section{Motivation}

Early machine learning approaches relied on human-engineered features in order to learn representations of data.  As computing power increased, these approaches gave way to more general methods that automatically extract relevant features from the data \cite{sutton2019bitter}.  This trend has persisted across subfields of machine learning.

For example, computer vision algorithms initially relied on detecting edges, corners, and other human-inspired image features \cite{lowe1999object}.  By leveraging additional compute and large amounts of data, convolutional neural networks learn such features automatically, and result in networks that surpass humans in image classification \cite{krizhevsky2017imagenet, he2015delving, deng2009imagenet}.

Similarly, in the field of natural language processing, manually crafted features like bag-of-words or TF-IDF (term frequency-inverse document frequency) were initially useful for rudimentary text understanding \cite{salton1975vector}. Later, techniques like word2vec captured more nuanced word semantics by learning representations automatically from a billion word corpus \cite{mikolov2013efficient}.  Today, large language models have pushed this trend even further, using enormous amounts of compute and data in order to model long-range dependencies in text and handle complex tasks such as question answering and machine translation \cite{vaswani2017attention, devlin2018bert, brown2020language}.

To date, methods leveraging large amounts of compute for general feature learning have been more successful than specialized approaches relying on human knowledge across a broad range of tasks.  Interestingly, this pattern extends beyond feature learning to algorithm design itself: AI can design better AI than humans can \cite{stanley2002evolving, real2019regularized, zoph2016neural}.

However, even state-of-the-art AutoML algorithms still make use of human-inspired designs in some areas \cite{elsken2019neural, wistuba2019survey}.  For example, many of these approaches will automate the design of a neural network topology, but will reuse human designed activation functions or weight initialization strategies.  This situation is suboptimal, but also points to an opportunity for improvement.  Indeed, suboptimal human designs could function as a bottleneck that prevents single step self-improvement from progressing to recursive self-improvement.  This bottleneck can be avoided by automating the entire machine learning pipeline.  

With this motivation, this dissertation introduces automated approaches to activation function discovery and weight initialization, two areas where suboptimal human designs are frequently used.  Experiments show the contributions already improve upon the state of the art.  As more compute becomes available, the approaches can be scaled up, providing even better solutions for machine learning problems in the future.  This dissertation thus provides concrete steps towards fully automatic machine learning.

\section{Challenges}

Automating activation function discovery and weight initialization introduces a number of challenges, discussed below.

Evaluating new activation functions is computationally expensive.  In order to do it, a neural network must be trained from scratch.  The training process is already costly, and repeating this process for many different activation functions can become prohibitively so.  In order to make this research possible, Chapters \ref{chap:gecco} and \ref{chap:pangaea} utilize distributed high performance computing.  The evaluation of different activation functions is parallelized across multiple machines, and the results are later aggregated.  However, Chapter \ref{chap:aquasurf} makes significant contribution to this area: It introduces a surrogate that makes the search for new activation functions orders of magnitude more efficient. The surrogate makes it possible to search for new activation functions with more standard hardware, or alternatively search in larger spaces with parallel hardware.

In order for a weight initialization system to be general, it must be able to stabilize the signal propagation for an arbitrary neural network.  This is difficult, due to the wide variety of neural network designs.  In Chapter \ref{chap:autoinit}, a method is developed to address this obstacle by analyzing neural network signal propagation at the level of individual layers.  Although many types of neural networks exist, modern architectures typically use similar kinds of layers.  The proposed method models signal propagation based on these layer types, and initializes the weights accordingly.  The method also provides fallback mechanisms in case unknown layer types are encountered: The signal propagation can be derived manually, or using Monte Carlo sampling.

Perhaps the biggest challenge of AutoML is overcoming previous human design biases, especially when it comes to interactions between different neural network components.  For example, ReLU is an extremely popular activation function, and dropout was designed with ReLU in mind \cite{nair2010rectified, srivastava2014dropout}.  However, when using the SELU activation function, dropout has to be modified because this activation function does not saturate to zero \cite{selu}.  Indeed, neural networks are famously brittle systems, and changing one component often ruins performance if the changes are not made carefully.  This dissertation addresses this challenge in two ways.  First, for activation functions, the dissertation discovers functions that are specialized to individual tasks.  Thus, even if a model has a set of hyperparameters that were tuned with a different activation function in mind, a better function can still be discovered by treating those hyperparameters as fixed and optimizing the activation function against them.  Second, the automated weight initialization system is designed to be as general as possible.  Thus, even if a neural network has suboptimal manual designs, the initialization algorithm still takes them into account and improves performance.  Indeed, experiments in Chapter \ref{chap:autoinit} show that the initialization approach is more robust to suboptimal hyperparameters than previous methods are.  Thus, although the eventual goal of AutoML is to fully automate the entire machine learning pipeline, the contributions in this dissertation are able to progress towards this goal while still being compatible with suboptimal human designs that are often used in practice.

\section{Approach}

This dissertation introduces automated methods for activation function discovery and weight initialization.  Four separate systems were created in order to thoroughly understand these areas.  Each system provides unique insights; the systems and discoveries are briefly summarized next.

In the first system, called CAFE, novel activation functions are discovered with a variety of approaches, namely exhaustive search in a smaller search space and random search and evolution in a larger space.  Evolution discovers activation functions that achieve high accuracy and outperform baseline functions like ReLU and Swish, demonstrating its creativity and ability to efficiently explore large spaces.  CAFE reveals two key findings.  The first is that optimizing the design of the activation function is by itself a means to improving the performance of neural networks.  This discovery therefore elevates the importance of activation functions in neural network design by showing that their design must be considered in order to maximize performance.  The second conclusion is that evolutionary search is creative, discovering designs unlikely to be created by humans.  It therefore provides a promising search mechanism for activation functions and other aspects of neural network design in the future.

The second system, PANGAEA, combines evolution and gradient descent into one optimization process.  PANGAEA extends CAFE by making it more flexible in several ways.  First, instead of using fixed tree structures, activation functions are represented as arbitrary computation graphs, and more powerful mutation operators are introduced to explore the larger search space efficiently.  Second, while the first system used fixed activation functions, PANGAEA utilizes parametric functions.  This construction allows the activation functions to change shape during the different stages of training and at different locations within a network.  Third, PANGAEA discovers specialized activation functions that are customized to specific tasks, leading to even better performance.  In conclusion, PANGAEA shows that better performance can be found by removing human design biases and by giving the optimization process more flexibility and freedom to be creative.  

In order to automate weight initialization for different neural networks, a third system, AutoInit, was developed.  While many initialization strategies have been proposed in the past, they usually apply only to neural networks with specific activation functions, topologies, or layer types.  This situation makes it difficult to evaluate new architectures or activation functions, because it is hard to initialize them properly with existing techniques. AutoInit addresses this issue by calculating analytic mean- and variance-preserving weight initialization for neural networks automatically.  It provides an appropriate default initialization automatically, resulting in better and more reliable performance.  Thus, in addition to better performance from optimizing hyperparameters, architectures, and activation functions, AutoInit shows that the weight initialization can also be optimized.  It therefore provides further evidence of the power of fully automatic machine learning.  In addition to improving performance on individual tasks, AutoInit accelerates neural architecture search and activation function discovery.  This result is particularly illuminating: It shows that there is a synergistic effect when multiple aspects of neural network design are optimized in tandem.  This result should further motivate progress in AutoML in the future.

The fourth system, AQuaSurF, makes activation function AutoML practical and scalable by introducing a surrogate approach. It also demonstrates that a surrogate can be learned from the data, leading to fundamental insights into what activation functions are made of. Note that early computer vision systems looked for human-designed features like edges and corners, but eventually the features were automatically learned with convolutional neural networks.  Similarly, researchers often design new activation functions based on intuitive characteristics like smoothness, groundedness, monotonicity, and limit behavior \cite{apicella2021survey, nwankpa2018activation}, but it should also be possible to learn automatically what features an activation function must possess in order to be successful.  To this end, AQuaSurF learns better representations of activation functions in a data-driven way.  Convolutional, residual, and vision transformer based architectures are trained from scratch with 2{,}913 different activation functions, resulting in three activation function benchmark datasets: \texttt{Act-Bench-CNN}, \texttt{Act-Bench-ResNet}, and \texttt{Act-Bench-ViT}.  Exploratory data analysis with these benchmark datasets reveals two activation function properties that are highly indicative of performance: (1) the spectrum of the Fisher information matrix associated with the model’s predictive distribution at initialization, and (2) the activation function’s output distribution.  Based on these features, a metric space is created where a low-dimensional representation of the activation functions can be learned. This space is then used as a surrogate in the search for good activation functions.  It turns out that the space is so powerful that out-of-the-box regression algorithms discover good activation functions in only tens of evaluations, improving performance on datasets as large as ImageNet.  Thus, AQuaSurF provides a unique perspective by showing that the underlying representation of the activation function is important.  By moving away from human-inspired encodings to automatically learned ones, AI is better able to improve itself.  Learning new representations for other aspects of neural network design may similarly be an important step towards fully automated machine learning in the future.

Thus, the four systems automate the design of activation functions and weight initialization.  Each system provides unique insights that will be useful in extending this work towards fully automated machine learning.  The first system, CAFE, provides a new perspective on neural network optimization, showing that performance can be improved by optimizing the design of the activation function.  The second system, PANGAEA, shows that introducing flexibility across multiple dimensions in the search process is both possible and beneficial, resulting in more creative and powerful solutions.  The third system, AutoInit, shows that proper weight initialization helps to fairly evaluate novel architectures and activation functions, thus accelerating research in these areas.  AutoInit also shows that optimizing multiple aspects of neural network design in tandem produces better results than just focusing on one aspect alone.  The fourth system, AQuaSurF, learns new representations for activation functions in a data-driven way; the resulting surrogate approach is orders of magnitude more efficient than previous work and may serve as a foundation for practical optimization of other aspects of neural network design in the future.

\section{Guide to the Reader}

The remainder of this dissertation is organized as follows:

Chapter \ref{chap:background} details previous research that inspired the contributions in this dissertation.  Chapter \ref{chap:gecco} introduces CAFE, demonstrating that designing better activation functions is a new way to optimize neural networks.  Chapter \ref{chap:pangaea} presents PANGAEA, which combines evolutionary search and gradient descent into one optimization process for optimizing parametric activation functions.  Chapter \ref{chap:autoinit} automates weight initialization with AutoInit, and shows that automating the design of both activation functions and weight initialization simultaneously leads to better results.  Chapter \ref{chap:aquasurf} introduces the activation function benchmark datasets, and learns a new representation for activation functions.  The new representation leads to the AQuaSurF surrogate-based method for activation function optimization that is orders of magnitude more efficient than existing work.  Chapter \ref{chap:discussion} discusses the contributions of this dissertation, and includes ideas for possible future research.  The main conclusions are reviewed in Chapter \ref{chap:conclusion}.

\chapter{Background}
\label{chap:background}

This chapter reviews related research that motivates and provides a foundation for the work in this dissertation.  First, automated machine learning is discussed, since it provides the main inspiration for this work.  Next, research on activation functions is discussed, with connections to Chapters \ref{chap:gecco} and \ref{chap:pangaea}.  Then, weight initialization research is reviewed as inspiration to the contributions in Chapter \ref{chap:autoinit}.  Finally, the Fisher information matrix is introduced, which informs the surrogate approach in Chapter \ref{chap:aquasurf}. 

\section{Automated Machine Learning}

Building a machine learning system requires making many design decisions: hyperparameters, neural architectures, data augmentation, and other components need to be configured \cite{hutter2019automated}.  Instead of ad hoc decisions by human researchers, automated machine learning (AutoML) serves to make one or more of these decisions in an automated, principled manner.  AutoML simultaneously makes machine learning accessible, since human expertise is not required in every scenario.

AutoML is a broad research area.  This section reviews the components of AutoML that are most relevant to this dissertation.

\subsection{Hyperparameter Optimization}

Neural networks have many hyperparameters that need to be chosen before learning can proceed, and choosing suitable values can play a large role in how successful the learning process is.  There exist many approaches to hyperparameter optimization, including Bayesian optimization, bilevel programming, evolution strategies, random search, and others \cite{klein2017fast, franceschi2018bilevel, maclaurin2015gradient, loshchilov2016cma, feurer2015initializing, yang2020hyperparameter, bergstra2011algorithms, bergstra2012random, feurer2019hyperparameter, jaderberg2017population}.

Hyperparameters can often be represented as a low-dimensional, real-valued vector, making it straightforward to experiment with existing optimization methods.  In contrast, other aspects of neural network design, such as their topology, activation function, or weight initialization strategy, are more complicated objects that require a more creative optimization approach.

\subsection{Neural Architecture Search}

In neural architecture search \citep[NAS;][]{wistuba2019survey, elsken2019neural, zoph2016neural, wang2019scalable, cai2017efficient, cai2018path, chen2018searching, chen2019progressive, gong2019autogan, zhong2018practical, zoph2018learning, liu2017hierarchical, luo2018neural}, the goal is to design a neural network architecture automatically.  NAS approaches typically focus on optimizing the type and location of the layers and the connections between them.  A popular NAS approach is neuroevolution, where neural networks are optimized with evolutionary algorithms \cite{so2019evolved, suganuma2017genetic, wistuba2018deep, suganuma2018exploiting, stanley2009hypercube, risi2010evolving, martinez2020lights, real2017large, stanley2002evolving, xie2017genetic, real2019regularized, yao1997new, angeline1994evolutionary, gomez2008accelerated}.  Reinforcement learning \cite{baker2016designing, zoph2016neural, gao2019graphnas}, Monte Carlo tree search \cite{negrinho2017deeparchitect}, gradient descent \cite{liu2018darts}, and random search \cite{li2020random} are also used.  NAS approaches often use standard choices for other components like the activation function, weight initialization, loss function, and so on.  These components have received less attention in AutoML, but can similarly be optimized, as demonstrated in this dissertation.

\subsection{Other Aspects of Neural Network Design}

Just like the topology of a neural network can be optimized, so too can other aspects of neural network design.  For instance, Gonzalez and Miikkulainen \cite{gonzalez2020improved, gonzalez2020evolving} used a genetic algorithm \cite{koza1992genetic, whitley1994genetic} to construct novel loss functions, and then optimized the coefficients of the loss functions with a covariance-matrix adaptation evolutionary strategy.  They discovered a loss function that results in faster training and higher accuracy compared to the standard cross-entropy loss.  \citet{liu2020evolving} evolved normalization-activation layers.  They searched for a computation graph that replaced both batch normalization and the activation function in neural networks.  The design of components like the learning rate schedule \cite{carvalho2020autolr, defazio2023learning}, data augmentation strategy \cite{cubuk2020randaugment, lim2019fast, cubuk2018autoaugment}, optimization algorithm \cite{chen2023symbolic, alber2018backprop, bello2017neural, cui2018evolutionary}, and other objects can similarly be automated \cite{houthooft2018evolved, real2020automl}.

\subsection{Zero-Cost Proxies}
A common drawback to AutoML is its computational cost.  Candidate designs must be evaluated in order to understand their performance, and this evaluation is often expensive.  In order to partially alleviate this issue, recent work has developed proxy measures that approximate the final performance of neural networks \cite{white2021powerful, shen2021proxybo, mellor2020neural}.  Instead of training the network, these proxies use only cheap surrogate calculations.  These proxies inspired the surrogate approach in Chapter \ref{chap:aquasurf}, where the efficiency of activation function search was dramatically improved.

\subsection{Meta-Learning}

The field of meta-learning, also called learning to learn, is another related subfield of AutoML \cite{finn2017meta, finn2017model, hospedales2020meta, antoniou2018train, denevi2018learning, hochreiter2001learning, li2017meta, mishra2018simple}.  Rather than optimize the design of the model for a single task, meta-learning approaches typically incorporate knowledge and experience from multiple tasks in order to guide the learning algorithm on new, unseen tasks.  

\section{Activation Functions}

Activation functions are a crucial component of neural networks.  They are what allow deep networks to learn complex, nonlinear relationships from the training data.  As such, various types of activation functions have been used in neural networks over the years, including manually designed functions and automatically discovered ones \cite{nwankpa2018activation, apicella2021survey, godfrey2017parameterized, godfrey2019evaluation, karlik2011performance, sitzmann2020implicit, vijayaprabakaran2020towards, jadon2019improving, urban2018neural}.

\subsection{Manually Designed Functions}

Early on, sigmoid and tanh were often used as activation functions \cite{chen1990back, cybenko1989approximation}.  These functions have limited range, and thus were helpful in restricting the magnitude of signals propagating through neural network layers.  However, their asymptotic behavior often caused optimization difficulty due to vanishing gradients.  ReLU, being unbounded as $x \rightarrow \infty$, addressed this limitation, and to this day is arguably the most widely used activation function \cite{nair2010rectified}.  Later, Leaky ReLU was introduced to address the dying neuron problem: with ReLU, neurons often become stuck and always output zero \cite{maas2013rectifier}.  The ELU activation function contains a negative saturation regime which helps to control the forward propagated variance, and the SELU activation function contributed subtle refinements to ELU for even better stable signal propagation \cite{elu, selu}.  

\subsection{Automatically Discovered Functions}

The activation functions above are some of the most widely known and illustrative examples, but many more exist \cite{nwankpa2018activation, apicella2021survey}.  In general, researchers often design activation functions to have specific properties in order to increase performance.  This practice works in certain cases, but eventually new and better activation functions are needed as more difficult tasks arise.  This situation points to two opportunities.  First, automating the design of activation functions is a promising means to discovering better functions and avoiding human design biases.  Second, using one activation function for all tasks is likely suboptimal, while leveraging different activation functions specialized to specific tasks can yield better results.  Fortunately, the two opportunities are complimentary: while it may be time-consuming for researchers to manually design specialized activation functions for new tasks, the specialization can be done easily through automated search.

Indeed, prior work on automatic activation function discovery has shown that it is a promising research area.  The approaches have been based on reinforcement learning (RL), evolutionary computation, or gradient descent, and are summarized next.

\paragraph{Reinforcement Learning}
\citet{DBLP:conf/iclr/RamachandranZL18} used RL to design novel activation functions.  They discovered multiple functions, but analyzed just one in depth: $\textrm{Swish}(x) = x \cdot \sigma(x)$.  Of the top eight functions discovered, only Swish and $\max\{x, \sigma(x)\}$ consistently outperformed ReLU across multiple tasks, suggesting that improvements are possible but often task specific.  This hypothesis is confirmed in Chapter \ref{chap:pangaea}, where specialized activation functions are discovered for different tasks, yielding higher performance.

\paragraph{Evolutionary Computation}
Marchisio et al. \cite{marchisio2018methodology} and Hagg et al. \cite{hagg2017evolving} used evolutionary computation to select activation functions from predefined lists, but did not discover novel functions.  \citet{basirat2018quest} used a genetic algorithm to discover novel task-specific piecewise activation functions, but the functions did not outperform ELiSH and HardELiSH, two hand-designed activation functions \citep{basirat2018quest}.  This dissertation scales up these approaches in a number of meaningful ways in order to advance the state of the art, including using larger search spaces and more powerful exploration methods.

\paragraph{Gradient Descent}
Learnable activation functions (LAFs) encode functions with general forms such as polynomial, rational, or piecewise linear, and utilize gradient descent to discover optimal parameterizations during training \citep{apl-agostinelli2014learning, pade-molina2019pad, goyal1906learning, tavakoli2020splash}.  The general forms allow most LAFs to approximate arbitrary continuous functions.  However, just because a LAF can represent an activation function does not guarantee that the optimal function will be discovered by gradient descent.  In Chapter \ref{chap:pangaea}, evolutionary computation and gradient descent are combined into a single optimization process that discovers activation functions that outperform LAFs.

\section{Weight Initialization}

In order to ensure that a new activation function does not cause vanishing or exploding signals, it is important to initialize the weights of the network appropriately.  This section reviews previous research in neural network weight initialization, which has focused on stabilizing signals by accounting for specific components of neural networks such as the activation function, topology, layer types, and training data distribution. However, these approaches fail to generalize to networks that do not meet certain design restrictions, limiting their effectiveness.  Chapter \ref{chap:autoinit} introduces AutoInit, an adaptive weight initialization algorithm designed to address these limitations.

\subsection{Activation-Function-Dependent Initialization}
\label{sec:autoinit:weight_init_for_afns}
As is common in the literature, \texttt{fan\_in} and \texttt{fan\_out} refer to the number of connections feeding into and out of a node, respectively.  \citet{lecun2012efficient} recommend sampling weights from a distribution with mean zero and standard deviation $\sqrt{\texttt{fan\_in}}$.  This initialization encourages propagated signals to have variance approximately one if used with an activation function symmetric about the origin, like $1.7159\tanh\left(\frac{2}{3}x\right)$ or $\tanh(x) + \alpha x$ for some small choice of $\alpha$. The standard sigmoid $f(x) = 1/(1+e^{-x})$ induces a mean shift and should not be used in this setting.

\citet{glorot2010understanding} proposed one initialization strategy to ensure unit variance in the forward-propagated signals and another to ensure unit variance for the backward-propagated gradients.  As a compromise between the two strategies, they initialized weights by sampling from $\mathcal{U}\left(-\frac{\sqrt{6}}{\sqrt{\texttt{fan\_in} + \texttt{fan\_out}}}, \frac{\sqrt{6}}{\sqrt{\texttt{fan\_in} + \texttt{fan\_out}}}\right)$.  They also avoided sigmoid, and instead chose symmetric functions with unit derivatives at 0, such as tanh or Softsign$(x) = x/(1+|x|)$.

\citet{he2015delving} introduced the PReLU activation function and a variance-preserving weight initialization to be used with it that samples weights from $\mathcal{N}(0, \sqrt{2/\texttt{fan\_in}})$.  Similarly, \citet{selu} introduced SELU, an activation function with self-normalizing properties.  These properties are only realized when SELU is used with the initialization scheme by \citet{lecun2012efficient}.

The above weight initialization strategies attempt to solve the same fundamental problem: How can weights be scaled so that repeated applications of the activation function do not result in vanishing or exploding signals?  While these approaches solve this problem in a few special cases, the issue is more general.  Manually deriving the correct scaling is intractable for complicated activation functions.  One approach for an arbitrary function $f$ is to sample Gaussian inputs $x$ and adjust the weights according to the empirical variance $\textrm{Var}(f(x))$ \cite{brock2021characterizing}.  Chapter \ref{chap:autoinit} proposes an alternative and potentially more accurate approach: integration by adaptive quadrature \cite{piessens2012quadpack, 2020SciPy-NMeth, numpy-harris2020array}.  The result is a weight initialization strategy that is compatible with any integrable activation function.  Indeed, previous activation-function-dependent initializations are special cases of the AutoInit algorithm.

\subsection{Topology-Dependent Initialization}
The activation-function-dependent initializations discussed above were designed for neural networks composed of convolutional or dense layers.  After the introduction of residual networks \citep[ResNets;][]{he2016deep, he2016identity}, new weight initialization schemes had to be developed to account for the effect of shortcut connections and various types of residual branches.

\citet{taki2017deep} analyzed signal propagation in plain and batch-normalized ResNets.  They developed a new weight initialization to stabilize training, but did not consider modifications like using deeper residual blocks or reordering components like the activation function or batch normalization layers.  In contrast, AutoInit is topology-agnostic: It adapts to any of these changes automatically.

\citet{zhang2019fixup} introduced Fixup, an initialization method that rescales residual branches to stabilize training.  Fixup replaces batch normalization in standard and wide residual networks \cite{ioffe2015batch, he2016deep, he2016identity, zagoruyko2016wide} and replaces layer normalization \cite{ba2016layer} in transformer models \cite{vaswani2017attention}.  The disadvantages of this scheme are that it only applies to residual architectures, needs proper regularization to get optimal performance, and requires additional learnable scalars that slightly increase model size.

\citet{arpit2019initialize} proposed a new initialization scheme for weight-normalized networks \cite{salimans2016weight} that relies on carefully scaling weights, residual blocks, and stages in the network.  Like related approaches, this technique improves performance in specific cases, but imposes design constraints, like requiring ReLU activation functions and a specific Conv $\rightarrow$ ReLU $\rightarrow$ Conv block structure.

Just as tanh-inspired weight initialization does not stabilize training of ReLU networks, initialization schemes designed for non-residual networks fail with ResNets \cite{hanin2018start, bachlechner2020rezero, brock2021characterizing}.
This observation suggests that future classes of neural networks will again require developing new weight initializations.  Additionally, practitioners with models that do not fit neatly within the restricted settings of existing weight initialization research are left to derive their own initialization or use a suboptimal one.  For example, many initialization schemes assume that the activation function is ReLU \cite{he2015delving, taki2017deep, arpit2019initialize, zhang2019fixup, de2020batch}.  Indeed, ReLU is currently the most popular activation function \cite{nwankpa2018activation, apicella2021survey}, but it is not the best choice in every case \cite{hanin2019deep}.  ReLU prevents dynamical isometry \cite{saxe2013exact, pennington2017resurrecting}, weakens adversarial training \cite{xie2020smooth}, and results in poorer accuracy compared to other activation functions in certain tasks \cite{bingham2022discovering}.  A general weight initialization strategy that does not impose architectural constraints and achieves good performance in diverse settings is needed. AutoInit is designed to meet this challenge.

\subsection{Layer-Dependent Initialization}
\citet{hendrycks2016adjusting} noted that dropout layers \cite{srivastava2014dropout} also affect the variance of forward-propagated signals in a network.  To stabilize training properly, it is necessary to take dropout layers and the specific dropout rate into account in weight initialization.  In fact, pooling, normalization, recurrent, padding, concatenation, and other layer types affect the signal variance in a similar way, but current initialization schemes do not take this effect into account.  AutoInit is designed to adapt to each of these layer types dynamically, and can be extended to include new layer types as they are introduced in the future.

\subsection{Data-Dependent Initialization}
\citet{mishkin2015all} fed data samples through a network and normalized the output of each layer to have unit variance.  \citet{krahenbuhl2015data} adopted a similar approach, but opted to normalize along the channel dimension instead of across an entire layer.  Data-dependent weight initializations are most similar in spirit to AutoInit; they rely on empirical variance estimates derived from the data in order to be model-agnostic.  However, data-dependent weight initializations introduce a computational overhead \cite{mishkin2015all}, and are not applicable in settings where data is not available or its distribution may shift over time, such as online learning or reinforcement learning.  The quality of the initialization is also dependent on the number of the data samples chosen, and suffers when the network is very deep \cite{zhang2019fixup}.  AutoInit instead uses an analytic approach for greater efficiency and higher accuracy.

\subsection{Summary}
Previous techniques solved the initialization problem for networks with specific activation functions, topologies, and layer types. In contrast, AutoInit does not impose design constraints, depend on data samples, or incur a parameter overhead \cite{dauphin2019metainit, zhu2021gradinit} and is therefore a good starting point especially in new settings.

\section{Fisher Information Matrix}

Proper initialization makes evaluating new activation functions more reliable; however, this evaluation step is still computationally expensive.  By providing an inexpensive prediction of final performance, a surrogate model could accelerate the search for new functions.  The Fisher information matrix (FIM) makes this surrogate approach possible.

Consider a neural network $f$ parameterized by weights $\bm{\theta}$ and given inputs $\mathbf{x}$ drawn from a training distribution $Q_{\mathbf{x}}$.  This neural network defines the conditional distribution $R_{\mathbf{y} | f(\mathbf{x} ; \bm{\theta})}$.  The FIM associated with this model is
\begin{equation}
    \mathbf{F} = \mathop{\E}_{\mathclap{\substack{
        \mathbf{x} \sim Q_{\mathbf{x}} \\
        \mathbf{y} \sim R_{\mathbf{y} | f(\mathbf{x} ; \bm{\theta})}
    }}}
    \left[
        \nabla_{\bm{\theta}} \mathcal{L}(\mathbf{y}, f(\mathbf{x}; \bm{\theta})) 
        \nabla_{\bm{\theta}} \mathcal{L}(\mathbf{y}, f(\mathbf{x}; \bm{\theta}))^\top
    \right],
\end{equation}
where the loss function $\mathcal{L}(\mathbf{y}, \mathbf{z})$ represents the negative log-likelihood associated with $R_{\mathbf{y} | f(\mathbf{x} ; \bm{\theta})}$.

The FIM and its eigenvalues $\lambda(\mathbf{F})$ are important quantities in machine learning with many uses.  For example, in optimal experiment design \cite{emery1998optimal}, the cost of experimentation is minimized by optimizing a chosen criterion.  Different criteria with different statistical guarantees are used, but they are typically functions of the eigenvalues of the FIM, such as the maximum or minimum eigenvalue, or the trace of the FIM (sum of the eigenvalues) or determinant of the FIM (product of the eigenvalues).  

Past work has also used the eigenvalues of the FIM to analyze the learning dynamics of neural networks in order to infer optimal values of the batch size or learning rate \cite{liao2018approximate, hayase2021spectrum, karakida2021pathological, furusho2019effects, furusho2020theoretical}.  Because the FIM is related to the optimization landscape at initialization, it provides insights on the learning dynamics of SGD \cite{jastrzebski2021catastrophic} and the dynamics of signal propagation at different layers of neural networks \cite{huang2020layer}.  This information can be used to inform network design by identifying specific layers that are poorly conditioned and therefore difficult to optimize.  The FIM is also used in second-order optimization algorithms for neural networks \cite{grosse2016kronecker, martens2015optimizing, martens2018kronecker}.  These approaches are more expensive but often require fewer iterations than first-order methods.

Chapter \ref{chap:aquasurf} builds on this work by using the FIM in a surrogate approach to accelerate activation function search.  However, instead of choosing one optimality criterion or only considering one summary statistic, all of the eigenvalues of the FIM are kept and an optimal distribution is learned experimentally.  These eigenvalues are then used as a feature vector to predict the performance of different activation functions efficiently.

\section{Conclusion}

Work in AutoML has shown that it is possible to optimize the design of complex structures.  Previous work on activation function design and weight initialization strategies shows that they are critical components for neural network performance, and that improving upon them can lead to better results.  These results suggest that automating activation function design and weight initialization is a promising avenue towards improving the current state of the art as well as an important step progressing towards fully automatic machine learning.

\chapter{CAFE: Evolutionary Optimization of Deep Learning Activation Functions}
\label{chap:gecco}

This chapter demonstrates that novel activation functions can outperform baseline functions by statistically significant margins.  Because this chapter serves as a foundation for other work in this dissertation, part of the motivation for this work is exploratory.  To this end, activation functions are discovered in both small and large search spaces using a variety of search algorithms, including exhaustive search, random search, and evolution with two kinds of fitness functions.  The experiments show that evolutionary search is particularly effective and creative \cite{lehman2020surprising}.  This work, called CAFE (Creative Activation Function Evolution), was done in collaboration with graduate student William Macke \cite{bingham2020gecco}.

\section{Evolving Activation Functions}
\label{sec:gecco:evolving_functions}
This section presents the approach to evolving activation functions, introducing the search space, mutation and crossover implementations, and the overall evolutionary algorithm.

\subsection{Search Space}
\label{sec:gecco:searchspace}
Each activation function is represented as a tree consisting of unary and binary operators. Functions are grouped in layers such that two unary operators always feed into one binary operator.  The following operators, modified slightly from the search space of \citet{DBLP:conf/iclr/RamachandranZL18}, are used:
\begin{itemize}
    \item \textbf{Unary:} 0, 1, $x$, $-x$, $|x|$, $x^2$, $x^3$, $\sqrt{x}$, $e^x$, $e^{-x^2}$, \allowbreak $\log(1+e^x)$, $\log(|x + \epsilon|)$, $\sin(x)$, $\textrm{sinh}(x)$, $\textrm{arcsinh}(x)$, $\cos(x)$, $\textrm{cosh}(x)$, \allowbreak $\textrm{tanh}(x)$, $\textrm{arctanh}(x)$, $\max\{x, 0\}$, $\min\{x, 0\}$, $\sigma(x)$, $\textrm{erf}(x)$,  $\textrm{sinc}(x)$;
    \item \textbf{Binary:} $x_1 + x_2$, $x_1 - x_2$, $x_1 \cdot x_2$, $x_1 / (x_2 + \epsilon)$, $\max\{x_1, x_2\}$, $\min\{x_1, x_2\}$.
\end{itemize}
Following Ramachandran et al., a ``core unit'' is an activation function that can be represented as \texttt{core\_unit = binary(unary1(x), unary2(x))}.  Let $F$ be the set of balanced core unit trees.  $S$ is then defined as a family of search spaces
\begin{equation}
S_{d\in \mathbb{N}} = \{f\in F \mid \textrm{depth}(f) = d\}.
\end{equation}
For example, $S_1$ corresponds to the set of functions that can be represented by one core unit, $S_2$ represents functions of the form: \texttt{core\_unit1(core\_unit2(x), core\_unit3(x))}, and so on. Examples of functions in $S_2$ are illustrated in Figures \ref{fig:gecco:mutation} and \ref{fig:gecco:crossover}.

\begin{figure}
    \centering
    \includegraphics[width=0.5\linewidth]{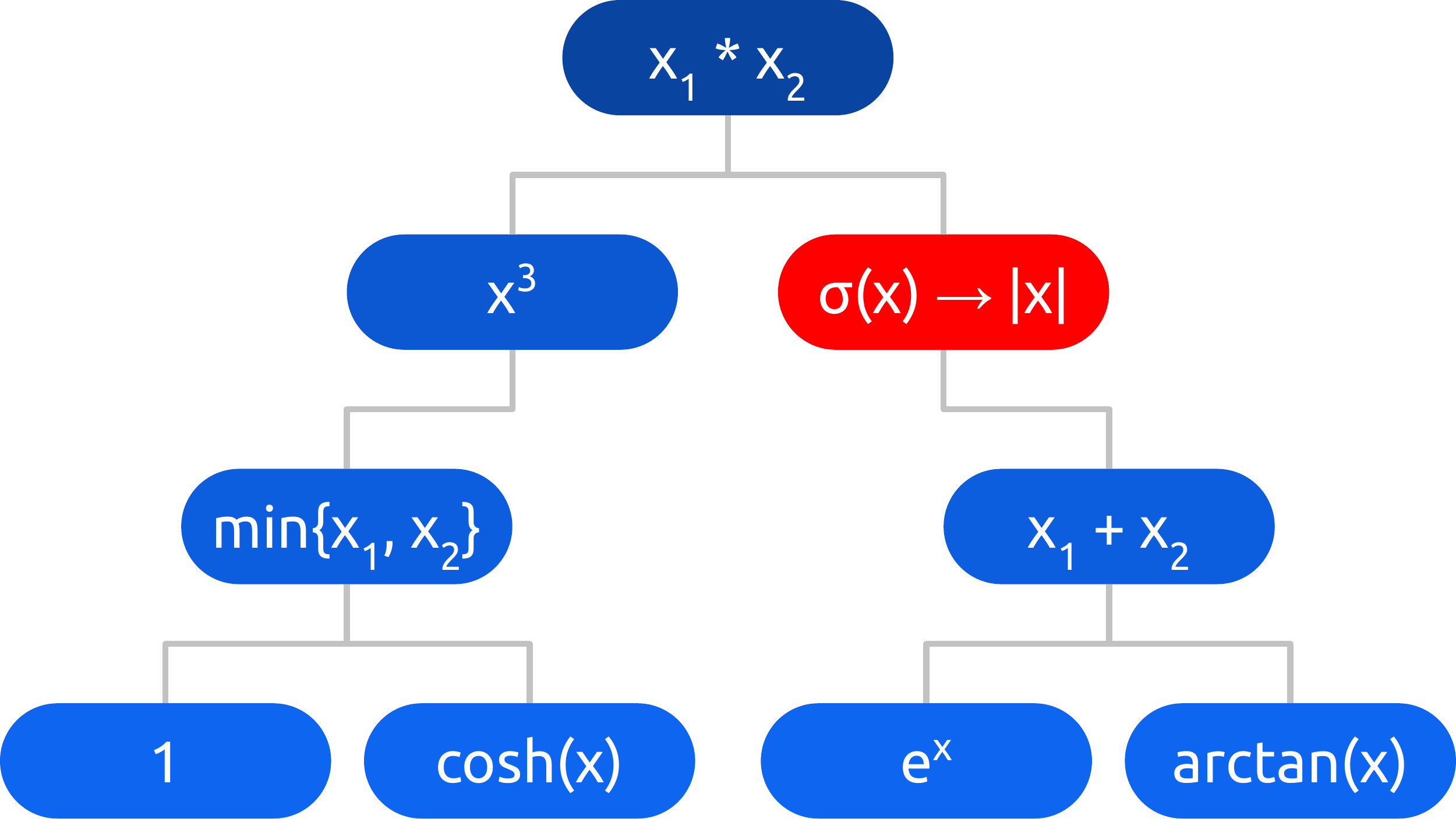}
    \caption{An example of activation function mutation.  The tree represents the activation function {\normalfont $(\min\{1, \textrm{cosh}(x)\})^3 * \sigma(e^x + \textrm{arctan}(x))$}.  One node in the tree is selected uniformly at random and replaced with another operator in the search space, also uniformly at random.  The resulting activation function is {\normalfont $(\min\{1, \textrm{cosh}(x)\})^3 * |e^x + \textrm{arctan}(x)|$}.  By introducing variability, mutation ensures evolution explores the search space sufficiently.  It prevents high-performing activation functions from overly skewing the early generations of the search process.}
    \label{fig:gecco:mutation}
\end{figure}

\subsection{Mutation}
In mutation, one node in an activation function tree is selected uniformly at random.  The operator at that node is replaced with another random operator in the search space.  Unary operators are always replaced with unary operators, and binary operators with binary operators.  An example of mutation is shown in Figure~\ref{fig:gecco:mutation}.  Theoretically, mutation alone is sufficient for constructing any activation function.  However, preliminary experiments showed that crossover can increase the rate at which good activation functions are found.

\begin{figure}
    \centering
    \includegraphics[width=0.75\linewidth]{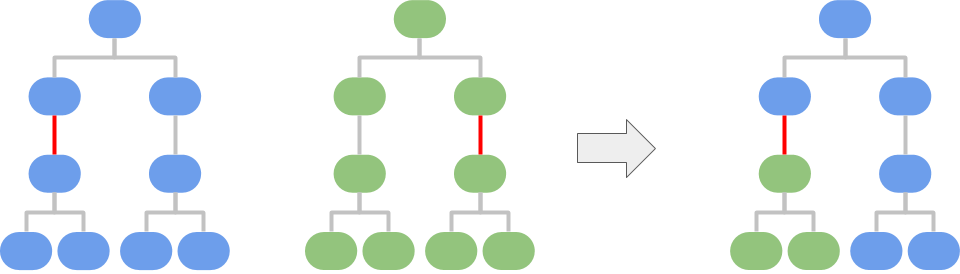}
    \caption{In crossover, two parent activation functions exchange randomly selected subtrees of equivalent depth, producing one new child activation function.  Crossover enables the best activation functions to pass on their characteristics to the rest of the population.  This mechanism is what enables evolution to discover better activation functions more quickly than random search.}
    \label{fig:gecco:crossover}
\end{figure}

\subsection{Crossover}
In crossover, two parent activation functions exchange randomly selected subtrees, producing one new child activation function.  The subtrees are constrained to be of the same depth, ensuring the child activation function is a member of the same search space as its parents.  Crossover is depicted in Figure~\ref{fig:gecco:crossover}.

\subsection{Evolution}
Starting with a population of $N$ activation functions, a neural network is trained with each function on a given training dataset.  Each function is assigned a fitness $p_i$ equal to the softmax of an evaluation metric $L_i$.  This metric could be either accuracy or negative loss obtained on the validation dataset.  More specifically, 
\begin{equation}
    p_i = \frac{e^{L_i}}{\sum\limits_{j=1..N}e^{L_j}}.
\end{equation}
The softmax operation converts the fitness values to a probability distribution, allowing functions to be randomly sampled.  From the $N$ activation functions, $2(N-m)$ are selected with replacement for reproduction with probability proportional to their fitness.  Crossover followed by mutation is applied to the selected activation functions to obtain a new population of size $N-m$.  In order to increase exploration, $m$ randomly generated functions are added to a population that will again be of size $N$.  This process is repeated for several generations, and the activation functions with the best performance over the history of the search are returned as a result.
\section{Experiments}
\label{sec:gecco:experiments}

This section presents experiments with the WRN-28-10 and WRN-40-4 architectures on the CIFAR-10 and CIFAR-100 datasets \cite{zagoruyko2016wide, krizhevsky2009learning}.  In the experiments, the default ReLU activation functions are replaced with candidate functions, and all other settings are kept to default values.  Implementation details are in Appendix \ref{ap:details:gecco}.

\subsection{Search Strategies}
Three different techniques are used to explore the space of activation functions: exhaustive search, random search, and evolution. Exhaustive search evaluates every function in $S_1$, while random search and evolution explore $S_2$. It is noteworthy that evolution is able to discover high-performing activation functions in $S_2$, where the search space contains over 41 billion possible function strings.  In each experiment, the top three activation functions by validation accuracy from the entire search are kept.  These functions are then reevaluated on the test set, and the median accuracy of five independent runs is reported.

\paragraph{Exhaustive Search}

Ramachandran et al. search for activation functions using reinforcement learning and argue that simple activation functions consistently outperform more complicated ones \cite{DBLP:conf/iclr/RamachandranZL18}.  Although evolution is capable of discovering high-performing, complex activation functions in an enormous search space, exhaustive search can be effective in smaller search spaces.  $S_1$, for example, contains 3,456 possible function strings.
\paragraph{Random Search}

An illustrative baseline comparison with evolution is random search.  Instead of evolving a population of 50 activation functions for 10 generations, 500 random activation functions from $S_2$ are grouped into 10 ``generations'' of 50 functions each.

\paragraph{Evolution}
As shown in Figure \ref{fig:gecco:gecco_evolution}, evolution discovers better activation functions more quickly than random search in $S_2$, a search space where exhaustive search is infeasible.  During evolution, candidate activation functions are assigned a fitness value based on either accuracy or loss on the validation set.  Accuracy-based fitness favors exploration over exploitation: activation functions with poor validation accuracy still have a reasonable probability of surviving to the next generation.  A hypothetical activation function that achieves 90\% validation accuracy is only 2.2 times more likely to be chosen for the next generation than a function with only 10\% validation accuracy since $e^{0.9} / e^{0.1} \approx 2.2$.

\begin{figure}[ht]
    \centering
    \includegraphics[width=0.5\linewidth]{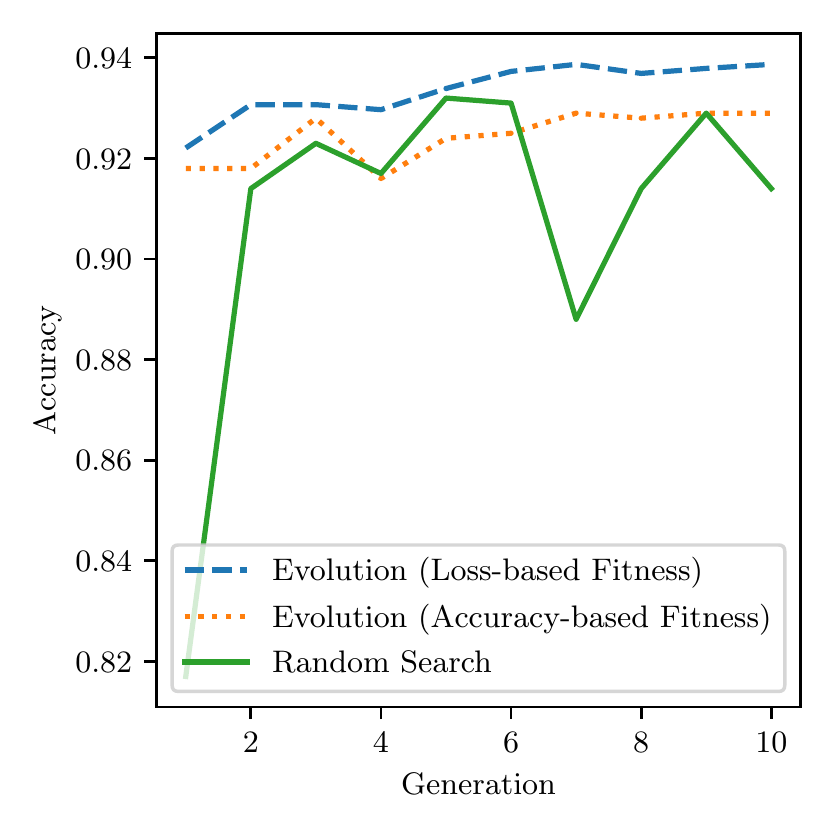}
    \caption{Top validation accuracy per generation for three search strategies in the $S_2$ search space.  There are 50 activation functions in each generation of a search.  All activation functions are trained with WRN-28-10 on CIFAR-10 for 50 epochs, and the highest validation accuracy obtained among all activation functions in a given generation is reported. Evolution with loss-based fitness finds better activation functions more quickly than evolution with accuracy-based fitness or random search. The first generation of random search is poor due to chance; since each generation is independent, the generations of random search could be arbitrarily reordered.}
    \label{fig:gecco:gecco_evolution}
\end{figure}

Loss-based fitness sharply penalizes poor activation functions.  It finds high-performing activation functions more quickly, and gives them greater influence over future generations.  An activation function with 0.01 validation loss is 21,807 times more likely to be selected for the following generation than a function with a validation loss of 10.  $e^{-0.01} / e^{-10} \approx 21{,}807$.

Both experiments begin with an initial population of $N=50$ random activation functions, and run through 10 generations of evolution.  Each new generation of 50 activation functions is comprised of the top five functions from the previous generation, $m=10$ random functions, and 35 functions created by applying crossover and mutation to existing functions in the population. 

\subsection{Activation Function Specialization}

An important question is the extent to which activation functions are specialized for the architecture and dataset for which they were evolved, or perform well across different architectures and datasets.  To address this question, activation functions discovered for WRN-28-10 on CIFAR-10 are transferred to WRN-40-4 on CIFAR-100.  These activation functions are compared with the best from a small search (1.9K activation functions from $S_1$) with WRN-40-4 on CIFAR-100. 

\section{Results}
\label{sec:gecco:results}
This section presents the experimental results, which demonstrate that evolved activation functions can outperform baseline functions like ReLU and Swish.

\definecolor{color1}{rgb}{0.12156862745098039, 0.4666666666666667, 0.7058823529411765}
\definecolor{color2}{rgb}{1.0, 0.4980392156862745, 0.054901960784313725}
\definecolor{color3}{rgb}{0.17254901960784313, 0.6274509803921569, 0.17254901960784313}
\begin{table}
\centering
\caption{The top three activation functions discovered by each search strategy, along with the baseline activation functions ReLU and Swish.  The functions achieved the highest validation set accuracy after 50 epochs of training with WRN-28-10 on CIFAR-10.  The final test accuracies listed on the right are the median of five runs after training WRN-28-10 from scratch for 200 epochs with each activation function on CIFAR-10 and CIFAR-100.  Function plots have domain $x \in (-5, 5)$ but have different ranges.  Functions marked with an asterisk (*) occasionally did not train to completion due to asymptotes at $x = -\epsilon$.  Exhaustive search finds multiple activation functions in a simple search space ($S_1$) that consistently outperform ReLU and Swish.  Evolution with loss-based fitness was the only technique that was able to discover such activation functions in a more complicated search space ($S_2$), suggesting that it is the most promising technique for scaling up in the future.\\}
\adjustbox{max width=\linewidth}{%
\begin{tabular}{clcc}
\toprule
 & & \multicolumn{2}{c}{\textbf{Accuracy}} \\
 \textbf{Function Plots} & & \multicolumn{1}{c}{CIFAR-10} & \multicolumn{1}{c}{CIFAR-100} \\ 
 \midrule
 
 \multirow{4}{*}{\includegraphics[width=10ex]{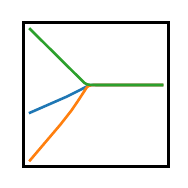}} &
 \multicolumn{1}{c}{\textbf{Evolution with Loss-based Fitness ($S_2$)}} & & \\
 & {\LARGE \textcolor{color1}{\textbullet}} 
 $(e^{(\min\{\textrm{erf}(x), 0\}) - (\max\{x, 0\})}) * (\min\{(\arctan((x)^3)) * (\max\{|x|, 0\}), 0\})$ 
 & \textbf{94.1} & 73.9 \\
 & {\LARGE \textcolor{color2}{\textbullet}} 
 $(e^{\max\{\min\{\textrm{erf}(x), 0\}, \max\{x, 0\}\}}) * (\min\{(\arctan((x)^3)) * (\max\{|x|, 0\}), 0\})$         
 & 10.0 & 01.0 \\
 & {\LARGE \textcolor{color3}{\textbullet}} 
 $(-((\arctan((x)^3)) * (\cos(1)))) * (-((\arctan(\min\{x, 0\})) * (\max\{|x|, 0\})))$              
 & 93.9 & 74.1 \\ 
 \midrule
 
 \multirow{4}{*}{\includegraphics[width=10ex]{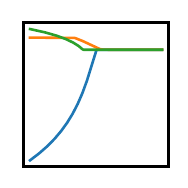}} & \multicolumn{1}{c}{\textbf{Evolution with Accuracy-based Fitness ($S_2$)}} & & \\
 & {\LARGE \textcolor{color1}{\textbullet}} 
 $\min\{e^{-(\min\{(\sinh(x))^2, (0)^2\})^2}, \min\{\min\{\textrm{erf}(\log(1 + e^x)), \textrm{arcsinh}(x)\}, 0\}\}$ 
 & 10.0 & 01.0 \\
 & {\LARGE \textcolor{color2}{\textbullet}} 
 $\min\{\cos(\max\{(\min\{x, 0\})^3, \log(1 + e^1)\}), e^{-((|\max\{x, 0\}|) + (e^{\sigma(x)}))^2}\}$   
 & 93.5 & 72.5 \\
 & {\LARGE \textcolor{color3}{\textbullet}} 
 $\max\{\max\{(\log(|(\min\{x, 0\}) + \epsilon|)) * (\sigma(\textrm{erf}(x))), 0\}, 0\}$ \hfill (*) 
 & 92.5 & 72.3 \\ 
 \midrule
 
 \multirow{4}{*}{\includegraphics[width=10ex]{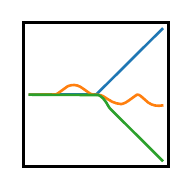}} & \multicolumn{1}{c}{\textbf{Random Search ($S_2$)}} & & \\
 & {\LARGE \textcolor{color1}{\textbullet}} 
 {\tiny $(\min\{\max\{(\log(|(x) + \epsilon|))^3, -(\log(|(x) + \epsilon|))\}, 0\}) + (e^{(\min\{\tanh(x), 0\}) + (\log(|(\max\{x, 0\}) + \epsilon|))})$ \hfill (*) }
 & 93.8 & 73.9 \\
 & {\LARGE \textcolor{color2}{\textbullet}} 
 {\footnotesize $(\arctan(\min\{\sinh(\sin(x)), \arctan(\max\{x, 0\})\})) * (\tanh((-(\sin(x))) * (\textrm{arcsinh}(x))))$ }          
 & 93.3 & 72.1 \\ 
 & {\LARGE \textcolor{color3}{\textbullet}} 
 $(\max\{\frac{\min\{(x)^3, 0\}}{\max\{\sin(x), 0\} + \epsilon}, 0\}) - (\min\{(x)^2, \max\{x, 0\}\})$  
 & 93.9 & 73.2 \\ 
 \midrule
 
 \multirow{4}{*}{\includegraphics[width=10ex]{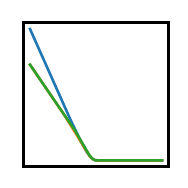}} & \multicolumn{1}{c}{\textbf{Exhaustive Search ($S_1$)}} & & \\
 & {\LARGE \textcolor{color1}{\textbullet}} 
 $(\arctan(x)) * (\min\{x, 0\})$ 
 & 94.0 & \textbf{74.5} \\
 & {\LARGE \textcolor{color2}{\textbullet}} $(\tanh(x)) * (\min\{x, 0\})$
 & \textbf{94.1} & 74.3 \\
 & {\LARGE \textcolor{color3}{\textbullet}} 
 $(\min\{x, 0\}) * (\textrm{erf}(x))$
 & 94.0 & 74.2 \\
 \midrule
 
 \multirow{4}{*}{\includegraphics[width=10ex]{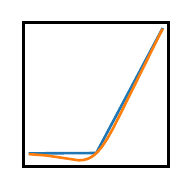}} & \multicolumn{1}{c}{\textbf{Baseline Activation Functions}} & & \\
 & {\LARGE \textcolor{color1}{\textbullet}} ReLU$(x)$                                                   & 94.0 & 73.3 \\
 & {\LARGE \textcolor{color2}{\textbullet}} Swish$(x)$                                                  & 93.8 & 71.1\\ \\
 \bottomrule
\end{tabular}%
}
\label{tab:gecco:results}
\end{table}

\subsection{Improving Performance}

Table \ref{tab:gecco:results} lists the activation functions that achieved the highest validation set accuracies after 50 epochs of training with WRN-28-10 on CIFAR-10.  The top three activation functions for each search strategy are included.  To emulate their true performance, a WRN-28-10 with each activation function was trained for 200 epochs five times on both CIFAR-10 and CIFAR-100 and evaluated on the test set.  The median accuracy of these tests is reported in Table~\ref{tab:gecco:results}.  Although no search was performed on CIFAR-100 with WRN-28-10, the functions that perform well on CIFAR-10 successfully generalize to CIFAR-100.

The best three activation functions discovered through exhaustive search in $S_1$ outperform ReLU and Swish. This finding shows how important it is to have an effective search method. There are good functions even in $S_1$. It is likely that there are even better functions in $S_2$, but with billions of possible functions, a more sophisticated search method is necessary.

The activation functions discovered by random search have unintuitive shapes (Table~\ref{tab:gecco:results}).  Although they fail to outperform the baseline activation functions, it is impressive that they still consistently reach a reasonable accuracy.  One of the functions (marked with (*) in Table \ref{tab:gecco:results}) discovered by random search occasionally failed to train to completion due to an asymptote at $x = -\epsilon$.

Evolution with accuracy-based fitness is less effective because it does not penalize poor activation functions severely enough.  One of the functions failed to learn anything better than random guessing.  It was likely too sensitive to random initialization or was unable to learn with the slightly different learning rate schedule of a full 200-epoch training.  Another function (marked with (*) in Table \ref{tab:gecco:results}) often did not train to completion due to an asymptote at $x = -\epsilon$.  The one function that consistently trained well still failed to outperform ReLU.

Evolution with loss-based fitness is able to find good functions in $S_2$. One of the three activation functions discovered by evolution outperformed both ReLU and Swish on CIFAR-10, and two of the three discovered outperformed ReLU and Swish on CIFAR-100.  Figure \ref{fig:gecco:best_per_generation} shows the top activation function after each generation of loss-based evolution.  This approach discovered both novel and unintuitive functions that perform reasonably well, as well as simple, smooth, and monotonic functions that outperform ReLU and Swish. It is therefore the most promising search method in large spaces of activation functions.  

\begin{figure}
    \centering
    \includegraphics[width=0.5\linewidth]{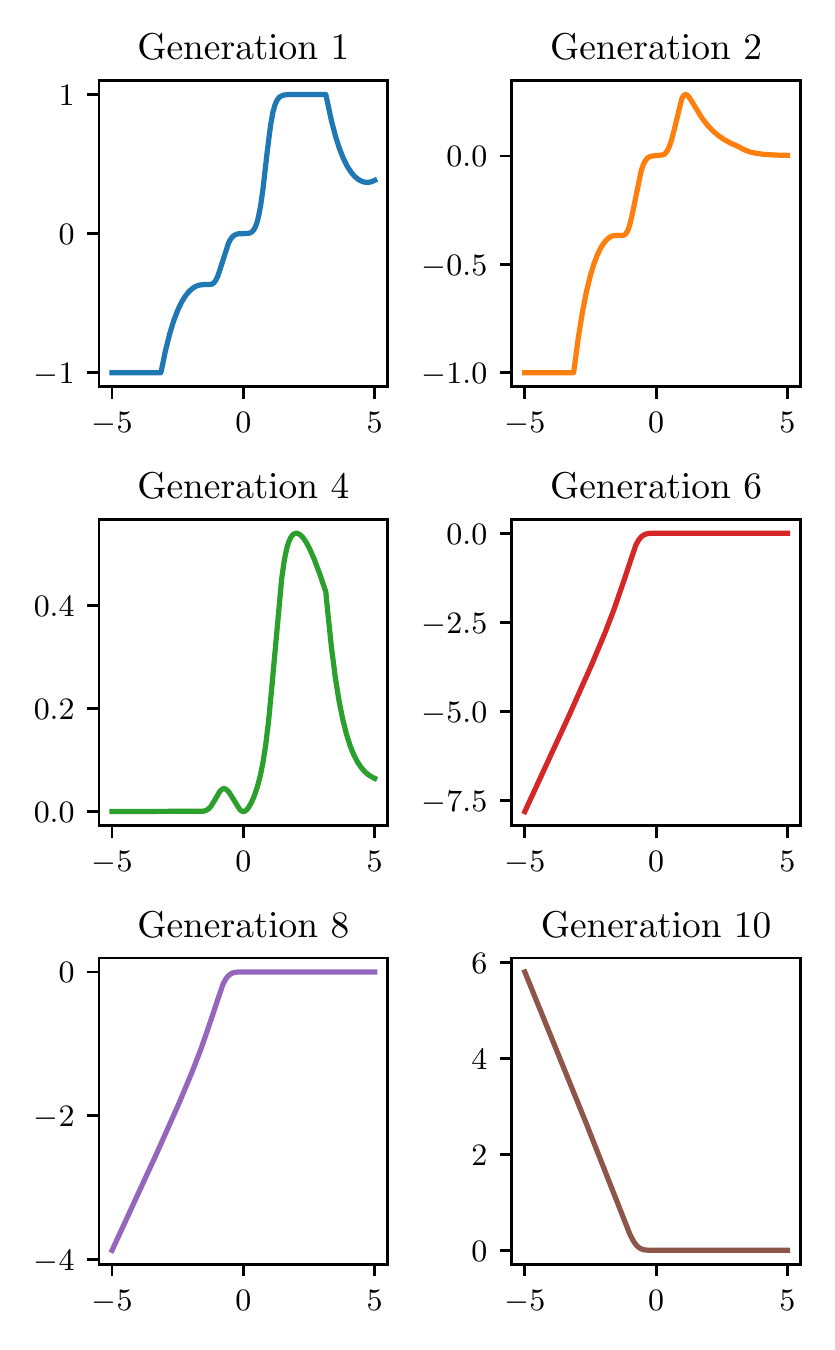}
    \caption{Best activation function by WRN-28-10 validation accuracy after 50 epochs of training on CIFAR-10 using evolution with loss-based fitness.  Top validation accuracy improves from 92.2 in Generation 1 to 93.9 in Generation 10. Evolution is able to discover effective activation functions that are not likely to be discovered by hand.  In particular, the top activation function discovered by evolution is smooth everywhere, unlike ReLU, which is not smooth at $x=0$.  This difference is likely the reason for its superior performance.}
    \label{fig:gecco:best_per_generation}
\end{figure}

The performance gains on CIFAR-10 are consistent but small, and the improvement on CIFAR-100 is larger.  It is possible that more difficult datasets provide more room for improvement that a novel activation function can exploit.

To evaluate the significance of these results, WRN-28-10 was trained on CIFAR-100 for 200 epochs 50 times with ReLU, 50 times with the best function found by exhaustive search in $S_1$, $(\arctan(x)) * (\min\{x, 0\})$, 25 times with Swish, and 15 times with the best function found by evolution in $S_2$, $(-((\arctan((x)^3)) * (\cos(1)))) * (-((\arctan(\min\{x, 0\})) * (\max\{|x|, 0\})))$.  Table \ref{tab:gecco:statistical_significance} shows 95\% confidence intervals and the results of independent $t$-tests comparing the mean accuracies achieved with each activation function.  The results show that replacing a baseline activation function with an evolved one results in a statistically significant increase in accuracy.

\begin{table}[t]
    \centering

    \caption{Confidence intervals (95\%) and independent $t$-tests comparing mean accuracies after training WRN-28-10 on CIFAR-100 for 200 epochs with the best function from $S_1$, $(\arctan(x)) * (\min\{x, 0\})$, the best function from $S_2$, $(-((\arctan((x)^3)) * (\cos(1)))) * (-((\arctan(\min\{x, 0\})) * (\max\{|x|, 0\})))$, and baseline functions ReLU and Swish.  The discovered functions perform significantly better than both baselines.  WRN-28-10 was not trained 50 times with Best from $S_2$ and with Swish due to time constraints.  In repeated trials, Swish occasionally caused the network to stall during training, explaining its low mean accuracy.\\}
    
    \begin{tabular}{ccc} \toprule
         Activation Function & Mean Accuracy (95\% C.I.) & Repeats \\ \midrule
         Best from $S_1$ & 74.2 ($\pm 0.1$) & 50 \\
         Best from $S_2$ & 74.0 ($\pm 0.2$) & 15 \\
         ReLU            & 73.2 ($\pm 0.2$) & 50 \\
         Swish           & 49.6 ($\pm 11.6$) & 25 \\ \bottomrule \\
    \end{tabular}
    
    \begin{tabular}{cc} \toprule
    Activation Functions & $t$-statistic; $p$-value \\ \midrule
    Best from $S_1$ vs. ReLU  & 9.73; $4.64 \times 10^{-16}$\\
    Best from $S_1$ vs. Swish & 5.91; $9.91 \times 10^{-8}$\\
    Best from $S_2$ vs. ReLU  & 4.51; $2.91 \times 10^{-5}$\\
    Best from $S_2$ vs. Swish & 3.17; $2.98 \times 10^{-3}$\\ \bottomrule 
    \end{tabular}
    
    \label{tab:gecco:statistical_significance}
\end{table}

Among the top activation functions discovered, many are smooth and monotonic.  Hand-engineered activation functions frequently share these properties \cite{nwankpa2018activation}.  Two notable exceptions were found by random search and evolution with accuracy-based fitness.  Although these functions do not outperform ReLU, the fact that WRN-28-10 was able to achieve such high accuracy with these arbitrary functions raises questions as to what makes an activation function effective. Ramachandran et al.~\cite{DBLP:conf/iclr/RamachandranZL18} asserted that simpler activation functions consistently outperformed more complicated ones. However, the high accuracy achieved with activation functions discovered by evolution in $S_2$ demonstrates that complicated activation functions can compete with simpler ones. Such flexibility may be particularly useful in specialization to different architectures and datasets. It is plausible that there exist many unintuitive activation functions which can outperform the more general ones in specialized settings.  Evolution is well-positioned to discover them.

\subsection{Specialized Activation Functions}
Since different functions are seen to emerge in different experiments, an important question is: How general or specialized are they to a particular architecture and dataset?  To answer this question, the top activation function discovered for WRN-28-10 on CIFAR-10, $\tanh(x) * \min\{x, 0\}$, was trained with WRN-40-4 on CIFAR-100 for 200 epochs.  This result was then compared with performance achieved by $\sigma(x) * \textrm{erf}(x)$, an activation function discovered specifically for WRN-40-4 on CIFAR-100.  Table \ref{tab:gecco:specialized} summarizes the result: the activation function discovered for the first task does transfer to the second task, but even higher performance is achieved when a specialized function is discovered specifically for the second task.

\begin{table}
    \centering
    \caption{Test set accuracy of WRN-40-4 with various activation functions after 200 epochs of training on CIFAR-100.  Results reported are median of five runs.  The top activation function discovered for WRN-28-10 on CIFAR-10, $\tanh(x) * \min\{x, 0\}$, successfully transfers to this new task, outperforming both baselines.  However, a search designed specifically for WRN-40-4 on CIFAR-100 discovers a novel activation function, {\normalfont $\sigma(x) * \textrm{erf}(x)$} that results in even higher performance.  This result demonstrates that the main power of activation function metalearning is to be able to specialize the function to the architecture and dataset.\\}
    \begin{tabular}{cc} \toprule
        Activation Function & Accuracy \\ \midrule
        $\sigma(x) * \textrm{erf}(x)$ & \textbf{72.6} \\
        $\tanh(x) * \min\{x, 0\}$ & 72.1 \\
        $\textrm{Swish}(x)$ & 71.1 \\
        $\textrm{ReLU}(x)$ & 71.0 \\ \bottomrule 
    \end{tabular}
    \label{tab:gecco:specialized}
\end{table}

The specialized activation function, $\sigma(x) * \textrm{erf}(x)$, is shown in Figure \ref{fig:gecco:summary}c.  It is similar to $\sigma(x)$ in that it tends towards 0 as $x \rightarrow -\infty$, and approaches 1 as $x \rightarrow \infty$.  It differs from $\sigma(x)$ in that it has a non-monotonic bump for small, negative values of $x$.  A 50-epoch training of WRN-40-4 on CIFAR-100 with activation $\sigma(x)$ achieved validation accuracy of just 63.2.  The superior performance of $\sigma(x) * \textrm{erf}(x)$ suggests that the negative bump was important, as the shapes of the two activation functions are otherwise similar. This result demonstrates how evolution can discover specializations that make a significant difference.


\section{Discussion}
\label{sec:gecco:discussion}

Among the top activation functions discovered, many are smooth and monotonic.  Hand-engineered activation functions frequently share these properties \cite{nwankpa2018activation}.  Two notable exceptions were found by random search and evolution with accuracy-based fitness.  Although these functions do not outperform ReLU, the fact that WRN-28-10 was able to achieve such high accuracy with these arbitrary functions raises questions as to what makes an activation function effective. Ramachandran et al.~\cite{DBLP:conf/iclr/RamachandranZL18} asserted that simpler activation functions consistently outperformed more complicated ones. However, the high accuracy achieved with activation functions discovered by evolution in $S_2$ demonstrates that complicated activation functions can compete with simpler ones. Such flexibility may be particularly useful in specialization to different architectures and datasets. It is plausible that there exist many unintuitive activation functions which can outperform the more general ones in specialized settings.  Evolution is well-positioned to discover them.

Activation functions discovered by evolution perform best on the architectures and datasets for which they were evolved.  Figure \ref{fig:gecco:scatter} demonstrates this principle.  More generally, it shows the performance of several activation functions when trained with WRN-28-10 for 50 epochs on CIFAR-10 and when trained with WRN-40-4 for 50 epochs on CIFAR-100.  Activation functions that perform well for one task often perform well on another task, but not always.  Therefore, if possible, one should evolve them specifically for each architecture and dataset.  However, as the results in Section \ref{sec:gecco:results} show, it is feasible to evolve using smaller architectures and datasets and then transfer to scaled up architectures and more difficult datasets within the same domain.

\begin{figure}[t]
    \centering
    \includegraphics[width=0.5\linewidth]{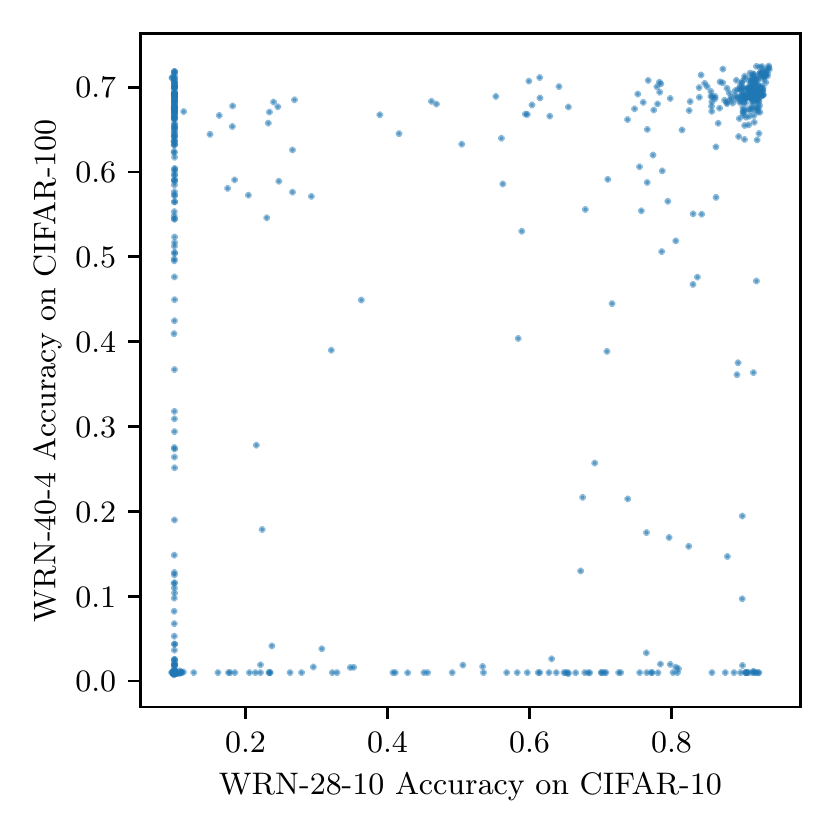}
    \caption{Activation function accuracy across tasks.  Each data point represents validation accuracy when training with a given activation function from $S_1$ with WRN-40-4 on CIFAR-100 for 50 epochs, and with WRN-28-10 on CIFAR-10 for 50 epochs.  Some activation functions perform well when paired with a different architecture and dataset.  Other functions are specialized to a given architecture and dataset, and do not transfer.  The results suggest that reasonable performance can be expected from general evolved activation functions, but that the best performance comes from evolving activation functions for a specific task.}
    \label{fig:gecco:scatter}
\end{figure}

\begin{figure}
    \centering
    \includegraphics[width=\linewidth]{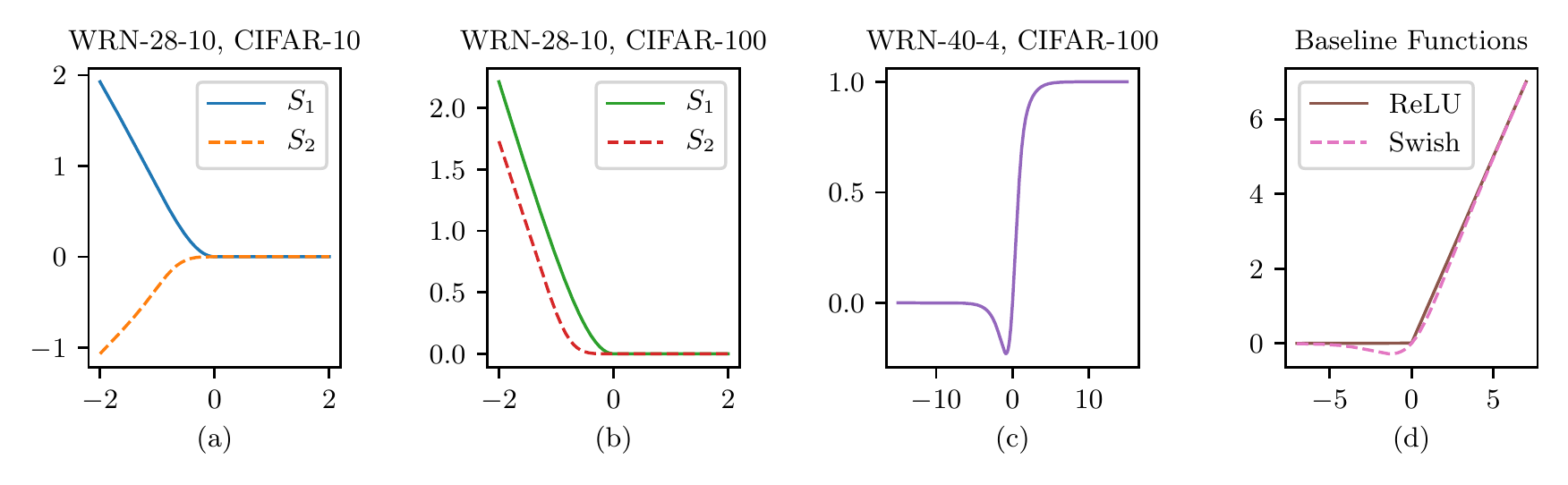}
    \caption{A summary of the best activation functions found in the first set of experiments.  (a): Exhaustive search in $S_1$ discovers $(\tanh(x)) * (\min\{x, 0\})$, and evolution in $S_2$ discovers {\normalfont $(e^{(\min\{\textrm{erf}(x), 0\}) - (\max\{x, 0\})}) * (\min\{(\arctan((x)^3)) * (\max\{|x|, 0\}), 0\})$}.  Both functions achieve a median test accuracy of 94.1 with WRN-28-10 on CIFAR-10, outperforming that of ReLU (94.0) and Swish (93.8).  (b): The functions $(\arctan(x)) * (\min\{x, 0\})$ in $S_1$ and $(-((\arctan((x)^3)) * (\cos(1)))) * (-((\arctan(\min\{x, 0\})) * (\max\{|x|, 0\})))$ in $S_2$ outperform ReLU and Swish by statistically significant margins with WRN-28-10 on CIFAR-100, demonstrating the power of novel activation functions.  (c): Functions evolved for WRN-28-10 on CIFAR-10 perform well with WRN-40-4 on CIFAR-100, but a new function discovered specifically for WRN-40-4 on CIFAR-100, {\normalfont $\sigma(x) * \textrm{erf}(x)$}, achieves even higher performance.  This result shows that the biggest advantage of activation function search is the ability to discover functions that are specialized to the architecture and dataset.  (d): The baseline activation functions, ReLU and Swish, are included for visual comparison.}
    \label{fig:gecco:summary}
\end{figure}

In the future, it may be possible to push such generalization further, by evaluating functions across multiple architectures and datasets.  In this manner, evolution may be able to combine the requirements of multiple tasks, and discover functions that perform well in general.  However, the main power in activation function search is to discover functions that are specialized to each architecture and dataset.  In that setting most significant improvements are possible.

\section{Conclusion}
\label{sec:gecco:conclusion}

Multiple strategies for discovering novel, high-performing activation functions were presented and evaluated: namely exhaustive search in a small search space ($S_1$) and random search and evolution in a larger search space ($S_2$).  Evolution with loss-based fitness finds activation functions that achieve high accuracy and outperform standard functions such as ReLU and novel functions such as Swish, demonstrating the power of search in large spaces. The best activation functions successfully transfer from CIFAR-10 to CIFAR-100 and from WRN-28-10 to WRN-40-4. However, the main power of activation function search is in finding specialized functions for each architecture and dataset, leading to significant improvement.  These results (summarized in Figure \ref{fig:gecco:summary}) provide a foundation for the next chapter, where evolution and gradient descent explore a flexible search space and discover activation functions that adapt to different network locations and stages of training.

\chapter{PANGAEA: Discovering Parametric Activation Functions}
\label{chap:pangaea}
\usetikzlibrary{shapes.geometric, arrows, positioning, calc}
\tikzstyle{input} = [rectangle, rounded corners, text centered, draw=black, fill=red!30]
\tikzstyle{unary} = [rectangle, rounded corners, text centered, draw=black, fill=yellow!30]
\tikzstyle{binary} = [rectangle, rounded corners, text centered, draw=black, fill=blue!30]
\tikzstyle{arrow} = [thick, ->,>=stealth]
\tikzstyle{output} = [rectangle, rounded corners, text centered, draw=black, fill=green!30]
\tikzstyle{invisible} = [rectangle, text centered]
\tikzstyle{maybeparam} = [draw, circle, dashed, fill=cyan!30]

This chapter describes an extended approach for discovering activation functions called PANGAEA (Parametric ActivatioN functions Generated Automatically by an Evolutionary Algorithm) \cite{bingham2022discovering}.  Chapter \ref{chap:gecco} explored activation function discovery with multiple search algorithms and search spaces, and identified evolution in large spaces as a creative and powerful approach.  In order to understand the full potential of such a system, this chapter relaxes design constraints and scales up in multiple ways.  The activation function representation is expanded to include arbitrary computation graphs, which allows for representing more complex functional forms.  Additional mutation operators are then introduced to allow for more efficient exploration of this search space.  The approach utilizes a synergy of two different optimization processes: evolutionary population-based search discovers the general form of the activation function, and gradient descent introduces further flexibility by fine-tuning the shape of the function across locations in a network and over time as training progresses.  PANGAEA discovers general activation functions that improve performance overall over previously-proposed functions.  It also produces specialized functions for different architectures, such as Wide ResNet, ResNet, and Preactivation ResNet, that perform even better than the general functions, demonstrating its ability to customize activation functions to architectures.

\section{The PANGAEA Method}
\label{sec:pangaea:pangaea_method}

Activation functions in PANGAEA are represented as computation graphs, which allow for comprehensive search, efficient implementation, and effective parameterization. Regularized evolution with reranking is used as the search method to encourage exploration and to reduce noise.

\subsection{Representing and Modifying Activation Functions}

Activation functions are represented as computation graphs in which each node is a unary or a binary operator (Table \ref{tab:pangaea:searchspace}).  All of these operators have TensorFlow \cite{abadi2016tensorflow} implementations, which allows for taking advantage of under-the-hood optimizations.  Safe operator implementations are chosen when possible (e.g.\ the binary operator $x_1 / x_2$ is implemented as \texttt{tf.math.divide\_no\_nan}, which returns $0$ if $x_2 = 0$).  Operators that are periodic (e.g.\ $\sin(x)$) and operators that contain repeated asymptotes are not included; in preliminary experiments they often caused training instability.  All of the operators have domain $\mathbb{R}$, making it possible to compose them arbitrarily.  The operators in Table \ref{tab:pangaea:searchspace} were chosen to create a large and expressive search space that contains activation functions unlikely to be discovered by hand.  Indeed, all piecewise real analytic functions can be represented with a PANGAEA computation graph (Theorem \ref{thm:pangaea:piecewise_real_analytic} of Section \ref{sec:pangaea:proofs}).

\begin{table*}
    \centering
    \caption{The operator search space consists of basic unary and binary functions as well as existing activation functions (Section \ref{sec:pangaea:baseline}).  $\sigma(x) = (1+e^{-x})^{-1}$. The unary operators bessel\_i0e and bessel\_i1e are the exponentially scaled modified Bessel functions of order 0 and 1, respectively.\newline}

    \begin{adjustbox}{max width=\textwidth}
    \begin{tabular}{lllllllll} \toprule 
        \multicolumn{7}{c}{\textbf{Unary}} & \multicolumn{2}{c}{\textbf{Binary}} \\ \midrule
        $0$           & $|x|$    & $\textrm{erf}(x)$  & $\textrm{tanh}(x)$ & $\textrm{arcsinh}(x)$ & $\textrm{ReLU}(x)$  & $\textrm{Softplus}(x)$    & $x_1 + x_2$ & $x_1^{x_2}$ \\
        $1$           & $x^{-1}$ & $\textrm{erfc}(x)$ & $e^x-1$ & $\textrm{arctan}(x)$ & $\textrm{ELU}(x)$   & $\textrm{Softsign}(x)$    & $x_1 - x_2$ & $\max\{x_1, x_2\}$ \\
        $x$           & $x^2$    & $\textrm{sinh}(x)$ & $\sigma(x)$ & $\textrm{bessel\_i0e}(x)$ & $\textrm{SELU}(x)$  & $\textrm{HardSigmoid}(x)$ & $x_1 \cdot x_2$ & $\min\{x_1, x_2\}$\\
        $-x$          & $e^x$    & $\textrm{cosh}(x)$ & $\log(\sigma(x))$ & $\textrm{bessel\_i1e}(x)$ & $\textrm{Swish}(x)$ &                           & $x_1 / x_2$ & \\
        \bottomrule
    \end{tabular}
    \end{adjustbox}
    \label{tab:pangaea:searchspace}
\end{table*}

PANGAEA begins with an initial population of $P$ random activation functions.  Each function is either of the form $f(x) = \texttt{unary1}(\texttt{unary2}(x))$ or $f(x) = \texttt{binary}(\texttt{unary1}(x), \allowbreak \texttt{unary2}(x))$, as shown in Figure~\ref{fig:pangaea:initialization}.  Both forms are equally likely, and the unary and binary operators are also selected uniformly at random.  The computation graphs in Figure~\ref{fig:pangaea:initialization} represent the simplest non-trivial computation graphs with and without a binary operator.  This design choice is inspired by previous work in neuroevolution, which demonstrated the power of starting from simple structures and gradually complexifying them \cite{stanley2002evolving}.

\begin{figure}
    \centering
    \begin{tikzpicture}[node distance=3em]
    
        \node (output2) [output] {$f(x)$};
        \node (unary4) [unary, below of=output2] {Unary};
        \node (unary3) [unary, below of=unary4] {Unary};
        \node (input3) [input, below of=unary3] {$x$};

        \draw [arrow] (input3) -- (unary3);
        \draw [arrow] (unary3) -- (unary4);
        \draw [arrow] (unary4) -- (output2);

        \node (output) [output, right of=output2, xshift=4em] {$f(x)$};
        \node (binary) [binary, below of=output] {Binary};
        \node (unary1) [unary, below of=binary, xshift=-2em] {Unary};
        \node (unary2) [unary, below of=binary, xshift=2em] {Unary};
        \node (input1) [input, below of=unary1] {$x$};
        \node (input2) [input, below of=unary2] {$x$};
        
        \draw [arrow] (input1) -- (unary1);
        \draw [arrow] (input2) -- (unary2);
        \draw [arrow] (unary1) -- (binary);
        \draw [arrow] (unary2) -- (binary);
        \draw [arrow] (binary) -- (output);

    \end{tikzpicture}
    \caption{Random activation function initialization. The initial population consists of random samples of two kinds of computation graphs, randomly initialized with the operators in Table~\ref{tab:pangaea:searchspace}. In this manner, the search starts with simple graphs and gradually expands to more complex forms.
    }
    
    \label{fig:pangaea:initialization}
\end{figure}
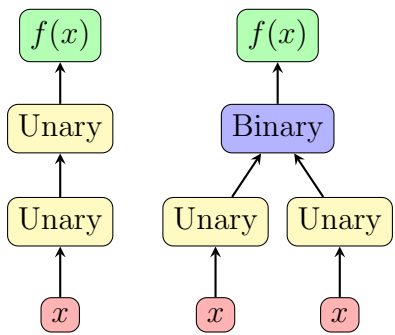

During the search, all ReLU activation functions in a given neural network are replaced with a candidate activation function.  No other changes to the network or training setup are made.  The network is trained on the dataset, and the activation function is assigned a fitness score equal to the network's accuracy on the validation set.

Given a parent activation function, a child activation function is created by applying one of four possible mutations (Figure~\ref{fig:pangaea:mutation}).  The possible mutations include elementary graph modifications like inserting, removing, or changing a node.  These mutations are useful for local exploration.  A special ``regenerate'' mutation is also introduced to accelerate exploration.  Other possible evolutionary operators like crossover are not used in this chapter.  All mutations are equally likely with two special cases.  If a remove mutation is selected for an activation function with just one node, a change mutation is applied instead.  Additionally, if an activation function with greater than seven nodes is selected, the mutation is a remove mutation, in order to reduce bloat.

\paragraph{Insert} In an insert mutation, one operator in the search space is selected uniformly at random.  This operator is placed on a random edge of a parent activation function graph.  In Figure~\ref{fig:pangaea:mutation}\emph{b}, the unary operator $\textrm{Swish}(x)$ is inserted at the edge connecting the output of $\tanh(x)$ to the input of $x_1 + x_2$.  After mutating, the parent activation function $(\tanh(x) + |\textrm{erf}(x)|)^2$ produces the child activation function $(\textrm{Swish}(\tanh(x)) + |\textrm{erf}(x)|)^2$.  If a binary operator is randomly chosen for the insertion, the incoming input value is assigned to the variable $x_1$.  If the operator is addition or subtraction, the input to $x_2$ is set to $0$.  If the operator is multiplication, division, or exponentiation, the input to $x_2$ is set to $1$.  Finally, if the operator is the maximum or minimum operator, the input to $x_2$ is a copy of the input to $x_1$.  When a binary operator is inserted into a computation graph, the activation function computed remains unchanged.  However, the structure of the computation graph is modified and can be further altered by future mutations.

\paragraph{Remove} In a remove mutation, one node is selected uniformly at random and deleted.  The node's input is rewired to its output.  If the removed node is binary, one of the two inputs is chosen at random and is deleted.  The other input is kept.  In Figure~\ref{fig:pangaea:mutation}\emph{c}, the addition operator is removed from the parent activation function.  The two inputs to addition, $\tanh(x)$ and $|\textrm{erf}(x)|$, cannot both be kept.  By chance, $\tanh(x)$ is discarded, resulting in the child activation function $|\textrm{erf}(x)|^2$.  

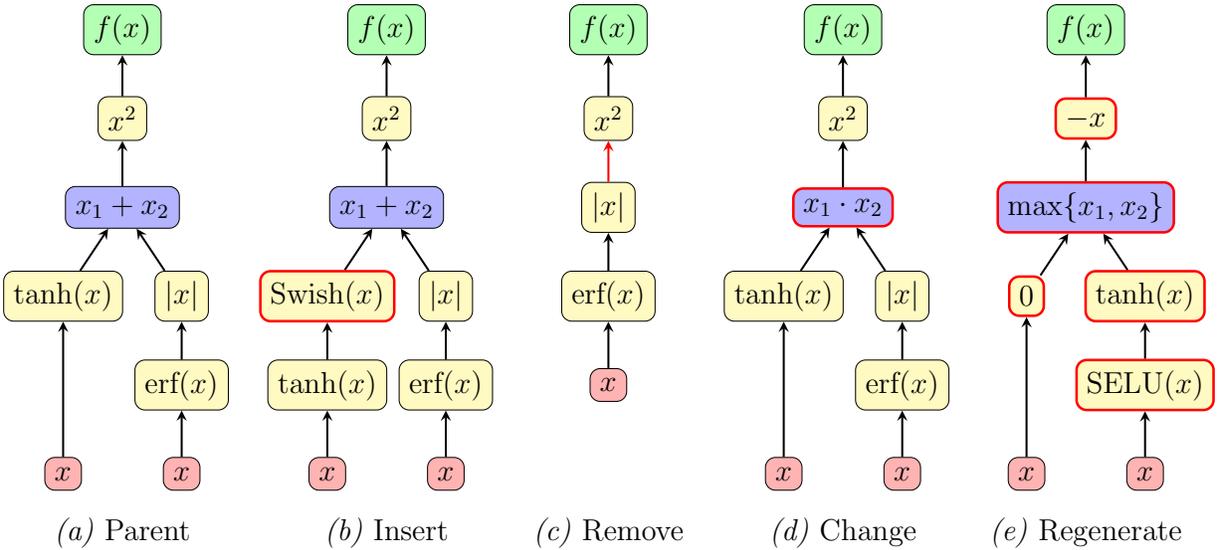
\begin{figure}
    \centering
    \begin{adjustbox}{max width=\linewidth}
    \begin{tabular}{ccccc}
    \begin{tikzpicture}[node distance=3em]
    
        \node [output] (out1) {$f(x)$};
        \node [unary, below of=out1] (u1) {$x^2$};
        \node [binary, below of=u1] (b1) {$x_1 + x_2$};
        \node [unary, below of=b1, xshift=-2em] (u2) {$\textrm{tanh}(x)$};
        \node [unary, below of=b1, xshift=2em] (u3) {$|x|$};
        \node [unary, below of=u3] (u4) {$\textrm{erf}(x)$};
        \node [input, below of=u4] (i1) {$x$};
        \node [input, left of=i1, xshift=-1em] (i2) {$x$};
        
        \node [invisible, below of=out1, yshift=-14em] (d) {\normalsize \emph{(a)} Parent};
        
        \draw [arrow] (i1) -- (u4);
        \draw [arrow] (i2) -- (u2);
        \draw [arrow] (u4) -- (u3);
        \draw [arrow] (u3) -- (b1);
        \draw [arrow] (u2) -- (b1);
        \draw [arrow] (b1) -- (u1);
        \draw [arrow] (u1) -- (out1);
    
    \end{tikzpicture}
    
    &
    
    \begin{tikzpicture}[node distance=3em]
    
        \node [output] (out1) {$f(x)$};
        \node [unary, below of=out1] (u1) {$x^2$};
        \node [binary, below of=u1] (b1) {$x_1 + x_2$};
        \node [unary, below of=b1, xshift=-2em, draw=red, line width=1] (swish) {$\textrm{Swish}(x)$};
        \node [unary, below of=swish] (u2) {$\textrm{tanh}(x)$};
        \node [unary, below of=b1, xshift=2em] (u3) {$|x|$};
        \node [unary, below of=u3] (u4) {$\textrm{erf}(x)$};
        \node [input, below of=u4] (i1) {$x$};
        \node [input, left of=i1, xshift=-1em] (i2) {$x$};
        
        \node [invisible, below of=out1, yshift=-14em] (d) {\normalsize \emph{(b)} Insert};
        
        \draw [arrow] (i1) -- (u4);
        \draw [arrow] (i2) -- (u2);
        \draw [arrow] (u4) -- (u3);
        \draw [arrow] (u3) -- (b1);
        \draw [arrow] (u2) -- (swish);
        \draw [arrow] (swish) -- (b1);
        \draw [arrow] (b1) -- (u1);
        \draw [arrow] (u1) -- (out1);
    
    \end{tikzpicture}
    
    &
    
    \begin{tikzpicture}[node distance=3em]
    
        \node [output] (out1) {$f(x)$};
        \node [unary, below of=out1] (u1) {$x^2$};
        \node [unary, below of=u1] (u3) {$|x|$};
        \node [unary, below of=u3] (u4) {$\textrm{erf}(x)$};
        \node [input, below of=u4] (i1) {$x$};
        
        \node [invisible, below of=out1, yshift=-14em] (d) {\normalsize \emph{(c)} Remove};
        
        \draw [arrow] (i1) -- (u4);
        \draw [arrow] (u4) -- (u3);
        \draw [arrow, draw=red] (u3) -- (u1);
        \draw [arrow] (u1) -- (out1);
    
    \end{tikzpicture}
    
    &
    
    \begin{tikzpicture}[node distance=3em]
    
        \node [output] (out1) {$f(x)$};
        \node [unary, below of=out1] (u1) {$x^2$};
        \node [binary, below of=u1, draw=red, line width=1] (b1) {$x_1 \cdot x_2$};
        \node [unary, below of=b1, xshift=-2em] (u2) {$\textrm{tanh}(x)$};
        \node [unary, below of=b1, xshift=2em] (u3) {$|x|$};
        \node [unary, below of=u3] (u4) {$\textrm{erf}(x)$};
        \node [input, below of=u4] (i1) {$x$};
        \node [input, left of=i1, xshift=-1em] (i2) {$x$};
        
        \node [invisible, below of=out1, yshift=-14em] (d) {\normalsize \emph{(d)} Change};
        
        \draw [arrow] (i1) -- (u4);
        \draw [arrow] (i2) -- (u2);
        \draw [arrow] (u4) -- (u3);
        \draw [arrow] (u3) -- (b1);
        \draw [arrow] (u2) -- (b1);
        \draw [arrow] (b1) -- (u1);
        \draw [arrow] (u1) -- (out1);
    
    \end{tikzpicture}
    
    &

    \begin{tikzpicture}[node distance=3em]
    
        \node [output] (out1) {$f(x)$};
        \node [unary, below of=out1, draw=red, line width=1] (u1) {$-x$};
        \node [binary, below of=u1, draw=red, line width=1] (b1) {$\max\{x_1, x_2\}$};
        \node [unary, below of=b1, xshift=-2em, draw=red, line width=1] (u2) {$0$};
        \node [unary, below of=b1, xshift=2em, draw=red, line width=1] (u3) {$\textrm{tanh}(x)$};
        \node [unary, below of=u3, draw=red, line width=1] (u4) {$\textrm{SELU}(x)$};
        \node [input, below of=u4] (i1) {$x$};
        \node [input, left of=i1, xshift=-1em] (i2) {$x$};
        
        \node [invisible, below of=out1, yshift=-14em] (d) {\normalsize \emph{(e)} Regenerate};
        
        \draw [arrow] (i1) -- (u4);
        \draw [arrow] (i2) -- (u2);
        \draw [arrow] (u4) -- (u3);
        \draw [arrow] (u3) -- (b1);
        \draw [arrow] (u2) -- (b1);
        \draw [arrow] (b1) -- (u1);
        \draw [arrow] (u1) -- (out1);
    
    \end{tikzpicture}

    \end{tabular}
    \end{adjustbox}
    \caption{Evolutionary operations on activation functions. In an `Insert' mutation, a new operator is inserted in one of the edges of the computation graph, like the Swish$(x)$ in \emph{(b)}. In a `Remove' mutation, a node in the computation graph is deleted, like the addition in \emph{(c)}. In a `Change' mutation, an operator at a node is replaced with another, like addition with multiplication in \emph{(d)}. These first three mutations are useful in refining the function locally. In contrast, in a `Regenerate' mutation \emph{(e)}, every operator in the graph is replaced by a random operator, thus increasing exploration.}
    \label{fig:pangaea:mutation}
\end{figure}

\paragraph{Change} To perform a change mutation, one node in the computation graph is selected at random and replaced with another operator from the search space, also uniformly at random.  Unary operators are always replaced with unary operators, and binary operators with binary operators.  Figure~\ref{fig:pangaea:mutation}\emph{d} shows how changing addition to multiplication produces the activation function $(\tanh(x) \cdot |\textrm{erf}(x)|)^2$.

\paragraph{Regenerate} In a regenerate mutation, every operator in the computation graph is replaced with another operator from the search space.  As with change mutations, unary operators are replaced with unary operators, and binary operators with binary operators.  Although every node in the graph is changed, the overall structure of the computation graph remains the same.  Regenerate mutations are useful for increasing exploration, and are similar in principle to burst mutation and delta coding \citep{gomezgecco03,whitleyicga91}.  Figure~\ref{fig:pangaea:mutation}\emph{e} shows the child activation function $-\max\{0, \tanh(\textrm{SELU}(x))\}$, which is quite different from the parent function in Figure~\ref{fig:pangaea:mutation}\emph{a}.

\paragraph{Parameterization of Activation Functions}

After mutation (or random initialization), activation functions are
parameterized (Figure~\ref{fig:pangaea:parameterization}).  A value
$k \in \{0, 1, 2, 3\}$ is chosen uniformly at random, and $k$ edges of
the activation function graph are randomly selected.
Multiplicative per-channel parameters are inserted at these edges and
initialized to one.  Whereas evolution is well suited for discovering the
general form of the activation function in a discrete, structured
search space, parameterization makes it possible to fine-tune the
function using gradient descent.
The function parameters are updated at	every epoch during backpropagation, resulting in
different activation functions in different stages of training.  As the parameters are per-channel, the process
creates different activation functions at different locations in the
neural network.  Thus, parameterization gives neural networks additional flexibility to customize activation functions.

\begin{figure}[ht]
    \centering
    \begin{multicols}{2}
    \null \vfill 
     \begin{tikzpicture}[node distance=3em]
    
        \node (output) [output] {$f(x)$};
        \node (unary) [unary, below of=output] {$\sigma(x)$};
        \node (binary) [binary, below of=unary] {$x_1 - x_2$};
        \node (unary1) [unary, below of=binary, xshift=-2em] {$|x|$};
        \node (unary2) [unary, below of=binary, xshift=2em] {$\textrm{arctan}(x)$};
        \node (input1) [input, below of=unary1] {$x$};
        \node (input2) [input, below of=unary2] {$x$};

        \draw [arrow] (input1) -- (unary1);
        \draw [arrow, draw=red] (input2) -- (unary2);
        \draw [arrow, draw=red] (unary1) -- (binary);
        \draw [arrow] (unary2) -- (binary);
        \draw [arrow] (binary) -- (unary);
        \draw [arrow, draw=red] (unary) -- (output);
        
    \end{tikzpicture}
    
    \vfill \null
    \begin{tikzpicture}[node distance=3em]
    
        \node (output) [output] {$f(x)$};
        \node (p1) [maybeparam, below of=output] {$\alpha$};
        \node (unary) [unary, below of=p1] {$\sigma(x)$};
        \node (binary) [binary, below of=unary] {$x_1 - x_2$};
        \node (p5) [maybeparam, below of=binary, xshift=-2em] {$\beta$};
        \node (unary1) [unary, below of=p5] {$|x|$};
        \node (unary2) [unary, below of=binary, xshift=2em] {$\textrm{arctan}(x)$};
        \node (p4) [maybeparam, below of=unary2] {$\gamma$};
        \node (input1) [input, below of=unary1] {$x$};
        \node (input2) [input, below of=p4] {$x$};

        \draw [arrow] (input1) -- (unary1);
        \draw [arrow] (input2) -- (p4);
        \draw [arrow] (p4) -- (unary2); 
        \draw [arrow] (unary1) -- (p5);
        \draw [arrow] (p5) -- (binary);
        \draw [arrow] (unary2) -- (binary);
        \draw [arrow] (binary) -- (unary);
        \draw [arrow] (unary) -- (p1);
        \draw [arrow] (p1) -- (output);
        
    \end{tikzpicture}
    \end{multicols}
    \caption{Parameterization of activation functions.  In this example, parameters are added to $k=3$ random edges, yielding the parametric activation function $\alpha \sigma(\beta |x| - \textrm{arctan}(\gamma x))$.}
    \label{fig:pangaea:parameterization}
\end{figure}
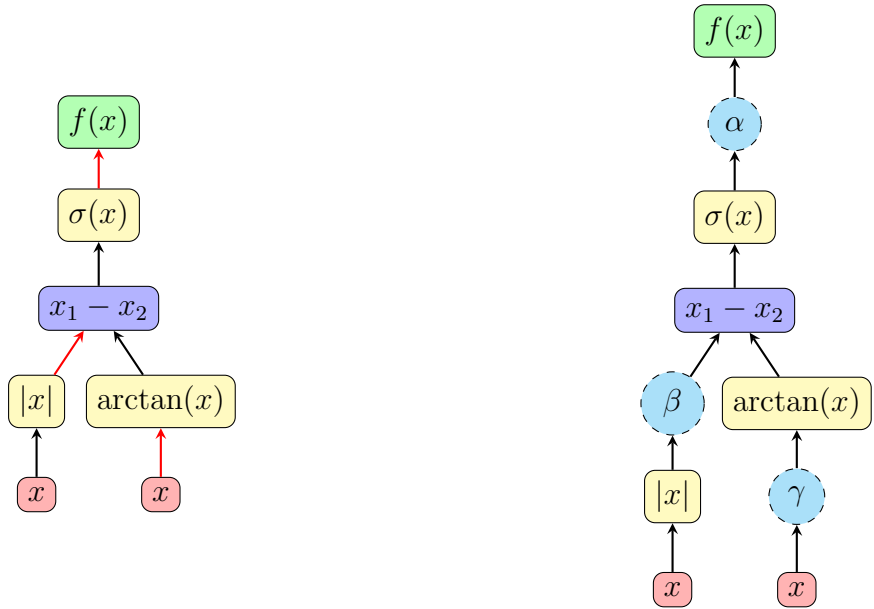

\subsection{Discovering Activation Functions with Evolution}
\label{sec:pangaea:evolution}

Activation functions are discovered by regularized evolution \citep{real2019regularized}.  Initially, $P$ random activation functions are created, parameterized, and assigned fitness scores.
To generate a new activation function, $S$ functions are sampled with
replacement from the current population.  The function with the
highest validation accuracy serves as the parent, and is mutated to create a child activation
function.  This function is parameterized and assigned a fitness
score.  The new activation function is then added to the population, and the
oldest function in the population is removed, ensuring the
population is always of size $P$.  This process continues until $C$ functions have been
evaluated in total, and the top functions over the history
of the search are returned as a result.  

Any activation function that
achieves a fitness score less than a threshold $V$ is discarded.
These functions are not added to the population, but they
do count towards the total number of $C$ activation functions
evaluated for each architecture.  This quality control mechanism
allows evolution to focus only on the most promising candidates.

To save computational resources during evolution, each activation function is evaluated by training a neural network for 100 epochs using a compressed learning rate schedule. After evolution is complete, the top 10 activation functions from the entire search are reranked.  Each function receives an adjusted fitness score equal to the average validation accuracy from two independent 200-epoch training runs using the original learning rate schedule.  The top three activation functions after reranking proceed to the final testing experiments.

During evolution, it is possible that some activation functions achieve unusually high validation accuracy by chance.  The 100-epoch compressed learning rate schedule may also have a minor effect on which activation functions are optimal compared to a full 200-epoch schedule.  Reranking thus serves two purposes.  Full training reduces bias from the compressed schedule, and averaging two such runs lessens the impact of activation functions that achieved high accuracy by chance.

\section{Datasets and Architectures}
\label{sec:pangaea:datasets_architectures}

The experiments in this chapter focus primarily on the CIFAR-100 image classification dataset \citep{krizhevsky2009learning}. This dataset is a more difficult version of the popular CIFAR-10 dataset, with 100 object categories instead of 10. Fifty images from each class were randomly selected from the training set to create a balanced validation set, resulting in a training/validation/test split of 45K/5K/10K images.

To demonstrate that PANGAEA can discover effective activation functions in various settings, it is evaluated with three different neural networks.  The models were implemented in TensorFlow \citep{abadi2016tensorflow}, mirroring the original authors' training setup as closely as possible (see Appendix \ref{ap:details:pangaea} for training details and Appendix \ref{ap:details:pangaea_custom} for code that shows how to train with custom activation functions).

{\bf Wide Residual Network}
\citep[WRN-10-4;][]{zagoruyko2016wide} has a depth of 10 and widening factor of four.  Wide residual networks provide an interesting comparison because they are shallower and wider than many other popular architectures, while still achieving good results.  WRN-10-4 was chosen because its CIFAR-100 accuracy is competitive, yet it trains relatively quickly.

{\bf Residual Network}
\citep[ResNet-v1-56;][]{he2016deep}, with a depth of 56, provides an important contrast to WRN-10-4.  It is significantly deeper and has a slightly different training setup, which may have an effect on the performance of different activation functions.  

{\bf Preactivation Residual Network}
\citep[ResNet-v2-56;][]{he2016identity} has identical depth to ResNet-v1-56, but is a fundamentally different architecture.  Activation functions are not part of the skip connections, as is the case in ResNet-v1-56.  Since information does not have to pass through an activation function, this structure makes it easier to train very deep architectures.  PANGAEA should exploit this structure and discover different activation functions for ResNet-v2-56 and ResNet-v1-56.

\section{Main Results}
\label{sec:pangaea:results}

This section demonstrates that PANGAEA is able to discover good activation functions for various architectures.  General activation functions that perform well on multiple architectures are found, but the best performance comes from activation functions that are specialized to a given neural network.

\subsection{Overview}

Separate evolution experiments were run to discover novel
activation functions for each of the three architectures. Evolutionary
parameters $P=64$, $S=16$, $C=1{,}000$, and $V=20\%$ were used since
they were found to work well in preliminary experiments.

Figure~\ref{fig:pangaea:evolution} visualizes progress in these experiments.
For all three architectures, PANGAEA quickly discovered
activation functions that outperform ReLU.  It continued to make
further progress, gradually discovering better activation functions, and did not
plateau during the time allotted for the experiment.  Each run
took approximately 1,000 GPU hours on GeForce GTX 1080 and 1080 Ti GPUs
(see Appendices \ref{ap:details:pangaea} and \ref{ap:infrastructure} for implementation and compute details).

\begin{figure}
    \centering
    \includegraphics[width=\linewidth]{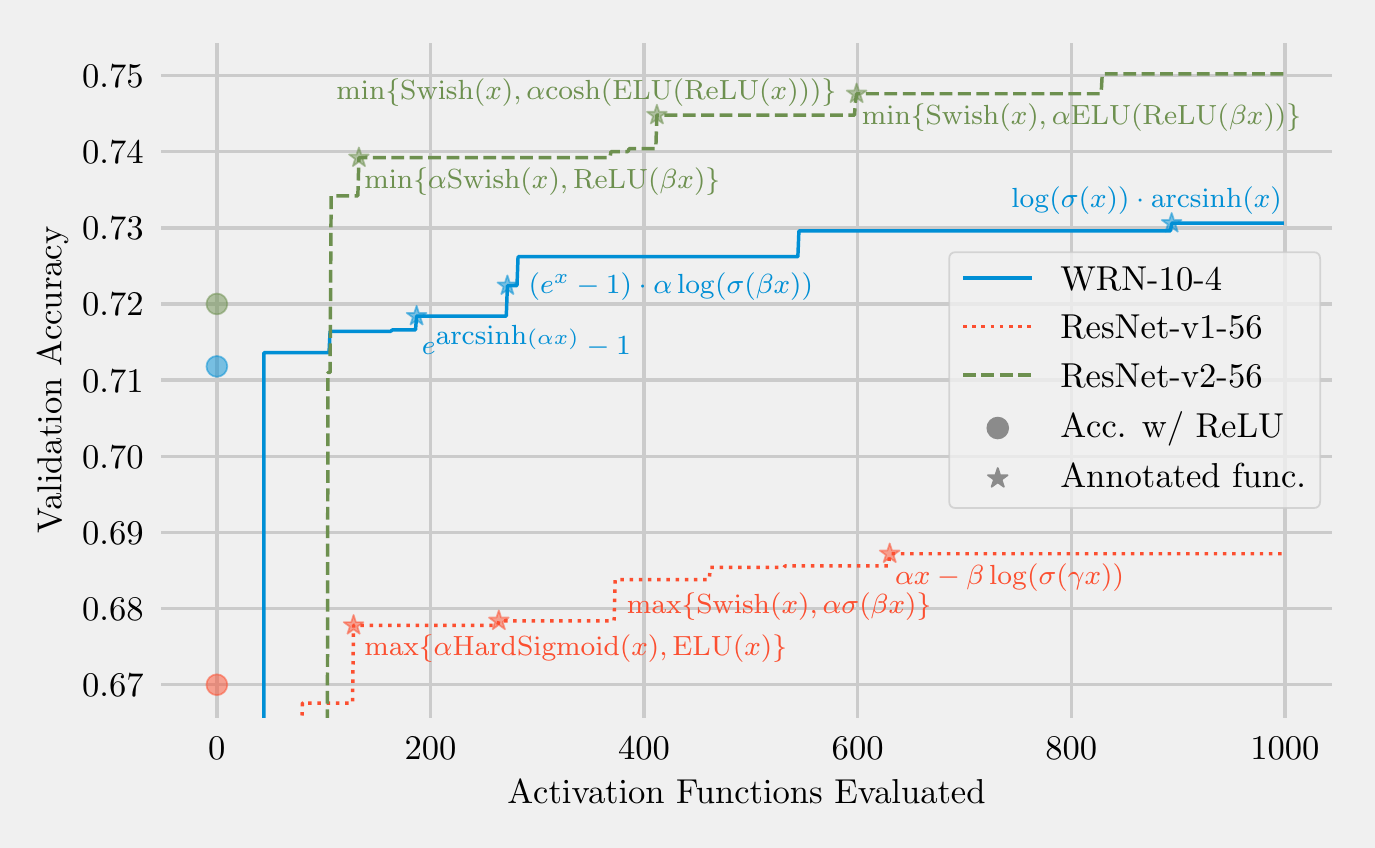}
    \caption{Progress of PANGAEA with three different neural networks.  The plots show the best accuracy achieved among all activation functions evaluated so far.  The stars on the plot specify the time when notable activation functions were discovered during evolution; the expression for each such function is written next to the star. Evolution quickly discovered activation functions that outperform      
      ReLU (accuracy with ReLU shown at $x=0$), 
      and continued to improve throughout the experiment.   Note that this figure shows validation accuracy, while Table~\ref{tab:pangaea:results} lists test set accuracy.}
    \label{fig:pangaea:evolution}
\end{figure}

Table \ref{tab:pangaea:results} shows the final test accuracy for the top specialized activation functions discovered by PANGAEA in each run. For comparison, the accuracy of the top general functions discovered in this process are also shown, as well as the accuracy of several baseline activation functions (see Section \ref{sec:pangaea:baseline} for baseline activation function details and Section \ref{sec:pangaea:pausplash} for additional results with learnable baseline functions). In sum, PANGAEA discovered the best activation function for each of the three architectures.

\begin{table*}
    \centering
    \renewcommand{\arraystretch}{.6666667} 
    \caption{CIFAR-100 test set accuracy aggregated over ten runs, shown as mean $\pm$ sample standard deviation.  Asterisks indicate a statistically significant improvement in mean accuracy over ReLU, with * if $p \leq 0.05$, ** if $p \leq 0.01$, and *** if $p \leq 0.001$;  $p$-values are from one-tailed Welch's $t$-tests.  The top accuracy for each architecture is in bold.  Baseline activation function details and references are given in Section \ref{sec:pangaea:baseline}.\newline
    }
    \centering
    \begin{adjustbox}{max width=\textwidth}
    \small
    \begin{tabular}{llll} \toprule
    & \textbf{WRN-10-4} & \textbf{ResNet-v1-56} & \textbf{ResNet-v2-56} \\ \midrule

    \textbf{Specialized for WRN-10-4}\\
    $\log(\sigma(\alpha x)) \cdot \beta \textrm{arcsinh}(x)$ & 
    \cellcolor{black!5} $\bm{73.20} {\scriptscriptstyle \pm 0.37~***}$ &
    $18.63 {\scriptscriptstyle \pm 21.04}$ &
    $45.88 {\scriptscriptstyle \pm 30.70}$ \\
    
    $\log(\sigma(\alpha x)) \cdot \textrm{arcsinh}(x)$ & 
    \cellcolor{black!5} $73.16 {\scriptscriptstyle \pm 0.41~***}$ &
    $19.34 {\scriptscriptstyle \pm 20.14}$ &
    $64.30 {\scriptscriptstyle \pm 21.32}$ \\
    
    $-\textrm{Swish}(\textrm{Swish}(\alpha x))$ & 
    \cellcolor{black!5} $72.49 {\scriptscriptstyle \pm 0.55~***}$ &
    $58.86 {\scriptscriptstyle \pm 2.88}$ &
    $74.71 {\scriptscriptstyle \pm 0.20~*}$ \\ 
    \midrule 

    \textbf{Specialized for ResNet-v1-56}\\
    $\alpha x - \beta \log(\sigma(\gamma x))$ & 
    $70.28 {\scriptscriptstyle \pm 0.37}$ &
    \cellcolor{black!5} $\bm{71.01} {\scriptscriptstyle \pm 0.64~***}$ &
    $74.35 {\scriptscriptstyle \pm 0.45}$ \\
    
    $\alpha x - \log(\sigma(\beta x))$ & 
    $70.47 {\scriptscriptstyle \pm 0.53}$ &
    \cellcolor{black!5} $70.30 {\scriptscriptstyle \pm 0.58~*}$ &
    $74.70 {\scriptscriptstyle \pm 0.23~*}$ \\
    
    $\max\{\textrm{Swish}(x), 0\}$ & 
    $72.10 {\scriptscriptstyle \pm 0.33~**}$ &
    \cellcolor{black!5} $69.43 {\scriptscriptstyle \pm 0.69}$ &
    $74.97 {\scriptscriptstyle \pm 0.25~**}$ \\
    \midrule 
    
    \textbf{Specialized for ResNet-v2-56}\\
    $\textrm{Softplus}(\textrm{ELU}(x))$ & 
    $71.36 {\scriptscriptstyle \pm 0.34}$ &
    $69.96 {\scriptscriptstyle \pm 0.39}$ &
    \cellcolor{black!5} $\bm{75.61} {\scriptscriptstyle \pm 0.42~***}$ \\
    
    $\min\{\log(\sigma(x)), \alpha \log(\sigma(\beta x))\}$ & 
    $72.04 {\scriptscriptstyle \pm 0.34~**}$ &
    $69.56 {\scriptscriptstyle \pm 0.48}$ &
    \cellcolor{black!5} $75.19 {\scriptscriptstyle \pm 0.39~***}$ \\
    
    $\textrm{SELU}(\textrm{Swish}(x))$ & 
    $01.00 {\scriptscriptstyle \pm 0.00}$ &
    $01.00 {\scriptscriptstyle \pm 0.00}$ &
    \cellcolor{black!5} $75.02 {\scriptscriptstyle \pm 0.35~**}$ \\
    \midrule 
    
    \textbf{General Activation Functions}\\
    $\max\{\textrm{Swish}(x), \alpha \log (\sigma (\textrm{ReLU}(x)))\}$ & 
     $72.54 {\scriptscriptstyle \pm 0.26~***}$ &
     $69.91 {\scriptscriptstyle \pm 0.37}$ &
     $75.20 {\scriptscriptstyle \pm 0.41~***}$ \\
    
    $\min\{\textrm{Swish}(x), \alpha \textrm{ELU}(\textrm{ReLU}(\beta x))\}$ & 
     $72.39 {\scriptscriptstyle \pm 0.29~***}$ &
     $69.82 {\scriptscriptstyle \pm 0.40}$ &
     $75.27 {\scriptscriptstyle \pm 0.38~***}$ \\
    
    $\log(\sigma(x))$ & 
     $72.33 {\scriptscriptstyle \pm 0.32~***}$ &
     $69.58 {\scriptscriptstyle \pm 0.35}$ &
     $75.53 {\scriptscriptstyle \pm 0.37~***}$ \\
    \midrule
    
    \textbf{Fixed Baseline Functions}\\
    $\textrm{ReLU}$ &
    $71.46 {\scriptscriptstyle \pm 0.50}$ &
    $69.64 {\scriptscriptstyle \pm 0.65}$ &
    $74.39 {\scriptscriptstyle \pm 0.44}$ \\

    $\textrm{ELiSH}$ & 
    $01.00 {\scriptscriptstyle \pm 0.00}$ &
    $01.00 {\scriptscriptstyle \pm 0.00}$ &
    $75.20 {\scriptscriptstyle \pm 0.31~***}$ \\
    
    $\textrm{ELU}$ & 
    $72.30 {\scriptscriptstyle \pm 0.32~***}$ &
    $69.67 {\scriptscriptstyle \pm 0.46}$ &
    $74.95 {\scriptscriptstyle \pm 0.30~**}$ \\
    
    $\textrm{GELU}$ & 
    $71.95 {\scriptscriptstyle \pm 0.35~*}$ &
    $70.19 {\scriptscriptstyle \pm 0.40~*}$ &
    $74.86 {\scriptscriptstyle \pm 0.33~**}$ \\
    
    $\textrm{HardSigmoid}$ & 
    $54.99 {\scriptscriptstyle \pm 1.00}$ &
    $32.55 {\scriptscriptstyle \pm 4.06}$ &
    $64.90 {\scriptscriptstyle \pm 0.69}$ \\
    
    $\textrm{Leaky ReLU}$ & 
    $71.73 {\scriptscriptstyle \pm 0.33}$ &
    $69.78 {\scriptscriptstyle \pm 0.33}$ &
    $74.73 {\scriptscriptstyle \pm 0.35~*}$ \\
    
    $\textrm{Mish}$ & 
    $71.95 {\scriptscriptstyle \pm 0.41~*}$ &
    $69.88 {\scriptscriptstyle \pm 0.54}$ &
    $75.32 {\scriptscriptstyle \pm 0.29~***}$ \\
    
    $\textrm{SELU}$ & 
    $70.53 {\scriptscriptstyle \pm 0.42}$ &
    $68.52 {\scriptscriptstyle \pm 0.29}$ &
    $73.79 {\scriptscriptstyle \pm 0.36}$ \\
    
    $\textrm{sigmoid}$ & 
    $56.10 {\scriptscriptstyle \pm 0.98}$ &
    $36.47 {\scriptscriptstyle \pm 3.32}$ &
    $66.45 {\scriptscriptstyle \pm 0.92}$ \\
    
    $\textrm{Softplus}$ & 
    $72.27 {\scriptscriptstyle \pm 0.26~***}$ &
    $69.71 {\scriptscriptstyle \pm 0.36}$ &
    $75.46 {\scriptscriptstyle \pm 0.52~***}$ \\
    
    $\textrm{Softsign}$ & 
    $56.30 {\scriptscriptstyle \pm 2.16}$ &
    $58.38 {\scriptscriptstyle \pm 0.96}$ &
    $69.33 {\scriptscriptstyle \pm 0.39}$ \\
    
    $\textrm{Swish}$ & 
    $72.26 {\scriptscriptstyle \pm 0.28~***}$ &
    $69.68 {\scriptscriptstyle \pm 0.38}$ &
    $75.08 {\scriptscriptstyle \pm 0.36~***}$ \\
    
    $\textrm{tanh}$ & 
    $56.52 {\scriptscriptstyle \pm 1.53}$ &
    $63.88 {\scriptscriptstyle \pm 0.38}$ &
    $70.44 {\scriptscriptstyle \pm 0.40}$ \\
    \midrule
    
    \textbf{Parametric Baseline Functions}\\
    
    $\textrm{PReLU}$ &
    $72.23 {\scriptscriptstyle \pm 0.37~***}$ &
    $69.77 {\scriptscriptstyle \pm 0.40}$ &
    $75.10 {\scriptscriptstyle \pm 0.53~**}$ \\

    $\textrm{PSwish} = x \cdot \sigma(\beta x)$ & 
    $72.40 {\scriptscriptstyle \pm 0.31~***}$ &
    $70.16 {\scriptscriptstyle \pm 0.46~*}$ &
    $75.39 {\scriptscriptstyle \pm 0.28~***}$ \\
    \midrule
    
    \textbf{Learnable Baseline Functions}\\
    
    $\textrm{APL}$ &
    $72.88 {\scriptscriptstyle \pm 0.32~***}$ &
    $70.81 {\scriptscriptstyle \pm 0.20~***}$ &
    $75.02 {\scriptscriptstyle \pm 0.28~***}$ \\

    $\textrm{PAU}$ &
    $41.46 {\scriptscriptstyle \pm 22.66}$ &
    $01.00 {\scriptscriptstyle \pm 0.00}$ &
    $02.38 {\scriptscriptstyle \pm 4.36}$ \\
    
    $\textrm{SPLASH}$ &
    $72.16 {\scriptscriptstyle \pm 0.81~*}$ &
    $01.00 {\scriptscriptstyle \pm 0.00}$ &
    $73.45 {\scriptscriptstyle \pm 0.43}$ \\
    
    \bottomrule
    \end{tabular}
    \end{adjustbox}
    \label{tab:pangaea:results}
\end{table*}

\subsection{Specialized Activation Functions}
For all three architectures, there are baseline activation functions that outperform ReLU by statistically significant margins.  This result already demonstrates that activation functions should be chosen carefully, and that the common practice of using ReLU by default is suboptimal.  Furthermore, the best baseline activation function is different for different architectures, suggesting that specializing activation functions to the architecture is a good approach.

Because PANGAEA uses validation accuracy from a single neural network to assign fitness scores to activation functions, there is selective pressure to discover functions that exploit the structure of the network.  The functions thus become specialized to the architecture. They increase the performance of that architecture; however, they may not be as effective with other architectures.  Specialized activation function accuracies are highlighted with the gray background in Table \ref{tab:pangaea:results}.  To verify that the functions are customized to a specific architecture, the functions were cross-evaluated with other architectures.

PANGAEA discovered two specialized activation functions for WRN-10-4 that outperformed all baseline functions by a statistically significant margin ($p \leq 0.05$).  The top specialized function on ResNet-v1-56 also significantly outperformed all baseline functions, except APL (for which $p = 0.19$).  The top specialized activation function on ResNet-v2-56 similarly significantly outperformed all except Softplus ($p = 0.25$) and PSwish ($p = 0.09$). These results strongly demonstrate the power of customizing activation functions to architectures.  Indeed, specializing activation functions is a new dimension of activation function search not considered by previous work \cite{DBLP:conf/iclr/RamachandranZL18, basirat2018quest}.

\subsection{General Activation Functions}
Although the best performance comes from specialization, it is also useful to discover activation functions that achieve high accuracy across multiple architectures. For instance, they could be used initially on a new architecture before spending compute on specialization. A powerful albeit computationally demanding approach would be to evolve general functions directly, by evaluating candidates on multiple architectures during evolution. However, it turns out that each specialized evolution run already generates a variety of functions, many of which are general.

To evaluate whether the PANGAEA runs discovered general functions as well, the top 10 functions from each run were combined into a pool of 30 candidate functions.  Each candidate was assigned three fitness scores equal to the average validation accuracy from two independent training runs on each of the three architectures.  Candidate functions that were Pareto-dominated, were functionally equivalent to one of the baseline activation functions, or had already been selected as a specialized activation function were discarded, leaving three Pareto-optimal general activation functions.

These functions indeed turned out to be effective as general activation functions.  All three achieved good accuracy with ResNet-v1-56 and significantly outperformed ReLU with WRN-10-4 and ResNet-v2-56.  However, specialized activation functions, i.e.\ those specifically evolved for each architecture, still give the biggest improvements. 

\subsection{Shapes of Discovered Functions}

Many of the top discovered activation functions are compositions of
multiple unary operators.  These functions do not exist in the core
unit search space of \citet{DBLP:conf/iclr/RamachandranZL18}, which
requires binary operators.  They also do not exist in the $S_1$ or
$S_2$ search spaces in CAFE, which are too shallow.  
The design of the search space is therefore as
important as the search algorithm itself.  Previous search spaces that
rely on repeated fixed building blocks only have limited
representational power. In contrast, PANGAEA utilizes a flexible
search space that can represent activation functions in an arbitrary
computation graph (see Section \ref{sec:pangaea:searchspace} for an analysis on the size of the PANGAEA search space). 

Furthermore, while the learnable baseline functions can in principle approximate the functions discovered by PANGAEA, they do not consistently match its performance.  PANGAEA utilizes both evolutionary search and gradient descent to discover activation functions, and apparently this combination of optimization processes is more powerful than gradient descent alone.

\begin{figure}
    \centering
    \includegraphics[clip, trim=1.5em 0em 4.5em 0em, width=\linewidth]{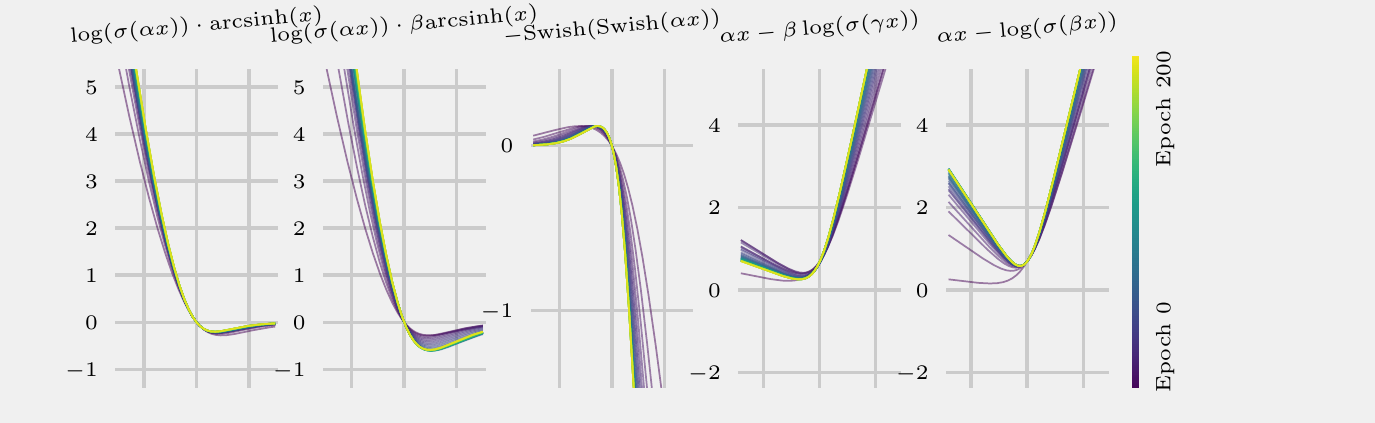}\\
    \vspace{-0.5em}
    \includegraphics[clip, trim=1.5em 0.5em 4.5em 1em, width=\linewidth]{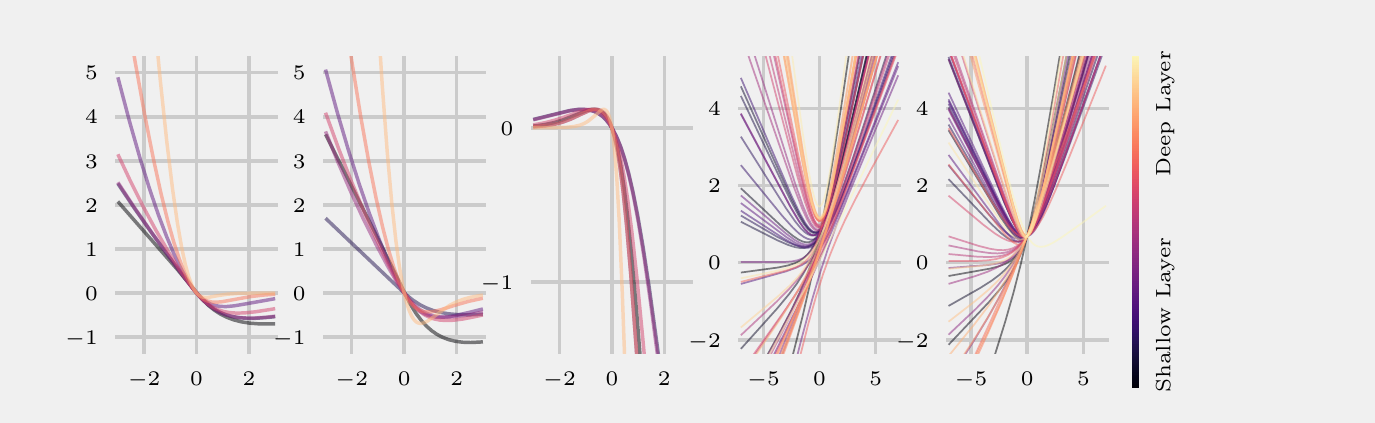}
    \vspace{-1.5em}
    \caption{Adaptation of parametric activation functions over time and        
      space. \textbf{Top:} The parameters change during training,           
      resulting in different activation functions in the early and late         
      stages. The plots were created by averaging the values of $\alpha$,       
      $\beta$, and $\gamma$ across the entire network at different training epochs. \textbf{Bottom:} The           
      parameters are updated separately in each channel, inducing               
      different activation functions at different locations of a neural         
      network. The plots were created by averaging $\alpha$, $\beta$, and       
      $\gamma$ at each layer of the network after the completion of             
      training.}
    \label{fig:pangaea:varying}
\end{figure}

Figure~\ref{fig:pangaea:varying} shows examples of parametric activation
functions discovered by PANGAEA.  As training progresses, gradient
descent makes small adjustments to the function parameters $\alpha$,
$\beta$, and $\gamma$, resulting in activation functions that change
over time.  This result suggests that
it is advantageous to have one activation function in the early stages
of training when the network learns rapidly, and a different
activation function in the later stages of training when the network
is focused on fine-tuning. The parameters $\alpha$, $\beta$, and
$\gamma$ are also learned separately for the different channels,
resulting in activation functions that vary with location in a neural
network. Functions in deep layers
(near the output) are more nonlinear than those in shallow
layers (closer to the input), possibly contrasting the need to
form regularized embeddings with the need to form categorizations. In
this manner, PANGAEA customizes	the activation functions to both time
and space for each architecture.

\section{Diving Deeper: Experiments on Ablations, Variations, and Training Dynamics}
\label{sec:pangaea:ablations}

PANGAEA is a method with many moving parts.  The main results from Section \ref{sec:pangaea:results} already showed the power of PANGAEA, and the experiments from this section illuminate how each component of PANGAEA contributes to its success.  Evolutionary search and gradient descent working in tandem provided a better strategy than either optimization algorithm alone (Section \ref{sec:pangaea:baselinesearchstrategies}).  The top activation functions benefitted from their learnable parameters (Section \ref{sec:pangaea:effectofparameterization}), but baseline functions did not (Section \ref{sec:pangaea:parametric_baseline}), showing how PANGAEA discovered functional forms well-suited to parameterization.  PANGAEA is robust: the activation functions it discovers transfer to larger networks (Section \ref{sec:pangaea:scalingup}) and PANGAEA achieves impressive results with a new dataset and architecture (Section \ref{sec:pangaea:allcnnc}).  The activation functions discovered by PANGAEA improve accuracy by easing optimization and implicitly regularizing the network (Section \ref{sec:pangaea:dynamics}).

\subsection{Additional Baseline Search Strategies}
\begin{table}[ht]
    \centering
    \caption{WRN-10-4 accuracy with different activation functions on CIFAR-100, shown as mean $\pm$ sample standard deviation across ten runs.  PANGAEA discovers better activation functions than random search and nonparametric evolution.\newline}
    \begin{adjustbox}{max width=\linewidth}
    \begin{tabular}{ll} \toprule
        \textbf{PANGAEA} \\
        $\log(\sigma(\alpha x)) \cdot \beta \textrm{arcsinh}(x)$ & 
        $\bm{73.20} {\scriptscriptstyle \pm 0.37}$ \\
        $\log(\sigma(\alpha x)) \cdot \textrm{arcsinh}(x)$ & 
        $73.16 {\scriptscriptstyle \pm 0.41}$ \\
        $-\textrm{Swish}(\textrm{Swish}(\alpha x))$ & 
        $72.49 {\scriptscriptstyle \pm 0.55}$ \\ 
        \midrule
        
        \textbf{Random Search}\\
        $\alpha \textrm{Swish}(x)$ & $72.85 {\scriptscriptstyle \pm 0.25}$ \\
        $\textrm{Softplus}(x) \cdot \arctan(\alpha x)$ & $72.81 {\scriptscriptstyle \pm 0.35}$ \\
        $\textrm{ReLU}(\alpha \textrm{arcsinh}(\beta \sigma(x))) \cdot \textrm{SELU}(\gamma x)$ & $72.69 {\scriptscriptstyle \pm 0.21}$ \\
        \midrule 
        
        \textbf{Nonparametric Evolution}\\
        $\cosh(1) \cdot \textrm{Swish}(x)$ & 
        $72.78 {\scriptscriptstyle \pm 0.24}$ \\    
        $(e^1-1) \cdot \textrm{Swish}(x)$ & 
        $72.52 {\scriptscriptstyle \pm 0.34}$ \\
        $\textrm{ReLU}(\textrm{Swish}(x))$ &
        $72.04 {\scriptscriptstyle \pm 0.54}$ \\
        \midrule
        
        ReLU & $71.46 {\scriptscriptstyle \pm 0.50}$ \\
        Swish & $72.26 {\scriptscriptstyle \pm 0.28}$ \\

    \bottomrule
    \end{tabular}
    \end{adjustbox}
    \label{tab:pangaea:search_strategy}
\end{table}
\label{sec:pangaea:baselinesearchstrategies}

As additional baseline comparisons, two alternative search strategies were used to discover activation functions for WRN-10-4.  First, a random search baseline was established by applying random mutations without regard to fitness values.  This approach corresponds to setting evolutionary parameters $P=1$, $S=1$, and  $V=0\%$.  Second, to understand the effects of function parameterization, a nonparametric evolution baseline was run.  This setting is identical to PANGAEA, except functions are not parameterized (Figure \ref{fig:pangaea:parameterization}).  Otherwise, both baselines follow the same setup as PANGAEA, including evaluating $C=1{,}000$ candidate functions and reranking the most promising ones (Section \ref{sec:pangaea:evolution}).

Table \ref{tab:pangaea:search_strategy} shows the results of this experiment.  Random search is able to discover good functions that outperform ReLU, but the functions are not as powerful as those discovered by PANGAEA.  This result demonstrates the importance of fitness selection in evolutionary search.  The functions discovered by nonparametric evolution similarly outperform ReLU but underperform PANGAEA.  Interestingly, without parameterization, evolution is not as creative: two of the three functions discovered are merely Swish multiplied by a constant.  Random search and nonparametric evolution both discovered good functions that improved accuracy, but PANGAEA achieves the best performance by combining the advantages of fitness selection and function parameterization.

\subsection{Effect of Parameterization}
\begin{table}
    \centering

    \caption{CIFAR-100 test set accuracy aggregated over ten runs, shown as mean $\pm$ sample standard deviation.  The parametric evolved functions tend to outperform their nonparametric counterparts, demonstrating the value of parameterization.\newline}

    \begin{adjustbox}{max width=\linewidth}
    \begin{tabular}{ll} \toprule
         
    \multicolumn{2}{l}{\textbf{WRN-10-4}}\\
    $\log(\sigma(\alpha x)) \cdot \beta \textrm{arcsinh}(x)$ & 
    $\bm{73.20} {\scriptscriptstyle \pm 0.37}$ \\
    $\log(\sigma(\alpha x)) \cdot \textrm{arcsinh}(x)$ & 
    $73.16 {\scriptscriptstyle \pm 0.41}$ \\
    $\log(\sigma(x)) \cdot \textrm{arcsinh}(x)$ & 
    $72.51 {\scriptscriptstyle \pm 0.30}$ \\
    $-\textrm{Swish}(\textrm{Swish}(\alpha x))$ & 
    $\bm{72.49} {\scriptscriptstyle \pm 0.55}$ \\
    $-\textrm{Swish}(\textrm{Swish}(x))$ & 
    $71.97 {\scriptscriptstyle \pm 0.22}$ \\
    \midrule
    
    \multicolumn{2}{l}{\textbf{ResNet-v1-56}} \\
    $\alpha x - \beta \log(\sigma(\gamma x))$ & 
    $\bm{71.01} {\scriptscriptstyle \pm 0.64}$ \\
    $\alpha x - \log(\sigma(\beta x))$ & 
    $70.30 {\scriptscriptstyle \pm 0.58}$ \\
    $x - \log(\sigma(x))$ & 
    $69.29 {\scriptscriptstyle \pm 0.45}$ \\
    \midrule
    
    \multicolumn{2}{l}{\textbf{ResNet-v2-56}} \\
    $\min\{\log(\sigma(x)), \alpha \log(\sigma(\beta x))\}$ & 
    $75.19 {\scriptscriptstyle \pm 0.39}$ \\
    $\log(\sigma(x))$ & 
    $\bm{75.53} {\scriptscriptstyle \pm 0.37}$ \\
    \bottomrule
    \end{tabular}
    \end{adjustbox}

    \label{tab:pangaea:disable_parameters}
\end{table}
\label{sec:pangaea:effectofparameterization}

To understand the effect that parameterizing activation functions has on their performance, the specialized functions (Table \ref{tab:pangaea:results}) were trained without them.  As Table \ref{tab:pangaea:disable_parameters} shows, when parameters are removed, performance drops.  The function $\log(\sigma(x))$ is the only exception to this rule, but its high performance is not surprising, since it was previously discovered as a general activation function (Table \ref{tab:pangaea:results}).  These results confirm that the learnable parameters contributed to the success of PANGAEA.

\subsection{Adding Parameters to Fixed Baseline Activation Functions}
\label{sec:pangaea:parametric_baseline}

As demonstrated in Tables \ref{tab:pangaea:results} and \ref{tab:pangaea:disable_parameters}, learnable parameters are an important component of PANGAEA.  An interesting question is whether accuracy can be increased simply by augmenting existing activation functions with learnable parameters.  Table \ref{tab:pangaea:parametric_baseline} shows that this is not the case: trivially adding parameters to fixed activation functions does not reliably improve performance.  This experiment implies that certain functional forms are better suited to taking advantage of parameterization than others.  By utilizing evolutionary search, PANGAEA is able to discover these functional forms automatically.

\begin{table}
    \centering
    \caption{CIFAR-100 test set accuracy aggregated over ten runs, shown as mean $\pm$ sample standard deviation.  Trivially parameterizing existing fixed activation functions does not substantially improve performance.  PANGAEA, on the other hand, utilizes evolutionary search to discover functional forms that are well suited to taking advantage of the parameters, leading to better performance.\newline
    }
    \centering
    \begin{adjustbox}{max width=\textwidth}
    \begin{tabular}{llll} \toprule
    & \textbf{WRN-10-4} & \textbf{ResNet-v1-56} & \textbf{ResNet-v2-56} \\ \midrule

    \textbf{Best Specialized Functions}\\
    $\log(\sigma(\alpha x)) \cdot \beta \textrm{arcsinh}(x)$ & 
    $\bm{73.20} {\scriptscriptstyle \pm 0.37}$ \\

    $\alpha x - \beta \log(\sigma(\gamma x))$ & &
    $\bm{71.01} {\scriptscriptstyle \pm 0.64}$ & \\
    
    $\textrm{Softplus}(\textrm{ELU}(x))$ & & &
    $\bm{75.61} {\scriptscriptstyle \pm 0.42}$ \\
    
    \midrule 
    
    \textbf{Parameterized Functions}\\
    $\alpha \textrm{ReLU}(\beta x)$ & 
    $71.96 {\scriptscriptstyle \pm 0.31}$ &
    $68.93 {\scriptscriptstyle \pm 0.22}$ &
    $73.52 {\scriptscriptstyle \pm 0.37}$ \\
    
    $\alpha \textrm{ELiSH}(\beta x)$ & 
    $01.00 {\scriptscriptstyle \pm 0.00}$&
    $01.00 {\scriptscriptstyle \pm 0.00}$&
    $73.94 {\scriptscriptstyle \pm 0.33}$\\
    
    $\alpha \textrm{ELU}(\beta x)$ & 
    $71.98 {\scriptscriptstyle \pm 0.24}$ &
    $69.06 {\scriptscriptstyle \pm 0.37}$ &
    $73.97 {\scriptscriptstyle \pm 0.45}$ \\
    
    $\alpha \textrm{GELU}(\beta x)$ & 
    $71.96 {\scriptscriptstyle \pm 0.34}$ &
    $69.39 {\scriptscriptstyle \pm 0.35}$ & 
    $73.83 {\scriptscriptstyle \pm 0.24}$ \\
    
    $\alpha \textrm{HardSigmoid}(\beta x)$ & 
    $66.70 {\scriptscriptstyle \pm 0.64}$ &
    $34.33 {\scriptscriptstyle \pm 6.53}$ &
    $65.10 {\scriptscriptstyle \pm 0.40}$ \\ 
    
    $\alpha \textrm{Leaky ReLU}(\beta x)$ & 
    $71.74 {\scriptscriptstyle \pm 0.39}$ &
    $69.11 {\scriptscriptstyle \pm 0.47}$ &
    $73.44 {\scriptscriptstyle \pm 0.29}$ \\ 
    
    $\alpha \textrm{Mish}(\beta x)$ & 
    $72.11 {\scriptscriptstyle \pm 0.31}$ &
    $69.51 {\scriptscriptstyle \pm 0.67}$ &
    $73.72 {\scriptscriptstyle \pm 0.32}$ \\
    
    $\alpha \textrm{SELU}(\beta x)$ & 
    $71.07 {\scriptscriptstyle \pm 0.33}$ &
    $68.05 {\scriptscriptstyle \pm 0.39}$ &
    $73.37 {\scriptscriptstyle \pm 0.38}$ \\
    
    $\alpha \textrm{sigmoid}(\beta x)$ & 
    $66.98 {\scriptscriptstyle \pm 0.66}$ &      
    $44.40 {\scriptscriptstyle \pm 2.62}$ &
    $66.98 {\scriptscriptstyle \pm 0.85}$ \\ 
    
    $\alpha \textrm{Softplus}(\beta x)$ & 
    $71.73 {\scriptscriptstyle \pm 0.31}$ &
    $68.84 {\scriptscriptstyle \pm 0.30}$ &
    $73.95 {\scriptscriptstyle \pm 0.37}$ \\ 
    
    $\alpha \textrm{Softsign}(\beta x)$ & 
    $62.12 {\scriptscriptstyle \pm 0.83}$ &
    $09.18  {\scriptscriptstyle \pm 13.75}$ &
    $68.87 {\scriptscriptstyle \pm 0.38}$ \\ 
    
    $\alpha \textrm{Swish}(\beta x)$ & 
    $72.26 {\scriptscriptstyle \pm 0.29}$ &
    $69.25 {\scriptscriptstyle \pm 0.28}$ &
    $73.93 {\scriptscriptstyle \pm 0.22}$ \\ 
    
    $\alpha \textrm{tanh}(\beta x)$ & 
    $63.55 {\scriptscriptstyle \pm 0.56}$ &
    $02.92 {\scriptscriptstyle \pm 6.07}$ &
    $69.55 {\scriptscriptstyle \pm 0.62}$ \\ 
    
    \bottomrule
    \end{tabular}
    \end{adjustbox}
    \label{tab:pangaea:parametric_baseline}
\end{table}

\subsection{Scaling Up to Larger Networks} 
\label{sec:pangaea:scalingup}

PANGAEA discovered specialized activation functions for WRN-10-4, ResNet-v1-56, and ResNet-v2-56.  Table \ref{tab:pangaea:scale_up} shows the performance of these activation functions when paired with the larger WRN-16-8, ResNet-v1-110, and ResNet-v2-110 architectures.  Due to time constraints, ReLU is the only baseline activation function in these experiments. 

Two of the three functions discovered for WRN-10-4 outperform ReLU with WRN-16-8, and all three functions discovered for ResNet-v2-56 outperform ReLU with ResNet-v2-110.  Interestingly, ReLU achieves the highest accuracy for ResNet-v1-110, where activation functions are part of the skip connections, but not for ResNet-v2-110, where they are not. Thus, it is easier to achieve high performance with specialized activation functions on very deep architectures when they are not confounded by skip connections.  Notably, ResNet-v2-110 with $\textrm{Softplus}(\textrm{ELU}(x))$ performs comparably to the much larger ResNet-v2-1001 with ReLU (77.14 vs.\ 77.29, as reported by \citet{he2016identity}). 

Evolving novel activation functions can be computationally expensive.  The results in Table \ref{tab:pangaea:scale_up} suggest that it is possible to reduce this cost by evolving activation functions for smaller architectures, and then using the discovered functions with larger architectures.

\begin{table}[ht]
    \centering
    \caption{Specialized activation functions discovered for WRN-10-4, ResNet-v1-56, and ResNet-v2-56 are evaluated on larger versions of those architectures: WRN-16-8, ResNet-v1-110, and ResNet-v2-110, respectively.  CIFAR-100 test accuracy is reported as mean $\pm$ sample standard deviation across three runs.  Specialized activation functions successfully transfer to WRN-16-8 and ResNet-v2-110, outperforming ReLU.
    \newline}
    \begin{adjustbox}{max width=\linewidth}
    \begin{tabular}{ll} 
        \toprule
         \textbf{WRN-16-8}  \\ 
         $\log(\sigma(\alpha x)) \cdot \beta \textrm{arcsinh}(x)$ & $\bm{78.36} {\scriptscriptstyle \pm 0.17}$ \\
         $\log(\sigma(\alpha x)) \cdot \textrm{arcsinh}(x)$ & $78.34 {\scriptscriptstyle \pm 0.20}$ \\
         $-\textrm{Swish}(\textrm{Swish}(\alpha x))$ & $78.00 {\scriptscriptstyle \pm 0.35}$ \\ 
         ReLU & $78.15 {\scriptscriptstyle \pm 0.03}$\\
        \midrule
         \textbf{ResNet-v1-110}  \\ 
         $\alpha x - \beta \log(\sigma(\gamma x))$ & $70.85 {\scriptscriptstyle \pm 0.50}$ \\
         $\alpha x - \log(\sigma(\beta x))$ & $70.34 {\scriptscriptstyle \pm 0.60}$ \\
         $\max\{\textrm{Swish}(x), 0\}$ & $70.36 {\scriptscriptstyle \pm 0.56}$ \\ 
         ReLU & $\bm{71.23} {\scriptscriptstyle \pm 0.25}$ \\
        \midrule 
         \textbf{ResNet-v2-110}  \\ 
         $\textrm{Softplus}(\textrm{ELU}(x))$ & $\bm{77.14} {\scriptscriptstyle \pm 0.38}$ \\
         $\min\{\log(\sigma(x)), \alpha \log(\sigma(\beta x))\}$ & $76.93 {\scriptscriptstyle \pm 0.19}$ \\
         $\textrm{SELU}(\textrm{Swish}(x))$ & $76.96 {\scriptscriptstyle \pm 0.14}$ \\ 
         ReLU & $76.34 {\scriptscriptstyle \pm 0.11}$ \\
        \bottomrule
    \end{tabular}
    \end{adjustbox}
    \label{tab:pangaea:scale_up}
\end{table}

\subsection{A New Task: All-CNN-C on CIFAR-10} 
\begin{table}
    \centering
    \caption{All-CNN-C accuracy with different activation functions on CIFAR-10, shown as mean $\pm$ sample standard deviation across ten runs.  PANGAEA improves performance significantly also with this different architecture and task.\newline}
    \begin{adjustbox}{max width=\linewidth}
    \begin{tabular}{ll}
    \toprule
    $\alpha\textrm{ReLU}(\beta |\textrm{ReLU}(\gamma x)|)$  & $\bm{92.77} {\scriptscriptstyle \pm 0.13}$\\
    $\alpha \textrm{Swish}(x) \cdot \cosh(\beta)$ & $92.66 {\scriptscriptstyle \pm 0.08}$\\
    $\alpha \textrm{Swish}(\beta x)$ & $76.15 {\scriptscriptstyle \pm 34.86}$\\
    ReLU & $88.47 {\scriptscriptstyle \pm 0.14}$\\
    \bottomrule
    \end{tabular}
    \end{adjustbox}
    \label{tab:pangaea:allcnnc}
\end{table}
\label{sec:pangaea:allcnnc}
To verify that PANGAEA is effective with different datasets and types of architectures, activation functions were evolved for the All-CNN-C \citep{springenberg2015striving} architecture on the CIFAR-10 dataset.  All-CNN-C is quite distinct from the architectures considered above: It contains only convolutional layers, activation functions, and a global average pooling layer, but it does not have residual connections.

As shown in Table \ref{tab:pangaea:allcnnc}, PANGAEA improves significantly over ReLU in this setting as well.  The accuracy improvement from 88.47\% to 92.77\% corresponds to an impressive 37.29\% reduction in the error rate.  This experiment provides further evidence that PANGAEA can improve performance for different architectures and tasks.

\subsection{Training Dynamics of Evolved Activation Functions}
\label{sec:pangaea:dynamics}

PANGAEA discovers activation functions that improve accuracy over baseline functions.  An interesting question is: What mechanisms do these evolved functions use in order to achieve better performance?  By examining training and validation curves qualitatively for different activation functions, it appears that some functions ease optimization, while others improve performance through implicit regularization.

For instance, Figure \ref{fig:pangaea:allcnnc_training_curves} shows training and validation curves for the All-CNN-C architecture and four discovered activation functions, plus ReLU. With All-CNN-C, the learning rate starts at 0.01 and decreases by a factor of 0.1 after epochs 200, 250, and 300, with training ending at epoch 350.  With some discovered activation functions, the training and validation curves are consistently higher than the curves from ReLU across all epochs of training, indicating easier optimization (Figure \ref{fig:pangaea:allcnnc_training_curves}a).  With other activation functions, the training and validation curves actually remain lower than those from ReLU until the final stage in the learning rate schedule, suggesting implicit regularization (Figure \ref{fig:pangaea:allcnnc_training_curves}b). In such cases, the network is learning difficult patterns in the early stages of training; in contrast, the ReLU model memorizes simpler patterns, which leads to early gains but difficulty generalizing later on \citep{li2019towards}.

\input{chapters/pangaea/figures/allcnnc_training_curves.tex}

Even more complex behavior can be observed in some cases.  For example, some discovered functions have training and validation curves that start out higher than those from ReLU, plateau to a lower value, but then again surpass those from ReLU at later stages in the learning rate schedule (Figure \ref{fig:pangaea:allcnnc_training_curves}c).  Others have curves that start out lower than those from ReLU, but surpass it within a few dozen epochs (Figure \ref{fig:pangaea:allcnnc_training_curves}d). Such diverse behavior suggests that these mechanisms can be combined in complex ways. Thus, the plots in Figure \ref{fig:pangaea:allcnnc_training_curves} suggest that in the future it may be feasible to search for activation functions with specific properties depending on the task at hand.  For example, a larger network may benefit from an activation function that implicitly regularizes it, while a smaller network may be better suited to an activation function that eases optimization.

\section{Evaluating Robustness: Experiments on Reliability, Flexibility, and Efficiency}
\label{sec:pangaea:robustness}

This section demonstrates robustness of PANGAEA with three experiments.  In the first experiment, two independent PANGAEA processes were run from scratch.  The processes discovered similarly powerful activation functions, demonstrating that PANGAEA reliably discovers good activation functions each time it is run.  In the second experiment, variations of PANGAEA with per-layer and per-neuron learnable parameters were run, to complement the original PANGAEA with per-channel parameters.  The results show that PANGAEA is effective with all three variations.  Third, statistics from activation functions across the per-layer, two per-channel, and the per-neuron variations were aggregated to demonstrate computational efficiency of PANGAEA.  Every PANGAEA run discovers an activation function that outperforms ReLU early on in the search process, before much compute is spent; they also eventually discover activation functions that perform much better and train almost as quickly as ReLU.

\subsection{Reliability of PANGAEA}

PANGAEA is inherently a stochastic process.  Therefore, an important question is whether PANGAEA can discover good activation functions reliably every time it is run.  To answer this question, PANGAEA was run from scratch on ResNet-v1-56 independently two times.  These runs utilized per-channel parameters, and were identical to the original PANGAEA run, except they were allowed to evaluate up to $C=2{,}000$ activation functions instead of $C=1{,}000$ from the original run.  They were run on ResNet-v1-56 since it proved to be the most difficult architecture to optimize (Table \ref{tab:pangaea:results}).  

\begin{table}
    \centering
    \caption{Performance of the best activation functions from multiple PANGAEA runs with ResNet-v1-56.  CIFAR-100 test accuracy is shown as the mean $\pm$ sample standard deviation across three runs.  The three independent per-channel runs produce activation functions of similar performance, demonstrating the reliability of PANGAEA.  PANGAEA also discovers good activation functions with per-layer or per-neuron parameters, showing its flexibility. The very best per-layer and per-neuron functions are difficult to find, suggesting that their distribution is long-tailed.\\}
    \begin{tabular}{ll}
        \toprule
        \textbf{Per-layer PANGAEA}\\
        $\max\{\textrm{Swish}(x) , x\}$ & $71.00 {\scriptscriptstyle \pm 0.28}$\\
        $\max\{x, \alpha \cdot \log(\sigma(\textrm{SELU}(x)))\}$ & $70.76 {\scriptscriptstyle \pm 0.29}$\\
        $\max\{\alpha \cdot \max\{\beta \cdot \textrm{ReLU}(\textrm{arcsinh}(x)), x\}, \max\{\gamma \cdot \textrm{ReLU}(\textrm{arcsinh}(x)), x\}\}$ & $70.63 {\scriptscriptstyle \pm 0.35}$\\
        \midrule
        
        \textbf{Original Per-channel PANGAEA Run}\\
        $\alpha x - \beta \log(\sigma(\gamma x))$ & 
        $71.01 {\scriptscriptstyle \pm 0.64}$ \\
        $\alpha x - \log(\sigma(\beta x))$ & 
        $70.30 {\scriptscriptstyle \pm 0.58}$ \\
        $\max\{\textrm{Swish}(x), 0\}$ & 
        $69.43 {\scriptscriptstyle \pm 0.69}$ \\
        \midrule
        
        \textbf{Additional Per-channel PANGAEA Run 1} \\
        $\max \{ \min\{ \alpha \cdot x , \textrm{ELU}(x)\} , 0 \}$ & $70.53 {\scriptscriptstyle \pm 0.31}$\\ 
        $\alpha \cdot \max\left\{\beta \cdot \textrm{ReLU}\left(\frac{\textrm{Swish}(x)}{\gamma}\right), x\right\}$ & $70.52 {\scriptscriptstyle \pm 0.39}$\\
        $\max\{ \textrm{ReLU}(\textrm{Swish}(\alpha \cdot x)), \beta \cdot x\}$ & $70.44 {\scriptscriptstyle \pm 0.44}$\\
        \midrule
        
        \textbf{Additional Per-channel PANGAEA Run 2} \\
        $\max\{ \textrm{Swish}(\alpha \cdot x), x \}$ & $71.03 {\scriptscriptstyle \pm 0.40}$\\
        $\max\{ \textrm{Swish}(x), \textrm{arcsinh}(\alpha \cdot \beta \cdot x)\}$ & $70.52 {\scriptscriptstyle \pm 0.35}$\\ 
        $\max\{ \textrm{Swish}(x), \textrm{arcsinh}(\alpha \cdot x)\}$ & $70.41 {\scriptscriptstyle \pm 0.38}$\\
        \midrule
        
        \textbf{Per-neuron PANGAEA}\\
        $\alpha \cdot \max\{\beta \cdot \textrm{Swish}(\gamma \cdot x), \textrm{Swish}(x)\}$ & $71.25 {\scriptscriptstyle \pm 0.35}$\\
        $\alpha \cdot x - (\beta \cdot \textrm{Swish}(\gamma \cdot x))$ & $71.23 {\scriptscriptstyle \pm 0.18}$\\
        $\textrm{ELU}(\textrm{ELU}(\alpha \cdot x) + \log(\sigma(0)))$ & $71.20 {\scriptscriptstyle \pm 0.25}$\\
        \midrule
        
        \textbf{Random Sample $(n=500)$}\\
        Per-layer $\max\{\alpha \cdot \textrm{Swish}(\beta \cdot x), \textrm{Softplus}(x)\}$ & $70.32$ \\
        Per-channel $\alpha \cdot x^2$ & $70.91$\\
        Per-neuron $\textrm{bessel\_i0e}(|x|) + \alpha \cdot |x|$ & $\bm{71.66}$ \\
        \midrule
        
        ReLU & $69.64 {\scriptscriptstyle \pm 0.65}$\\
        \bottomrule
    \end{tabular}
    
    \label{tab:pangaea:resnetv1_lcn}
\end{table}

There are multiple ways to analyze the similarity of the two PANGAEA runs.  First, a simple and relevant metric is to look at the test accuracies of the functions discovered by each search.  Table~\ref{tab:pangaea:resnetv1_lcn} shows that both PANGAEA runs discovered multiple good activation functions.  The accuracies achieved by these functions are substantially higher than those achieved by ReLU. Most importantly, although the functions themselves are different, they resulted in similar accuracies as functions from the original PANGAEA run.  

A second way is to compare the time course of discovery. To this end, Figure \ref{fig:pangaea:moving_acc_lcn} shows how the populations of $P=64$ activation functions improved over time in the two PANGAEA runs.  In both cases, the initial functions are relatively poor.  As evolution progresses, both runs discover better functions at similar rates.  This result shows that in addition to the final results, the PANGAEA process as a whole is stable and reliable.

\begin{figure}
    \centering
    \includegraphics[width=0.75\linewidth]{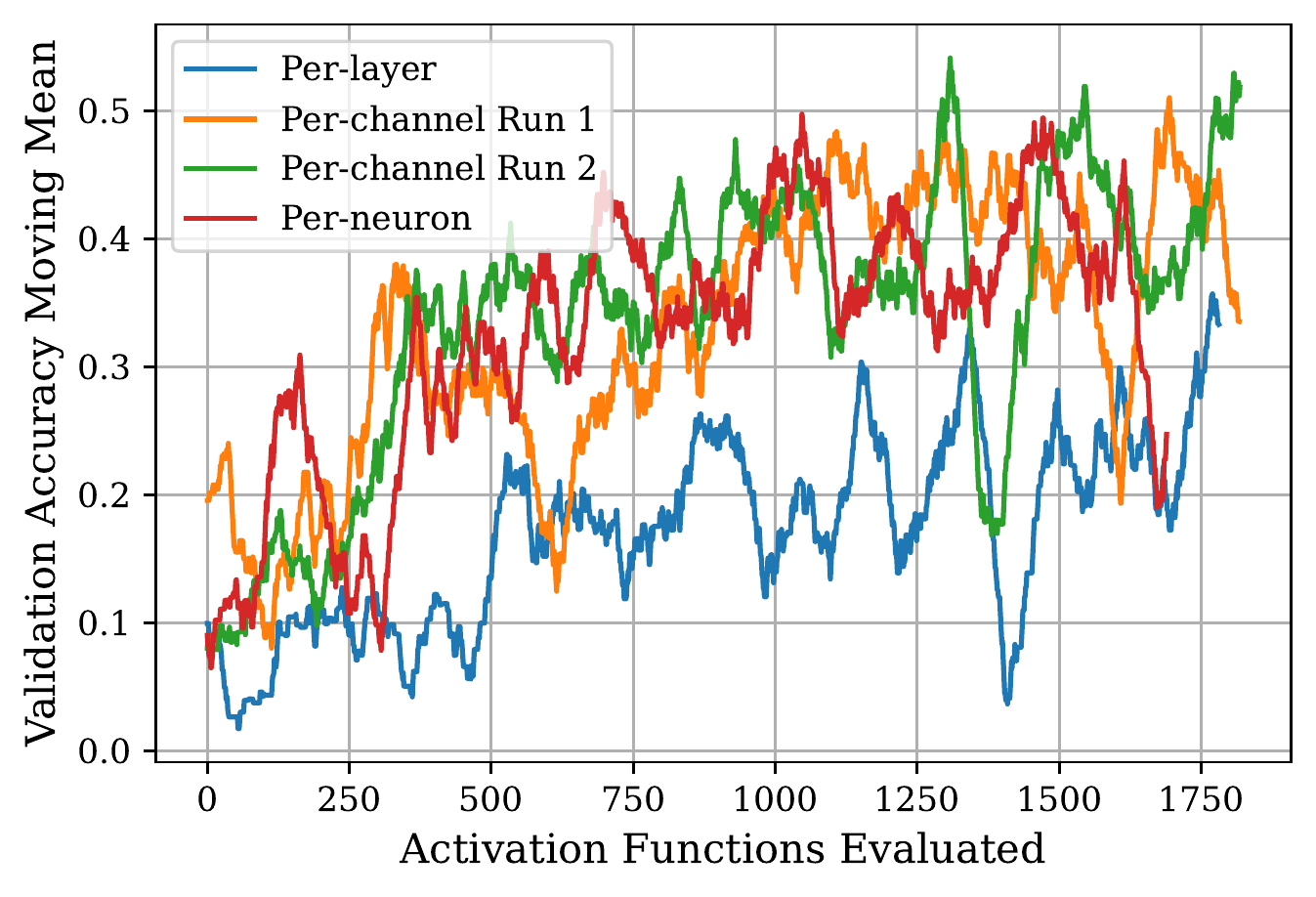}
    \caption{Average population fitness across time for four independent PANGAEA runs.  The plots show the average validation accuracy achieved with the 64 most recently evaluated activation functions at any given time in the search process.  All four PANGAEA runs gradually discover better activation functions as they explore the search space, with the per-layer run slightly below the others.  Importantly, the two per-channel PANGAEA runs progress at similar rates, demonstrating the reliability of PANGAEA.}
    \label{fig:pangaea:moving_acc_lcn}
\end{figure}

A third way is to compare the complexity, i.e.\ time it takes to train the network with the discovered functions. Figure~\ref{fig:pangaea:scatter_hist_lcn} shows the cost of all of the activation functions considered throughout each PANGAEA run. The runtimes of activation functions are comparable across different validation accuracies, suggesting that both runs discovered functions of similar complexity.

\begin{figure}
    \centering
    \includegraphics[width=0.9\linewidth]{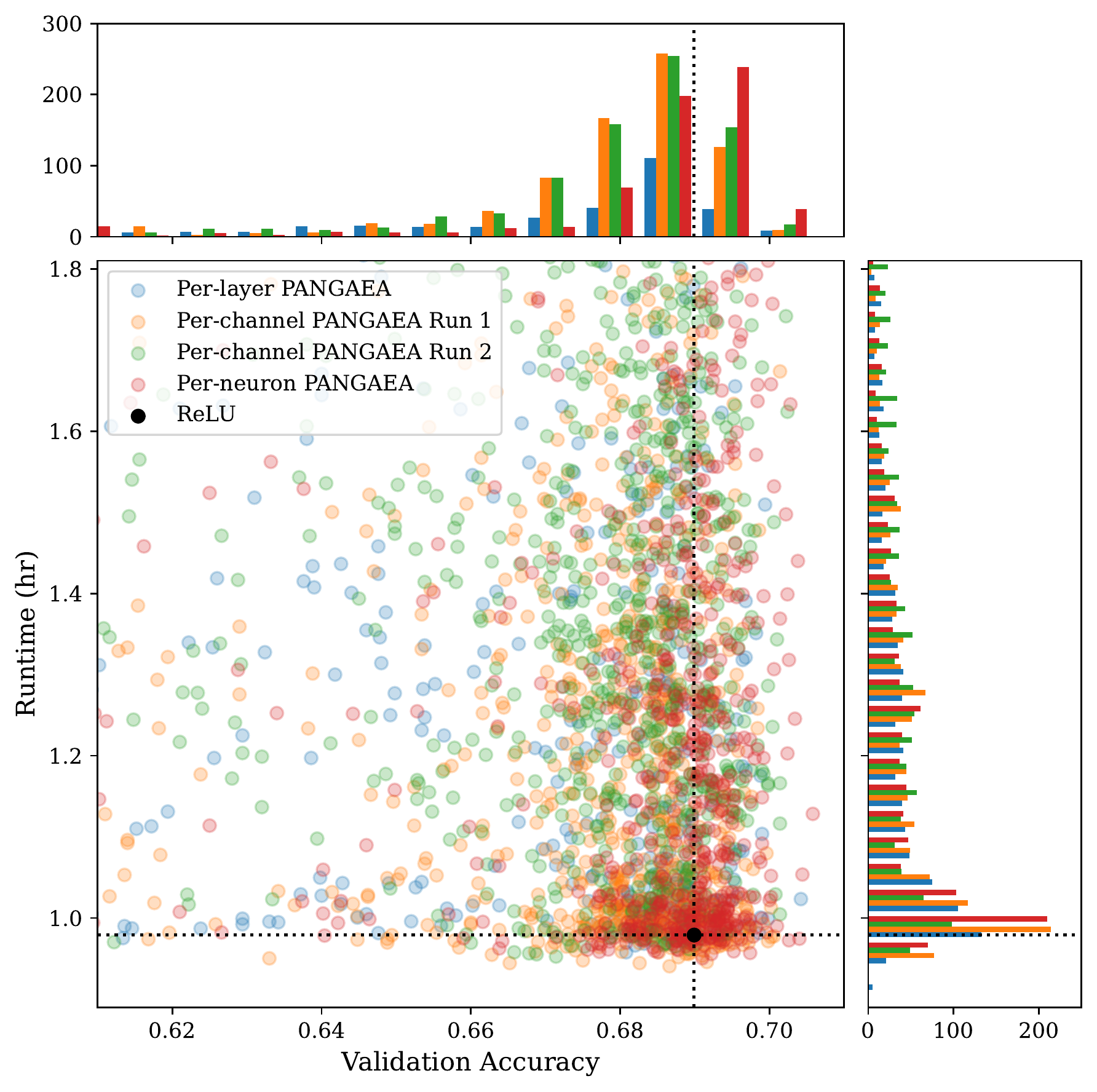}
    \caption{Fitness (validation accuracy) and compute cost (runtime in hours) among all activation functions considered in four independent PANGAEA processes on ResNet-v1-56.  Each point represents a different activation function.  The distribution of fitness and compute cost are shown in histograms on the top and right, respectively.  All PANGAEA variants explore activation functions of similar complexity and reliably discover many novel functions that outperform ReLU.}
    \label{fig:pangaea:scatter_hist_lcn}
\end{figure}

In sum, the two PANGAEA runs produced comparable final results, progressed at comparable rates, and searched through functions of similar complexity.  These results suggest that PANGAEA is a reliable process that can consistently outperform baseline activation functions.

\subsection{Parameters: per-layer, per-channel, or per-neuron?}

Learnable parameters in activation functions can be per-layer, per-channel, or per-neuron. It is not clear which setting is the best.  For example, per-neuron parameters are the default setting in the TensorFlow implementation of PReLU \cite{he2015delving}.  However, \citet{he2015delving} also experimented with per-channel and per-layer implementations.  Further preliminary experiments for this chapter (Table~\ref{tab:pangaea:prelu_lcn}) suggest that per-neuron PReLU is best for WRN-10-4 and ResNet-v2-56, but this setting is the worst for ResNet-v1-56, which benefits most from per-layer parameters.  

Similarly, no clear trends were observed in preliminary PANGAEA experiments.  For some activation functions and architectures per-neuron parameters were beneficial, presumably due to the added expressivity of each neuron learning its own optimal activation function.  In other cases per-layer was better, possibly due to an implicit regularization effect caused by all neurons within a layer using the same activation function.  As a compromise between the expressivity and regularization of these two strategies, per-channel parameters were utilized in the main experiments.  However, the preliminary results suggest that performance may be further optimized by specializing the parameter setting to the activation function and to the architecture.

\begin{table}
    \centering
    \caption{CIFAR-100 test accuracy with different architectures and PReLU variants reported as mean $\pm$ sample standard deviation across ten runs.  Per-neuron PReLU gets the best performance on WRN-10-4 and ResNet-v2-56, but per-layer PReLU is the best for ResNet-v1-56. \\}
    \begin{tabular}{llll}
        \toprule
        & \textbf{WRN-10-4} & \textbf{ResNet-v1-56} & \textbf{ResNet-v2-56} \\
        \midrule
        Per-layer PReLU & $71.92 {\scriptscriptstyle \pm 0.41}$ & $\bm{71.40} {\scriptscriptstyle \pm 0.59}$ & $73.54 {\scriptscriptstyle \pm 0.21}$ \\
        Per-channel PReLU & $71.15 {\scriptscriptstyle \pm 0.41}$ & $71.25 {\scriptscriptstyle \pm 0.54}$ & $74.52 {\scriptscriptstyle \pm 0.24}$ \\
        Per-neuron PReLU & $\bm{72.23} {\scriptscriptstyle \pm 0.37}$ & $69.77 {\scriptscriptstyle \pm 0.40}$ & $\bm{75.10} {\scriptscriptstyle \pm 0.53}$ \\
        \bottomrule
    \end{tabular}
    \label{tab:pangaea:prelu_lcn}
\end{table}

To explore this idea further, per-layer and per-neuron versions of PANGAEA were run from scratch on ResNet-v1-56. Both of these PANGAEA runs produced good activation functions that outperformed ReLU substantially (Table \ref{tab:pangaea:resnetv1_lcn}). Interestingly, although per-layer PReLU outperformed per-neuron PReLU with ResNet-v1-56 (Table~\ref{tab:pangaea:prelu_lcn}), PANGAEA performed the best in the per-neuron setting (Table \ref{tab:pangaea:resnetv1_lcn}).  Indeed, although the per-layer PANGAEA runs still discovered good activation functions, their average performance during search was often lower than that of the per-channel or per-neuron variants (Figure \ref{fig:pangaea:moving_acc_lcn}).  These findings suggest that the distribution of per-layer activation functions may be long-tailed: Powerful per-layer activation functions do exist, but they may be more difficult to discover compared to per-channel or per-neuron activation functions.

In order to separate the search space from the search algorithm, in a further experiment 500 per-layer, per-channel, and per-neuron activation functions were randomly created and trained once with ResNet-v1-56 (the functions were first initialized randomly as shown in Figure \ref{fig:pangaea:initialization}, and then mutated randomly three times as shown in Figure \ref{fig:pangaea:mutation}).  The best activation functions from these random samples are included in Table \ref{tab:pangaea:resnetv1_lcn}.  The best per-layer activation function outperformed ReLU by a large margin, but was not as powerful as those discovered with PANGAEA; the best per-neuron function outperformed all other variants.  These results thus suggest that the distribution of good per-neuron functions may be long-tailed as well, but at a higher level of performance than per-layer and per-channel functions.

In sum, although more research is needed to discover a principled way to select per-layer, per-channel, or per-neuron parameters in a given situation, PANGAEA is flexible enough to discover good functions for all three of these cases.

\subsection{Efficiency of PANGAEA}

PANGAEA's computational efficiency needs to be evaluated from two perspectives.  First, how much compute is necessary to find good activation functions?  Second, once a good activation function is found, how much more expensive is it to use it in a network compared to a baseline function like ReLU?  This section aggregates data from the per-layer, the two per-channel, and the per-neuron PANGAEA runs to demonstrate that PANGAEA is surprisingly efficient in both respects.

First, Figure \ref{fig:pangaea:compute_spent} shows how the four PANGAEA runs discovered better activation functions with increasingly more compute.  All four runs discovered an activation function that outperformed ReLU relatively early in the search.  Because some activation functions are unstable and cause training to fail, they require negligible compute.  Computational resources can instead be focused on functions that appear promising.  The implications of Figure \ref{fig:pangaea:compute_spent} are that in practice, PANGAEA can be used to improve over a baseline activation function relatively cheaply.  If better performance is needed, additional compute can be used to continue the search until the desired performance is achieved.

\begin{figure}[t]
    \centering
    \includegraphics[width=0.75\linewidth]{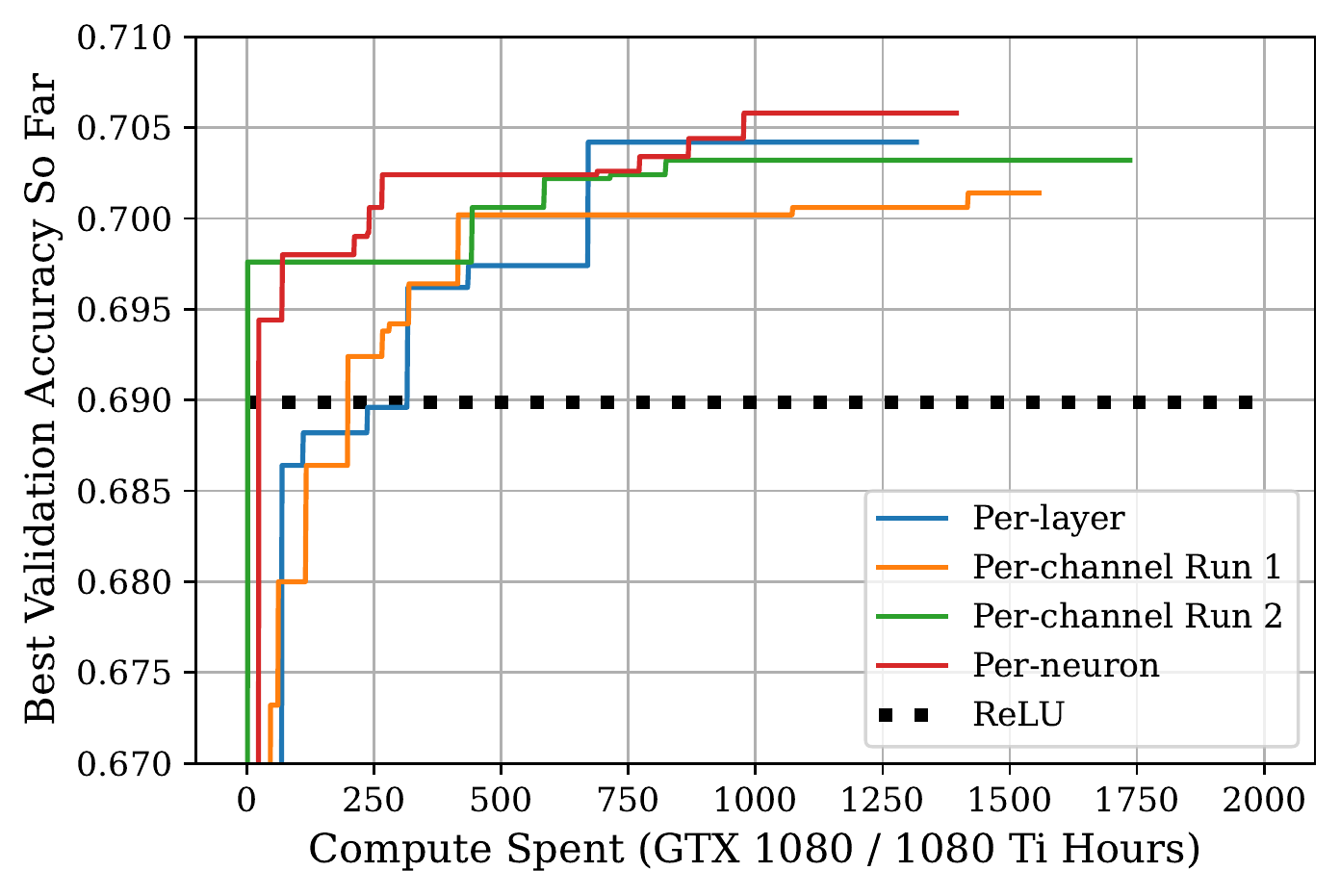}
    \caption{Computational efficiency of PANGAEA.  The plot shows the performance of the best activation function discovered so far ($y$-axis) after a given amount of compute was spent ($x$-axis).  All four PANGAEA runs discover activation functions that outperform ReLU with relatively little compute, demonstrating the efficiency of PANGAEA.  If even better performance is needed, additional compute can be spent.}
    \label{fig:pangaea:compute_spent}
\end{figure}

Second, Figure \ref{fig:pangaea:scatter_hist_lcn} shows the distribution of accuracy and compute cost of all activation functions evaluated in all four PANGAEA runs on ResNet-v1-56.  Each point in the scatter plot represents a unique activation function discovered in one of the searches, and the distribution of accuracies and compute costs are shown as histograms above and to the side of the main plot.  The results confirm earlier conclusions: All strategies find many activation functions that beat ReLU, and per-neuron PANGAEA discovers more high-performing functions than per-channel or per-layer PANGAEA. The total amount of time it took to train the architecture with the given activation function is shown as ``runtime'' in the vertical axis.  Interestingly, there is a wide range in this metric: some activation functions are significantly more expensive than ReLU, while others are essentially the same. Thus, the figure shows that there exist plenty of activation functions that significantly beat ReLU but do not incur a significant computational overhead.  This distribution also suggests that a multi-objective approach that optimizes for both accuracy and computational cost simultaneously could be effective.

\section{Additional Results with Learnable Activation Functions}
\label{sec:pangaea:pausplash}

PAU and SPLASH achieved worse-than-expected performance in Table \ref{tab:pangaea:results}, so additional experiments were run to investigate their behavior.

\paragraph{PAU}

\citet{pade-molina2019pad} utilized a specialized training setup to achieve their results with PAU.  In particular, they used a constant learning rate and no weight decay for the PAU layers, but used a learning rate decay of 0.985 per epoch and weight decay of $0.0005$ for the other weights.  They also used a smaller batch size of 64, and trained for 400 epochs instead of 200.  Even though the paper does not mention it, it is possible that such a specialized setup is necessary to achieve good performance with PAU.  The experiments in this section utilized this same training setup (but only trained for 200 epochs for fairness) to verify that the PAU implementation was correct.  

Unfortunately, some relevant hyperparameters were not included in the PAU paper \cite{pade-molina2019pad}.  These settings include the fixed learning rate used for the PAU layers, whether Nesterov momentum is utilized, and which approximation of Leaky ReLU is used to initialize the PAU weights.  This missing information makes it difficult to replicate the original performance exactly.  After significant trial-and-error, the following settings worked well: The learning rate for the PAU layers was 0.01, the initial learning rate for other weights was also 0.01, Nesterov momentum was not used, and the PAU weights were initialized to approximate Leaky ReLU with a slope of 0.01. 

Table \ref{tab:pangaea:pau_specialized_setup} shows the performance of WRN-10-4 and ResNet-v2-56 using these discovered hyperparameters and the specialized training setup from \citet{pade-molina2019pad}.  The performance is comparable to other baseline activation functions.  In some cases, the runs failed because of training instability (results were filtered out if the training accuracy was below 0.5).  For all hyperparameter combinations tested, PAU was unstable with ResNet-v1-56.  Thus, it is possible to get good performance with PAU, but the performance is highly sensitive to the training setup and choice of hyperparameters. 

\begin{table}
    \centering
    \caption{CIFAR-100 test accuracy with PAU using a specialized training setup.  Performance is comparable to other baseline activation functions, but some runs fail due to training instability.\\}
    \begin{tabular}{lll}
    \toprule
    & \textbf{WRN-10-4} & \textbf{ResNet-v2-56} \\
    \midrule
    Accuracy & $62.56 {\scriptscriptstyle \pm 4.84}$ & 
    $69.59 {\scriptscriptstyle \pm 1.66}$\\
    Failed Runs & 3 of 10 & 7 of 10 \\
    \bottomrule
    \end{tabular}
    \label{tab:pangaea:pau_specialized_setup}
\end{table}

Note that a standard, most commonly used setup was used throughout the main experiments in the chapter for all baseline comparisons. The reason is that there are dozens of such comparisons in this chapter, and it is possible that each one could benefit from a specialized setup---a setup that may not even be fully known at this time. Therefore, a standard setup was necessary to ensure that the comparisons are fair.

\paragraph{SPLASH}

In addition to ResNet-v1-56 in the main experiments, SPLASH was trained with ResNet-v1-20, ResNet-v1-32, and ResNet-v1-44 for a more thorough characterization of its performance.  Each architecture was trained ten times, resulting in training curves shown in Figure \ref{fig:pangaea:rnv1_splash}.

With ResNet-v1-20, final test accuracy of the independent runs is between 0.661 and 0.684, which agrees with the results by \citet{tavakoli2020splash}.  However, two of the ten runs failed in the middle of training because the loss became undefined (Figure \ref{fig:pangaea:rnv1_splash}), suggesting that SPLASH units can be unstable.  Progressing to the deeper ResNet-v1-32, the effect was more pronounced.  As shown in Figure \ref{fig:pangaea:rnv1_splash}, only two of the ten runs progressed past epoch 30, while no run trained to completion.  With ResNet-v1-44 and ResNet-v1-56, training failed within the first epoch, so the training curves are not shown.

These results thus confirm that the implementation is correct, reproducing the results of \citet{tavakoli2020splash}. However, they also lead to the interesting observation that SPLASH units are effective with shallow networks but struggle with deeper ones, like the ones evaluated in this chapter.

\begin{figure}[t]
    \centering
    \includegraphics[width=0.49\textwidth]{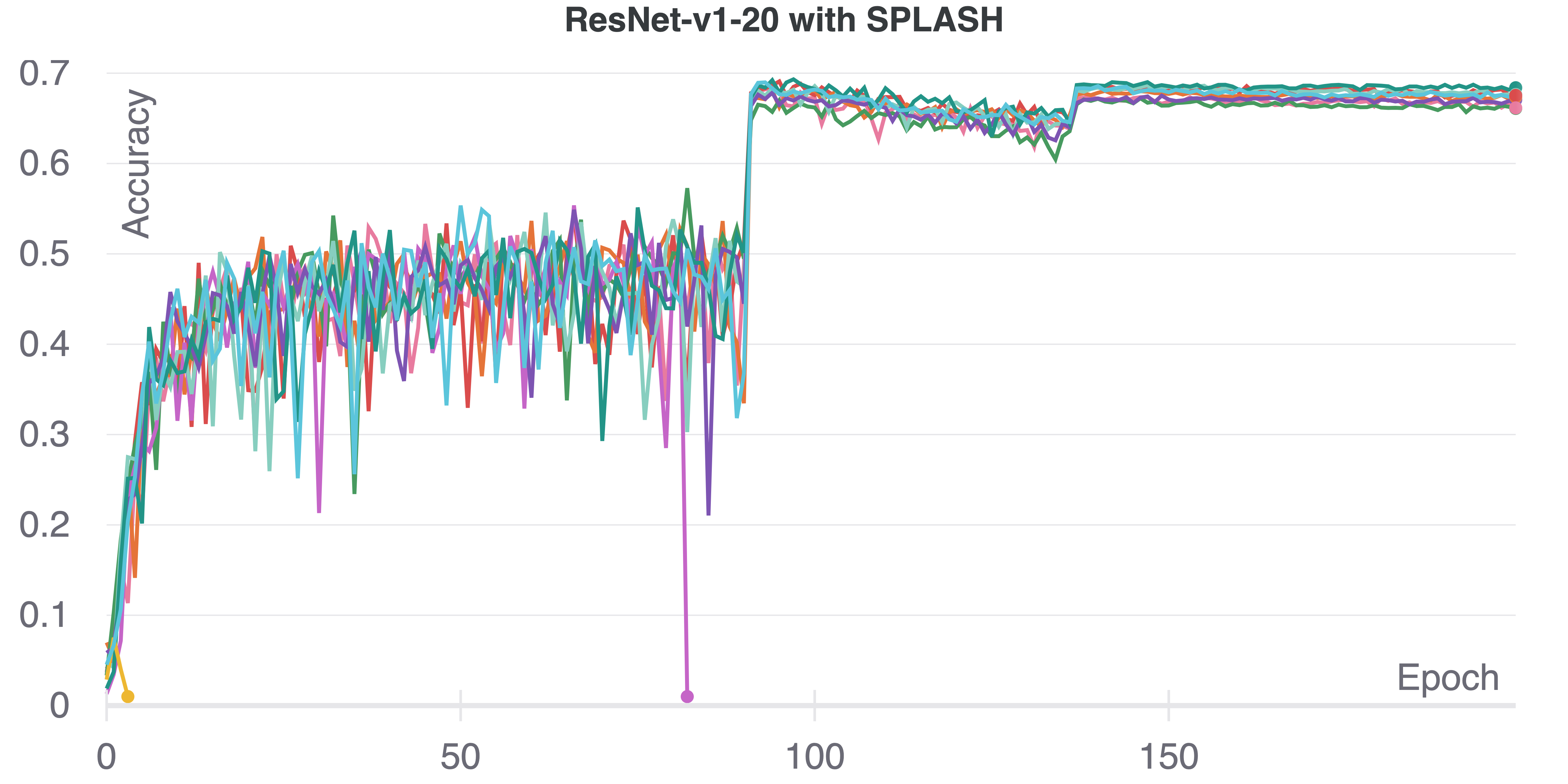}
    \includegraphics[width=0.49\textwidth]{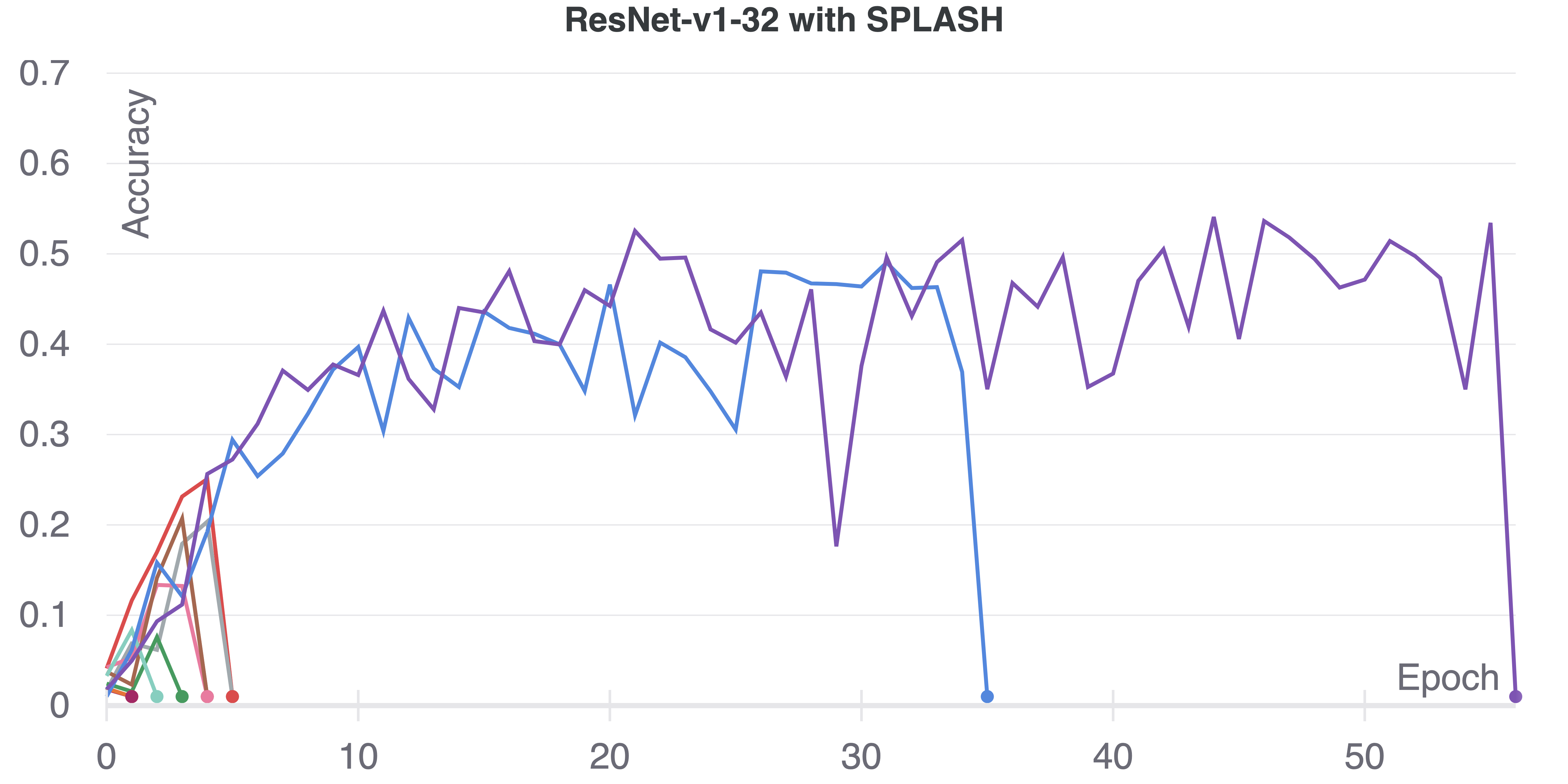}
    \caption{ResNet-v1-20 and ResNet-v1-32 test accuracy on CIFAR-100 with the SPLASH activation function.  SPLASH units work well in shallow networks, but become unstable with increased depth.  This result explains the success of SPLASH in the original work by \citet{tavakoli2020splash}, and also shows why SPLASH units fail with the architectures considered in this work.}
    \label{fig:pangaea:rnv1_splash}
\end{figure}

\section{Baseline Activation Function Details}
\label{sec:pangaea:baseline}

The following activation functions were used as baseline comparisons in Table \ref{tab:pangaea:results}.  Some functions were also utilized in the search space (Table \ref{tab:pangaea:searchspace}).
\begin{itemize}
    \item $\textrm{ReLU} = \max\{x, 0\}$ \citep{nair2010rectified}.
    
    \item $\textrm{ELiSH} = \frac{x}{1+e^{-x}} \texttt{ if } x \geq 0 \texttt{ else } \frac{e^x - 1}{1 + e^{-x}}$  \citep{basirat2018quest}. 
    
    \item $\textrm{ELU} = x \texttt{ if } x \geq 0 \texttt{ else } \alpha (e^x-1)$, with $\alpha = 1$ \citep{elu}.
    
    \item $\textrm{GELU} = x \Phi(x)$, with $\Phi(x) = P(X \leq x), X \sim \mathcal{N}(0, 1)$, approximated as $0.5x(1 + \tanh[\sqrt{2/\pi}(x + 0.044715x^3)])$ \citep{hendrycks2016gaussian}.
        
    \item $\textrm{HardSigmoid} = \max\{0, \min\{1, 0.2x + 0.5\}\}$.
    
    \item $\textrm{Leaky ReLU} = x \texttt{ if } x \geq 0 \texttt{ else } 0.01 x$ \citep{maas2013rectifier}.
        
    \item $\textrm{Mish} = x \cdot \tanh(\textrm{Softplus}(x))$ \citep{misra2019mish}.         
    
    \item $\textrm{SELU} = \lambda x \texttt{ if } x \geq 0 \texttt{ else } \lambda \alpha (e^x-1)$, with $\lambda = 1.05070098$, $\alpha = 1.67326324$ \citep{selu}.
    
    \item $\textrm{sigmoid} = (1 + e^{-x})^{-1}$.
    
    \item $\textrm{Softplus} = \log(e^x + 1)$.
    
    \item $\textrm{Softsign} = x / (|x| + 1)$. 
    
    \item $\textrm{Swish} = x \cdot \sigma(x)$, with $\sigma(x) = (1+e^{-x})^{-1}$ \citep{DBLP:conf/iclr/RamachandranZL18, elfwing2018sigmoid}.
    
    \item $\textrm{tanh} = \frac{e^x - e^{-x}}{e^x + e^{-x}}$.
    
    \item $\textrm{PReLU} = x \texttt{ if } x \geq 0 \texttt{ else } \alpha x$, where $\alpha$ is a per-neuron learnable parameter initialized to 0.25 \citep{he2015delving}.
    
    \item $\textrm{PSwish} = x \cdot \sigma (\beta x)$, where $\beta$ is a per-channel learnable parameter \citep{DBLP:conf/iclr/RamachandranZL18}.
    
    \item $\textrm{APL} = \max\{0, x\} + \sum_{s=1}^S a_s \max\{0, -x+b_s\}$, where $S=7$ and $a_s$ and $b_s$ are per-neuron learnable parameters \citep{apl-agostinelli2014learning}.
    
    \item $\textrm{PAU} = \frac{\sum_{j=0}^m a_j x^j}{1 + |\sum_{k=1}^n b_kx^k|}$, where $m=5, n=4$, and $a_j$ and $b_k$ are per-layer learnable parameters initialized so that the function approximates Leaky ReLU with a slope of 0.01 \citep{pade-molina2019pad}.
        
    \item $\textrm{SPLASH} = \sum_{s=1}^{(S+1)/2} a_s^+ \max\{0, x-b_s\} + a_s^-\max\{0, -x-b_s\}$, where $S=7, b = [0, 1, 2, 2.5]$, and $a_s^+$ and $a_s^-$ are per-layer learnable parameters initialized as $a_1^+=1$ and all other $a = 0$ \citep{tavakoli2020splash}.
    
\end{itemize}

\section{Scope of PANGAEA Search Space}
\label{sec:pangaea:proofs}
This section shows that any piecewise real analytic function can be represented as a PANGAEA computation graph containing operators from Table \ref{tab:pangaea:searchspace}.  In the main experiments, PANGAEA computation graphs were restricted to having at most seven nodes and three learnable parameters $\alpha$, $\beta$, and $\gamma$ for efficiency.  Throughout this section the node and parameter constraints are removed.  Parameters take on the role of any real-valued constant, and the set of functions in PANGAEA without node or parameter constraints is denoted as $\mathcal{G}_\infty$.  Before proving the main result, the following two lemmas are needed.

\begin{lemma}
\label{lemma:pangaea:real_analytic}
If $f \in C^\omega$ is a real analytic function, then $f \in \mathcal{G}_\infty$.
\end{lemma}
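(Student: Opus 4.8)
The plan is to build the representing computation graph directly from the Taylor series of $f$, exploiting the fact that $\mathcal{G}_\infty$ places no bound on the number of nodes: reading $\mathcal{G}_\infty$ as the class of functions realized by PANGAEA graphs with the node (and parameter) constraints removed, arbitrarily long chains of operators — and hence convergent infinite sums — are admissible. First I would dispose of the base case: every polynomial lies in $\mathcal{G}_\infty$. The input node supplies $x$, every real constant is available as a (parameter-valued) leaf, and repeated application of the binary operators $x_1 + x_2$ and $x_1 \cdot x_2$, together with $x^2$ and $x_1 - x_2$ for convenience, assembles $\sum_{n=0}^{N} c_n (x - x_0)^n$ for any degree $N$, center $x_0$, and coefficients $c_n$. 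In particular a single monomial $c_n (x - x_0)^n$ is computed by a finite subgraph: form $x - x_0$ with one subtraction node against the parameter $x_0$, raise it to the $n$-th power with $n-1$ multiplications, and scale by the parameter $c_n$.

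Next, given a real analytic $f$, I would write its Taylor expansion about a point $x_0$, namely $f(x) = \sum_{n=0}^{\infty} c_n (x - x_0)^n$ with $c_n = f^{(n)}(x_0)/n!$, valid on the interval where the series converges. I would then form the infinite computation graph obtained by feeding the finite monomial subgraphs above into a left-deep chain of binary $x_1 + x_2$ nodes, and take its value to be the limit of the partial sums along that chain. Since $\mathcal{G}_\infty$ is exactly the class of functions realized once the node bound is removed, this graph is an admissible element of $\mathcal{G}_\infty$, and convergence of the Taylor series shows it computes $f$. Because parameters are allowed to be arbitrary real numbers, the coefficients $c_n$ pose no difficulty — each is simply a constant leaf — so the only thing that needs a definition is what it means for an infinite graph to compute a function, namely that the partial computations along the additive chain converge pointwise, which the series delivers for free.

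The delicate point — and the one I expect to be the main obstacle — is the \emph{domain of validity}: a function in $C^\omega(\mathbb{R})$ need not possess a single power series that converges on all of $\mathbb{R}$ (for instance $1/(1+x^2)$, whose Maclaurin series has radius $1$). Two remedies are available. One can note that many such functions are in fact built by a \emph{finite} graph directly — every pole-free rational function from $x$, constants, $x_1 + x_2$, $x_1 \cdot x_2$, and $x^{-1}$; $\sqrt{1+x^2} = \textrm{cosh}(\textrm{arcsinh}(x))$; and similar identities — but the clean general argument is to cover $\mathbb{R}$ by overlapping intervals on each of which $f$ equals a locally convergent Taylor series, represent $f$ on each such interval by the construction above, and then assemble the pieces. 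The assembly of analytic pieces over a partition of $\mathbb{R}$ is precisely the content of Theorem~\ref{thm:pangaea:piecewise_real_analytic}, which itself is proved using this lemma; so the natural division of labor is for this lemma to furnish the representation on each analytic piece and to leave the gluing to the theorem. Accordingly I would state and prove the lemma at exactly the strength the theorem consumes, and everything beyond that is routine bookkeeping on graph construction rather than new mathematics.
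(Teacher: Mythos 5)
Your proposal follows essentially the same route as the paper: write $f$ as a power series $\sum_{n} a_n (x-x_0)^n$ and observe that $\mathcal{G}_\infty$ contains the constants and operators needed to assemble it. The paper's own proof is a four-sentence version of exactly this, so in that sense you have reproduced it. The differences are all in your favor, however. First, the paper builds $(x-x_0)^n$ via the binary exponentiation operator $x_1^{x_2}$ with an integer exponent manufactured from $0$, $1$, addition, and negation; your construction by $n-1$ repeated multiplications is cleaner, since $a^b$ for negative base $a$ is delicate both mathematically and in the TensorFlow implementation, whereas repeated multiplication is unambiguous. Second, you make explicit that an infinite additive chain must be interpreted as a pointwise limit of partial sums — the paper never says what an infinite computation graph means, and some such convention is needed for the statement to make sense. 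Third, and most substantively, you correctly flag that a function in $C^\omega(\mathbb{R})$ need not admit a single power series converging on all of $\mathbb{R}$ (your example $1/(1+x^2)$ is exactly right), which is a genuine gap in the paper's proof as written: it silently assumes a global expansion. Your proposed repair — represent $f$ locally on overlapping intervals of convergence and defer the gluing to Theorem~\ref{thm:pangaea:piecewise_real_analytic} — is the right idea, with the one caveat that a cover of $\mathbb{R}$ by intervals of convergence may require countably many pieces, whereas the theorem's piecewise decomposition and Lemma~\ref{lemma:pangaea:indicator} are stated for finitely many breakpoints; since $\mathcal{G}_\infty$ already admits countable sums for the series itself, a countable indicator-weighted sum closes this loop, but that extension should be stated rather than left implicit. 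In short: same strategy, but your version identifies and patches a real weakness in the published argument.
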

\begin{proof}
As $f$ is real analytic, it can be expressed in the form
\begin{equation}
    \label{eq:pangaea:real_analytic}
    f(x) = \sum_{n=0}^\infty a_n(x-x_0)^n,
\end{equation}
with parameters $x_0, a_0, a_1, \ldots \in \mathbb{R}$.  As PANGAEA contains the zero, one, addition, and negation operators, the set of integers $\mathbb{Z}$ is contained in $\mathcal{G}_\infty$.  This accounts for the exponent $n$ in the expression above.  All other operators (addition, subtraction, multiplication, exponentiation) in Equation \ref{eq:pangaea:real_analytic} are included in Table \ref{tab:pangaea:searchspace}, and so $f \in \mathcal{G}_\infty$.
\end{proof}

\begin{lemma}
\label{lemma:pangaea:indicator}
Given parameters $a, b \in \mathbb{R}$ where $a < b$, the indicator functions
\begin{align}
    \mathbf{1}_{(-\infty, b)}(x) &= 
    \begin{cases}
        1 & x < b \\
        0 & x \geq b
    \end{cases}\\
    \mathbf{1}_{(a, \infty)}(x) &= 
    \begin{cases}
        1 & x > a \\
        0 & x \leq a
    \end{cases}\\
    \mathbf{1}_{(a,b)}(x) &= 
    \begin{cases}
        1 & x \in (a, b)\\
        0 & x \notin (a, b)
    \end{cases}\\
    \mathbf{1}_a(x) &= 
    \begin{cases}
        1 & x = a \\
        0 & x \neq a
    \end{cases}
\end{align}
are in $\mathcal{G}_\infty$.
\end{lemma}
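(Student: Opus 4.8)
The plan is to exhibit explicit finite PANGAEA computation graphs for each of the four indicators, all assembled from a single ``unit step at the origin'' gadget together with the parameters $a$ and $b$, subtraction, the constant $1$, multiplication, $\max$, $\textrm{ReLU}$, $|\cdot|$, and division --- every one of which is a parameter or an operator appearing in Table~\ref{tab:pangaea:searchspace}. Since parameters play the role of arbitrary real constants, it suffices to wire these ingredients together correctly and check the resulting values.

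First I would build $\mathbf{1}_{(0,\infty)}$. The key observation is that, with PANGAEA's safe division (where $p/0$ evaluates to $0$), the graph $g(x) = \textrm{ReLU}(x)/x$ is \emph{exactly} the Heaviside step: for $x>0$ it equals $x/x = 1$, for $x<0$ it equals $0/x = 0$, and for $x=0$ it equals $0/0 = 0$. Shifting the input by the parameter $a$ gives $\mathbf{1}_{(a,\infty)}(x) = \textrm{ReLU}(x-a)/(x-a)$, and reflecting the input gives $\mathbf{1}_{(-\infty,b)}(x) = \textrm{ReLU}(b-x)/(b-x)$; each is a finite PANGAEA graph, hence in $\mathcal{G}_\infty$. (Minor variants work equally well, e.g.\ $\max\{0,\,|x|/x\}$ in place of $\textrm{ReLU}(x)/x$.)

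The remaining two indicators then follow by composition. Because $a<b$, one has $\mathbf{1}_{(a,b)} = \mathbf{1}_{(a,\infty)} \cdot \mathbf{1}_{(-\infty,b)}$: the product equals $1$ precisely when $x>a$ and $x<b$, and $0$ otherwise (in particular it is $0$ at the endpoints $x=a$ and $x=b$, as required). For the singleton, $\mathbf{1}_a = 1 - \mathbf{1}_{(-\infty,a)} - \mathbf{1}_{(a,\infty)}$, which evaluates to $1-0-0 = 1$ at $x=a$ and to $0$ at every other point. Both are finite graphs over operators in Table~\ref{tab:pangaea:searchspace}, so all four indicators lie in $\mathcal{G}_\infty$, and each case is closed by a short check of the values on $x<c$, $x=c$, and $x>c$ at the relevant breakpoint $c$.

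The one point that is not purely mechanical is making the boundary values come out exactly right rather than merely approximately. Every operator in Table~\ref{tab:pangaea:searchspace} other than $x^{-1}$ and $x_1/x_2$ is continuous, so any composition avoiding division is continuous and therefore cannot be a genuine step function; the discontinuity --- and with it the precise value attained at the breakpoint --- can only come from division at zero. Thus the construction hinges on PANGAEA's convention $p/0 := 0$, which is exactly what forces the open-interval indicators to vanish at their endpoints and lets the complement identity assign $\mathbf{1}_a$ the value $1$ at $a$. Once this is recognized, there is no real obstacle: the lemma reduces to recording the graphs above and verifying the cases.
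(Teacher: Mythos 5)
Your proof is correct and follows essentially the same route as the paper: the paper writes $\mathbf{1}_{(-\infty,b)}(x)=\max\{b-x,0\}/(b-x)$ and $\mathbf{1}_{(a,\infty)}(x)=\min\{a-x,0\}/(a-x)$, which are identical to your $\mathrm{ReLU}(b-x)/(b-x)$ and $\mathrm{ReLU}(x-a)/(x-a)$, and it likewise relies on the \texttt{divide\_no\_nan} convention to get the endpoint values right before taking the product for $\mathbf{1}_{(a,b)}$. The only cosmetic difference is your additive identity $\mathbf{1}_a=1-\mathbf{1}_{(-\infty,a)}-\mathbf{1}_{(a,\infty)}$ versus the paper's product $(1-\mathbf{1}_{(-\infty,a)})(1-\mathbf{1}_{(a,\infty)})$, which are equivalent here.
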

\begin{proof}
Recall that PANGAEA implements the binary division operator $x_1 / x_2$ as \linebreak \texttt{tf.math.divide\_no\_nan}, which returns $0$ if $x_2 = 0$.  The indicator function $\mathbf{1}_{(-\infty, b)}(x)$ can then be implemented as
\begin{equation}
    \label{eq:pangaea:first_indicator}
    \mathbf{1}_{(-\infty, b)}(x) = \frac{\max\{b-x,0\}}{b-x}.
\end{equation}
There are three cases: if $x < b$, the expression evaluates to one.  If $x = b$ or $x > b$, the expression evaluates to zero.  By the same reasoning,
\begin{equation}
    \mathbf{1}_{(a, \infty)}(x) = \frac{\min\{a-x,0\}}{a-x},
\end{equation}
which evaluates to one if $x > a$ and zero otherwise.  Finally, note that
\begin{equation}
    \mathbf{1}_{(a,b)} = \mathbf{1}_{(-\infty, b)}\mathbf{1}_{(a, \infty)}
\end{equation}
and \begin{equation}
    \label{eq:pangaea:last_indicator}
    \mathbf{1}_a = (1-\mathbf{1}_{(-\infty, a)})(1-\mathbf{1}_{(a, \infty)}).
\end{equation}
All operators in Equations \ref{eq:pangaea:first_indicator}-\ref{eq:pangaea:last_indicator} (maximum, minimum, subtraction, multiplication, division, zero, one) are in the PANGAEA search space in Table \ref{tab:pangaea:searchspace}, and so the indicator functions are in $\mathcal{G}_\infty$.
\end{proof}

\begin{theorem}
\label{thm:pangaea:piecewise_real_analytic}
If a function $f$ is piecewise real analytic, then $f \in \mathcal{G}_\infty$.
\end{theorem}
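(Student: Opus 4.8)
The plan is to write $f$ as a finite sum of products --- each summand being a real analytic function multiplied by the indicator of one piece of the domain --- and then combine Lemma~\ref{lemma:pangaea:real_analytic} and Lemma~\ref{lemma:pangaea:indicator} with the fact that $\mathcal{G}_\infty$ is closed under the basic arithmetic operators of Table~\ref{tab:pangaea:searchspace}.

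First I would unpack the hypothesis. Saying that $f$ is piecewise real analytic means there is a partition of $\mathbb{R}$ into finitely many pieces --- intervals $I_1,\dots,I_m$ together with the finitely many isolated breakpoints separating them --- and real analytic functions $g_1,\dots,g_m \in C^\omega$ such that $f$ agrees with $g_j$ on $I_j$. Choosing the pieces to be pairwise disjoint and to cover $\mathbb{R}$ exactly, this yields the pointwise identity
\begin{equation}
    f(x) = \sum_{j=1}^{m} g_j(x)\,\mathbf{1}_{I_j}(x),
\end{equation}
in which exactly one term is nonzero for each $x$.

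Next I would dispatch the two types of factor. Each $g_j$ lies in $\mathcal{G}_\infty$ immediately by Lemma~\ref{lemma:pangaea:real_analytic}. Each indicator $\mathbf{1}_{I_j}$ lies in $\mathcal{G}_\infty$ by Lemma~\ref{lemma:pangaea:indicator}: an indicator of a closed, half-open, open, or half-line interval is obtained from the four indicators supplied by that lemma using only addition and subtraction --- for example $\mathbf{1}_{[a,b)} = \mathbf{1}_{(a,b)} + \mathbf{1}_{a}$, $\mathbf{1}_{[a,b]} = \mathbf{1}_{(a,b)} + \mathbf{1}_{a} + \mathbf{1}_{b}$, and $\mathbf{1}_{[a,\infty)} = \mathbf{1}_{(a,\infty)} + \mathbf{1}_{a}$, with the remaining cases analogous --- and the indicator of an isolated breakpoint is just $\mathbf{1}_{a}$. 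Since the addition, subtraction, and multiplication operators all occur in Table~\ref{tab:pangaea:searchspace}, $\mathcal{G}_\infty$ is closed under sums and products of its members, so the finite sum of products displayed above is computed by a single (unconstrained) PANGAEA computation graph, giving $f \in \mathcal{G}_\infty$.

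The main obstacle I anticipate is the bookkeeping at the breakpoints: one must verify that the chosen intervals and isolated points really do form a partition of the domain --- no endpoint counted twice, no value omitted --- so that the indicator decomposition reproduces $f$ pointwise rather than double-counting or missing boundary values. This is exactly why Lemma~\ref{lemma:pangaea:indicator} is stated for all four interval types, including the single-point indicator $\mathbf{1}_{a}$; once those are available, expressing each $\mathbf{1}_{I_j}$ as a short sum of the lemma's indicators is routine, and the remainder of the argument is just closure of $\mathcal{G}_\infty$ under arithmetic.
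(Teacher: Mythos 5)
Your proposal is correct and follows essentially the same route as the paper's proof: decompose $f$ as a finite sum of (real analytic function) $\times$ (indicator of a piece), handle the breakpoints with the single-point indicators $\mathbf{1}_a$, invoke Lemmas~\ref{lemma:pangaea:real_analytic} and~\ref{lemma:pangaea:indicator} for the factors, and conclude by closure of $\mathcal{G}_\infty$ under addition and multiplication. The only cosmetic difference is that the paper normalizes the partition up front to open intervals plus isolated breakpoints, whereas you derive the closed/half-open cases from the lemma's four indicators; both are fine.
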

\begin{proof}
If a function $f$ is piecewise real analytic, then it is representable by the form
\begin{equation}
    f(x) = 
    \begin{cases}
    f_0(x) & x \in (-\infty, k_1) \\
    K_1 & x = k_1\\
    f_1(x) & x \in (k_1, k_2) \\
    K_2 & x = k_2 \\
    & \vdots \\
    f_{n-1}(x) & x \in (k_{n-1}, k_n) \\
    K_n & x = k_n \\
    f_n(x) & x \in (k_n, \infty)
    \end{cases},
\end{equation}
where parameters $K_1, K_2, \ldots, K_N, k_1, k_2, \ldots, k_n \in \mathbb{R}$ are real-valued, $k_1 < k_2 < \cdots < k_n$ are increasing, and $f_0, f_1, \ldots, f_n \in C^\omega$ are real analytic functions.  An equivalent representation of $f$ is the following:
\begin{equation}
    \label{eq:pangaea:alt_representation}
    \begin{aligned}
        f(x) = \mathbf{1}_{(-\infty, k_1)}(x)f_0(x) + \mathbf{1}_{k_1}(x)K_1 + \mathbf{1}_{(k_1,k_2)}(x)f_1(x) + \mathbf{1}_{k_2}(x)K_2 + \cdots \\ \phantom{} + \mathbf{1}_{(k_{n-1},k_n)}(x)f_{n-1}(x) + \mathbf{1}_{k_n}(x)K_n + \mathbf{1}_{(k_n,\infty)}(x)f_{n}(x).
    \end{aligned}
\end{equation}
By Lemmas \ref{lemma:pangaea:real_analytic} and \ref{lemma:pangaea:indicator}, the real analytic functions $f_i$ and the indicator functions $\mathbf{1}_{(\cdot, \cdot)}$ are in $\mathcal{G}_\infty$.  Beyond these functions, Equation \ref{eq:pangaea:alt_representation} utilizes only addition and multiplication, both of which are operators included in Table \ref{tab:pangaea:searchspace}.  Therefore, $f \in \mathcal{G}_\infty$.
\end{proof}

\section{Size of the PANGAEA Search Space}
\label{sec:pangaea:searchspace}

This section analyzes the size of the PANGAEA search space as implemented in the main experiments.  Let $g$ represent a general computation graph, and let $f$ represent a specific activation function representable by $g$.  For example, if we have $g(x) = \texttt{binary}(\texttt{unary1}(x), \texttt{unary2}(x))$, then one possible activation function is $f(x) = \tanh(x) + \textrm{erf}(x)$, and another could be $f(x) = \alpha| x| \cdot \sigma(\beta \cdot x)$.  

Let $U=27$ and $B=7$ be the number of unary and binary operators in the PANGAEA search space, respectively, and let $E=3$ be the maximum number of learnable parameters that can be used to augment a given activation function.  Given a computation graph $g$, let $u_g$ and $b_g$ be the number of unary and binary nodes and let $e_g$ be the number of edges in $g$.  For example, with the functional form $g(x) = \texttt{unary1}(\texttt{unary2}(x))$, we have $u_g=2$, $b_g=0$, and $e_g=3$.  With $g(x) = \texttt{binary}(\texttt{unary1}(x), \texttt{unary2}(x))$, we have $u_g=2$, $b_g=1$, and $e_g=5$.  The quantity $e_g$ includes the edges from the input nodes $x$ and edges to the output node $g(x)$ (see Figure \ref{fig:pangaea:initialization}).

\begin{table}[ht]
    \centering
    \caption{The number of activation functions representable by a computation graph with a given number of nodes.  The PANGAEA search space contains over ten trillion activation functions, and therefore provides a good foundation for finding powerful activation functions with different properties.\\}
    \adjustbox{max width=\linewidth}{%
    \begin{tabular}{cccccc}
    \toprule
    & Binary Nodes $b_g$ & Unary Nodes $u_g$ & Edges $e_g$ & Arrangements & Activation Functions \\ \midrule
    $\mathcal{G}_1$                & 0 & 1 & 2 & 1  & 108 \\ \midrule
    $\mathcal{G}_2$                & 0 & 2 & 3 & 1 & 5,832 \\ \midrule
    \multirow{2}*{$\mathcal{G}_3$} & 0 & 3 & 4 & 1 & \multirow{2}*{427,923} \\
                                   & 1 & 2 & 5 & 1 \\ \midrule
    \multirow{2}*{$\mathcal{G}_4$} & 0 & 4 & 5 & 1 & \multirow{2}*{31,177,872} \\
                                   & 1 & 3 & 6 & 3 \\ \midrule
    \multirow{3}*{$\mathcal{G}_5$} & 0 & 5 & 6 & 1 & \multirow{3}*{2,210,558,364} \\
                                   & 1 & 4 & 7 & 6 \\
                                   & 2 & 3 & 8 & 2 \\ \midrule
    \multirow{3}*{$\mathcal{G}_6$} & 0 & 6 & 7 & 1 & \multirow{3}*{152,059,087,566} \\
                                   & 1 & 5 & 8 & 10 \\
                                   & 2 & 4 & 9 & 10 \\ \midrule
    \multirow{4}*{$\mathcal{G}_7$} & 0 & 7 & 8 & 1 & \multirow{4}*{10,015,741,690,785} \\
                                   & 1 & 6 & 9 & 15 \\
                                   & 2 & 5 & 10 & 30 \\
                                   & 3 & 4 & 11 & 1 \\
    \bottomrule
    \end{tabular}
    }
    \label{tab:pangaea:pangaea_analysis}
\end{table}

Let $\mathcal{F}_g$ denote the set of all activation functions $f$ that can be represented within the computation graph $g$.  By enumerating the different choices of unary and binary operators, as well as the locations for up to three learnable parameters $\alpha$, $\beta$, and $\gamma$, we find the size of the set to be
\begin{equation}
    | \mathcal{F}_g | = U^{u_g} \cdot B^{b_g} \cdot \sum_{i=0}^E \binom{e_g}{i}.
\end{equation}
Let $\mathcal{G}_j$ denote the set of computation graphs $g$ containing $j$ nodes.  For example,
\begin{equation}
\mathcal{G}_3 = \{ g(x) = \texttt{unary1}(\texttt{unary2}(\texttt{unary3}(x))), g(x) = \texttt{binary}(\texttt{unary1}(x), \texttt{unary2}(x)) \}.    
\end{equation}
Table \ref{tab:pangaea:pangaea_analysis} shows the possible combinations of binary nodes $b_g$, unary nodes $u_g$, and edges $e_g$ for each set $\mathcal{G}_j$.  Additionally, the table shows the number of computation graph arrangements possible for a given $b_g$, $u_g$, and $e_g$.  For example, if $b_g = 2$, $u_g = 3$, and $e_g = 8$, the computation graph could take one of two forms: either 
\begin{equation}
g(x) = \texttt{binary1}(\texttt{binary2}(\texttt{unary1}(x), \texttt{unary2}(x)), \texttt{unary3}(x))    
\end{equation}
or
\begin{equation}
g(x) = \texttt{binary1}(\texttt{unary1}(x), \texttt{binary2}(\texttt{unary2}(x), \texttt{unary3}(x))).
\end{equation}
The number of activation functions in PANGAEA is therefore
\begin{equation}
    \sum_{j=1}^7 \sum_{g \in \mathcal{G}_j} |\mathcal{F}_g| = 10{,}170{,}042{,}948{,}450.
\end{equation}

Naturally there exist duplicates within this space.  The functions $f(x) = \textrm{ReLU}(x)$ and $f(x) = \max\{x, 0\}$ have different computation graphs but are functionally identical.  Nevertheless, this analysis still provides a useful characterization of the size and diversity of the PANGAEA search space.  It is orders of magnitude larger than spaces considered in prior work \citep{bingham2020gecco, DBLP:conf/iclr/RamachandranZL18, basirat2018quest}, and yet PANGAEA consistently discovers functions that outperform ReLU and other baseline functions.


\section{Discussion}
\label{sec:pangaea:discussion}

It is difficult	to select an appropriate activation function for a
given architecture because the activation function, network 
topology, and training setup interact in
complex ways.  It is especially promising that PANGAEA discovered activation functions that significantly outperformed
the baselines, since the architectures and training setups were standard and developed with ReLU. A compelling research direction is to jointly optimize the architecture, training setup, and activation function.

More specifically, there has been significant recent research in
automatically discovering the architecture of neural networks through
gradient-based, reinforcement learning, or neuroevolutionary methods
\citep{elsken2019neural, wistuba2019survey, real2019regularized}.  In
related work, evolution was used discover novel loss functions
automatically
\citep{gonzalez2020improved,gonzalez2020evolving,liang2020population},
outperforming the standard cross entropy loss.	In the future, it may
be possible to optimize many of these aspects of neural network design
jointly.  Just as new activation functions improve the accuracy of
existing network architectures, it is likely that different
architectures will be discovered when the activation function is not
ReLU.  One such example is EfficientNet \citep{tan2019efficientnet},
which achieved state-of-the-art accuracy for ImageNet \citep{deng2009imagenet} using the Swish activation function
\citep{DBLP:conf/iclr/RamachandranZL18, elfwing2018sigmoid}.
Coevolution \cite{gomez2008accelerated, moriarty1997forming, yang2008large, potter2000cooperative} of activation functions, topologies, loss
functions, and possibly other aspects of neural network design
could allow taking advantage of interactions between them, leading to
further improvements in	the future.  

Similarly, there can be multiple properties of an activation function
that make it useful in different scenarios.  PANGAEA optimized for
activation functions that lead to high accuracy.  Another promising area
of future work is to search for activation functions that are also efficient to compute (Figure \ref{fig:pangaea:scatter_hist_lcn}), improve adversarial robustness \cite{xie2020smooth}, stabilize training \cite{selu}, or meet other objectives like easing optimization or providing implicit regularization \cite{li2019towards}. These functions could be discovered with multi-objective optimization, or through other methods like a carefully designed search space or adding regularization to the parameters of the activation functions.

\section{Conclusion}

This chapter introduced PANGAEA, a technique for automatically designing
novel, high-performing, parametric activation functions.  PANGAEA
builds a synergy of two different optimization processes: evolutionary
population-based search	for the	general	form, and
gradient descent-based fine-tuning of the parameters of	the activation
function.  Compared to previous studies \cite{bingham2020gecco, DBLP:conf/iclr/RamachandranZL18}, the search space is extended
to include deeper and more complex functional forms, including ones
unlikely to be discovered by humans.  The parameters are adapted during 
training and are different in different locations of the
architecture, thus customizing the functions over both time and space and resulting in improved performance as a result.
PANGAEA is able to discover general activation functions that perform
well across architectures, and specialized functions taking advantage of
a particular architecture, significantly outperforming previously proposed
activation functions in both cases.
It is thus a promising step towards automatic configuration of neural networks.  In order to continue this progress, the next chapter develops an automatic weight initialization algorithm.  This algorithm provides a contribution to AutoML on its own, but also complements PANGAEA by making the evaluation of novel activation functions more robust.

\chapter{AutoInit: Analytic Signal-Preserving Weight Initialization for Neural Networks}
\label{chap:autoinit}

In order to ensure stable signal propagation in neural networks, it is crucial that the weight initialization strategy takes the shape of the activation function into account.  While the previous chapters discovered new activation functions that improved the performance of various neural networks, some activation functions evaluated during the searches turned out to be ineffective.  Many of these functions could have been effective if the network had been properly initialized.  Because current weight initialization algorithms to not automatically adapt to novel activation functions, this chapter presents an approach that does, called AutoInit.  AutoInit adapts to novel activation functions and layer types in a neural network.  It thus constitutes progress in AutoML in its own right, and compliments the previous chapters by making the evaluation of new activation functions more reliable.  The AutoInit package is available at \url{https://github.com/cognizant-ai-labs/autoinit}.

\section{Motivation}

Proper weight initialization is crucial to achieve high performance with deep networks.  A common motif in such networks is repeated layers or building blocks.  Thus, if a given layer amplifies or diminishes the forward or backward propagation of signals, repeated applications of that layer will result in exploding or vanishing signals, respectively \cite{hochreiter1991untersuchungen, hanin2018neural}.  This phenomenon makes optimization difficult, and can even exceed machine precision. The issue persists regardless of whether the weights are sampled to be uniform, normal, or orthogonal \cite{saxe2013exact, hu2020provable}.

While many initialization strategies have been proposed in the past, these strategies apply only to neural networks with specific activation functions, topologies, or layer types.  Thus, researchers designing new models or activation functions have two options.  The first option is to derive weight initialization strategies manually for every architecture considered, which is generally difficult and time consuming.  The second option is to use existing initialization strategies in new settings, where they may be incorrect and therefore misleading:  A candidate model may appear poor when it is the suboptimal initialization that makes training difficult.

To overcome this problem, this chapter proposes AutoInit, an algorithm that automatically calculates analytic mean- and variance-preserving weight initialization for neural networks.  Since AutoInit is algorithmic, it relieves the researcher from a difficult but consequential step in model design.  It is no longer necessary to use existing weight initialization strategies in incorrect settings: AutoInit provides an appropriate default initialization automatically, resulting in better and more reliable performance.

\section{Neural Network Signal Propagation}
\label{sec:autoinit:signal_propagation}

AutoInit aims to stabilize signal propagation throughout an entire neural network. More precisely, consider a layer that shifts its input by $\alpha$ and scales the input by a factor of $\beta$.  Given an input signal with mean $\mu_\inn$ and variance $\nu_\inn$, after applying the layer, the output signal will have mean $\mu_\out = \alpha + \beta \mu_\inn$ and variance $\nu_\out = \beta^2 \nu_\inn$.  In a deep network in which the layer is applied $L$ times the effect is compounded and the signal at the final layer has mean and variance
\begin{equation}
    \mu_\out = \beta^L\mu_\inn + \alpha(\beta^L +\beta^{L-1} + \cdots + \beta + 1), \;\; \nu_\out = \beta^{2L}\nu_\inn.
\end{equation}
If $|\beta| > 1$, the network will suffer from a mean shift and exploding signals as it increases in depth:
\begin{equation}
    \label{eq:autoinit:explode}
    \lim_{L \rightarrow \infty} \mu_\out = \infty, \quad
    \lim_{L \rightarrow \infty} \nu_\out = \infty.
\end{equation}
In the case that $|\beta| < 1$, the network will suffer from a mean shift and vanishing signals:
\begin{equation}
    \label{eq:autoinit:vanish}
    \lim_{L \rightarrow \infty} \mu_\out = \alpha / (1 - \beta), \quad
    \lim_{L \rightarrow \infty} \nu_\out = 0.
\end{equation}
AutoInit calculates analytic mean- and variance-preserving weight initialization so that $\alpha=0$ and $\beta=1$, thus avoiding the issues of mean shift and exploding/vanishing signals.

\section{The AutoInit Framework}
\label{sec:autoinit:algorithm}

AutoInit is a general framework that adapts to different layer types.  Its implementation is outlined in Algorithm \ref{alg:autoinit}. A given \texttt{layer} in a neural network receives as its input a tensor $x$ with mean $\mu_\inn$ and variance $\nu_\inn$.  After applying the \texttt{layer}, the output tensor has mean $\mu_\out = \E(\texttt{layer}(x))$ and variance $\nu_\out = \Var(\texttt{layer}(x))$.  The function $g_{\texttt{layer}}$ maps input mean and variance to output mean and variance when the \texttt{layer} is applied:
\begin{equation}\small
    \label{eq:autoinit:g}
    g_\texttt{layer} : (\mu_\inn, \nu_\inn) \mapsto (\mu_\out, \nu_\out).
\end{equation}
Note that $g$ in Equation \ref{eq:autoinit:g} depends on the type of $\texttt{layer}$; e.g.\ $g_{\texttt{Dropout}}$ and $g_{\texttt{ReLU}}$ are different functions.  For layers with trainable weights, the mean and variance mapping will depend on those weights.  For example, the function $g_{\texttt{Conv2D},\theta}$ maps input mean and variance to output mean and variance after the application of a \texttt{Conv2D} layer parameterized by weights $\theta$.  Deriving $g$ for all layers makes it possible to model signal propagation across an entire neural network.  Thus, if $\mu_\inn$ and $\nu_\inn$ are known, it is natural to calculate initial weights $\theta$ such that the layer output will have zero mean and unit variance.  
\SetKwProg{Fn}{def}{:}{\textbf{end}}
\SetKwFunction{initlayer}{initialize}%
\begin{algorithm}[t]
\caption{AutoInit}
\KwIn{Network with layers $L$, directed edges $E$}
$\texttt{output\_layers} = \{l \in L \mid (l, l') \notin E \:\forall\: l' \in L \}$\\
\For{$\textrm{\tt output\_layer in output\_layers}$}{
    \texttt{initialize}(\texttt{output\_layer})
}
~\\
\Fn{\initlayer{\rm \tt layer}}{
    $\texttt{layers\_in} = \{l \in L \mid (l, \texttt{layer}) \in E\}$ \newline
    $i = 1$\\
    \For{$\textrm{\tt layer\_in in layers\_in}$}{
        $\mu_{\inn_i}, \nu_{\inn_i} = \texttt{initialize}(\texttt{layer\_in})$\\
        $i = i + 1$
    }
    $\mu_\inn = (\mu_{\inn_1}, \mu_{\inn_2}, \ldots, \mu_{\inn_N})$\\
    $\nu_\inn = (\nu_{\inn_1}, \nu_{\inn_2}, \ldots, \nu_{\inn_N})$\\
    \eIf{\textrm{\tt layer} {\rm has weights} $\theta$}{
        initialize $\theta$ s.t. $g_{\texttt{layer}, \theta}(\mu_\inn, \nu_\inn) = (0,1)$\\
        $\mu_\out, \nu_\out = 0, 1$
    }{
        $\mu_\out, \nu_\out = g_{\texttt{layer}}(\mu_\inn, \nu_\inn)$
    }
    \Return $\mu_\out, \nu_\out$
}
\label{alg:autoinit}
\end{algorithm}

For example, for \texttt{Conv2D} layers, one possibility is
\begin{equation}
    \label{eq:autoinit:init_example}
    \theta \sim \mathcal{N}\left(0, 1/\sqrt{\texttt{fan\_in}(\nu_\inn + \mu_\inn^2)}\right) \! \implies \! g_{\texttt{Conv2D}, \theta} (\mu_\inn, \nu_\inn) = (0, 1).
\end{equation}

The AutoInit framework includes mean and variance mapping functions $g$ for the majority of layers used in modern architectures.  Section \ref{sec:autoinit:mean_variance_estimation} details how these functions and the corresponding initialization strategies (e.g.\ Equation \ref{eq:autoinit:init_example}) are derived.  New layers can be included by deriving $g$ manually, or by approximating it through Monte Carlo simulation.  This approach ensures that reliable estimates for $\mu_\inn$ and $\nu_\inn$ are available at all layers in a network, which in turn allows for weight initialization that stabilizes the signals to have zero mean and unit variance, avoiding the issues of mean shift and exploding/vanishing signals (Equations \ref{eq:autoinit:explode} and \ref{eq:autoinit:vanish}).

The main advantage of AutoInit is that it is a general method.  Unlike prior work, which imposes design constraints, AutoInit adapts to different settings automatically in order to improve performance.  Sections \ref{sec:autoinit:convolutional} through \ref{sec:autoinit:afn_meta_learning} demonstrate this adaptability experimentally from several perspectives: different classes of models (convolutional, residual, transformer), hyperparameter settings (activation function, dropout rate, weight decay, learning rate, optimizer), model depths (nine layer CNN to 812 layer ResNet), image sizes (ten-class $28 \times 28$ grayscale to 1{,}000-class $160 \times 160$ RGB), and data modalities (vision, language, tabular, multi-task, transfer learning).  AutoInit also outperforms data-dependent initialization methods and stabilizes convolutional, residual, and transformer networks without normalization layers.  This generality is shown to be particularly useful in neural architecture search and activation function discovery, where thousands of new designs need to be evaluated robustly.  AutoInit produces specialized weight initialization strategies for each candidate, which allows for measuring their performance more accurately. As a result, better solutions are discovered.  The experiments thus show that AutoInit is an effective initialization algorithm for existing networks as well as a good starting point for networks that may be developed in the future.

\section{Mean and Variance Estimation for Different Layer Types}
\label{sec:autoinit:mean_variance_estimation}

In the AutoInit framework of Algorithm~\ref{alg:autoinit}, the mean and variance mapping function $g$ needs to be defined for each type of layer in a given neural network.  This section presents derivations for a majority of the most commonly used layers available in the TensorFlow package \cite{abadi2016tensorflow} at time of writing.  Extending AutoInit to support new layers in the future will require deriving the function $g$ for those layers.  Monte Carlo sampling can also be used as an approximation for $g$ before it is manually derived.  

In the following paragraphs, $x$ denotes the input to a layer, and $y$ is the output.  The incoming and outgoing means and variances are denoted as $\mu_\inn \coloneqq \E(x)$, $\mu_\out \coloneqq \E(y)$, $\nu_\inn \coloneqq \Var(x)$, and $\nu_\out \coloneqq \Var(y)$.  The notation \texttt{Conv\{1D,2D,3D\}} is used to refer to \texttt{Conv1D}, \texttt{Conv2D}, and \texttt{Conv3D}, and analogously for other layer types.  Inputs to each layer are assumed to be independent and normally distributed.  Although these assumptions may not always hold exactly, experiments show that AutoInit models signal propagation across different types of networks well in practice.

\paragraph{Convolution and Dense Layers}
The analysis in the next paragraph applies to \texttt{Conv\{1D,\allowbreak2D,\}}, \texttt{DepthwiseConv\{1D,2D\}}, and \texttt{Dense} layers, since convolution layers are dense layers with sparse connectivity.  Notation and derivation are inspired by that of \citet{glorot2010understanding} and \citet{he2015delving}.  

A feedforward layer can be written as $y = W x + b$, where $x$ is the input, $W$ is a $\texttt{fan\_out} \times \texttt{fan\_in}$ weight matrix, $b$ is a vector of biases, and $y$ is the result.  Assume the elements of $W$ are mutually independent and from the same distribution, and likewise for the elements of $x$.  Further assume that $W$ and $x$ are independent of each other. The outgoing mean can then be written as $\mu_\out = \E(W)\mu_\inn$. For the outgoing variance, letting $W$ have zero mean and expanding the product of independent random variables yields $\nu_\out = \texttt{fan\_in}\Var(W)(\nu_\inn + \mu_\inn^2)$.  Sampling the weights $W$ according to
\begin{equation} \label{eq:autoinit:variance_scaling}
    W \sim \mathcal{N}\left(0, \frac{1}{\sqrt{\texttt{fan\_in}(\nu_\inn + \mu_\inn^2)}}\right)
\end{equation}
or
\begin{equation} \label{eq:autoinit:variance_scaling_1}
    {\small
    W \sim \mathcal{U}\left(-\frac{\sqrt{3}}{\sqrt{\texttt{fan\_in}(\nu_\inn + \mu_\inn^2)}}, \frac{\sqrt{3}}{\sqrt{\texttt{fan\_in}(\nu_\inn + \mu_\inn^2)}}\right)}
\end{equation}
is sufficient to ensure that 
\begin{equation}
    \mu_\out = 0 \textrm{ and } \nu_\out = 1.
\end{equation}


\paragraph{Activation Functions}
The analysis in the next paragraph accounts for all activation functions in TensorFlow, including \texttt{elu}, \texttt{exponential}, \texttt{gelu}, \texttt{hard\_sigmoid}, \texttt{LeakyReLU}, \texttt{linear}, \texttt{PReLU}, \texttt{ReLU}, \texttt{selu}, \texttt{sigmoid}, \texttt{softplus}, \texttt{softsign}, \texttt{swish}, \texttt{tanh}, and \texttt{ThresholdedReLU} \citep[by][respectively]{elu, hendrycks2016gaussian, maas2013rectifier, he2015delving, nair2010rectified, selu, DBLP:conf/iclr/RamachandranZL18, elfwing2018sigmoid, courbariaux2015binaryconnect}, and in fact extends to any integrable \texttt{Activation} function $f$.

Let $p_\mathcal{N}(x; \mu, \sigma)$ denote the probability density function of a Gaussian distribution with mean $\mu$ and standard deviation $\sigma$.  By the law of the unconscious statistician,
\begin{align}
    \mu_\out &= \int_{-\infty}^\infty f(x) p_\mathcal{N}(x; \mu_\inn, \sqrt{\nu_\inn}) \diff x, \\
    \nu_\out &= \int_{-\infty}^\infty f(x)^2 p_\mathcal{N}(x; \mu_\inn, \sqrt{\nu_\inn}) \diff x - \mu_\out^2.
\end{align}
These integrals are computed for an arbitrary activation function $f$ with adaptive quadrature, a well-established numerical integration approach that approximates integrals using adaptively refined subintervals \cite{piessens2012quadpack, 2020SciPy-NMeth, numpy-harris2020array}.

\paragraph{Dropout Layers}
Dropout layers randomly set \texttt{rate} percentage of their inputs to zero \cite{srivastava2014dropout}.  Therefore, 
\begin{equation}
    \mu_\out = \mu_\inn(1 - \texttt{rate}) \textrm{ and } \nu_\out = \nu_\inn(1 - \texttt{rate}).
\end{equation}
However, this analysis only applies to \texttt{SpatialDropout\{1D,2D,3D\}} layers.  For regular \texttt{Dropout} layers, TensorFlow automatically scales the values by $1 / (1 - \texttt{rate})$ to avoid a mean shift towards zero.\footnote{\url{https://github.com/tensorflow/tensorflow/blob/v2.5.0/tensorflow/python/keras/layers/core.py\#L149-L150}}  Adjusting for this change gives 
\begin{equation}
    \mu_\out = \mu_\inn \textrm{ and } \nu_\out = \nu_\inn/(1 - \texttt{rate}).
\end{equation}

\paragraph{Pooling Layers}
The same approach applies to all commonly used pooling layers, including \texttt{AveragePooling\{1D,2D,3D\}}, \texttt{MaxPooling\{1D,\allowbreak2D,3D\}}, \texttt{GlobalAveragePooling\{1D,2D,3D\}}, and \texttt{GlobalMaxPooling\{1D,2D,3D\}}.

Let $op(\cdot)$ be the average operation for an average pooling layer, and the maximum operation for a max pooling layer.  Define $K$ to be the pool size of the layer.  For standard 1D, 2D, and 3D pooling layers, $K$ would equal $k$, $k \times k$, and $k \times k \times k$, respectively.  The global pooling layers can be seen as special cases of the standard pooling layers where the pool size is the same size as the input tensor, except along the batch and channel dimensions.  Analytically, the outgoing mean and variance can be expressed as
\begin{align}
        \mu_\out &= \idotsint_{\mathbb{R}^{K}} op(x_1, x_2, \ldots, x_{K}) \cdot \prod_{i=1}^{K} p_\mathcal{N}(x_i; \mu_\inn, \sqrt{\nu_\inn}) \diff x_1 \diff x_2 \cdots \diff x_{K},\\
        \nu_\out &= \idotsint_{\mathbb{R}^{K}} op(x_1, x_2, \ldots, x_{K})^2 \cdot \prod_{i=1}^{K} p_\mathcal{N}(x_i; \mu_\inn, \sqrt{\nu_\inn}) \diff x_1 \diff x_2 \cdots \diff x_{K} - \mu_\out^2,
\end{align}
where the $x_i$ represent tensor entries within a pooling window.  Unfortunately, even a modest $3 \times 3$ pooling layer requires computing nine nested integrals, which is prohibitively expensive.  In this case, a Monte Carlo simulation is appropriate.
Sample $x_{1_j}, x_{2_j}, \ldots x_{K_j}$ from $\mathcal{N}(\mu_\inn, \sqrt{\nu_\inn})$ for $j = 1, \ldots, S$ and return
\begin{align}
    \mu_\out &= \frac{1}{S} \sum_{j=1}^S op(x_{1_j}, x_{2_j}, \ldots, x_{K_j}),\\
    \nu_\out &= \frac{1}{S} \sum_{j=1}^S op(x_{1_j}, x_{2_j}, \ldots, x_{K_j})^2 - \mu_\out.
\end{align}

\paragraph{Normalization Layers}
\texttt{BatchNormalization}, \texttt{LayerNormalization}, and \linebreak \texttt{GroupNormalization} normalize the input to have mean zero and variance one \cite{ioffe2015batch, ba2016layer, wu2018group}.  Thus, 
\begin{equation}
    \mu_\out = 0 \textrm{ and } \nu_\out = 1.
\end{equation}

\paragraph{Arithmetic Operators}
Assume the input tensors $x_1, x_2, \ldots, x_N$ with means \linebreak $\mu_{\inn_1}, \mu_{\inn_2}, \ldots, \mu_{\inn_N}$ and variances $\nu_{\inn_1}, \nu_{\inn_2}, \ldots, \nu_{\inn_N}$ are independent.  The following mean and variance mapping functions are derived.
    For an \texttt{Add} layer,
    \begin{equation}\mu_\out = \sum_{i=1}^N \mu_{\inn_i} \textrm{ and } \nu_\out = \sum_{i=1}^N \nu_{\inn_i}.\end{equation}
    For an \texttt{Average} layer,
    \begin{equation}\mu_\out = \frac{1}{N}\sum_{i=1}^N \mu_{\inn_i} \textrm{ and } \nu_\out = \frac{1}{N^2}\sum_{i=1}^N
    \nu_{\inn_i}.\end{equation}
    For a \texttt{Subtract} layer,
    \begin{equation}\mu_\out = \mu_{\inn_1} - \mu_{\inn_2} \textrm{ and } \nu_\out = \nu_{\inn_1} + \nu_{\inn_2}.\end{equation}
    Finally, for a \texttt{Multiply} layer,
    \begin{equation}\mu_\out = \prod_{i=1}^N \mu_{\inn_i} \textrm{ and } \nu_\out = \prod_{i=1}^N (\nu_{\inn_i} + \mu_{\inn_i}^2) - \prod_{i=1}^N \mu_{\inn_i}^2.\end{equation}


\paragraph{Concatenation Layers}
Assume the inputs $x_1, x_2, \ldots, x_N$ with means $\mu_{\inn_1}, \mu_{\inn_2}, \ldots, \mu_{\inn_N}$ and variances $\nu_{\inn_1}, \nu_{\inn_2}, \ldots, \nu_{\inn_N}$ are independent, and let input $x_i$ have $C_i$ elements.  Then for a \texttt{Concatenate} layer,
\begin{align}
    \mu_\out &= \frac{1}{\sum C_i} \sum_{i=1}^N C_i \mu_{\inn_i},\\
    \nu_\out &= \frac{1}{\sum C_i}\sum_{i=1}^N C_i(\nu_{\inn_i} + \mu_{\inn_i}^2) - \mu_\out^2.
\end{align}

\paragraph{Recurrent Layers}
A Monte Carlo simulation can be used to estimate the outgoing mean and variance for recurrent layers, including \texttt{GRU}, \texttt{LSTM}, and \texttt{SimpleRNN} \cite{hochreiter1997long, chung2014empirical}.  Recurrent layers often make use of activation functions like sigmoid and tanh that constrain the scale of the hidden states.  Because of this practice, recurrent layers should be initialized with a default scheme or according to recent research in recurrent initialization \cite{chen2018dynamical, gilboa2019dynamics}.  AutoInit will then estimate the outgoing mean and variance in order to inform appropriate weight scaling elsewhere in the network.

\paragraph{Padding Layers}
\texttt{ZeroPadding\{1D,2D,3D\}} layers augment the borders of the input tensor with zeros, increasing its size.  Let $z$ be the proportion of elements in the tensor that are padded zeros.  Then $z = (\texttt{padded\_size} - \texttt{original\_size}) / \texttt{padded\_size}$, and
\begin{equation}
    \mu_\out = \mu_\inn(1-z) \textrm{ and } \nu_\out = \nu_\inn(1-z).
\end{equation}

\paragraph{Shape Adjustment Layers}
Many layers alter the size or shape of the input tensor but do not change the distribution of the data.  These layers include \texttt{Flatten}, \texttt{Permute}, \texttt{Reshape}, \texttt{UpSampling\{1D,2D,3D\}}, and \texttt{Cropping\{1D,2D,3D\}} layers.  The same is true of TensorFlow API calls \texttt{tf.reshape}, \texttt{tf.split}, and \texttt{tf.transpose}.  For these layers, 
\begin{equation}
    \mu_\out = \mu_\inn \textrm{ and } \nu_\out = \nu_\inn.
\end{equation}

\paragraph{Input Layers}
An \texttt{InputLayer} simply exposes the model to the data, therefore 
\begin{equation}
    \mu_\out = \mu_{\mathrm{data}} \textrm{ and } \nu_\out = \nu_{\mathrm{data}}.
\end{equation} 
TensorFlow allows nesting models within other models.  In this use case where the \texttt{InputLayer} does not directly connect to the training data, 
\begin{equation}
    \mu_\out = \mu_\inn \textrm{ and } \nu_\out = \nu_\inn.
\end{equation}

\paragraph{Matrix Multiplication}
A call to \texttt{tf.matmul} takes input tensors $x_1$ and $x_2$ of shape $\cdots \times m \times n$ and $\cdots \times n \times p$ and produces the output tensor $x_{\out}$ of shape $\cdots \times m \times p$ with entries
\begin{equation}
    x_{\out_{\cdots,i,j}} = \sum_{k = 1}^n x_{1_{\cdots,i,k}} x_{2_{\cdots,k,j}}.
\end{equation}
Assuming independent matrix entries, the output statistics can then be calculated as
\begin{align}
    \mu_\out &= n \mu_{\inn_1} \mu_{\inn_2},\\
    \nu_\out &= n \left((\nu_{\inn_1} + \mu_{\inn_1}^2)(\nu_{\inn_2} + \mu_{\inn_2}^2) - \mu_{\inn_1}^2\mu_{\inn_2}^2\right).
\end{align}

\paragraph{Reduction Operators}
A call to $\texttt{tf.reduce\_mean}$ reduces the size of the input tensor by averaging values along one or more axes.  For example, averaging an input tensor of shape $128 \times 8 \times 8 \times 256$ along axes 1 and 2 would produce an output tensor of shape $128 \times 1 \times 1 \times 256$.  Let $D$ represent the product of the length of the axes being averaged over (in the example above, $8 \times 8 = 64$).  The output tensor has
\begin{equation}
    \mu_\out = \mu_\inn \textrm{ and } \nu_\out = \nu_\inn / D.
\end{equation}
The function $\texttt{tf.reduce\_sum}$ performs similarly, summing entries instead of averaging them.  In this case,
\begin{equation}
    \mu_\out = D\mu_\inn \textrm{ and } \nu_\out = D\nu_\inn.
\end{equation}

\paragraph{Maintaining Variance $\mathbf{\neq 1}$.}
In Algorithm \ref{alg:autoinit}, AutoInit initializes weights so that the output signal at each layer has mean zero and variance one.  Although signal variance $\nu = 1$ is sufficient for stable signal propagation, it is not a necessary condition.  Indeed, other values for the signal variance $\nu$ could be utilized, as long as $\nu$ remains consistent throughout the network.  If a different $\nu$ is desired, weights can be initialized according to Equation \ref{eq:autoinit:variance_scaling} or \ref{eq:autoinit:variance_scaling_1} and then multiplied by $\sqrt{\nu}$. For instance, such a modification was done for the CoAtNet model in Section~\ref{sec:autoinit:coatnet}, resulting in slightly improved final performance.

\section{Hyperparameter Variation in CNNs}
\label{sec:autoinit:convolutional}

\paragraph{Experiment Setup}
The first experiment tests AutoInit's performance across a range of hyperparameter values for CNNs.  The experiment focuses on the All-CNN-C architecture \cite{springenberg2015striving}, which consists of convolutional layers, ReLU activation functions, dropout layers, and a global average pooling layer at the end of the network.  This simple design helps identify performance gains that can be attributed to proper weight initialization.  The network is trained on the CIFAR-10 dataset \cite{krizhevsky2009learning} using the standard setup (Appendix \ref{ap:details:autoinit}).  In particular, the baseline comparison is the ``Glorot Uniform'' strategy \citep[also called Xavier initialization; ][]{glorot2010understanding}, where weights are sampled from $\mathcal{U}\left(-\frac{\sqrt{6}}{\sqrt{\texttt{fan\_in} + \texttt{fan\_out}}}, \frac{\sqrt{6}}{\sqrt{\texttt{fan\_in} + \texttt{fan\_out}}}\right)$.

\paragraph{Hyperparameter Variation}
In separate experiments, the activation function, dropout rate, weight decay, and learning rate multiplier were changed.  While one hyperparameter was varied, the others were fixed to the default values.

\paragraph{Results}
Figure \ref{fig:autoinit:allcnnc_hyperparams} shows the performance of the network with the default initialization and with AutoInit in these different settings.  
In sum, AutoInit improved performance in every hyperparameter variation evaluated.  As Figure~\ref{fig:autoinit:signal_propagation_allcnnc} shows, AutoInit is adaptive.  It alters the initialization to account for different activation functions and dropout rates automatically.

\begin{figure}
    \centering
    \includegraphics[width=\linewidth]{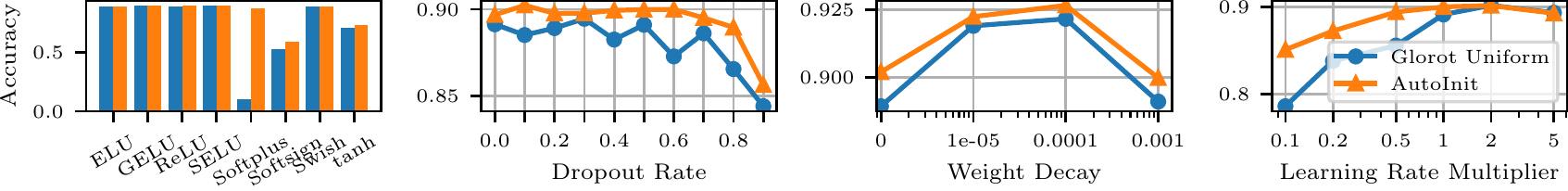}
    \caption{All-CNN-C test accuracy on CIFAR-10.  AutoInit results in comparable or better performance with different activation functions, better performance across all dropout rates and weight decay settings, and is less sensitive to the choice of learning rate than the default initialization.}
    \label{fig:autoinit:allcnnc_hyperparams}
\end{figure}

\begin{figure}
    \centering
    \begin{tikzpicture}
    \draw (0, 0) node[anchor=south west, inner sep=0, align=left] {\includegraphics[width=\linewidth]{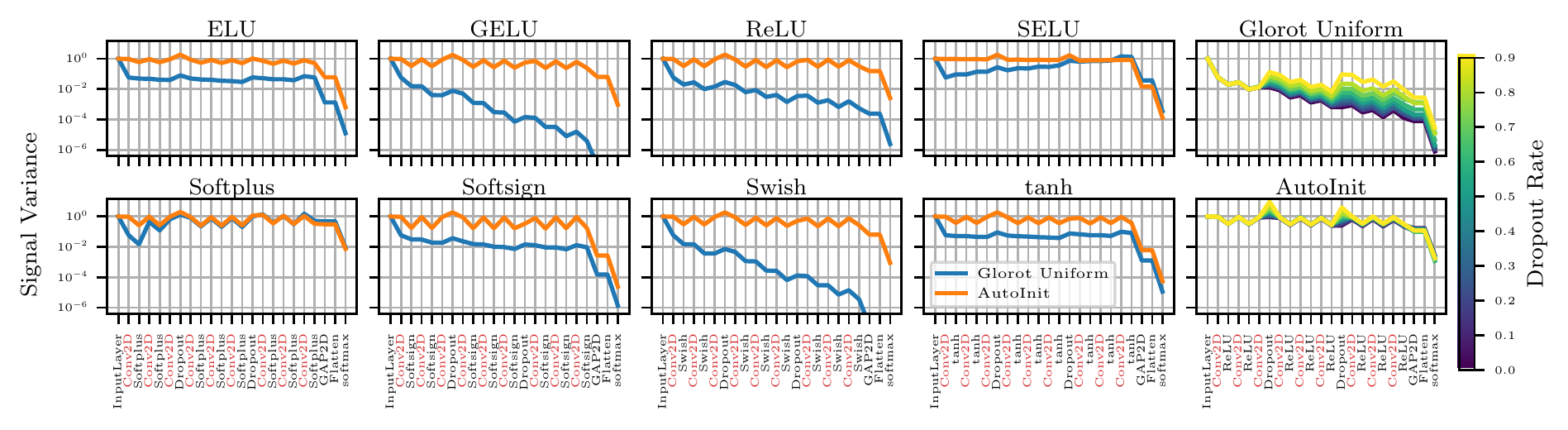}};
    \draw (0.6, 0) node[anchor=south west, inner sep=0, align=left] {(a)};
    \draw (13.2, 0) node[anchor=south west, inner sep=0, align=left] {(b)};
    \end{tikzpicture}
    \caption{Signal propagation in All-CNN-C networks with different (a) activation functions and (b) dropout rates.  With the default initialization, signals often vanish with depth, and their behavior is inconsistent across activation functions and dropout rates. With AutoInit, the variance fluctuates naturally as each layer modifies its input.  At layers with weights (marked in {\color{red}red}), AutoInit scales the weights appropriately to return the variance to approximately 1.0, stabilizing training in each case.}
    \label{fig:autoinit:signal_propagation_allcnnc}
\end{figure}

AutoInit is also robust.  Even as other hyperparameters like learning rate and weight decay change, AutoInit still results in a higher performing network than the default initialization.  The results thus suggest that AutoInit provides an improved default initialization for convolutional neural networks.

\section{Stability in Deep ResNets}
\label{sec:autoinit:stability_deep_resnets}

This section expands the experimental analysis of AutoInit to residual networks, focusing on preactivation residual networks of various depths \cite{he2016identity}.  The training setup is standard unless explicitly stated otherwise (Appendix \ref{ap:details:autoinit}).  In particular, the initialization is ``He Normal'' \cite{he2015delving}, where weights are sampled from $\mathcal{N}(0, \sqrt{2/\texttt{fan\_in}})$.  

\paragraph{Visualizing Signal Propagation} 
Figure \ref{fig:autoinit:signal_propagation_resnet} shows how the signal variance changes with depth.  With ResNet-56, the variance increases where the shortcut connection and residual branch meet, and the variance drops whenever ReLU is applied.  Although the variance increases exponentially with the default initialization and linearly with AutoInit (note the log scale on the $y$ axis), training is still stable because batch normalization layers return the signal to variance 1.0.  Without batch normalization, the signal variance never stabilizes under the default initialization.  In contrast, removing batch normalization is not an issue with AutoInit; the signal variance remains stable with depth.

\begin{figure}
    \centering
    \includegraphics[width=0.75\linewidth]{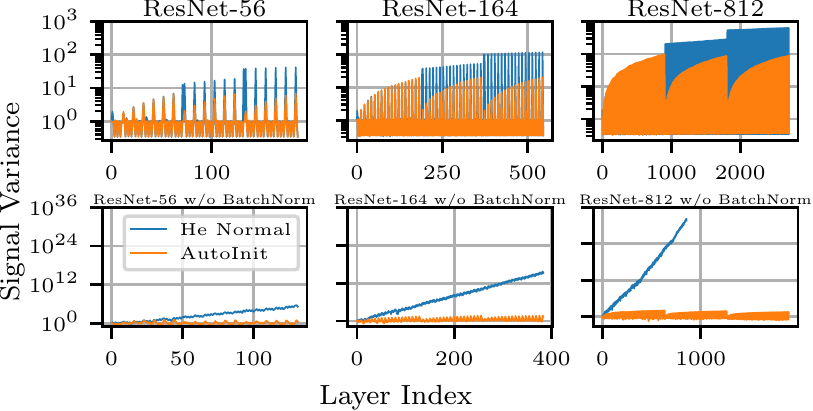}
    \caption{Signal propagation in residual networks.  Gaussian input was fed to the networks and empirical variance computed at each layer.  Since \texttt{ReLU}, \allowbreak \texttt{BatchNormalization}, and \texttt{Add} are counted as individual layers in this diagram, the total number of layers is different from that in the architecture name (i.e.\ ResNet-164 has 164 convolutional layers but over 500 total layers).  The default initialization causes exploding signals, while AutoInit ensures signal propagation is stable.}
    \label{fig:autoinit:signal_propagation_resnet}
\end{figure}

With the deeper ResNet-164 and ResNet-812 networks, the conclusions are similar but more pronounced.  In the case of ResNet-812 without batch normalization, the signals explode so severely that they exceed machine precision.  AutoInit avoids this issue entirely.

\paragraph{Stable Initial Learning Rates}
Exploding or vanishing signals make optimization difficult because they result in gradient updates that are too large to be accurate or too small to be meaningful.  This phenomenon can be observed when the network does not exceed chance accuracy. Therefore, a simple way to quantify whether a weight initialization is effective is to observe a network's performance after a few epochs.

Using this metric, AutoInit was compared against the default initialization by training unnormalized versions of ResNet-56, ResNet-164, and ResNet-812 for five epochs with a variety of learning rates.  With the default initialization, ResNet-56 requires a learning rate between $10^{-8}$ and $0.5 \times 10^{-3}$ to begin training, but training was not possible with ResNet-164 or ResNet-812 because of exploding signals (Figure \ref{fig:autoinit:resnet_vary_initial_lr}a).  AutoInit stabilizes training for all three networks, and its effect does not diminish with depth.  The networks remain stable with higher learning rates between $10^{-4}$ and $0.05$.  Such rates speed up learning, and also correlate with better generalization performance \cite{jastrzkebski2017three, smith2018don, smith2018bayesian}.

\begin{figure}
    \centering
    \begin{minipage}{0.49\linewidth}
    \begin{tikzpicture}
    \draw (0, 0) node[anchor=south west, inner sep=0, align=left] {\includegraphics[width=\linewidth]{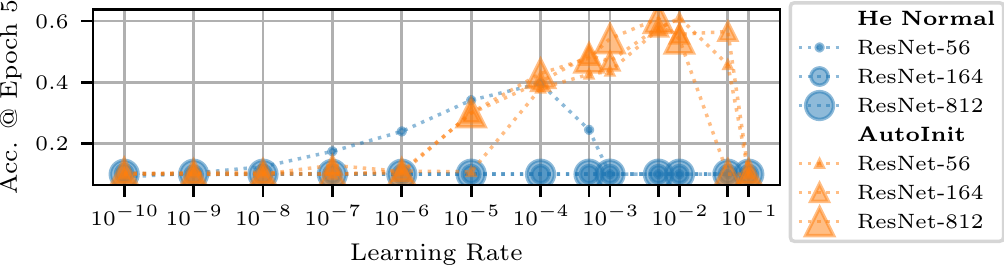}};
    \draw (0, 0) node[anchor=south west, inner sep=0, align=left] {(a)};
    \end{tikzpicture}
    \end{minipage}
    \hfill
    \begin{minipage}{0.49\linewidth}
    \begin{tikzpicture}
    \draw (0, 0) node[anchor=south west, inner sep=0, align=left] {\includegraphics[width=\linewidth]{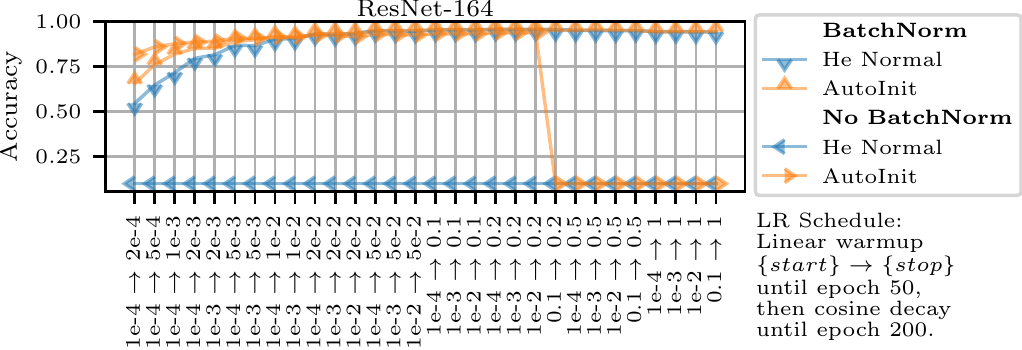}};
    \draw (0, 0) node[anchor=south west, inner sep=0, align=left] {(b)};
    \end{tikzpicture}
    \end{minipage}
    \caption{ResNet accuracy on CIFAR-10 with different settings. (a) Accuracy of unnormalized ResNet architectures after five epochs of training with different learning rates and weight initializations.  While default initialization makes training difficult in ResNet-56 and impossible at greater depths, AutoInit results in consistent training at all depths.  (b) Accuracy of ResNet-164 with a variety of learning rate schedules and initializations. AutoInit is comparable to or outperforms the default initialization in every case.}
\label{fig:autoinit:resnet_vary_initial_lr}
\end{figure}

\paragraph{Full ResNet Training}
In the third residual network experiment, ResNet-164 was trained to completion on CIFAR-10 with different learning rate schedules. All schedules included a linear warm-up phase followed by a decay to zero using cosine annealing \cite{loshchilov2016sgdr}.

Figure \ref{fig:autoinit:resnet_vary_initial_lr}b displays the performance with a variety of such schedules.  When the best learning rate schedules are used, ResNet-164 achieves comparable performance with the default initialization and with AutoInit.  However, when a suboptimal schedule is used, performance degrades more quickly with the default initialization than it does with AutoInit.  Without batch normalization, the network requires proper weight initialization for stability.  In this case, ResNet-164 with the default initialization fails to train regardless of the learning rate schedule, whereas AutoInit results in high accuracy for the majority of them.

Together, the experiments in this section show that AutoInit is effective with deep networks.  It prevents signals from exploding or vanishing, makes it possible to use larger learning rates, and achieves high accuracy, with and without batch normalization. 

\section{High-Resolution Images with Transformers}
\label{sec:autoinit:coatnet}
This section extends AutoInit to transformer architectures and applies them to high-resolution image classification.  Specifically, AutoInit is applied to CoAtNet, a model that combines convolutional and attention layers \cite{dai2021coatnet}.  The model is trained on Imagenette, a subset of 10 classes from the ImageNet dataset \cite{imagenette, deng2009imagenet}.  Imagenette allows evaluating AutoInit in a high-resolution image classification task with a $132\times$ smaller carbon footprint than the full ImageNet dataset would (Appendix \ref{ap:infrastructure}).  As Table \ref{tab:autoinit:coatnet} shows, AutoInit outperforms six commonly used initialization schemes as well as the default initialization, which initializes convolutional layers from $\mathcal{N}(0,\sqrt{2/\texttt{fan\_out}})$ and fully-connected layers from $\mathcal{U}\left(-\frac{\sqrt{6}}{\sqrt{\texttt{fan\_in} + \texttt{fan\_out}}}, \frac{\sqrt{6}}{\sqrt{\texttt{fan\_in} + \texttt{fan\_out}}}\right)$.  Furthermore, AutoInit stabilizes the network even when normalization layers are removed, suggesting that it is a promising candidate towards developing normalizer-free transformer architectures.  Full details are in Appendix \ref{ap:details:autoinit}.

\begin{table}
    \centering
    \caption{CoAtNet top-1 accuracy on Imagenette, shown as median of three runs.  The first four experiments vary the activation function, while the fifth removes all normalization layers from the architecture. A ``-'' indicates that training diverged.  AutoInit produces the best model in three of the five settings, and remains stable even without normalization layers.\\}
    \begin{tabular}{llllll}
        \toprule 
        CoAtNet & w/ GELU & w/ ReLU & w/ SELU & w/ Swish & w/o Norm \\ \midrule 
        Default Init.  & 89.38 & 89.22 & 86.09 & 88.69 & - \\
        Glorot Normal  & 91.44 & 91.54 & 87.59 & 90.42 & \textbf{85.89} \\
        Glorot Uniform & 91.16 & 91.18 & \textbf{88.25} & 90.06 & 85.73 \\
        He Normal      & 88.48 & 88.05 & 86.11 & 88.36 & - \\
        He Uniform     & 88.66 & 87.87 & 86.37 & 88.41 & - \\
        LeCun Normal   & 91.11 & 90.57 & 87.80 & 90.83 & - \\
        LeCun Uniform  & 90.55 & 90.65 & 87.67 & 90.57 & - \\
        AutoInit & \textbf{92.48} & \textbf{92.15} & 86.80 & \textbf{92.28} & 85.73 \\ 
        \bottomrule
    \end{tabular}
    \label{tab:autoinit:coatnet}
\end{table}

\section{Scaling up to ImageNet}
\label{sec:autoinit:imagenet}

In order to compliment the results from Section \ref{sec:autoinit:coatnet} and demonstrate that AutoInit can scale to more difficult tasks, ResNet-50 was trained from scratch on ImageNet with the default initialization and with AutoInit.  As Table \ref{tab:autoinit:imagenet} shows, AutoInit improves top-1 and top-5 accuracy in this task as well.  Full training details are in Appendix \ref{ap:details:autoinit}.

\begin{table}
    \centering
    \caption{ResNet-50 top-1 and top-5 validation accuracy on ImageNet.  AutoInit improves performance, even with large and challenging datasets.\\}
    \begin{adjustbox}{max width=\linewidth}
    \begin{tabular}{lll}
        \toprule 
        & top-1 & top-5 \\ \midrule 
        Default Init. & 74.33 & 91.60 \\
        AutoInit & \textbf{75.35} & \textbf{92.03} \\
        \bottomrule
    \end{tabular}
    \end{adjustbox}
    \label{tab:autoinit:imagenet}
\end{table}

\section{Contrast with Data-Dependent Initialization}
\begin{figure}
    \centering
    \includegraphics[width=0.75\linewidth]{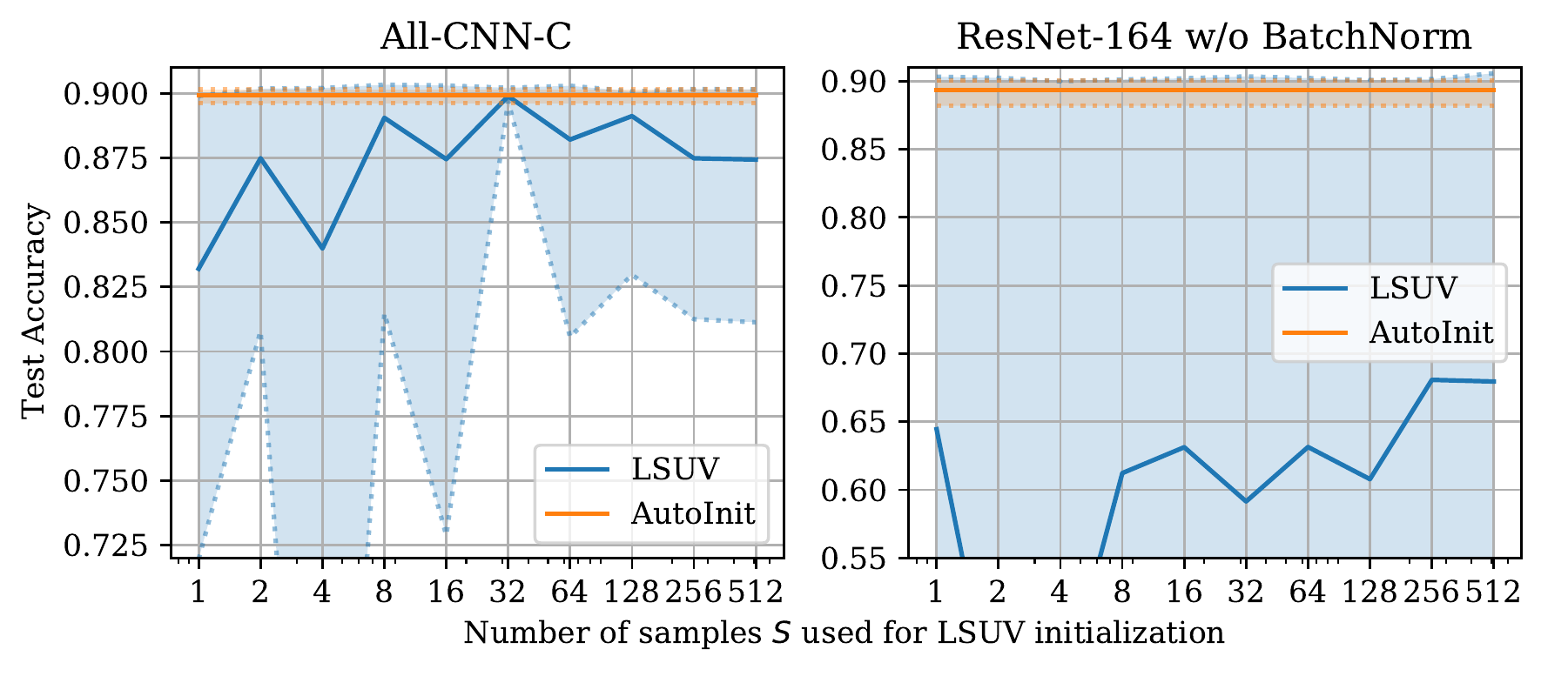}
    \caption{Mean CIFAR-10 test accuracy for AutoInit vs.\ LSUV with different numbers of samples $S$.  Each evaluation is repeated 10 times; the shaded area shows the maximum and minimum accuracy among all trials.  AutoInit is consistent, but LSUV struggles when $S$ is small or the network is deep.}
    \label{fig:autoinit:performance}
\end{figure}

\label{sec:autoinit:autoinit_vs_lsuv}
The layer-sequential unit-variance (LSUV) algorithm is the most natural \linebreak data-dependent initialization comparison to AutoInit because both approaches aim to scale the weights appropriately in an architecture-agnostic way. LSUV pre-initializes the weights with an existing approach, feeds $S$ training samples through the network, and adjusts the scale of the weights so that each layer's output variance is approximately one \cite{mishkin2015all}.

Data-dependent initialization is time-consuming for large $S$ (indeed, even $S=1$ is used in practice \cite{kingma2018glow}).  However, if $S$ is too small, the samples may not reflect the statistics of the dataset accurately, leading to poor initialization.  Figure \ref{fig:autoinit:performance} demonstrates this phenomenon.  In some training runs LSUV matches the performance of AutoInit, but in many instances the randomly selected samples do not accurately reflect the overall dataset and performance suffers.  Since AutoInit is not data-dependent, it does not have this issue.  Details of this experiment are in Appendix \ref{ap:details:autoinit}.

\section{Enabling Neural Architecture Search}
\label{sec:autoinit:nas}

Sections \ref{sec:autoinit:convolutional} through \ref{sec:autoinit:autoinit_vs_lsuv} demonstrated that AutoInit works well for convolutional, residual, and transformer networks with a variety of hyperparameter values and depths. In this section, the results are extended to a broader variety of network topologies and types of tasks, for two reasons. First, whereas custom weight initialization may be developed by hand for the most popular machine learning benchmarks, it is unlikely to happen for a variety of architectures and tasks beyond them. Second, as new types of neural network designs are developed in the future, it will be important to initialize them properly to reduce uncertainty in their performance.  This section evaluates the generality of AutoInit by applying it to the variety of networks generated in a neural architecture search process with five types of tasks.

\usetikzlibrary{shapes.geometric, arrows, positioning, calc}
\tikzstyle{input} = [rectangle, rounded corners, text centered, draw=black, fill=red!30]
\tikzstyle{unary} = [rectangle, rounded corners, text centered, draw=black, fill=yellow!30]
\tikzstyle{binary} = [rectangle, rounded corners, text centered, draw=black, fill=blue!30]
\tikzstyle{arrow} = [thick, ->,>=stealth]
\tikzstyle{output} = [rectangle, rounded corners, text centered, draw=black, fill=green!30]
\tikzstyle{invisible} = [rectangle, text centered]
\tikzstyle{maybeparam} = [draw, circle, dashed, fill=cyan!30]

\begin{figure}[!t]
    \centering
    \begin{tikzpicture}
    \draw (0, 0) node[anchor=south west, inner sep=0, align=left] {\includegraphics[width=0.77\linewidth]{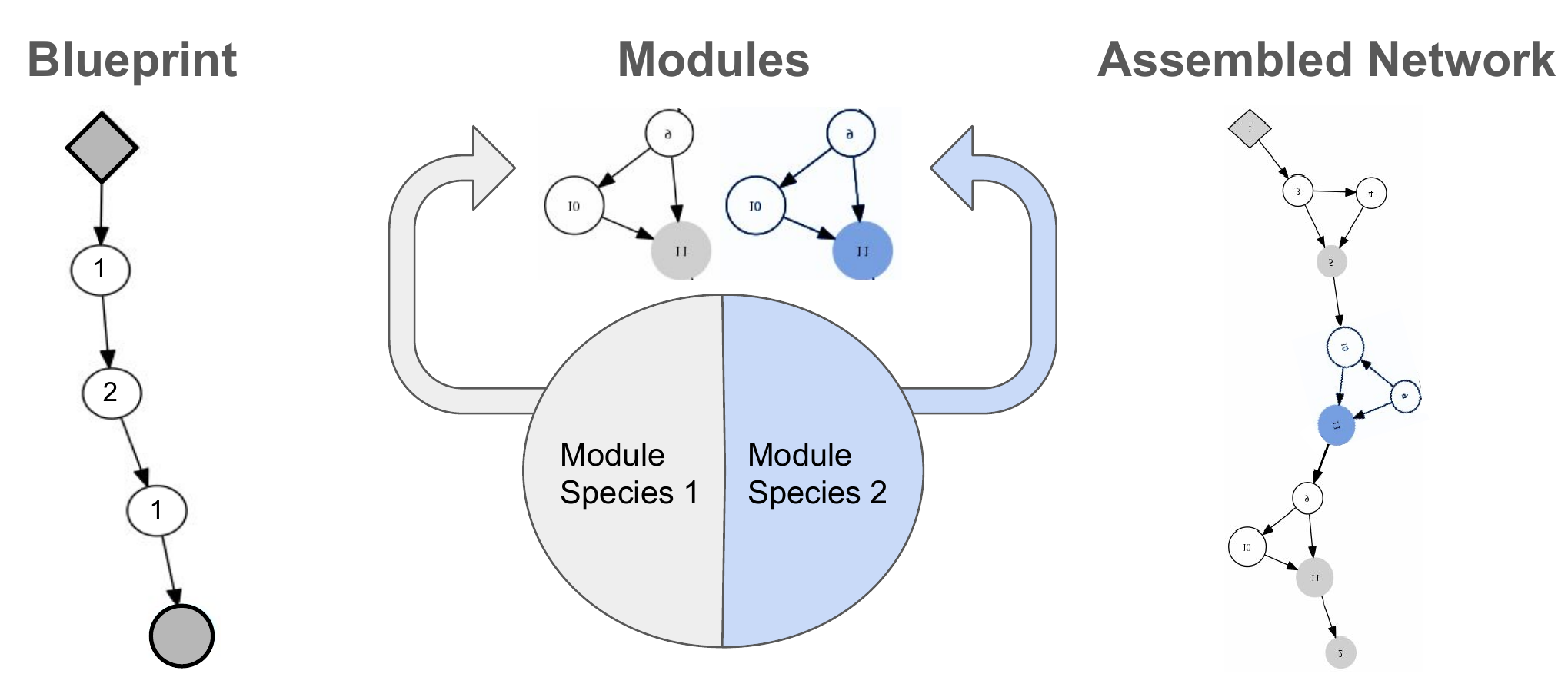}};
    \draw (0, 0) node[anchor=south west, inner sep=0, align=left] {(a)};
    \end{tikzpicture}
    \begin{tikzpicture}
    \draw (0, 0) node[anchor=south west, align=left] {
    
        \begin{adjustbox}{max width=0.12\linewidth}
            \begin{tikzpicture}[node distance=3em]
        
                    \node (output) [output] {$f(x)$};
                    \node (p1) [maybeparam, below of=output] {$\alpha$};
                    \node (unary) [unary, below of=p1] {$\sigma(x)$};
                    \node (binary) [binary, below of=unary] {$x_1 - x_2$};
                    \node (p5) [maybeparam, below of=binary, xshift=-2em] {$\beta$};
                    \node (unary1) [unary, below of=p5] {$|x|$};
                    \node (unary2) [unary, below of=binary, xshift=2em] {$\textrm{arctan}(x)$};
                    \node (p4) [maybeparam, below of=unary2] {$\gamma$};
                    \node (input1) [input, below of=unary1] {$x$};
                    \node (input2) [input, below of=p4] {$x$};

                    \draw [arrow] (input1) -- (unary1);
                    \draw [arrow] (input2) -- (p4);
                    \draw [arrow] (p4) -- (unary2); 
                    \draw [arrow] (unary1) -- (p5);
                    \draw [arrow] (p5) -- (binary);
                    \draw [arrow] (unary2) -- (binary);
                    \draw [arrow] (binary) -- (unary);
                    \draw [arrow] (unary) -- (p1);
                    \draw [arrow] (p1) -- (output);
                    
                \end{tikzpicture}
        \end{adjustbox}
        
    };
    \draw (-0.5, 0) node[anchor=south west, inner sep=0, align=left] {(b)};
    \end{tikzpicture}

    \caption{(a) The CoDeepNEAT method.  Modules replace nodes in the blueprint to create a candidate neural network. (b) An example activation function created with the PANGAEA method.  The computation graph represents the parametric function $f(x) = \alpha \cdot \sigma (\beta \cdot | x | - \arctan ( \gamma \cdot x ) )$.  CoDeepNEAT and PANGAEA generate a variety of architectures and activation functions that can be used to evaluate AutoInit's generality and flexibility.}
    \label{fig:autoinit:codeepneat_pangaea}
\end{figure}

\begin{figure*}
    \centering

    \begin{tikzpicture}
    \draw (0, 0) node[anchor=south west, inner sep=0, align=left] {\includegraphics[width=\linewidth]{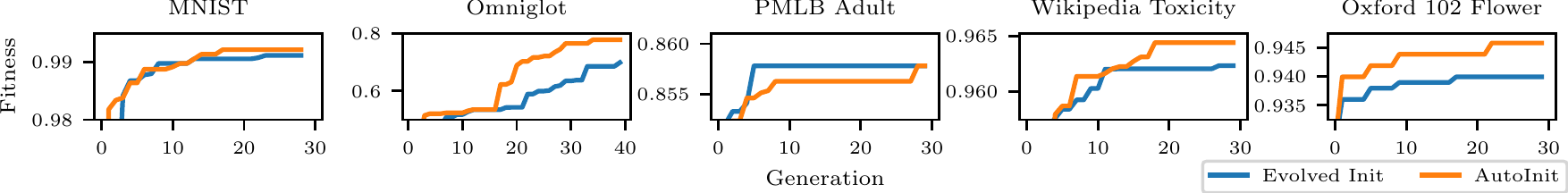}};
    \draw (0, 0) node[anchor=south west, inner sep=0, align=left] {(a)};
    \end{tikzpicture}\\
    
        \begin{tikzpicture}
            \draw (0, 0) node[anchor=south west, inner sep=0, align=left] {\includegraphics[width=168pt]{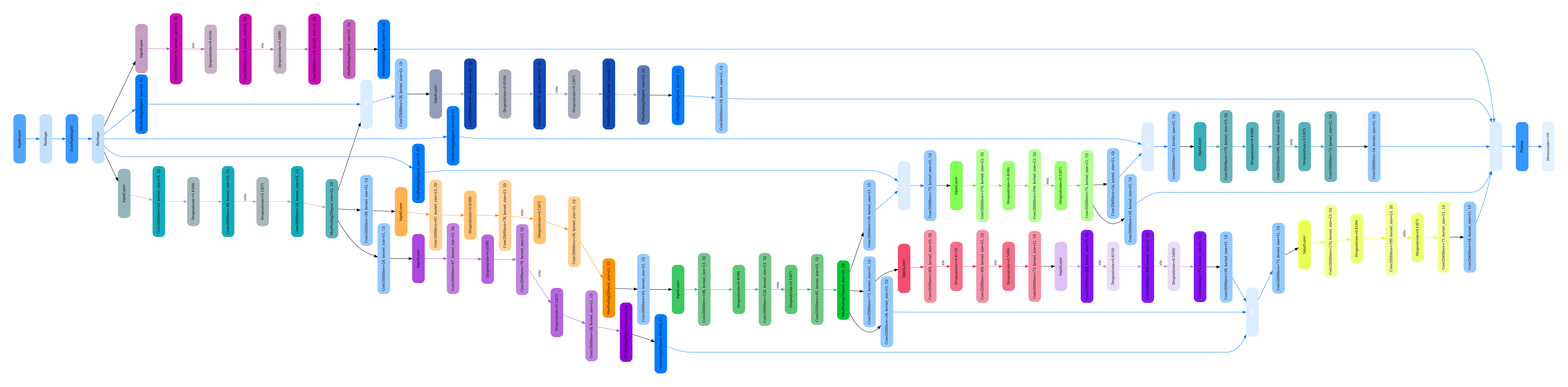}
            \includegraphics[width=97pt]{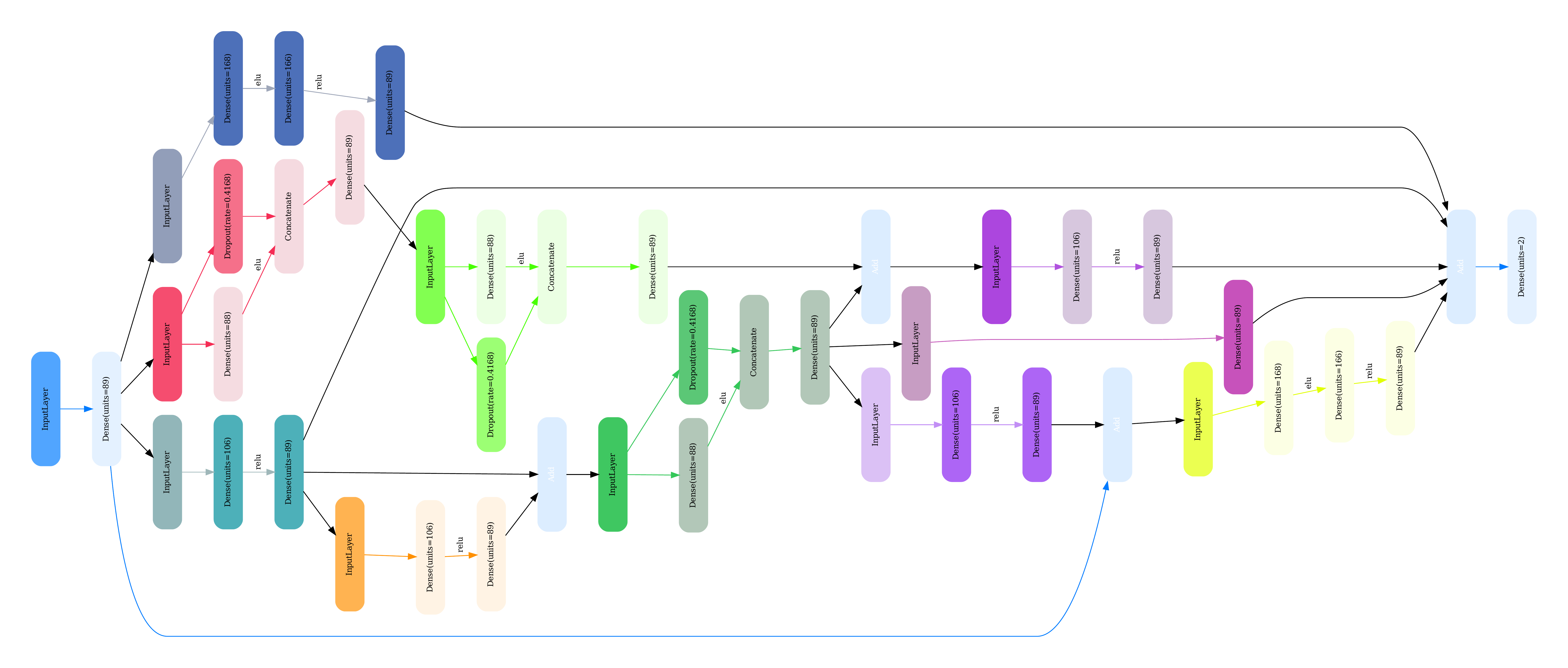}
            \includegraphics[width=196pt]{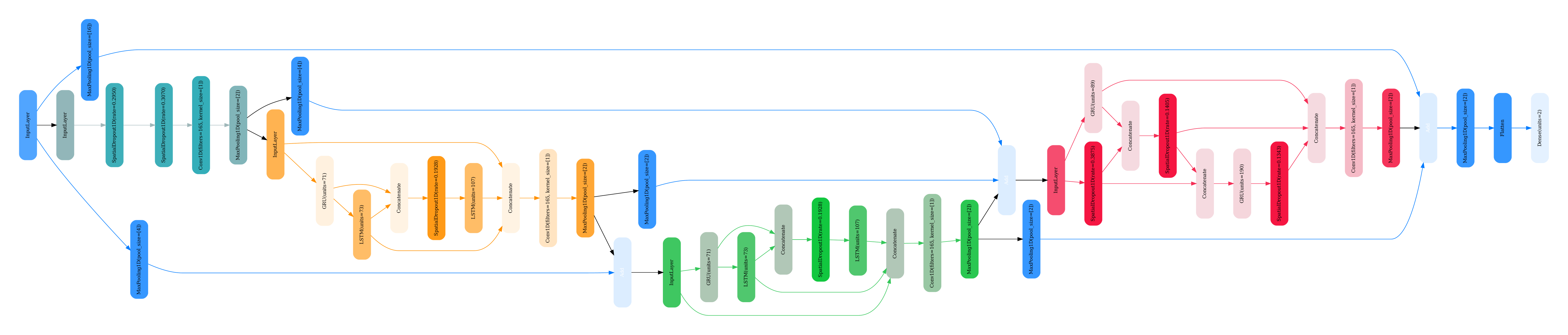}\\
            \adjustbox{max width=\linewidth}{
                \begin{tabular}{p{168pt}<{\centering}p{97pt}<{\centering}p{196pt}<{\centering}}
                \textbf{MNIST} & \textbf{PMLB Adult} & \textbf{Wikipedia Toxicity}
                \end{tabular}}
            };
        \draw (0, 0) node[anchor=south west, inner sep=0, align=left] {(b)};
        \end{tikzpicture}

    
    \caption{Evaluation of AutoInit with neural architecture search. (a) Performance improvement over generations in the five tasks.  AutoInit outperforms the evolved initialization on four tasks and matches it on one.  (b) Representative networks evolved with AutoInit.  Although the networks are distinct, AutoInit initializes them properly, leading to good performance in each case.
    }
    \label{fig:autoinit:enn_results}
\end{figure*}

\paragraph{The CoDeepNEAT Architecture Search Method}
Neural networks are evolved using CoDeepNEAT \cite{miikkulainen2019evolving, liang2019evolutionary}.  
CoDeepNEAT extends previous work on evolving network topologies and weights \cite{moriarty1997forming, stanley2002evolving} to the level of evolving deep learning architectures.  Utilizing a cooperative coevolution framework \cite{potter2000cooperative}, CoDeepNEAT evolves populations of modules and blueprints simultaneously (Figure \ref{fig:autoinit:codeepneat_pangaea}a).  Modules are small neural networks, complete with layers, connections, and hyperparameters.  Blueprints are computation graphs containing only nodes and directed edges.  To create a candidate neural network, CoDeepNEAT chooses a blueprint and replaces its nodes with selected modules.  This mechanism makes it possible to evolve deep, complex, and recurrent structures, while taking advantage of the modularity often found in state-of-the-art models.  In addition to the network structure, CoDeepNEAT evolves hyperparameters like dropout rate, kernel regularization, and learning rate.  The network weights are not evolved, but instead trained with gradient descent.  The generality of CoDeepNEAT helps minimize human design biases and makes it well-suited to analyzing AutoInit's performance in a variety of open-ended machine learning settings.

\paragraph{Tasks}
Using CoDeepNEAT, networks are evolved for their performance in vision (MNIST), language (Wikipedia Toxicity), tabular (PMLB Adult), multi-task (Omniglot), and transfer learning (Oxford 102 Flower) tasks (Appendix~\ref{ap:details:autoinit_enn_details}).

\paragraph{Results}
Figure \ref{fig:autoinit:enn_results}a shows how CoDeepNEAT discovers progressively better networks over time on the five tasks.  Evolution often selects different weight initialization strategies for the different layers in these networks, so this scheme is already a flexible and powerful baseline.  However, by accounting for each model's unique topology and hyperparameters, AutoInit outperforms the baseline in four of the five tasks, and matches it in the fifth.

Beyond performance, three interesting phenomena can be observed. First, the mean population fitness varies greatly with the default initialization in each task, sometimes dropping significantly from one generation to the next (Figure \ref{fig:autoinit:enn_results_detail}).  Though some variation is natural in a stochastic evolutionary process like CoDeepNEAT, AutoInit makes the discovery process more reliable by stabilizing the performance of the entire population.

\begin{figure}
    \centering
    \includegraphics[width=0.49\linewidth]{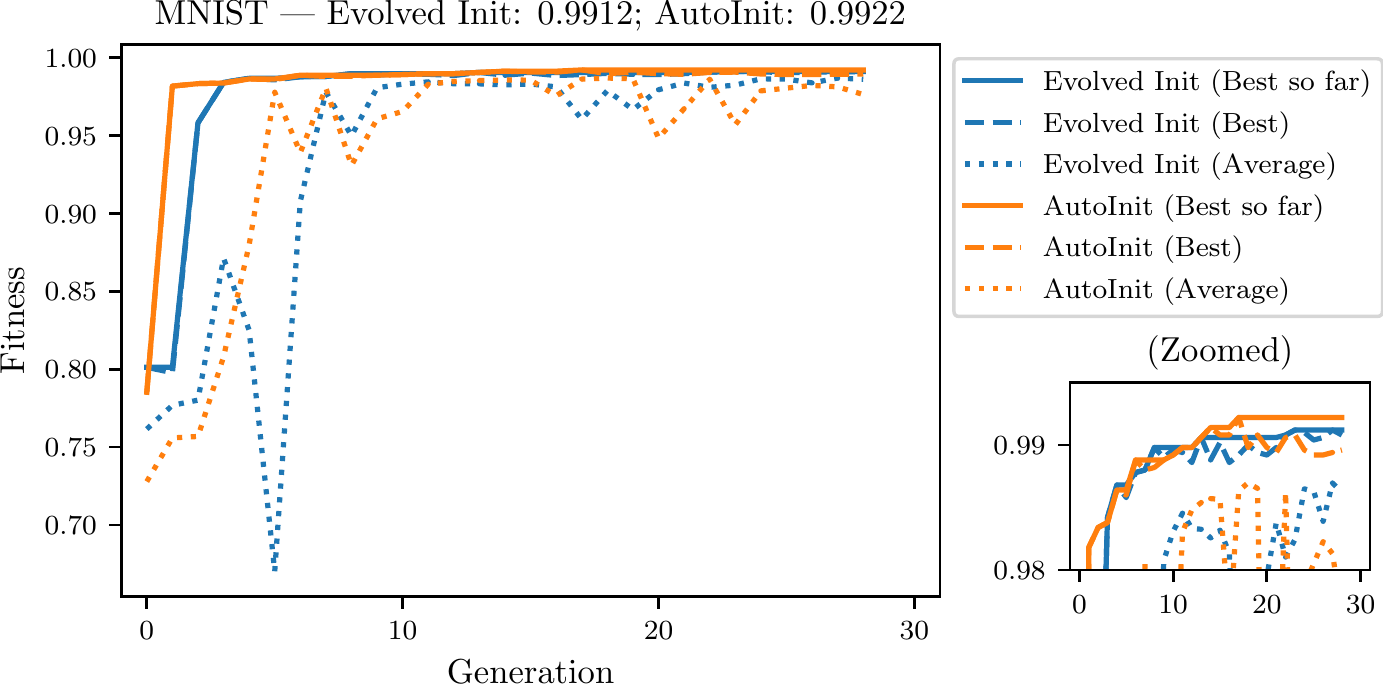}
    \includegraphics[width=0.49\linewidth]{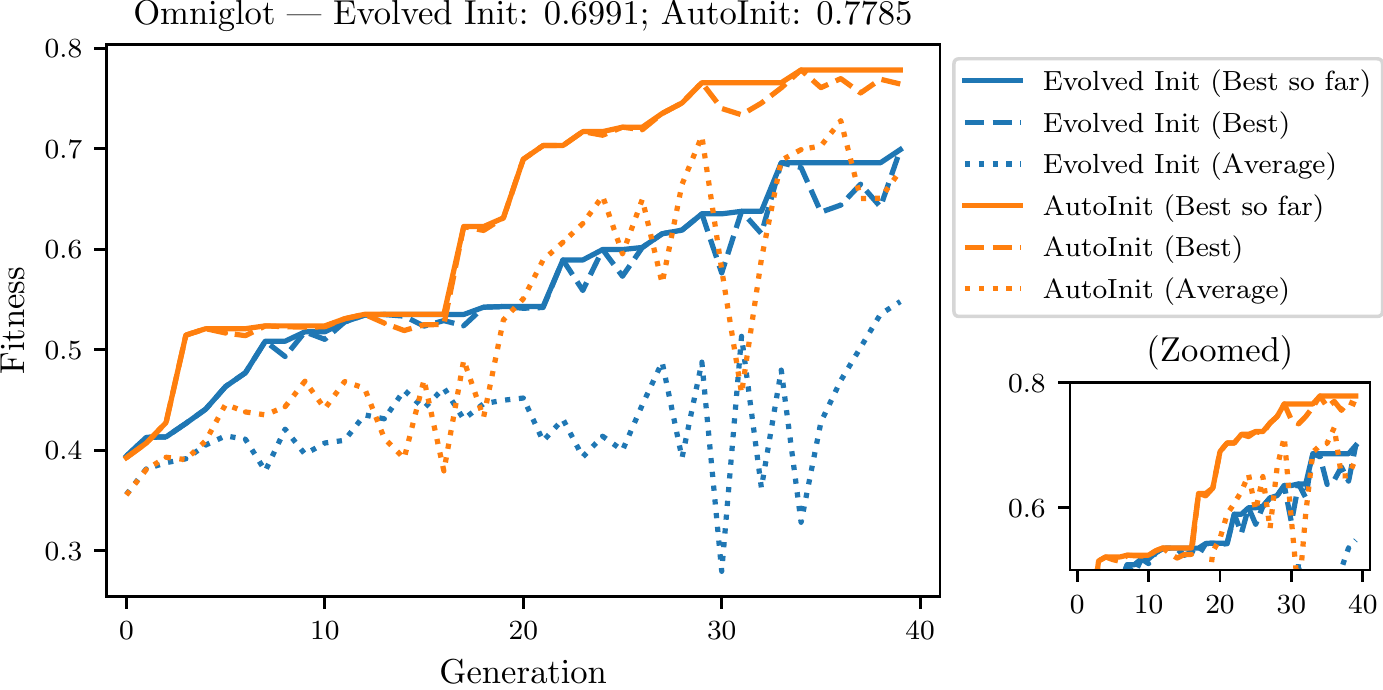}\\[3ex]
    \includegraphics[width=0.49\linewidth]{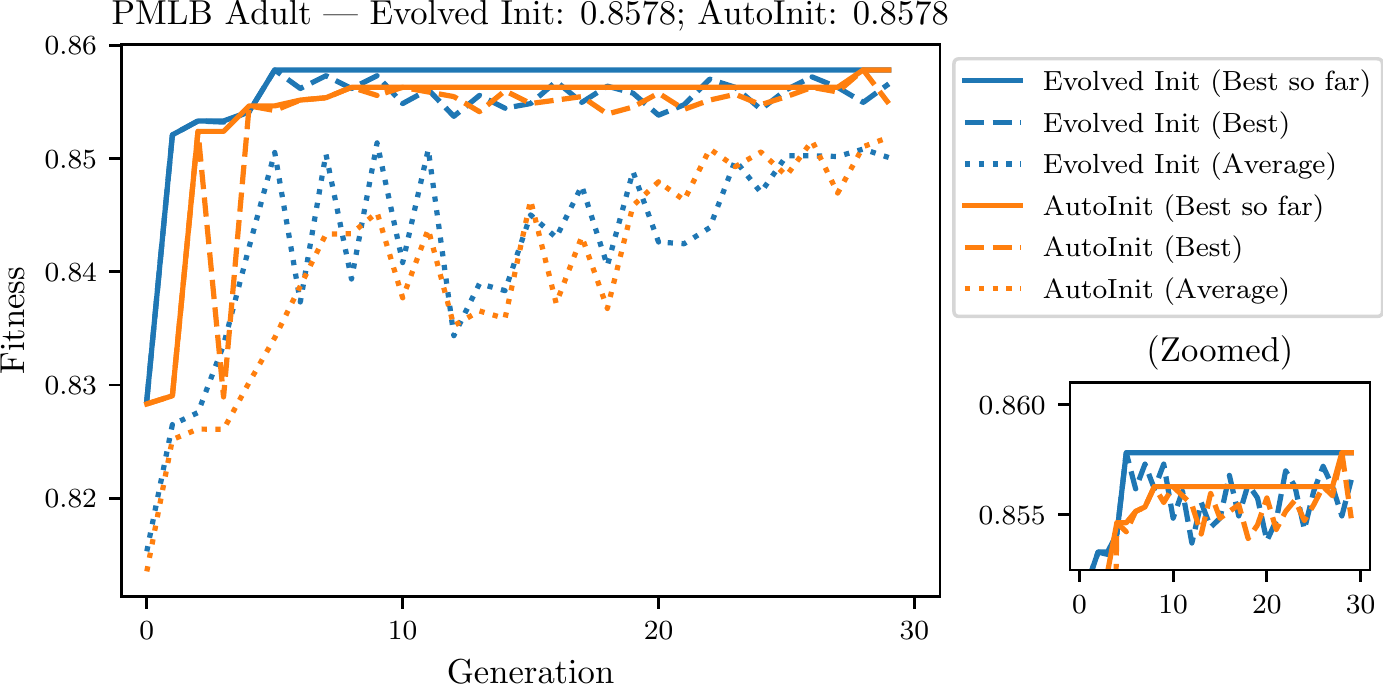}
    \includegraphics[width=0.49\linewidth]{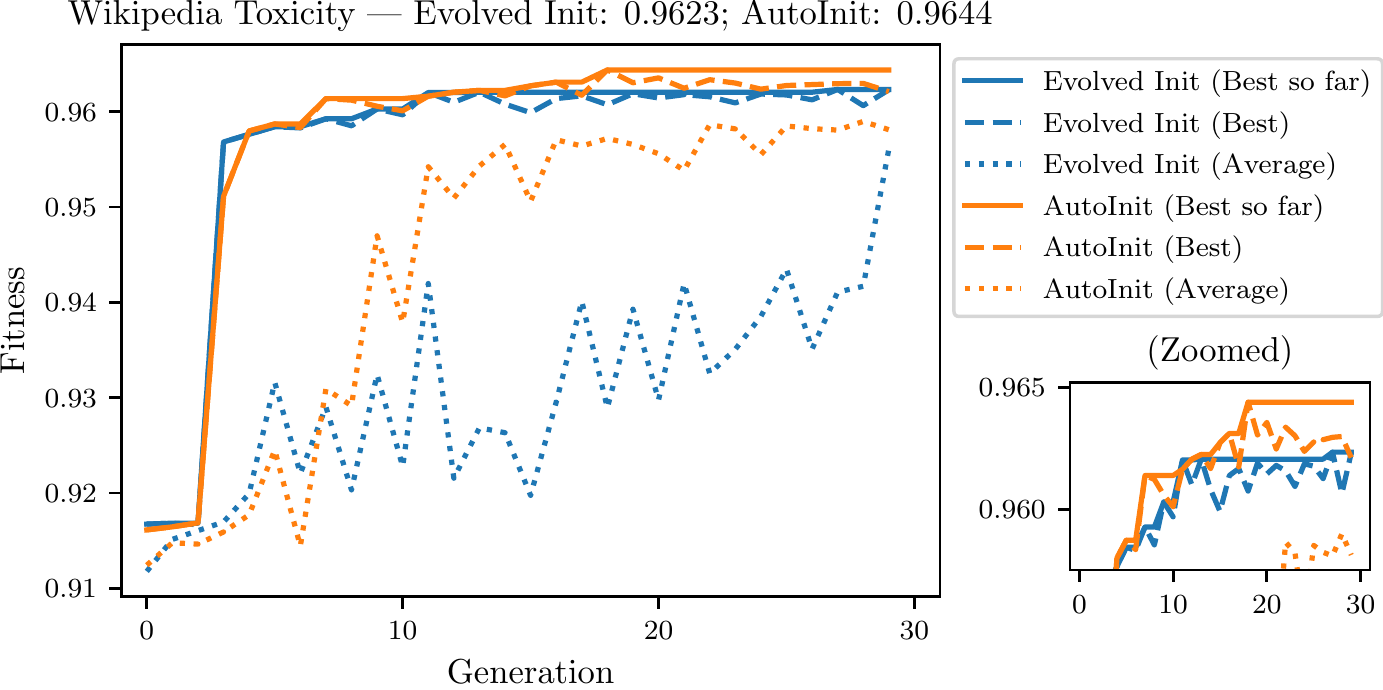}\\[3ex]
    \includegraphics[width=0.49\linewidth]{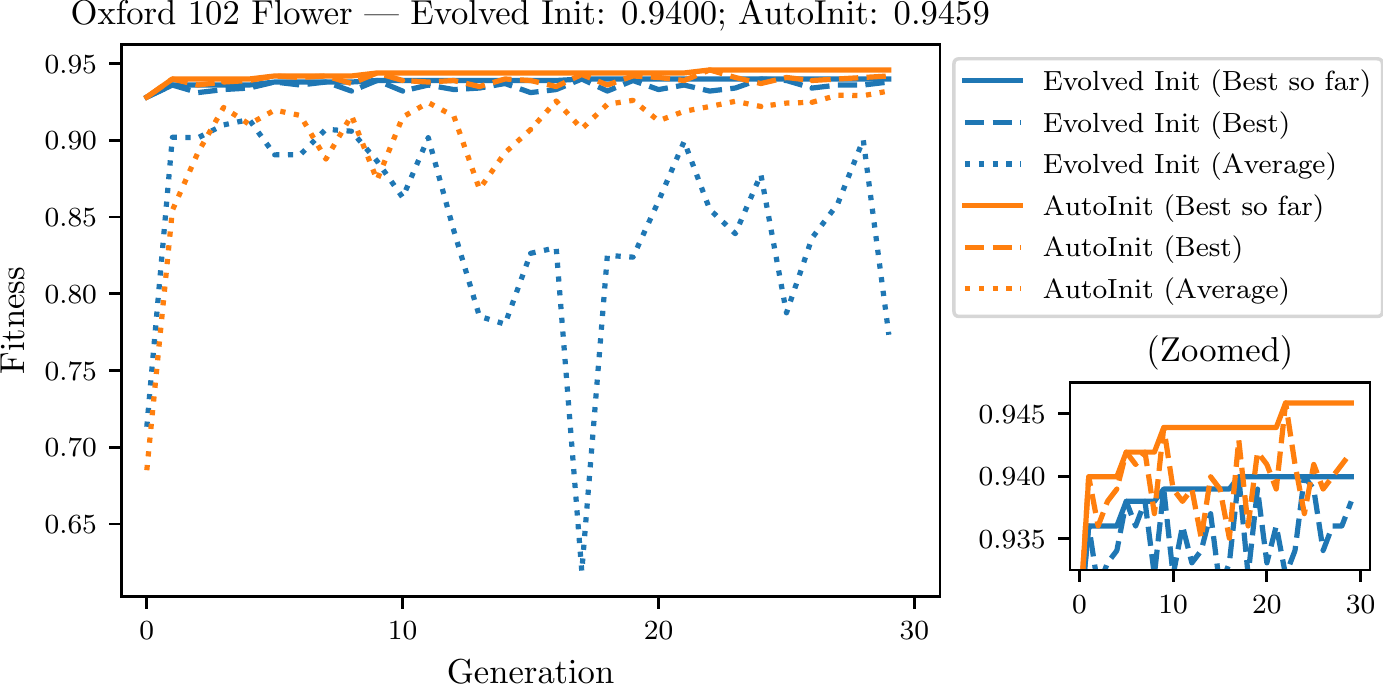}
    \caption{Progress of neural architecture search in the five tasks.  The data is the same as that in Figure~\ref{fig:autoinit:enn_results}, but this plot also shows how AutoInit can stabilize mean population performance, leading to more reliable discovery of powerful models.}
    \label{fig:autoinit:enn_results_detail}
\end{figure}

Second, hyperparameters play a large role in the final performance of the dense networks, in particular in the ``Oxford 102 Flower'' task.  While CoDeepNEAT discovers good models with both initialization strategies, performance is consistently higher with AutoInit. This finding agrees with Section \ref{sec:autoinit:convolutional}, where AutoInit was shown to be robust to different hyperparameter values.  

Third, while many networks exhibit motifs popular in existing architectures, such as alternating convolution and dropout layers and utilizing residual connections, other phenomena are less common (Figure \ref{fig:autoinit:enn_results}b).  For example, the networks make use of different activation functions and contain several unique information processing paths from the input to the output.  Because AutoInit provides effective initialization in each of these cases, it allows for taking full advantage of unusual design choices that might otherwise hurt performance under default initialization schemes.

The results in this section suggest that AutoInit is an effective, general-purpose algorithm that provides a more effective initialization than existing approaches when searching for new models.

\section{Enabling~Activation~Function~Discovery}
\label{sec:autoinit:afn_meta_learning}

As new activation functions are developed in the future, it will be important to adjust weight initialization to maintain stable signal propagation.  Since AutoInit makes this adjustment automatically, it is well-suited to the task.  Indeed, Figure \ref{fig:autoinit:allcnnc_hyperparams} confirmed that AutoInit improves performance with several existing activation functions.  This section presents a more challenging task.  To simulate future research in activation function design, hundreds of novel activation functions were generated as arbitrary computation graphs and trained with a CNN.  AutoInit's ability to initialize each of these networks was then evaluated.  The method for creating such activation functions is described first, followed by experimental details, and results on stability, performance, and generality.

\paragraph{Creating Novel Activation Functions}
An important area of automated machine learning (AutoML) is to discover new, better activation functions \cite{basirat2018quest, DBLP:conf/iclr/RamachandranZL18, bingham2020gecco, liu2020evolving}.
Among existing approaches, PANGAEA \cite{bingham2022discovering} has the most flexible search space and is therefore used to generate new functions in this section.

PANGAEA represents activation functions as computation graphs containing unary and binary operators (Figure \ref{fig:autoinit:codeepneat_pangaea}b).  Creating a novel activation function involves three steps.  First, a minimal computation graph is initialized with randomly selected unary and binary operators.  Second, the functional form of the activation function is modified by applying three random mutations to increase diversity.  Third, the function is augmented with up to three learnable parameters.  These parameters are analogous to those in other parametric activation functions, such as PReLU \cite{he2015delving}; they are initialized to one and learned during training by gradient descent.  Through this process, it is possible to understand to what extent AutoInit can improve performance with activation functions that have yet to be discovered.

\paragraph{Experimental Setup}
An important insight in this domain is that in addition to modifying the variance of the signals in a network, activation functions can induce mean shifts.  Prior work encouraged stability by reparameterizing the weights to have zero empirical mean \cite{huang2017centered, qiao2019micro, brock2021characterizing}.  An alternative and more direct approach is to modify the activation function itself so that it does not cause a mean shift in the first place.  Given an activation function $f$ with Gaussian mean $\mu_f = \frac{1}{\sqrt{2\pi}}\int_{-\infty}^\infty f(x) e^{-x^2/2} \diff x$, this goal can be accomplished with $\tilde{f} \coloneqq f - \mu_f$, which has zero Gaussian mean. To take advantage of this idea, a version of AutoInit called AutoInit++ was created for this domain, thus extending AutoInit slightly beyond weight initialization.

Thus, three initialization strategies were compared.  With the default initialization, weights were sampled from $\mathcal{U}\left(-\frac{\sqrt{6}}{\sqrt{\texttt{fan\_in} + \texttt{fan\_out}}}, \frac{\sqrt{6}}{\sqrt{\texttt{fan\_in} + \texttt{fan\_out}}}\right)$ \cite{glorot2010understanding}.  With AutoInit, the weights were sampled from $\mathcal{N}\left(0, 1/\sqrt{\texttt{fan\_in}\mu_f}\right)$ to account for an arbitrary activation function $f$; the dropout adjustment (Section~\ref{sec:autoinit:convolutional}) was not used. Finally, AutoInit++ takes advantage of $\tilde{f}$ as described above, but is otherwise identical to AutoInit.

For each initialization strategy, 200 activation functions were created using the PANGAEA process.  Each activation function was used with the All-CNN-C architecture on the CIFAR-10 dataset following the standard training setup.  To avoid overfitting to the test set when evaluating such a large number of activation functions, the accuracy with a balanced validation set of 5000 images is reported instead.

\paragraph{Stability}
Achieving better-than-chance accuracy is a useful metric of training stability (Section \ref{sec:autoinit:stability_deep_resnets}).  As shown in Figure \ref{fig:autoinit:random_afns_hist}a, many activation functions result in chance accuracy regardless of how the network is initialized.  This phenomenon is not surprising; since the activation functions are arbitrary computation graphs, many of them will turn out to be poor.  With the default initialization strategy, 149 activation functions caused training to fail in this way.  With AutoInit, the number of failed activation functions dropped to 130, and with AutoInit++, it further decreased to 117.  AutoInit and AutoInit++ thus make training more stable, allowing it to succeed for a greater number of activation functions.

\begin{figure}
    \centering
    \begin{tikzpicture}
    \draw (0, 0) node[anchor=south west, inner sep=0, align=left] {\includegraphics[width=0.7\linewidth]{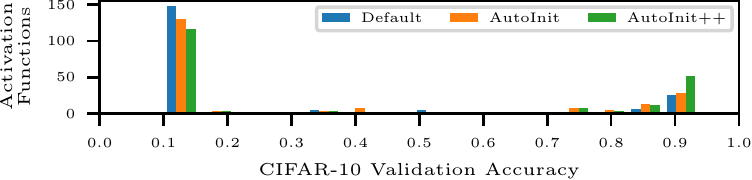}};
    \draw (0, 0) node[anchor=south west, inner sep=0, align=left] {(a)};
    \end{tikzpicture}\\[1em]
    \begin{tikzpicture}
    \draw (0, 0) node[anchor=south west, inner sep=0, align=left] {\includegraphics[width=0.7\linewidth]{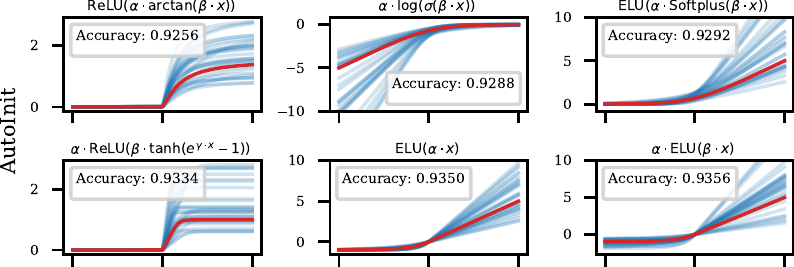}\\[1em]
    \includegraphics[width=0.7\linewidth]{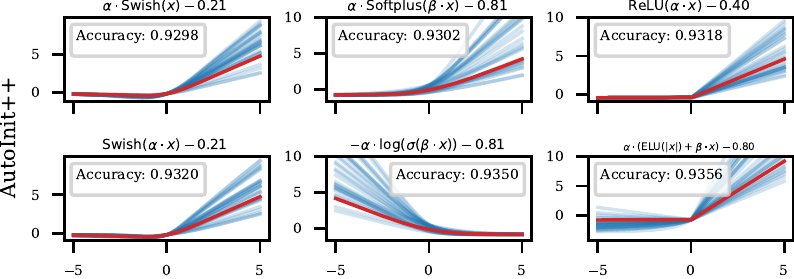}};
    \draw (0, 0) node[anchor=south west, inner sep=0, align=left] {(b)};
    \end{tikzpicture}
    
    \caption{Evaluation of AutoInit with activation function discovery. (a) Distribution of accuracies achieved with 200 activation functions and different weight initialization strategies. AutoInit and AutoInit++ make training more stable and allow more high-performing activation functions to be discovered than the default initialization does.  (b) High-performing activation functions.  The red line shows the function at initialization, with $\alpha = \beta = \gamma = 1$.  The blue lines show the shapes the activation function takes during training, created by sampling $\alpha, \beta, \gamma$ from $\mathcal{U}(0.5, 2.0)$.  AutoInit's flexibility should turn out useful for developing new activation functions in the future.}
    \label{fig:autoinit:random_afns_hist}
\end{figure}

\paragraph{Performance}
Beyond training stability, a good weight initialization should also improve performance.  As a baseline, when trained with ReLU and the default initialization, All-CNN-C achieved 89.10\% test accuracy.  Twenty-two of 200 activation functions from the PANGAEA search space outperformed this accuracy with the default initialization.  With AutoInit, this number increased to 26, and with AutoInit++, to 50---a notable improvement.  Thus, with the default initialization, one can naively create a randomly generated computation graph activation function and have roughly a one in nine chance of outperforming ReLU, but with AutoInit++, this probability increases to one in four.

Indeed, the Mann-Whitney U test \cite{mann1947test} concludes that the distribution of accuracies induced by AutoInit++ is \textit{stochastically larger} than that from AutoInit $(p < 0.05)$ or the default initialization $(p < 0.01)$.  This result means that for any level of performance, it is always more probable to discover an activation function that achieves that level of performance when initializing with AutoInit++ versus AutoInit or the default initialization.  The result implies that activation function researchers who properly initialize their networks are more likely to discover state-of-the-art activation functions, while staying with the default initialization may hinder that research effort.  More detailed statistical significance analyses are included in Section \ref{sec:autoinit:stat_sig}.

\paragraph{Generality}
Figure \ref{fig:autoinit:random_afns_hist}b plots several activation functions from the PANGAEA search space.  Many discovered functions have similar shapes to existing functions.  However, others are nonmonotonic, have discontinuous derivatives, or saturate to nonzero values.  These properties are less common in existing activation functions.  This observation suggests that AutoInit is a general approach that does not depend on a specific type of activation function; it may therefore serve as a useful tool in developing new such functions in the future.

\section{Statistical Significance of Results in Activation Function Meta-Learning}

\label{sec:autoinit:stat_sig}

Sampling activation functions from the PANGAEA search space results in a distribution of possible models for each weight initialization strategy.  Comparing the empirical distribution functions (EDFs) induced by each initialization strategy makes it possible to quantify the importance of the initialization \cite{radosavovic2019network}.

Given $n$ activation functions with errors $\{e_i\}$, the EDF $F(e) = \frac{1}{n}\sum_{i=1}^n\mathbf{1}[e_i < e]$ gives the fraction of activation functions that result in error less than $e$.  Let $F_\textrm{default}$, $F_\textrm{AutoInit}$, and $F_\textrm{AutoInit++}$ be the EDFs for the three initialization strategies.  Figure \ref{fig:autoinit:afn_cdf} plots these EDFs along with the Kolmogorov-Smirnov test statistic $D = \sup_e |F_1(e) - F_2(e)|$, which measures the maximum vertical discrepancy between two EDFs \cite{massey1951kolmogorov}.  This statistic shows that (1) AutoInit outperforms the default initialization $(D = 0.105)$; (2) AutoInit++ delivers an even greater boost in performance over the default initialization $(D = 0.191)$; and (3) AutoInit++ is measurably better than AutoInit $(D = 0.122)$, confirming that having zero Gaussian mean is a useful property for activation functions to have.

Other ways of measuring statistical significance lead to similar conclusions. For instance, consider the null hypothesis that $F_\textrm{default} = F_\textrm{AutoInit}$.  In other words, this null hypothesis states that AutoInit provides no benefit and that the accuracies obtained come from the same underlying distribution.  With the Epps-Singleton test \cite{epps1986omnibus} this null hypothesis is rejected with $p < 0.05$.  Similarly, the test rejects the null hypothesis that $F_\textrm{default} = F_\textrm{AutoInit++}$ with $p < 0.001$.  Even stronger statements can be made in the case of AutoInit++.  With the Mann-Whitney U test \cite{mann1947test}, the null hypothesis that $F_\textrm{default}(e) \geq F_\textrm{AutoInit++}(e)$ for some $e$ is rejected $(p < 0.01)$ in favor of the alternative that $F_\textrm{default}(e) < F_\textrm{AutoInit++}(e)$ for all $e$.  Similarly, the null hypothesis that $F_\textrm{AutoInit}(e) \geq F_\textrm{AutoInit++}(e)$ for some $e$ is rejected $(p < 0.05)$ in favor of the alternative that $F_\textrm{AutoInit}(e) < F_\textrm{AutoInit++}(e)$ for all $e$.  As discussed above, this result states that the distribution of accuracies induced by AutoInit++ is stochastically larger than that from AutoInit or the default initialization.

\begin{figure}
    \centering
    \includegraphics[width=0.75\linewidth]{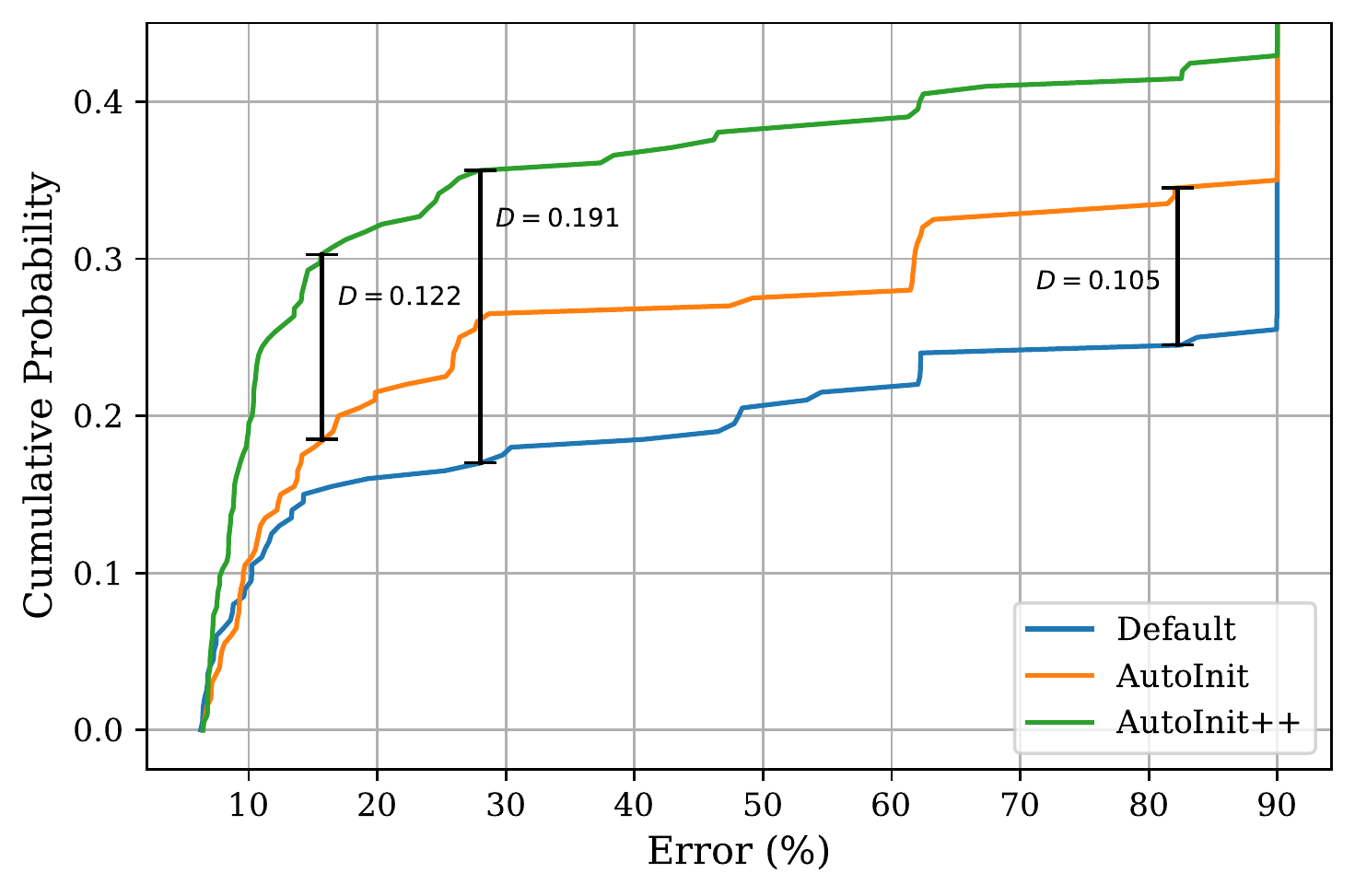}
    \caption{Error EDFs for PANGAEA activation functions when using different weight initialization strategies.  The Kolmogorov-Smirnov statistic $D$ quantifies the maximum vertical distance between the EDFs, and shows that proper initialization provides a measurable increase in expected performance.  Notice that the $x$-axis shows percent error, and not accuracy as in Figure~\ref{fig:autoinit:random_afns_hist}.}
    \label{fig:autoinit:afn_cdf}
\end{figure}

\section{Discussion}
\label{sec:autoinit:future_work}

AutoInit is based on understanding and utilizing the training dynamics of neural networks, leading to higher and more robust performance, and facilitating further advances in meta-learning. It can be improved and its scope broadened in several ways in the future, as outlined below.

\paragraph{Experiments in Other Domains}
The experiments in this chapter demonstrate that AutoInit can improve performance in a variety of settings, suggesting that it can be applied to other domains as well.  For instance in reinforcement learning, good estimates of activation statistics are usually not available due to the online nature of the algorithm.  It is not possible to stabilize training using e.g.\ batch normalization, but it may be possible to do it with AutoInit.  Similarly, training of generative adversarial networks \cite{goodfellow2014generative} is often unstable, and proper initialization may help.  
Applying AutoInit to such different domains should not only make them more reliable, but also lead to a better understanding of their training dynamics.

\paragraph{Accelerating Model Search}
In Sections~\ref{sec:autoinit:nas} and~\ref{sec:autoinit:afn_meta_learning},  AutoInit was shown to facilitate the discovery of better neural network designs and activation functions. This ability is possible because AutoInit is a general method, i.e.\ not restricted to a single class of models, and it could similarly augment other meta-learning algorithms \citep[e.g.\ those reviewed by][]{elsken2019neural, wistuba2019survey}.

However, this finding points to an even more promising idea.  As model search techniques become more prevalent in real-world applications, it will be most worthwhile to derive general principles rather than specific instantiations of those principles.  For example, past weight initialization strategies improved performance with specific activation functions through manual derivation of appropriate weight scaling (Section~\ref{sec:autoinit:weight_init_for_afns}). In contrast, AutoInit is a general method, leveraging Gaussian quadrature for any activation function.  Similarly, AutoInit resulted in better initialization than strategies discovered by CoDeepNEAT through evolution (Section~\ref{sec:autoinit:nas}).  Further, AutoInit++ (Section~\ref{sec:autoinit:afn_meta_learning}), rather than producing a few high-performing activation functions, introduces the general property that activation functions with zero Gaussian mean ($\tilde{f} \coloneqq f - \mu_f$) tend to perform well.  This property discovered a highly diverse set of powerful activation functions in the PANGAEA search space (Figure \ref{fig:autoinit:random_afns_hist}).  

Thus, AutoInit is successful because it is not a single initialization strategy, but rather a mapping from architectures to initialization strategies.  Such mappings, whether focused on initialization or some other aspect of model design, deserve increased attention in the future.  They can lead to performance gains in a variety of scenarios.  They also accelerate model search by focusing the search space to more promising regions.  If one does not have to worry about discovering a good initialization, compute power can instead be used in other areas, like designing architectures and activation functions.  Thus, general tools like AutoInit save time and resources, and lead to better models as a result.

\label{sec:autoinit:technical_extensions}


\paragraph{Initial Weight Distributions}
AutoInit calculates appropriate weight scaling, but it does not impose a distribution from which weights are drawn (Equation \ref{eq:autoinit:variance_scaling}).  All experiments in this Chapter used a truncated normal distribution.  In preliminary experiments, AutoInit also used untruncated normal, uniform, and orthogonal distributions, but no clear trends were observed.  Indeed, assuming weights are scaled appropriately, whether training is stable depends only on the architecture and not the distribution from which weights are sampled \cite{hanin2018neural}.  However, this conclusion applies only in limited theoretical settings; in other settings, orthogonal initialization was found to be beneficial \cite{saxe2013exact, hu2020provable}.  Whether there is a single distribution that is optimal in every case, or whether certain distributions are better-suited to different models, tasks, or layers, remains an open question, and a compelling direction for future research.

\paragraph{Variations of AutoInit}
Several variations of the core AutoInit algorithm can be devised that may improve its performance.  For example, AutoInit stabilizes signals by analyzing the forward pass of activations from the input to the output of the network.  It is possible to similarly model the backward pass of gradients from the output to the input.  Indeed, past weight initialization strategies have sometimes utilized signals in both directions  \cite{glorot2010understanding, arpit2019initialize}.  It would be interesting to find out whether AutoInit could similarly benefit from analyzing backward-propagating signals.

Alternative objectives beyond mean and variance stabilization could also be considered.  Two promising objectives are tuning the conditioning of the Fisher information matrix \cite{pennington2018spectrum} and achieving dynamical isometry \cite{xiao2018dynamical}.  Mean field theory and nonlinear random matrix theory \cite{pennington2019nonlinear} could potentially be used to implement these objectives into AutoInit.

\paragraph{Support for New Layer Types}
AutoInit calculates outgoing mean and variance estimates for the majority of layer types available in current deep learning frameworks (Section~\ref{sec:autoinit:mean_variance_estimation}).  If AutoInit encounters an unknown layer, the default behavior is to assume that the mean and variance are not changed by that layer: $g_\texttt{layer}(\mu_\inn, \nu_\inn) = \mu_\inn, \nu_\inn$.  This fallback mechanism tends to work well; if there are only a few unknown layers, then the variance estimation will be incorrect only by a constant factor and training can proceed.  However, mean and variance estimation functions $g$ can be derived for new types of layers as they are developed, either analytically or empirically with Monte Carlo sampling, thus taking full advantage of AutoInit's ability to stabilize training in the future as well.

\paragraph{Tighter Integration with Deep Learning Frameworks}
Using AutoInit is simple in practice.  The AutoInit package provides a wrapper around TensorFlow models.  The wrapper automatically traverses the TensorFlow computation graph, calculates mean and variance estimations for each layer, and reinstantiates the model with the correct weight scaling.  However, this implementation can be streamlined.  The most effective approach would be to integrate AutoInit natively with deep learning frameworks like TensorFlow \cite{abadi2016tensorflow} and PyTorch \cite{paszke2019pytorch}.  Native integration would not just make AutoInit easier to use, it would also make it more accessible to general machine learning practitioners.  For example, TensorFlow provides a few initialization strategies that can be leveraged by changing the \texttt{kernel\_initializer} keyword in certain layers.  However, implementing other weight initialization strategies requires subclassing from the \texttt{Initializer} base class, which is both time-consuming and complicated, especially for non-experts.  Native integration would ensure that the benefits of smarter initialization are available immediately to the wider machine learning community.

\section{Conclusion}
\label{sec:autoinit:conclusion}

This chapter introduced AutoInit, an algorithm that calculates analytic mean- and variance-preserving weight initialization for neural networks automatically.  In convolutional networks, the initialization improved performance with different activation functions, dropout rates, learning rates, and weight decay settings.  In residual networks, AutoInit prevented exploding signals, allowed training with higher learning rates, and improved performance with or without batch normalization.  In transformers, AutoInit was scaled up to high-resolution image classification, and improved performance with several activation functions with and without normalization.  AutoInit also improved accuracy on the ImageNet dataset.  The initialization is independent of data and is therefore efficient and reliable. AutoInit's generality proved instrumental in two types of AutoML. In neural architecture search, new architectures were evaluated more accurately, resulting in better networks in vision, language, tabular, multi-task, and transfer learning settings.  In activation function discovery, AutoInit stabilized training and improved accuracy with a large diversity of novel activation functions.  Thus, AutoInit serves to make machine learning experiments more robust and reliable, resulting in higher performance, and facilitating future research in AutoML.  Although AutoInit accelerated activation function discovery, the process is still computationally expensive.  To reduce this cost, the next chapter learns better activation functions in a data-driven way, improving efficiency by orders of magnitude.

\chapter{AQuaSurF: Efficient Activation Function Optimization through Surrogate Modeling}
\label{chap:aquasurf}

Activation functions are an important choice in neural network design \cite{apicella2021survey, nwankpa2018activation}.  In order to realize the benefits of good activation functions, researchers often design new functions based on characteristics like smoothness, groundedness, monotonicity, and limit behavior.  While these properties have proven useful, humans are ultimately limited by design biases and by the relatively small number of functions they can consider.  On the other hand, automated search methods can evaluate thousands of unique functions, and as a result, often discover better activation functions than those designed by humans.  However, such approaches do not usually have a theoretical justification, and instead focus only on performance.  This limitation results in computationally inefficient ad hoc algorithms that may miss good solutions and may not scale to large models and datasets.

This chapter addresses these drawbacks in a data-driven way through three steps. First, in order to provide a foundation for theory and algorithm development, convolutional, residual, and vision transformer based architectures were trained from scratch with 2{,}913 different activation functions, resulting in three activation function benchmark datasets: \texttt{Act-Bench-CNN}, \texttt{Act-Bench-ResNet}, and \texttt{Act-Bench-ViT}.  These datasets make it possible to analyze activation function properties at a large scale in order to determine which are most predictive of performance.

The second step was to characterize the activation functions in these benchmark datasets analytically, leading to a surrogate performance measure. Exploratory data analysis revealed two activation function properties that are highly indicative of performance: (1) the spectrum of the Fisher information matrix associated with the model’s predictive distribution at initialization, and (2) the activation function’s output distribution.  Both sets of features contribute unique information.  They are both predictive of performance on their own, but they are most powerful when used in tandem.  These features were combined to create a metric space where a low-dimensional representation of the activation functions could be learned. This space can then be used as a surrogate in the search for good activation functions.

In the third step, this surrogate was evaluated experimentally, first by verifying that it can discover known good functions in the benchmark datasets efficiently and reliably, and second by demonstrating that it can discover improved activation functions in new tasks on CIFAR-100 and ImageNet. The representation turned out to be so powerful that an out-of-the-box regression algorithm was able to search it effectively.  The approach, called \technique (\techniqueexpanded), is orders of magnitude more efficient than past work.  Indeed, whereas previous approaches evaluated hundreds or thousands of activation functions, \technique requires only tens of evaluations in order to discover functions that outperform a wide range of baseline activation functions in each context.  Code implementing the \technique algorithm is available at \codeurl.

Prior research on activation function optimization and Fisher information matrices is reviewed in Chapter \ref{chap:background}. This work extends it in three ways. First, the benchmark collections are made available at \dataurl, providing a foundation for further research on activation function optimization.  Second, the low-dimensional representation of the Fisher information matrix makes it a practical surrogate measure, making it possible to apply it to not only activation function design, but potentially also to other applications in the future. Third, the already-discovered functions can be used immediately to improve performance in image processing tasks, and potentially in other tasks in the future.

\section{Activation Function Benchmark Datasets}
\label{sec:aquasurf:activation_function_datasets}

As the first step, this section introduces three activation function benchmark datasets: \texttt{Act-Bench-CNN}, \texttt{Act-Bench-ResNet}, and \texttt{Act-Bench-ViT}.  Each dataset contains training results for 2{,}913 unique activation functions when paired with different architectures and tasks: All-CNN-C on CIFAR-10, ResNet-56 on CIFAR-10, and MobileViTv2-0.5 on Imagenette \cite{springenberg2015striving, he2016identity, mehta2022separable, krizhevsky2009learning, imagenette}.  These functions were created using the main three-node computation graph from PANGAEA \citep{bingham2020gecco}. Details are in Appendix \ref{ap:details:aquasurf_search_space}.

\begin{figure}
    \centering
    \includegraphics[width=0.75\linewidth]{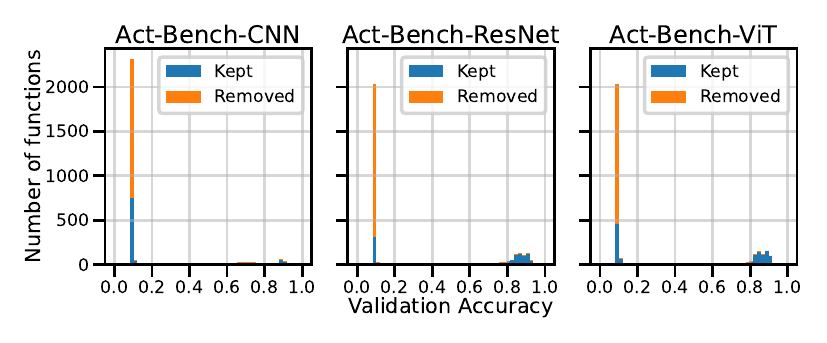}
    \caption{Distribution of validation accuracies with 2{,}913 unique activation functions from the three benchmark datasets.  Many activation functions result in failed training (indicated by the chance accuracy of 0.1), suggesting that searching for activation functions is a challenging problem.  However, most of these functions do not have valid FIM eigenvalues, and can thus be filtered out effectively.}
    \label{fig:aquasurf:accuracy_distribution}
\end{figure}

Figure \ref{fig:aquasurf:accuracy_distribution} shows the distribution of validation accuracies in these datasets.  In all three datasets, the distribution is highly skewed towards functions that result in failed training.  The plots suggest that it is difficult to design good activation functions, and explain why existing methods are often so computationally expensive.  Notwithstanding this difficulty, the histograms show that many unique functions do achieve good performance.  Thus, searching for new activation functions is a worthwhile task that requires a smart approach.

\begin{figure}
    \centering
    \includegraphics[width=0.75\linewidth]{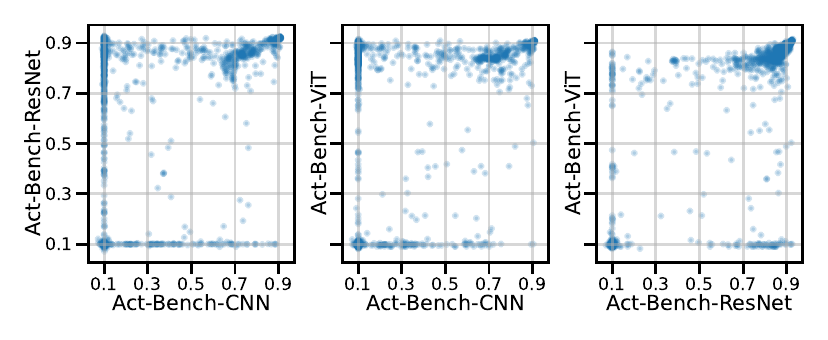}
    \caption{Distribution of validation accuracies across the benchmark datasets.  Each point represents one activation function, and its $x$ and $y$ coordinates represent its performance in two of the three datasets. Some activation functions perform well on all tasks, and their improvements correlate.  Others are more specialized and succeed only with one or two tasks.}
    \label{fig:aquasurf:accuracy_scatter}
\end{figure}

The scatter plots in Figure \ref{fig:aquasurf:accuracy_scatter} show the same distribution of accuracies as the histograms in Figure \ref{fig:aquasurf:accuracy_distribution}, but through scatter plots that show how activation functions perform across different tasks.  Two interesting observations can be made.  First, all three plots contain linearly correlated clusters of points in the upper right corner.  This finding suggests that there are modifications to activation functions that make them more powerful, regardless of the task.  Second, the clusters of points in the upper left and lower right corners represent activation functions that succeed in one task but fail in another, demonstrating that the best results come from discovering functions specialized to individual tasks.

The three benchmark datasets thus form a foundation for developing and evaluating methods for automated activation function design. In the next two sections, they are used to develop a measure that serves as surrogate performance metric, making it possible to scale up activation function optimization to large networks and datasets.
\section{Fisher Information Matrix Details}
\label{sec:aquasurf:fim_details}

In order to calculate the FIM, this chapter uses the K-FAC approach \cite{martens2015optimizing, grosse2016kronecker, martens2018kronecker}.  This technique is summarized in this section, with notation similar to that of \citet{grosse2016kronecker}.

\subsection{Preliminaries}

A feedforward neural network maps an input $\mathbf{a}_0 = \mathbf{x}$ to an output $\mathbf{a}_L = f(\mathbf{x}; \bm{\theta})$ through a series of $L$ layers.  Each layer $l \in \{1, \ldots, L\}$ is comprised of a weight matrix $\mathbf{W}_l$, a bias vector $\mathbf{b}_l$, and an element-wise activation function $\phi_l$.  With $\bar{\mathbf{W}}_l = \begin{pmatrix} \mathbf{b}_l & \mathbf{W}_l \end{pmatrix}$ and $\bar{\mathbf{a}}_l = \begin{pmatrix} 1 & \mathbf{a}_l^\top \end{pmatrix}^\top$, each layer implements the transformation
\begin{align}
    \mathbf{s}_l &= \bar{\mathbf{W}}_l\bar{\mathbf{a}}_{l-1}, \\
    \mathbf{a}_l &= \phi_l(\mathbf{s}_l).
\end{align}
Let $\bm{\theta} = \begin{pmatrix} \vect(\bar{\mathbf{W}}_1)^\top\ \cdots\ \vect(\bar{\mathbf{W}}_L)^\top \end{pmatrix}^\top$ represent the vector of all network parameters.  Parameterized by $\bm{\theta}$ and given inputs $\mathbf{x}$ drawn from a training distribution $Q_{\mathbf{x}}$, the neural network defines the conditional distribution $R_{\mathbf{y} | f(\mathbf{x} ; \bm{\theta})}$.  The Fisher information matrix associated with this model is
\begin{equation}
    \mathbf{F} = \mathop{\E}_{\mathclap{\substack{
        \mathbf{x} \sim Q_{\mathbf{x}} \\
        \mathbf{y} \sim R_{\mathbf{y} | f(\mathbf{x} ; \bm{\theta})}
    }}}
    \left[
        \nabla_{\bm{\theta}} \mathcal{L}(\mathbf{y}, f(\mathbf{x}; \bm{\theta})) 
        \nabla_{\bm{\theta}} \mathcal{L}(\mathbf{y}, f(\mathbf{x}; \bm{\theta}))^\top
    \right].
\end{equation}
As usual in deep learning, the loss function $\mathcal{L}(\mathbf{y}, \mathbf{z})$ represents the negative log-likelihood associated with $R_{\mathbf{y} | f(\mathbf{x} ; \bm{\theta})}$ and quantifies the discrepancy between the model's prediction $\mathbf{z} = f(\mathbf{x}; \bm{\theta})$ and the true label $\mathbf{y}$.  The network is trained to minimize the loss by updating its parameters according to the gradient $\nabla_{\bm{\theta}} \mathcal{L}(\mathbf{y}, f(\mathbf{x}; \bm{\theta}))$.

\subsection{Approximations}

For ease of notation, write $\mathcal{D}\mathbf{v} = \nabla_{\mathbf{v}} \mathcal{L}(\mathbf{y}, f(\mathbf{x}; \bm{\theta}))$.  Recalling that \linebreak $\bm{\theta} = \begin{pmatrix} \vect(\bar{\mathbf{W}}_1)^\top\ \cdots\ \vect(\bar{\mathbf{W}}_L)^\top \end{pmatrix}^\top$, the FIM can be expressed as an $L \times L$ block matrix:
\begin{equation}
    \mathbf{F} = \begin{pmatrix}
        \E \left[ \vect(\mathcal{D}\bar{\mathbf{W}}_1) \vect(\mathcal{D}\bar{\mathbf{W}}_1)^\top \right] & 
        \cdots &
        \E \left[ \vect(\mathcal{D}\bar{\mathbf{W}}_1) \vect(\mathcal{D}\bar{\mathbf{W}}_L)^\top \right] \\
        \vdots & \ddots & \vdots \\
        \E \left[ \vect(\mathcal{D}\bar{\mathbf{W}}_L) \vect(\mathcal{D}\bar{\mathbf{W}}_1)^\top \right] & 
        \cdots &
        \E \left[ \vect(\mathcal{D}\bar{\mathbf{W}}_L) \vect(\mathcal{D}\bar{\mathbf{W}}_L)^\top \right]
    \end{pmatrix}.
\end{equation}
Note that $\mathcal{D}\bar{\mathbf{W}}_l = \mathcal{D}\mathbf{s}_l\bar{\mathbf{a}}_{l-1}^\top$, and recall that $\vect(\mathbf{u}\mathbf{v}^\top) = \mathbf{v} \otimes \mathbf{u}$.  Each block of the FIM can be written as
\begin{align}
    \mathbf{F}_{i,j} &= \E \left[ \vect(\mathcal{D}\bar{\mathbf{W}}_i) \vect(\mathcal{D}\bar{\mathbf{W}}_j)^\top \right] \\
    &= \E \left[ \vect(\mathcal{D}\mathbf{s}_i\bar{\mathbf{a}}_{i-1}^\top) \vect(\mathcal{D}\mathbf{s}_j\bar{\mathbf{a}}_{j-1}^\top)^\top \right] \\
    &= \E \left[( \bar{\mathbf{a}}_{i-1} \otimes \mathcal{D}\mathbf{s}_i)( \bar{\mathbf{a}}_{j-1} \otimes \mathcal{D}\mathbf{s}_j)^\top \right] \\
    &= \E \left[( \bar{\mathbf{a}}_{i-1} \otimes \mathcal{D}\mathbf{s}_i)( \bar{\mathbf{a}}_{j-1}^\top \otimes \mathcal{D}\mathbf{s}_j^\top) \right] \\
    &= \E \left[ \bar{\mathbf{a}}_{i-1} \bar{\mathbf{a}}_{j-1}^\top \otimes \mathcal{D}\mathbf{s}_i \mathcal{D}\mathbf{s}_j^\top \right].
\end{align}
Two approximations are necessary in order to make representation of the FIM practical.  First, assume that different layers have uncorrelated weight derivatives.  The FIM can then be approximated as a block diagonal matrix, with $\mathbf{F}_{i,j} = \mathbf{0}$ if $i \neq j$.  Second, if one approximates the pre-activation derivatives $\mathcal{D}\mathbf{s}_l$ and activations $\bar{\mathbf{a}}_{l-1}^\top$ as independent, then the diagonal blocks of the FIM can be further decomposed into the Kronecker product of two smaller matrices:
\begin{equation}
    \mathbf{F}_{l, l} = \E \left[ \bar{\mathbf{a}}_{l-1} \bar{\mathbf{a}}_{l-1}^\top \otimes \mathcal{D}\mathbf{s}_l \mathcal{D}\mathbf{s}_l^\top \right] \approx \E \left[ \bar{\mathbf{a}}_{l-1}\bar{\mathbf{a}}_{l-1}^\top \right] \otimes \E \left[ \mathcal{D}\mathbf{s}_l\mathcal{D}\mathbf{s}_l^\top \right].
\end{equation}
Let $\mathbf{\Omega}_{l} = \E \left[ \bar{\mathbf{a}}_{l} \bar{\mathbf{a}}_{l}^\top \right]$ and $\mathbf{\Gamma}_{l} = \E \left[ \mathcal{D}\mathbf{s}_l \mathcal{D}\mathbf{s}_l^\top \right]$.  The approximate empirical FIM is then written as
\begin{equation}
    \hat{\mathbf{F}} = 
    \begin{pmatrix}
        \mathbf{\Omega}_0 \otimes \mathbf{\Gamma}_1 
        & & \mathbf{0} \\
        & \ddots & \\
        \mathbf{0} & &
        \mathbf{\Omega}_{L-1} \otimes \mathbf{\Gamma}_L
    \end{pmatrix}.
\end{equation}

\subsection{Layer-Specific Implementation}

The above example illustrates FIM approximation for a simple feedforward network.  However, most modern architectures contain several different kinds of layers.  Some layers like pooling, normalization, or dropout layers do not have trainable weights, and therefore these layers are not included in the FIM \cite{ioffe2015batch, srivastava2014dropout}.  

Each diagonal entry $\mathbf{\Omega}_{l-1} \otimes \mathbf{\Gamma}_l$ corresponds to one layer with weights.  The calculation differs slightly depending on the layer type, but otherwise the example above can be straightforwardly extended to more complicated networks.  Calculations for three common layer types are presented below.

\paragraph{Dense Layers}
For dense layers, the matrices $\mathbf{\Omega}_{l-1}$ and $\mathbf{\Gamma}_l$ can be readily computed with one forward and backward pass through the network using a mini-batch of data.  The eigenvalues are then computed using standard techniques.

\paragraph{Convolutional Layers}
Convolutional layers require special consideration to calculate $\mathbf{\Omega}_{l-1}$ and $\mathbf{\Gamma}_l$.  For a given layer, let $M$ represent the batch size, $\mathcal{T}$ the set of spatial locations (typically a two-dimensional grid), $\Delta$ the set of spatial offsets from the center of the filter, and $I$ and $J$ the number of output and input maps, respectively.  The activations are represented by the $M \times |\mathcal{T}| \times J$ array $\mathbf{A}_{l-1}$.  The weights are represented by the $I \times |\Delta| \times J$ array $\mathbf{W}_l$ which is interpreted as an $I \times |\Delta|J$ matrix.  The expansion operator $\llbracket \cdot \rrbracket$ extracts patches around each spatial location and flattens them into vectors that become the rows of a matrix: $\llbracket \mathbf{A}_{l-1} \rrbracket$ is a $M|\mathcal{T}| \times J|\Delta|$ matrix.  

Similar to the feedforward networks, the bias (if used) can be prepended to the weights matrix as $\bar{\mathbf{W}}_l = \begin{pmatrix} \mathbf{b}_l & \mathbf{W}_l \end{pmatrix}$ and a homogeneous column of ones to the expanded activations as $\llbracket \mathbf{A}_{l-1} \rrbracket_H = \begin{pmatrix} \mathbf{1} & \llbracket \mathbf{A}_{l-1} \rrbracket \end{pmatrix}$.  This constructions allows the forward pass to be written as 
\begin{align}
    \mathbf{S}_l &= \llbracket \mathbf{A}_{l-1} \rrbracket_H \bar{\mathbf{W}}_l^\top, \\
    \mathbf{A}_l &= \phi\left( \mathbf{S}_l \right),
\end{align}
from which the factors are computed as
\begin{align}
    \mathbf{\Omega}_l &= \E \left[ \llbracket \mathbf{A}_{l} \rrbracket_H^\top \llbracket \mathbf{A}_{l} \rrbracket_H \right], \\
    \mathbf{\Gamma}_l &= \frac{1}{|\mathcal{T}|} \E \left[ \mathcal{D}\mathbf{S}_l^\top \mathcal{D}\mathbf{S}_l \right].
\end{align}

\paragraph{Depthwise Convolutional Layers}
Depthwise convolutional layers utilize separate kernels for each channel.  In this case, $\llbracket \mathbf{A}_{l-1} \rrbracket$ is a $M|\mathcal{T}|J \times |\Delta|$ matrix.  Otherwise, the factors $\mathbf{\Omega}_{l-1}$ and $\mathbf{\Gamma}_l$ are calculated in the same way as they are for standard convolutional layers.

\subsection{Eigenvalue Calculation}
Because $\hat{\mathbf{F}}$ is a block-diagonal matrix, its eigenvalues are simply the combined eigenvalues of each block: $\lambda({\hat{\mathbf{F}}}) = \{\lambda({\hat{\mathbf{F}}_l})\}_{l=1}^L$.  The eigenvalue calculation for one block $\hat{\mathbf{F}}_l = \mathbf{\Omega}_{l-1} \otimes \mathbf{\Gamma}_l$ is further simplified by first computing the eigenvalues $\lambda(\mathbf{\Omega}_{l-1})$ and $\lambda(\mathbf{\Gamma}_l)$ for each Kronecker factor separately and then returning all pairwise products from the two sets of eigenvalues.  For numerical stability, the eigenvalues can first be log-scaled and then all pairwise sums from the two sets are returned.  Calculating the eigenvalues requires one forward and backward pass through the network with a mini-batch of data.  The computational cost is therefore relatively cheap, especially compared with the cost of fully training a network from scratch.

It is possible for the FIM eigenvalues to be invalid.  For example, if the forward propagated activations or backward propagated gradients explode or vanish, then the diagonal entries $\mathbf{\Omega}_{l-1} \otimes \mathbf{\Gamma}_l$ may be undefined.  Such invalid values result from activation functions that are unstable. Therefore, invalid FIM eigenvalues provide a good way to filter out bad activation functions.

\section{Features and Distance Metrics}
\label{sec:aquasurf:features_dist_metrics}

In order to make efficient search for good activation functions possible, the surrogate space needs to be low-dimensional, represent informative features, and have an appropriate distance metric. In the second step, an approach is developed based on two kinds of features: the eigenvalues of the Fisher information matrix, and the outputs of the activation function.  In this section, each feature type is motivated first, and a metric is then developed for computing distances between activation functions. They are then put together into a surrogate in the next section.

\subsection{FIM Eigenvalues}
The Fisher information matrix (FIM) is an important concept in characterizing neural network models.  Viewed from various perspectives, the FIM determines a neural network's capacity for learning, ability to generalize, the robustness of the network to small perturbations of its parameters, and the geometry of the loss function near the global minimum \cite{karakida2019universal, liang2019fisher, liao2018approximate, hayase2021spectrum, karakida2021pathological, jastrzebski2021catastrophic, furusho2020theoretical}.

The FIM has $|{\bm \theta}|$ eigenvalues.  The distribution of eigenvalues can be represented by binning the eigenvalues into an $m$-bucket histogram, and this $m$-dimensional vector serves as a computational characterization of the network.  Importantly, different activation functions induce different FIM eigenvalues for a given neural network. They can be calculated at initialization and do not require training; they can thus serve as a low-dimensional feature-vector representation of the activation function. The FIM eigenvalues are immediately useful for filtering out poor activation functions; if they are invalid, the activation function is likely to fail in training (Figure \ref{fig:aquasurf:accuracy_distribution}).  However, in order to use them as a surrogate, a distance metric needs to be defined.

Given a neural network architecture $f$, let $f_\phi$ and $f_\psi$ be two instantiations with different activation functions $\phi$ and $\psi$. Let $\mu_l$ and $\nu_l$ represent the distributions of eigenvalues corresponding to the weights in layer $l$ of neural networks $f_\phi$ and $f_\psi$, respectively, and let $w_l$ be the number of weights in layer $l$ of the networks.  The distance between $f_\phi$ and $f_\psi$ is then computed as a weighted layer-wise sum of 1-Wasserstein distances
\begin{equation}
    \label{eq:aquasurf:dist_fim_eigs}
    d(f_\phi, f_\psi) = \sum_{l=1}^L \frac{W_1(\mu_l, \nu_l)}{w_l}.
\end{equation}

With this distance metric, the FIM eigenvalue vector representations encode a low-dimensional embedding space for activation functions, making efficient search for good functions possible. Note, however, that the FIM eigenvalues incorporate multiple sources of information, including the activation function, neural network structure, data distribution, and loss function.  This fact makes the eigenvalues powerful features, but also introduces noise in the prediction process.  Fortunately, it is possible to combine them with another feature, activation function outputs, to address this shortcoming.

\subsection{Activation Function Outputs}
The shape of an activation function $\psi$ can be described by a vector of $n$ sample values $\psi(x)$. If the network's weights are appropriately initialized, the input activations to its neurons are initially distributed as $\mathcal{N}(0,1)$ \cite{bingham2021autoinit}. Therefore, the sampling $x \sim \mathcal{N}(0,1)$ provides an $n$-dimensional feature vector that represents the expected use of the activation function at initialization.  A distance metric in this feature vector space can be defined naturally as the Euclidean distance
\begin{equation}
    \small
    \label{eq:aquasurf:dist_fn_outputs}
    d(f_\phi, f_\psi) = \sqrt{\frac{\sum_{i=1}^n(\phi(x_i) - \psi(x_i))^2}{n}}, \quad x \sim \mathcal{N}(0,1).
\end{equation}
Functions with similar shapes will have a small distance between them, while those with different shapes will have a large distance.  Because these output feature vectors depend only on the activation function, they are reliable and inexpensive to compute. Most importantly, together with the FIM eigenvalues, they constitute a powerful surrogate search space, as will be demonstrated in the next section.

\section{Using the Features as a Surrogate}
\label{sec:aquasurf:visualizing_umap}

In this section, the UMAP dimensionality reduction technique is used to visualize the FIM and output features across the benchmark datasets. This visualization leads to a combined surrogate space that can be used to accelerate the search for good activation functions. 

\subsection{Visualization with UMAP} 
The features developed above can be visualized using the UMAP algorithm \cite{mcinnes2018umap}.  UMAP is a general dimension reduction approach similar to t-SNE, but is better at scaling to large sample sizes and preserving global structure \cite{van2008visualizing}.  As a first demonstration, Figure \ref{fig:aquasurf:interpolation} shows a 2D representation of the 2{,}913 activation functions in the benchmark datasets. Each function was represented as an 80-dimensional vector of output values. Interpolating between embedded points confirms that UMAP learns a good underlying representation.

\begin{figure}
    \centering
    \includegraphics[width=0.6\linewidth]{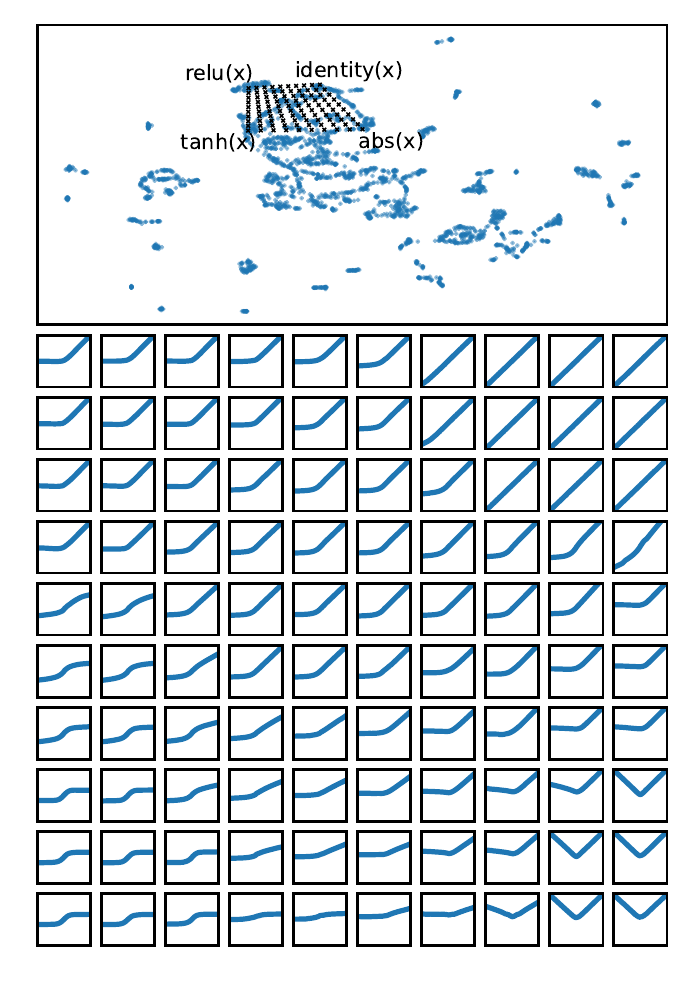}
    \caption{UMAP embedding of the 2{,}913 activation functions in the benchmark datasets. Each point stands for a unique activation function, represented by an 80-dimensional output feature vector. The embedding locations of four common activation functions are labeled.  The black x's mark coordinates interpolating between these four functions, and the grid of plots on the bottom shows reconstructed activation functions at each of these points.  UMAP interpolates smoothly between different kinds of functions, suggesting that it is a good approach for learning low-dimensional representations of activation functions.}
    \label{fig:aquasurf:interpolation}
\end{figure}

UMAP was also used to project the activation functions to nine two-dimensional spaces according to the distance metrics in Equations \ref{eq:aquasurf:dist_fim_eigs} and \ref{eq:aquasurf:dist_fn_outputs}.  In Figure \ref{fig:aquasurf:umap_embeddings}, each column represents a different benchmark dataset (\texttt{Act-Bench-CNN}, \texttt{Act-Bench-ResNet}, or \texttt{Act-Bench-ViT}) and each row a different distance metric (FIM eigenvalues with $m=\lfloor |{\bm\theta|} / 100 \rfloor$, activation function outputs with $n=1{,}000$, or both). The plots include only activation functions that were not filtered out.  Each point represents a unique activation function, and the points are colored according to their validation accuracy in the benchmark task. Note that although the performance of each activation function is already known, this information was not given to UMAP; the embeddings are entirely unsupervised. 

\begin{figure}
    \centering
    \includegraphics[width=0.75\linewidth]{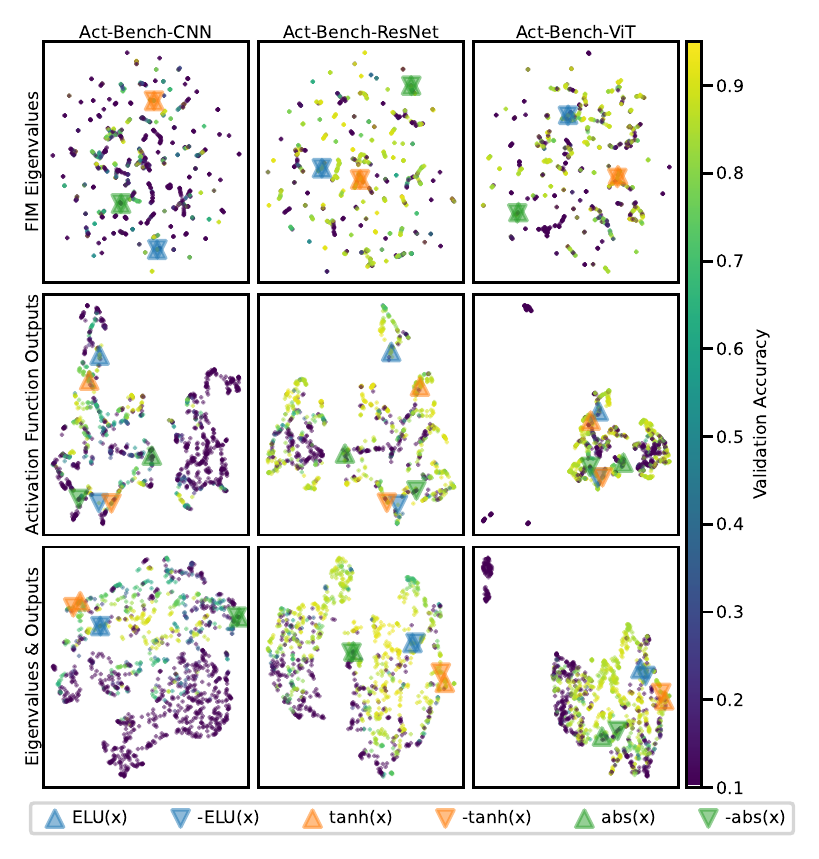}
    \caption{UMAP embeddings of activation functions for each dataset (column) and feature type (row).  Each point represents a unique activation function, and the points are colored according to their validation accuracy on the given dataset.  The colored triangles identify the locations of six well-known activation functions. The areas of similar performance are more continuous in the bottom row; that is, using both FIM eigenvalues and activation function outputs provides a better low-dimensional representation than using either feature alone.}
    \label{fig:aquasurf:umap_embeddings}
\end{figure}

Thus, the visualizations in Figure~\ref{fig:aquasurf:umap_embeddings} illustrate how predictive each feature type is of activation function performance in each dataset.  The next subsections evaluate each feature type in this role in detail, and show that utilizing both features provides better results than either feature alone.   Details are presented in Section~\ref{sec:aquasurf:features_details}.

\subsection{FIM Eigenvalues}
The first row of Figure~\ref{fig:aquasurf:umap_embeddings} shows the 2D UMAP embeddings of the FIM eigenvalue vectors associated with each activation function.  There are several clusters in these plots where the points share similar colors.  These regions indicate distinct activation functions with similar FIM eigenvalues.  Such functions induce similar training dynamics in the neural network and eventually lead to similar performance.  On the other hand, some clusters contain activation functions with a wide range of performances, and some points do not belong to any cluster at all.  Overall, the plots suggest that FIM eigenvalues are a useful predictor of performance, but also that incorporating additional information could lead to better results.

\subsection{Activation Function Outputs}
The middle row of Figure \ref{fig:aquasurf:umap_embeddings} shows the 2D UMAP embeddings of the output vectors associated with each activation function.  Points are close to each other in this space if the corresponding activation functions have similar shapes.  These plots are demonstrably more informative than the plots based on the FIM eigenvalues in three ways.  First, the purple points are better separated from the others.  This separation means that activation functions that fail (those achieving 0.1 chance accuracy) are better separated from those that do well.  Second, most points' immediate neighbors have similar colors.  This similarity means that activation functions with similar shapes lead to similar accuracy, and analyzing activation function outputs on their own is more informative than analyzing the FIM eigenvalues.  Third, the plots include multiple regions where there are one-dimensional manifolds that exhibit smooth transitions in accuracy, from purple to blue to green to yellow.  Thus, not only does UMAP successfully embed similar activation functions near each other, but it also is able to organize the activation functions in a meaningful way.  

There is one drawback to this approach: the performant activation functions (those represented by yellow dots) are often in distinct clusters.  This dispersion means that a search algorithm would have to explore multiple areas of the search space in order to find all of the best functions.  As the next subsection suggests, this issue can be alleviated by utilizing both FIM eigenvalues and activation function outputs. 

\subsection{Combining Features: Eigenvalues \& Outputs} 
The UMAP algorithm uses an intermediate fuzzy topological representation to represent relationships between data points, similar to a neighborhood graph.  This property makes it possible to combine multiple sources of data by taking intersections or unions of the representations in order to yield new representations \cite{mcinnes2018umap}.  The bottom row of Figure \ref{fig:aquasurf:umap_embeddings} utilizes both FIM eigenvalues and activation function outputs by taking the union of the two representations.  Thus, activation functions are embedded close to each other in this space if they have similar shapes, if they induce similar FIM eigenvalues, or both.

The bottom row of Figure \ref{fig:aquasurf:umap_embeddings} shows the benefits of combining the two features.  Unlike the activation function output plots, which contain multiple clusters of high-performing activation functions in different locations in the embedding space, the combined UMAP model embeds all of the best activation functions in similar regions.  The combined UMAP model also places poor activation functions (purple points) in the edge of the embedding space, and brings good functions (yellow points) to the center.  Thus, the embedding space is more convex, and therefore easier to optimize.

In general, activation functions with similar shapes lead to similar performances, and those with different shapes often produce different results.  This property is why the middle row of Figure \ref{fig:aquasurf:umap_embeddings} appears locally smooth.  However, in some cases the shape of the activation function does not tell the whole story, and additional information is needed to ascertain its performance.  

For example, the colored triangles in Figure \ref{fig:aquasurf:umap_embeddings} identify the location of six activation functions in the low-dimensional space: $\textrm{ELU}(x)$, $-\textrm{ELU}(x)$, $\textrm{tanh}(x)$, $-\textrm{tanh}(x)$, $|x|$, and $-|x|$.  In the activation function output space (middle row), all of these functions are mapped to different regions of the space.  The points are spread apart because an activation function and its negative have very different shapes, i.e.\ their output will be different for every nonzero input (Figure \ref{fig:aquasurf:compare_eigenvalues}).  In contrast, in the FIM eigenvalue space (top row of Figure \ref{fig:aquasurf:umap_embeddings}), the points for these pairs of functions overlap because the FIM eigenvalues are comparable (Figure \ref{fig:aquasurf:compare_eigenvalues}).  Indeed, assuming the weights are initialized from a distribution symmetric about zero, negating an activation function does not change the training dynamics of a neural network, and they are functionally equivalent.

\begin{figure}
    \centering
    \includegraphics[width=0.7\linewidth]{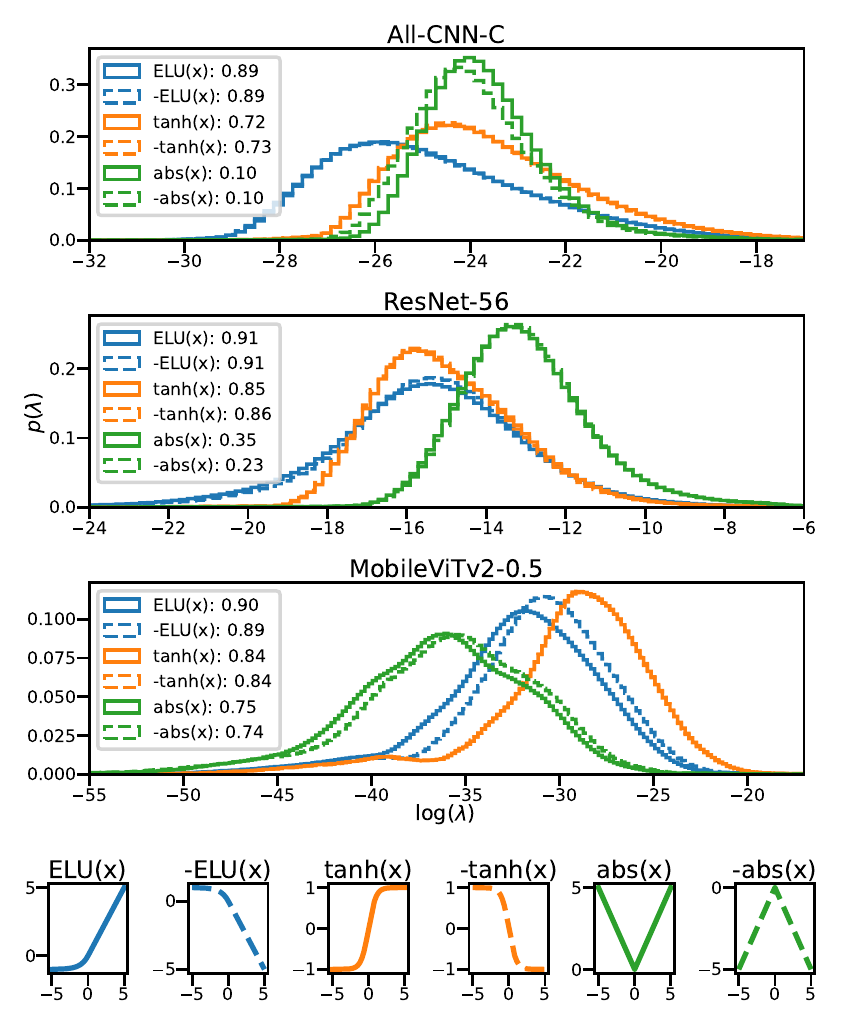}
    \caption{FIM eigenvalue distributions for different architectures and activation functions.  The legends show the activation function and the corresponding validation accuracy in different tasks.  Although negating an activation function changes its shape, it does not substantially change its behavior nor its performance.  FIM eigenvalues capture this relationship between activation functions.  The eigenvalues are thus useful for finding activation functions that appear different but in fact behave similarly, and these discoveries in turn improve the efficiency of activation function search.}
    \label{fig:aquasurf:compare_eigenvalues}
\end{figure}

This issue complicates the search process in two ways.  The first is that good activation functions are mapped to different regions of the embedding space, and so a search algorithm must explore multiple areas in order to find the best function.  The second challenge is that distinct regions of the space may contain redundant information: if $\textrm{ELU}(x)$ is known to be a good activation function, it is not helpful to spend compute resources evaluating $-\textrm{ELU}(x)$ only to discover that it achieves the same performance.  

Negating an activation function is a clear example of a modification that changes the shape of the activation function, but does not affect the training of a neural network.  More broadly, it is likely that there exist activation functions that differ in other ways (besides just negation), but that still induce similar training dynamics in neural networks.  Fortunately, utilizing FIM eigenvalues and activation function outputs together provides enough information to tease out these relationships.  FIM eigenvalues take into account the activation function, the neural network architecture, the loss function, and the data distribution.  The eigenvalues are more meaningful features than activation function outputs, which only depend on the shape of the function.  However, as Figure \ref{fig:aquasurf:umap_embeddings} shows, the FIM eigenvalues are noisier features, while the activation function outputs are quite reliable.  Thus, utilizing both features is a natural way to combine their strengths and address their weaknesses.  

\subsection{Constructing a Surrogate}
These observations suggest an opportunity for an effective surrogate measure: The UMAP coordinates in the bottom row of Figure~\ref{fig:aquasurf:umap_embeddings} have the information needed to predict how well an activation function will perform. They capture the essence of the $m$- and $n$-dimensional feature vectors, and distill it into a 2D representation that can be computed efficiently and used to guide the search for good functions. As the third step in this research, the next two sections evaluate this process experimentally, demonstrating that it is efficient and reliable, and that it scales to new and challenging datasets and search spaces.

\section{Features and Surrogate Details}
\label{sec:aquasurf:features_details}

This section describes how the activation function features were implemented and how the surrogate was constructed.

\paragraph{Calculating FIM Eigenvalues}
The FIM eigenvalues were calculated for each activation function as discussed in Section \ref{sec:aquasurf:features_dist_metrics}.  The eigenvalues were log-scaled for numerical stability.  By definition, the number of eigenvalues is the same as the number of weights in the neural network.  To save space, the eigenvalues were binned to histograms.  For a layer $l$ with $|{\bm \theta}_l|$ weights, $\lfloor |{\bm \theta}_l| / 100 \rfloor$ equally sized bins from $-100$ to $100$ were used.  One histogram was computed for each layer in a network, and all of the histograms were concatenated together into a single feature vector for a given activation function. In this manner, the total dimensionality was 13{,}692 for All-CNN-C, 16{,}500 for ResNet-56, and 11{,}013 for MobileViTv2-0.5.

\paragraph{Calculating Activation Function Outputs}
The activation function outputs $y = f(x)$ were calculated for each activation function $f$ by sampling $n=$1{,}000 values $x \sim \mathcal{N}(0,1)$ and truncating to the range $[-5,5]$.  The same random inputs were used for all activation functions.

\paragraph{Per-Layer FIM Eigenvalues}
In Figure \ref{fig:aquasurf:compare_eigenvalues}, the eigenvalues for the entire network are shown for completeness.  However, the UMAP representations shown in Figure \ref{fig:aquasurf:umap_embeddings} were produced by keeping the eigenvalues at each layer separate and computing a weighted distance between them (according to Equation \ref{eq:aquasurf:dist_fim_eigs}).  As pointed out above, FIM eigenvalues are informative but noisy features.  In preliminary experiments, keeping the eigenvalues separate at each layer reduced some of this noise, resulting in a more informative Figure \ref{fig:aquasurf:umap_embeddings} and consequently improving the performance of the search algorithms.

\paragraph{UMAP Settings}
UMAP exposes a number of parameters that can be used to customize its behavior \cite{mcinnes2018umap}.  The \texttt{metric} parameter determines how distances are computed between points, the \texttt{n\_neighbors} parameter adjusts the tradeoff between the local and global structure of the data, and the \texttt{min\_dist} parameter controls the minimum distance between points in the embedding space.  

The plots in Figure \ref{fig:aquasurf:umap_embeddings} were produced by computing the distances between FIM eigenvalues and activation function outputs.  For the FIM eigenvalues \texttt{UMAP(metric=`manhattan', n\_neighbors=3, min\_dist=0.1)} was used, and for the activation function outputs \linebreak \texttt{UMAP(metric=`euclidean', n\_neighbors=15, min\_dist=0.1)} was used.  The distance metrics were chosen to implement Equations \ref{eq:aquasurf:dist_fim_eigs} and \ref{eq:aquasurf:dist_fn_outputs}.  

In preliminary experiments, decreasing \texttt{n\_neighbors} from the default of 15 down to 3 for the FIM eigenvalues qualitatively improved the embedding for the combined features.  The combined features were visualized with a union model, i.e.\ \texttt{umap\_combined = umap\_fim\_eigs + umap\_fn\_outputs} \cite{mcinnes2018umap}.

\section{Searching on the Benchmark Tasks}
\label{sec:aquasurf:benchmark_search}

Searching for activation functions typically requires training a neural network from scratch in order to evaluate each candidate function fully, which is often computationally expensive.  With the benchmark datasets, all of the results are already precomputed.  This information makes it possible to experiment with different search algorithms and conduct repeated trials to understand the statistical significance of the results.  These results serve to inform both algorithm design and feature selection, as demonstrated in this section.

\subsection{Setup}
Three algorithms were evaluated: weighted $k$-nearest regression with $k=3$ (KNR), random forest regression (RFR), and support vector regression (SVR).  Gaussian Process Regression (GPR) was also evaluated but found to be inconsistent in preliminary experiments (Appendix~\ref{ap:details:aquasurf}). Random search (RS) was included as a baseline comparison; it did not utilize the FIM eigenvalue filtering mechanism.  The algorithms were used out of the box with default hyperparameters from the scikit-learn package \cite{pedregosa2011scikit}.  They were provided different activation function features in order to understand their potential to predict performance.  The features included FIM eigenvalues, activation function outputs, or both.  The features were preprocessed and embedded in a two-dimensional space by UMAP.  These representations are visualized in Figure \ref{fig:aquasurf:umap_embeddings}; the coordinates of each point correspond exactly to the information given to the regression algorithms.

The ReLU activation function is ubiquitous in machine learning. For many neural network architectures, the performance with ReLU is already known \cite{nair2010rectified, nwankpa2018activation, apicella2021survey}, which makes it a good starting point for search.  For this reason, the search algorithms began by evaluating ReLU and seven other randomly chosen activation functions.  In general, such evaluation requires training from scratch, but with the benchmark datasets, it requires only looking up the precomputed results.  The algorithms then used the validation accuracy of these eight functions to predict the performance of all unevaluated functions in the dataset. The activation function with the highest predicted accuracy was then evaluated.  The performance of this new function was then added to the list of known results, and this process continued until 100 activation functions had been evaluated.  Each experiment comprising a different search algorithm, activation function feature set, and benchmark dataset was repeated 100 times.  The full experimental details are in Appendix \ref{ap:details:aquasurf}.

\begin{figure}
    \centering
    \includegraphics[width=0.6\linewidth]{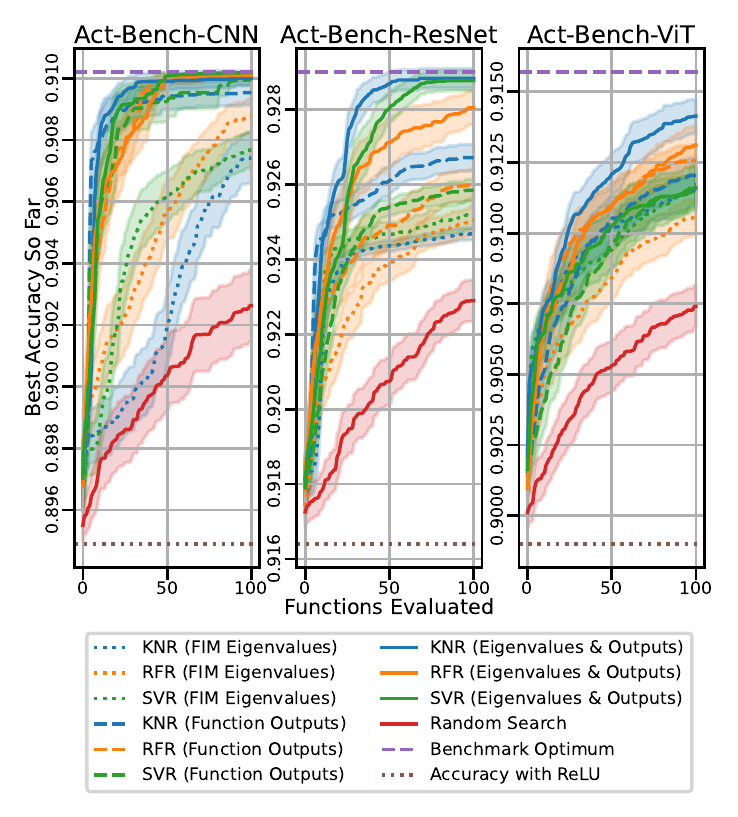}
    \caption{Search results on the three benchmark datasets.  Each curve represents a different search algorithm (KNR, RFR, or SVR) utilizing a different UMAP feature (FIM eigenvalues, function outputs, or both; these features are visualized in Figure \ref{fig:aquasurf:umap_embeddings}).  The curves represent the validation accuracy of the best activation function discovered so far, averaged across 100 independent trials, and the shaded areas show the 95\% confidence interval around the mean.  In all cases, regression with UMAP features outperforms random search, and searching with both eigenvalues and outputs outperforms searching with either feature alone.  Of the three regression algorithms, KNR performs the best, rapidly surpassing ReLU and quickly discovering near-optimal activation functions in all benchmark tasks. Thus, the features make it possible to find good activation functions efficiently and reliably even with off-the-shelf search methods; the benchmark datasets make it possible to demonstrate these conclusions with statistical reliability.}
    \label{fig:aquasurf:benchmark_search}
\end{figure}
\subsection{Results}
Figure \ref{fig:aquasurf:benchmark_search} shows the results of the searches.  Importantly, the curves do not depict just one search trial.  Instead, they represent the average performance aggregated from 100 independent runs, which is made possible by the benchmark datasets.  As indicated by the shaded confidence intervals, the results are reliable and are not simply due to chance.

A number of conclusions can be drawn from Figure \ref{fig:aquasurf:benchmark_search}. First, all search algorithms, even random search, reliably discover activation functions that outperform ReLU.  This finding is supported by previous work (reviewed in Chapter \ref{chap:background}): Although ReLU is a good activation function that performs well in many different tasks, better performance can be achieved with novel activation functions.  Therefore, continuing to use ReLU in the future is unlikely to lead to best results; The choice of the activation function should be an important part of the design, similar to the choice of the network architecture or the selection of its hyperparameters.

Second, all regression algorithms outperform random search.  This finding holds across the three types of activation function features and across the three benchmark datasets.  The FIM eigenvalues and activation function outputs are thus important in predicting performance of activation functions.

Third, regression algorithms trained on both FIM eigenvalues and activation function outputs outperform algorithms trained on just eigenvalues or outputs alone.  This result is consistent across the regression algorithms and benchmark datasets. It suggests that the FIM eigenvalues and activation function outputs contribute complimentary pieces of information.  The finding quantitatively reinforces the qualitative visualization in Figure \ref{fig:aquasurf:umap_embeddings}: FIM eigenvalues are useful for matching activation functions that induce similar training dynamics in neural networks, activation function outputs enable a low-dimensional representation where search is more practical, and combining the two features results in a problem that is more convex and easier to optimize.

Fourth, the searches are efficient.  Previous approaches require hundreds or thousands of evaluations to discover good activation functions \cite{DBLP:conf/iclr/RamachandranZL18, bingham2020gecco, bingham2022discovering}.  In contrast, this chapter leverages FIM eigenvalues and activation function outputs to reduce the problem to simple two-dimensional regression; the features are powerful enough that out-of-the-box regression algorithms can discover good functions with only tens of evaluations.  This efficiency makes it possible to search for better functions directly on large datasets such as ImageNet \cite{deng2009imagenet}, demonstrated next.

\section{Searching on New Tasks}
\label{sec:aquasurf:search_new_tasks}

The experiments in Section \ref{sec:aquasurf:benchmark_search} utilized precomputed datasets and search spaces to demonstrate that UMAP embeddings are useful in predicting activation function performance, and that KNR can find good functions based on them.  In this section, these insights are extended to new datasets and search spaces, demonstrating that \technique scales up to new and more challenging tasks.

\subsection{Setup}
The experiments were scaled up in two ways. First, while the network architectures were the same, the datasets were much larger and more challenging: All-CNN-C on CIFAR-100, ResNet-56 on CIFAR-100, and MobileViTv2-0.5 on ImageNet. Second, a much larger space with 425{,}896 unique activation functions was searched, based on four-node computation graphs (Appendix \ref{ap:details:aquasurf_search_space}). This space is large, diverse, and not precomputed, putting the conclusions from the benchmark experiments to test in a production setting.

Based on the benchmark results, KNR with $k=3$ was used as the search algorithm.  The searches all begin by evaluating the same eight existing activation functions: ELU, ReLU, SELU, sigmoid, Softplus, Softsign, Swish, and tanh.  From this starting point, eight workers operated in parallel evaluating the activation functions with the highest predicted performance. Full experimental details are in Appendix \ref{ap:details:aquasurf}.

\subsection{Results}
Figure \ref{fig:aquasurf:search_progress} shows that all three searches find improved activation functions over time, and Figure \ref{fig:aquasurf:large_search_space} shows how the searches navigate the search space.  In every experiment, new activation functions were discovered that outperform all baseline functions.  Although the search space is large, the searches are efficient, requiring only tens of evaluations to improve performance.  Impressively, the search with ResNet-56 on CIFAR-100 produced an activation function that outperformed all baselines on just the second evaluation.

\begin{figure}
    \centering
    \includegraphics[width=0.75\linewidth]{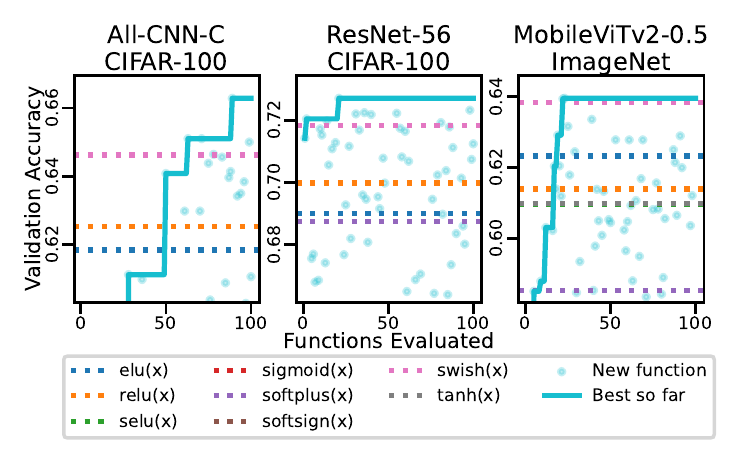}
    \caption{Progress of activation function searches.  Each point represents the validation accuracy with a unique activation function, and the solid line indicates the performance of the best activation function found so far. \technique discovers new activation functions that outperform all baseline functions in every case.}
    \label{fig:aquasurf:search_progress}
\end{figure}
\begin{figure}
    \centering
    \includegraphics[width=0.85\linewidth]{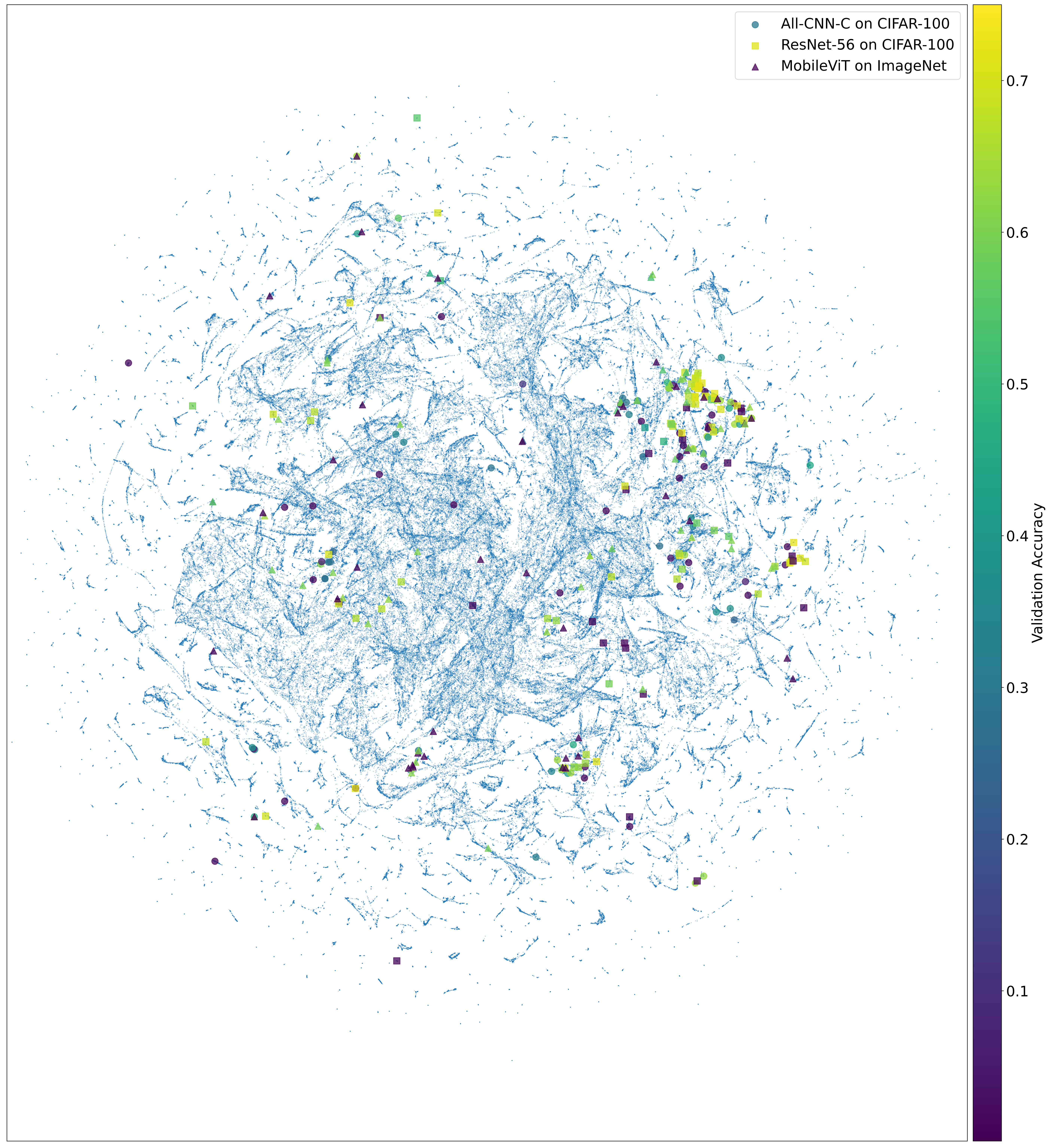}
    \caption{Low-dimensional UMAP representation of the 425{,}896 function search space.  The activation functions are embedded according to their outputs; each point represents a unique function.  The larger points represent activation functions that were evaluated during the searches; they are colored according to their validation accuracy.  Although the space is vast, the searches require only tens of evaluations to discover good activation functions.}
    \label{fig:aquasurf:large_search_space}
\end{figure}

Table \ref{tab:aquasurf:search_results} shows the final results from \technique.  The results reinforce the fact that substantial gains can be obtained when using better activation functions than the default ReLU, and especially those optimized specifically for the task.  The activation functions can also be transferred to new tasks to improve performance, as shown in Table \ref{tab:aquasurf:transfer}.

\newlength{\tabheight}
\setlength{\tabheight}{13ex}

\begin{table}
    \caption{Accuracy with different activation functions.  The CIFAR-100 results show the median test accuracy from three runs, and the ImageNet results show the validation accuracy from a single run.  \technique discovers novel activation functions that outperform all baselines in every case. This result demonstrates both that good functions matter, and the power of optimizing them to the task.\\}
    \centering
    \adjustbox{max height=\tabheight}{%
    \begin{tabular}{ll}
        \toprule
        \multicolumn{2}{c}{All-CNN-C on CIFAR-100} \\
        \midrule
        $\textrm{HardSigmoid}(\textrm{HardSigmoid}(x)) \cdot \textrm{ELU}(x)$ & \textbf{0.6990} \\
        $\sigma(\textrm{Softsign}(x)) \cdot \textrm{ELU}(x)$ & 0.6950 \\
        $\textrm{Swish}(x) / \textrm{SELU}(1)$ & 0.6931 \\ 
        \midrule 
        ELU & 0.6312 \\
        ReLU & 0.6897 \\
        SELU & 0.0100 \\
        sigmoid & 0.0100 \\
        Softplus & 0.6563 \\
        Softsign & 0.2570 \\
        Swish & 0.6913 \\
        tanh & 0.3757 \\
        \bottomrule
    \end{tabular}
    }
    \adjustbox{max height=\tabheight}{%
    \begin{tabular}{ll}
        \toprule
        \multicolumn{2}{c}{ResNet-56 on CIFAR-100} \\
        \midrule
        $\textrm{Swish}(-2x)$ & \textbf{0.7469} \\
        $\textrm{SELU}(\sinh(e^{\arctan(x)}-1))$ & 0.7458 \\
        $x \cdot \textrm{erfc}(\textrm{ELU}(x))$ & 0.7419 \\
        \midrule 
        ELU & 0.7411 \\
        ReLU & 0.7348 \\
        SELU & 0.6967 \\
        sigmoid & 0.5766 \\
        Softplus & 0.7397 \\
        Softsign & 0.6624 \\
        Swish & 0.7401 \\
        tanh & 0.6754 \\
        \bottomrule
    \end{tabular}
    }
    \adjustbox{max height=\tabheight}{%
    \begin{tabular}{ll}
        \toprule
        \multicolumn{2}{c}{MobileViTv2-0.5 on ImageNet} \\
        \midrule
        $-x \cdot \sigma(x) \cdot \textrm{HardSigmoid}(x)$ & \textbf{0.6396} \\
        $\textrm{ELU}(\textrm{Swish}(-x))$ & 0.6394 \\
        $\textrm{Swish}(x) \cdot \textrm{erfc}(\textrm{bessel\_i0e}(x))$ & 0.6336 \\
        \midrule 
        ELU & 0.6233 \\
        ReLU & 0.6139 \\
        SELU & 0.6096 \\
        sigmoid & 0.5032 \\
        Softplus & 0.5853 \\
        Softsign & 0.5710 \\
        Swish & 0.6383 \\
        tanh & 0.6098 \\
        \bottomrule
    \end{tabular}
    }
    \label{tab:aquasurf:search_results}
\end{table}

\begin{table}
    \centering
    \caption{ResNet-50 top-1 accuracy on ImageNet.  Results are the median of three runs.  The best activation functions discovered in the searches (Table \ref{tab:aquasurf:search_results}) successfully transfer to this new task, with eight of the nine functions outperforming ReLU.\\}
    \begin{tabular}{ll}
        \toprule
        $-x \cdot \sigma(x) \cdot \textrm{HardSigmoid}(x)$ & $\bf 0.7776$ \\
        $\textrm{Swish}(x) / \textrm{SELU}(1)$ & $0.7771$ \\
        $\textrm{Swish}(x) \cdot \textrm{erfc}(\textrm{bessel\_i0e}(x))$ & $0.7755$ \\
        $\sigma(\textrm{Softsign}(x)) \cdot \textrm{ELU}(x)$ & $0.7734$ \\
        $\textrm{SELU}(\sinh(e^{\arctan(x)}-1))$ & $0.7719$ \\ 
        $\textrm{HardSigmoid}(\textrm{HardSigmoid}(x)) \cdot \textrm{ELU}(x)$ & $0.7718$ \\
        $\textrm{ELU}(\textrm{Swish}(-x))$ & $0.7679$ \\
        $\textrm{Swish}(-2x)$ & $0.7664$ \\
        $x \cdot \textrm{erfc}(\textrm{ELU}(x))$ & $0.7635$ \\
        \midrule
        $\textrm{ReLU}(x)$ & $0.7660$ \\
        \bottomrule
    \end{tabular}
    \label{tab:aquasurf:transfer}
\end{table}

Figure \ref{fig:aquasurf:function_plots} illustrates the different activation functions discovered during the searches.  Visually, the best functions (shown in \ref{fig:aquasurf:function_plots}a) are similar to existing functions like ELU and Swish, with subtle changes in their saturation value, the slope of the positive segment, and the width and depth of the negative bump. This result is not surprising since these functions formed the starting point for the search process.  Indeed, after a few good functions were discovered, much of the search process focused on refining their design (Figure \ref{fig:aquasurf:large_search_space}). Although these refinements appear small, they were not known ahead of time and they are significant, as evidenced by the final results (Table \ref{tab:aquasurf:search_results}).  

\begin{figure}[ht]
    \centering
    \begin{tikzpicture}
    \draw (0, 0) node[anchor=south west, inner sep=0, align=left] {\includegraphics[width=0.49\linewidth]{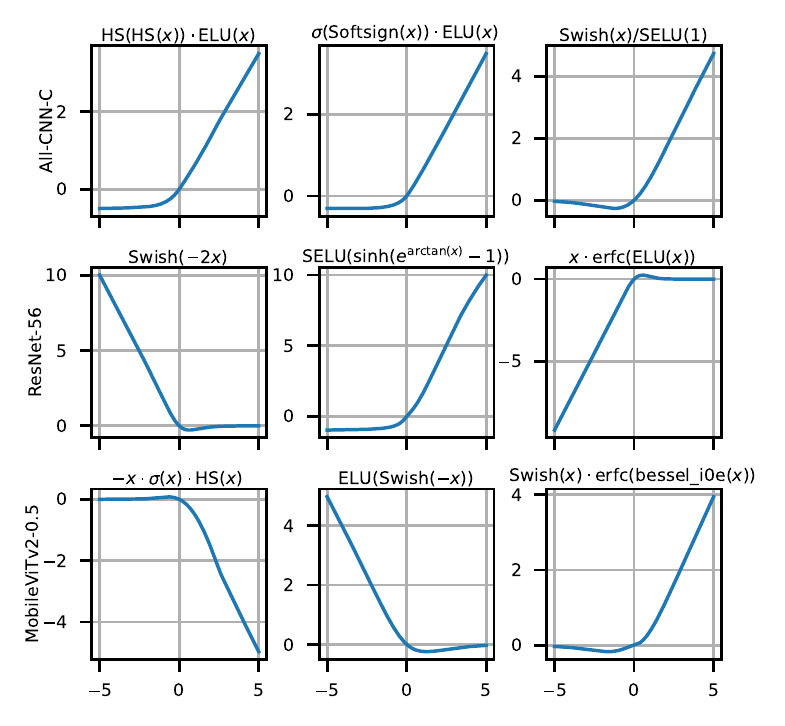}
    \includegraphics[width=0.49\linewidth]{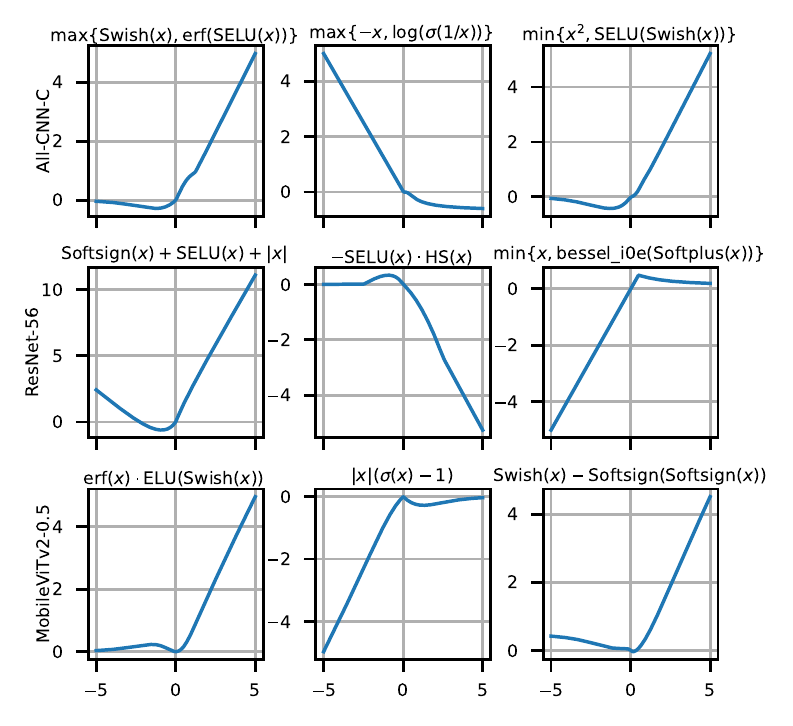}};
    \draw (0.25\linewidth, 0) node[anchor=north, inner sep=0, align=center] {\scriptsize (a)};
    \draw (0.75\linewidth, 0) node[anchor=north, inner sep=0, align=center] {\scriptsize (b)};
    \end{tikzpicture}
    \caption{Sample activation functions discovered with \technique.  ``HS'' stands for HardSigmoid.  (a) The top three functions discovered in each search.  Many of these functions are fine-tuned versions of existing activation functions like ELU and Swish.  (b) Selected novel activation functions.  All of these functions outperformed ReLU and are distinct from existing activation functions.  Such designs may serve as a foundation for further improvement and specialization in new tasks.}
    \label{fig:aquasurf:function_plots}
\end{figure}

With \technique, several interesting activation functions were discovered (Figure~\ref{fig:aquasurf:function_plots}b). While they were not the best, all of them outperformed ReLU.  These functions have properties uncommon among the usual deep learning activation functions:  Many of them have discontinuous derivatives at $x=0$; some do not saturate, but diverge as $x \rightarrow \pm \infty$; in contrast to Swish, which features a negative bump, many of these functions contain positive bumps.  In the future, these designs may be refined further, and perhaps produce better activation functions for specific new tasks. 

Together, the plots show that \technique is capable of both exploitation (Figure \ref{fig:aquasurf:function_plots}a) and exploration (Figure \ref{fig:aquasurf:function_plots}b).  In the future, it will be interesting to consider tradeoffs between these concepts. A more comprehensive discussion of this and other future research directions is in Section~\ref{sec:aquasurf:future_work}.

\section{Discussion}
\label{sec:aquasurf:future_work}

This chapter demonstrated that FIM eigenvalues and activation function outputs are efficient and reliable features that can predict performance of activation functions accurately.  This finding enabled discovering better activation functions for various tasks, improving the state of the art in machine learning.  Because the technique is efficient, it was possible to scale it up to large datasets such as ImageNet.  These discoveries inspire several avenues for future research, discussed below.

\paragraph{New Search Spaces} The PANGAEA search space was used in this chapter because it is known to work well for deep architectures \cite{bingham2022discovering}.  In the future it will be interesting to explore search spaces with different unary, binary, and $n$-ary operators.  Beyond computation graphs, it may also be possible to apply techniques in this chapter to optimize continuous vector representations of activation functions \cite{apl-agostinelli2014learning, pade-molina2019pad}.

\paragraph{Exploration vs. Exploitation}
The KNR approach was utilized to search for new activation functions because it performed well on the benchmark datasets (Section \ref{sec:aquasurf:benchmark_search}).  In the future, it will be interesting to consider other algorithms and analyze their tradeoffs between exploration and exploitation.  For example, in a resource-constrained environment where improvement is needed quickly, a more exploitative approach could be used to find an improved activation function in a short time.  On the other hand, if substantial compute is available, an approach that focuses on exploration could be used to discover activation functions that perform well but are maximally different from functions used in modern architectures (Figure \ref{fig:aquasurf:function_plots}b).  Novelty search \cite{lehman2011abandoning} could serve as a suitable approach, and such discoveries could further understanding of how neural networks utilize different kinds of activation functions to learn.

\paragraph{Optimizing Multiple Activation Functions}
In a typical neural network design, the same activation function is used throughout the network.  However, recent work has shown that it may be beneficial to have different activation functions at different locations, and further, that it may be useful to have different activation functions in the early and late stages of training \cite{bingham2022discovering}.  Indeed, many hybrid architectures use Swish in convolutional layers and ReLU in attention layers \cite{mehta2022separable}.  Unfortunately, it is difficult to design these strategies manually, and so practitioners often use a single activation function for simplicity.

The techniques proposed in this chapter may provide an avenue toward optimizing multiple activation functions in tandem.  For example, the features for multiple candidate activation functions could be concatenated into a single feature vector, and this vector could be projected with UMAP to a low-dimensional space where performance prediction is more straightforward. 

\paragraph{Optimizing Other Aspects of Neural Network Design} 
By fixing the neural network architecture and varying the activation function, this chapter showed that it is possible to use FIM eigenvalues to infer future performance.  As the FIM is a fundamental quantity in machine learning, it may be possible to apply a similar strategy to optimize other aspects of neural network design, such as normalization layers, loss functions, or data augmentation strategies \cite{liu2020evolving, gonzalez2020improved, gonzalez2020evolving, cubuk2018autoaugment}.  If a meaningful distance metric between such objects can be defined, then UMAP could be used to map them to a low-dimensional space where performance prediction is much simpler.

\paragraph{Reverse Engineering Activation Functions}
UMAP was used to project activation functions to a low-dimensional space, and regression algorithms to predict the performance of activation functions in this space, i.e.\ to serve as a fitness function for the search. However, it is possible that there is no activation function that maps to the optimum of this fitness landscape.  Indeed, because such search spaces are finite, the activation functions do not completely fill them.  For example, there are empty regions in Figure \ref{fig:aquasurf:umap_embeddings}, corresponding to activation functions outside of the predefined search space.

What should be done if an empty region of the embedding space has a higher predicted fitness than any of the candidate activation functions?  In the chapter, these regions were simply ignored, and the activation function with the highest predicted fitness was used.  However, in the future, it may be possible to create activation functions that map to these empty spaces, an in so doing improve performance.  One approach could be based on inverse transforms: Given a coordinate in the low-dimensional embedding space, UMAP can apply an inverse transform and return an object that would have mapped to those coordinates.  This technique was already used for visualization in Figure \ref{fig:aquasurf:interpolation}.  Using this approach, UMAP could generate a hypothetical desired FIM eigenvalue distribution, or a list of activation function outputs.  

There are two challenges to this approach.  First, because UMAP is a dimensionality-reduction algorithm, different activation functions can map to the same location in the embedding space.  Thus, the mapping from embedding space back to activation functions is not well defined.  Second, even if UMAP prescribes a FIM eigenvalue distribution that is predicted to result in good performance, it may be difficult to manually design an activation function to satisfy that distribution.

However, a generated list of prescribed activation function outputs is already a good start.  From this list, it is possible to construct an activation function that interpolates through these points, either in a piecewise linear fashion, with splines, or using some other standard technique.  Even without the corresponding FIM eigenvalues, such an approach could potentially improve the efficiency of novel activation function discovery, and lead to better designs for activation functions in the future.

\section{Conclusion}
\label{sec:aquasurf:conclusion}

This chapter introduced three benchmark datasets, \texttt{Act-Bench-CNN}, \texttt{Act-Bench-\allowbreak ResNet}, and \texttt{Act-Bench-ViT}, to support research on activation function optimization.  Experiments with these datasets showed that FIM eigenvalues and activation function outputs, and their low-dimensional UMAP embeddings, predict activation function performance accurately, and can thus be used as a surrogate for finding better functions, even with out-of-the-box regression algorithms.  These conclusions extended from the benchmark datasets to new and challenging real-world tasks, where better activation functions were discovered on CIFAR-100 and ImageNet.  The study reinforces the idea that activation function design is an important part of deep learning, and shows \technique is an efficient and flexible mechanism for doing it.

\chapter{Discussion and Future Work}
\label{chap:discussion}

This chapter reviews the main ideas that were introduced in this dissertation, and includes directions for possible future work.

\section{Optimizing Other Aspects of Neural Network Design}

While much of the literature in AutoML has focused on hyperparameter optimization and neural architecture search (Chapter \ref{chap:background}), and this dissertation focused on activation functions (Chapters \ref{chap:gecco}, \ref{chap:pangaea}, \ref{chap:aquasurf}) and weight initialization (Chapter \ref{chap:autoinit}), in the future it will be important to similarly optimize other aspects of neural network design.  This dissertation showed that automated methods can be more effective and more creative than humans, and it is likely that these benefits extend to other areas as well.  Indeed, existing work has shown that automating the design of loss functions \cite{gonzalez2020improved, gonzalez2020evolving}, learning rate schedules \cite{carvalho2020autolr}, data augmentation strategies \cite{cubuk2018autoaugment, cubuk2020randaugment}, and optimizers \cite{chen2023symbolic} is promising and worthy of future research.

\section{General and Specialized Solutions}

CAFE and PANGAEA showed that general activation functions that perform well across architectures and tasks exist, but that specialized activation functions designed for specific architectures and tasks give the best performance.  To date, this idea of specialized solutions has been overlooked in AutoML, and it will be worth pursuing in the future.  It is likely that similar benefits can be found by discovering specialized versions of other objects like loss functions or data augmentation strategies.  Discovering specialized solutions is often more computationally expensive than finding general solutions, so it will also be important to focus on efficient search algorithms.  For example, AQuaSurF was much more efficient than CAFE and PANGAEA, and made specialized activation function search practical for large tasks like ImageNet.

\section{Joint Optimization of Multiple Components}

This dissertation automated the design of activation functions and weight initialization strategies, and thus made it possible to understand the effects of optimizing multiple components of neural network design simultaneously.  It turns out that doing so is beneficial: Optimizing both the activation function and weight initialization was synergistic, resulting in better performance than optimizing either component alone.  It is likely joint optimization of other aspects of neural networks will lead to similar benefits.  It is also possible that this practice will lead to new and surprising design choices.  For example, simply changing the activation function can inspire designs for a different weight initialization \cite{he2015delving}, dropout strategy \cite{selu}, normalization implementation \cite{liu2020evolving}, and topology \cite{tan2019efficientnet}.  All of the components in a machine learning system interact in complex ways, and the future challenge will be to exploit these interactions in order to find the combinations that lead to the best performance.

\section{Computational Cost Considerations}

AutoML approaches tend to be computationally expensive because evaluating new designs often requires training neural networks from scratch.  These costs will need to be addressed in the future, especially if multiple aspects of neural networks are to be optimized at the same time.  In this dissertation, AQuaSurF utilized a surrogate model to improve efficiency over CAFE and PANGAEA by orders of magnitude.  AutoInit, on the other hand, identified the principle of stable signal propagation, and used this principle to define a mapping from architectures to initialization strategies. AutoInit was thus efficient by design.  By evaluating different neural network designs fairly, AutoInit also accelerated neural architecture search and activation function discovery: processes that are often expensive.  In the future, it will be important to focus on efficient search algorithms, low-cost evaluation proxies, surrogate models, and general principles in order to ensure that AutoML is practical to implement.

\section{Optimizing Alternative Objectives}

The techniques in this dissertation used accuracy as the primary evaluation metric, but in some scenarios other objectives may be more appropriate.  Possibilities include maximizing adversarial robustness, minimizing inference time, customizing designs to run well on specific hardware, or even a multi-objective combination of these.  Future approaches could optimize any of these objectives, and could conceivably present the user with a Pareto front of options that comprise tradeoffs between multiple such objectives.

\section{Designing Better Representations}

Instead of using hand-coded activation function representations like CAFE and PANGAEA, AQuaSurF learned activation function representations in a data-driven way.  These representations turned out to be much more informative, and allowed AQuaSurF to discover good activation functions efficiently.  This result shows that representations matter in AutoML, and also highlights that a human-interpretable representation may not be the easiest to optimize.  It is likely that better encodings can similarly accelerate other AutoML algorithms.  Data-driven methods like AQuaSurF are promising ways to find these representations.  Although constructing benchmark datasets is initially expensive, it is a cost that is only spent once. It may then result in dramatically more efficient search algorithms that practitioners can implement more easily.

\section{Towards Artificial General Intelligence}

As it stands, AutoML approaches are not capable of recursive self-improvement, but rather execute a single step of improvement.  Part of the reason for this limitation is that AutoML algorithms are typically separate from the machine learning system they are optimizing.  Thus, they can only optimize the machine learning system to a limited extent before performance plateaus.  In order to enable recursive self-improvement in the future, it will be important to take the lessons learned from AutoML and implement them in a system that is self-aware.  Large language models (LLMs) may potentially provide a context where such an implementation is possible.  There already exists a vast amount of freely available information about machine learning techniques on websites like GitHub, Stack Overflow, and arXiv.  An LLM trained on these sources of data and aware of its own code could conceivably synthesize the current state of the art in machine learning and use this knowledge to improve upon itself repeatedly, thereby achieving recursive self-improvement.  Such an ability would be a major step in automated machine learning, and thus also towards artificial general intelligence.

\chapter{Conclusion}
\label{chap:conclusion}

This chapter summarizes the main contributions of this dissertation, and concludes by reviewing the progress that this dissertation has made towards fully automated machine learning.

\section{Contributions}

Chapter \ref{chap:gecco} introduced CAFE, which demonstrated that neural network performance can be improved by evolving the design of the activation function.  The chapter utilized exhaustive search in a small search space ($S_1$) and random search and evolution in a larger search space ($S_2$). It introduced novel activation functions that achieve high accuracy, outperforming both standard functions such as ReLU and novel functions such as Swish. The best activation functions successfully transferred from CIFAR-10 to CIFAR-100 and from WRN-28-10 to WRN-40-4. However, the best results were obtained by specializing functions for each architecture and dataset.

Chapter \ref{chap:pangaea} presented PANGAEA, a system that extended the abilities of CAFE by being more flexible in multiple ways.  The search space was extended to include deeper and more complex functional forms, including ones unlikely to be discovered by humans.  Instead of fixed activation functions, parametric activation functions were evolved.  This construction allowed the activation functions to take on different shapes at different locations in the network and in different stages of training.  PANGAEA used a synergy of two different optimization processes: evolutionary population-based search for the general form and gradient descent-based fine-tuning of the parameters of the activation function.  PANGAEA discovered general activation functions that perform well across architectures as well as specialized functions that take advantage of a particular architecture, significantly outperforming previously proposed activation functions in both cases.

Chapter \ref{chap:autoinit} described AutoInit, an algorithm that calculates analytic mean- and variance-preserving weight initialization for neural networks automatically.  In convolutional networks, such an initialization improved performance with different activation functions, dropout rates, learning rates, and weight decay settings.  In residual networks, AutoInit prevented exploding signals, allowed training with higher learning rates, and improved performance with or without batch normalization.  In transformers, AutoInit was scaled up to high-resolution image classification, where it improved performance with several activation functions with and without normalization.  AutoInit also improved accuracy on the ImageNet dataset.  In neural architecture search, new architectures were evaluated more accurately, resulting in better networks in vision, language, tabular, multi-task, and transfer learning settings.  In activation function discovery, AutoInit stabilized training and improved accuracy with a large diversity of novel activation functions.  Thus, AutoInit constitutes an important contribution on its own, but it also demonstrates the power of optimizing multiple aspects of neural network design simultaneously. 

Chapter \ref{chap:aquasurf} introduced \technique, as well as three benchmark datasets: \texttt{Act-Bench-CNN}, \texttt{Act-Bench-ResNet}, and \texttt{Act-Bench-ViT}.  Experiments with these datasets showed that FIM eigenvalues and activation function outputs, and their low-dimensional UMAP embeddings, predict activation function performance accurately, and can thus be used as a surrogate for finding better functions, even with out-of-the-box regression algorithms.  These conclusions extended from the benchmark datasets to new and challenging real-world tasks, where better activation functions were discovered on CIFAR-100 and ImageNet.  \technique demonstrated that learning better representations for activation functions allowed them to be optimized more effectively.  The technique reinforces the importance of activation function design, and makes automating such designs practical for machine learning practitioners to implement.

\section{Big Picture} 

The goal of AutoML is to replace biased and ad hoc human designs with ones that are creative, principled, more effective, and automatically generated.  This dissertation advanced the state of the art in AutoML with automated approaches for activation function design and weight initialization.  This dissertation also showed that specialized solutions give better performance than general ones, and demonstrated the power of optimizing multiple aspects of neural network design jointly.  These findings will be needed in order to realize the potential of fully automated machine learning, and to achieve AGI.

%
%
\appendices
\index{Appendices@\emph{Appendices}}%

\chapter{Training Details}
\label{ap:training_details}

This appendix contains training details for the experiments in the main text.

\section{CAFE}
\label{ap:details:gecco}

This section describes implementation details for the experiments in Chapter \ref{chap:gecco}.

\subsection{Training Setup}

A wide residual network \cite{zagoruyko2016wide} with depth 28 and widening factor 10 (WRN-28-10), implemented in TensorFlow \cite{abadi2016tensorflow}, was trained on the CIFAR-10 and CIFAR-100 image datasets \cite{krizhevsky2009learning}.  The architecture was comprised of repeated residual blocks that apply batch normalization and ReLU prior to each convolution.  In the experiments, all ReLU activations were replaced with a candidate activation function.  No other changes to the architecture were made. Hyperparameters were chosen to mirror those of \citet{zagoruyko2016wide} as closely as possible.  Featurewise center, horizontal flip, and ZCA whitening preprocessing were applied to the datasets.  Dropout probability was 0.3, and the architecture was optimized using stochastic gradient descent with Nesterov momentum 0.9.  WRN-40-4 (a deeper and thinner wide residual network architecture) was also used in some experiments for comparison.

The CIFAR-10 and CIFAR-100 datasets both have 50K training images, 10K testing images, and no standard validation set.  To prevent overfitting, balanced validation splits were created for both datasets by randomly selecting 500 images per class from the CIFAR-10 training set and 50 images per class from the CIFAR-100 training set.  The test set was not modified so that the results can be compared with other work.

\subsection{Search Implementation}

To discover activation functions, a number of search strategies were used.  Regardless of the strategy, the training set always consisted of 45K images while the validation set contained 5K images; the test set was never used during the search.  All ReLU activations in WRN-28-10 were replaced with a candidate activation function and the architecture was trained for 50 epochs.  The initial learning rate was set to 0.1, and decreased by a factor of 0.2 after epochs 25, 40, and 45.  Training for only 50 epochs made it possible to evaluate many activation functions without excessive computational cost.

The top three activation functions by validation accuracy from the entire search were returned as a result.  For each of these functions, a WRN-28-10 was trained from scratch for 200 epochs.  The initial learning rate was set to 0.1, and decreased by a factor of 0.2 after epochs 60, 120, and 160, mirroring the work by Zagoruyko and Komodakis \cite{zagoruyko2016wide}.  After training was complete, the test set accuracy was measured.  The median test accuracy of five runs was reported as the final result, as is commonly done in similar work in the literature \cite{DBLP:conf/iclr/RamachandranZL18}.

\section{PANGAEA}
\label{ap:details:pangaea}

This section describes the implementation details for the experiments in Chapter \ref{chap:pangaea}.

\subsection{Training Details}

\paragraph{Wide Residual Network (WRN-10-4)}

When measuring final performance after evolution, the standard WRN setup was used; all ReLU activations in WRN-10-4 were replaced with the evolved activation function, but no other changes to the architecture were made.  The network was optimized using stochastic gradient descent with Nesterov momentum 0.9.  The network was trained for 200 epochs; the initial learning rate was 0.1, and it was decreased by a factor of 0.2 after epochs 60, 120, and 160.  Dropout probability was set to 0.3, and L2 regularization of 0.0005 was applied to the weights.  Data augmentation included featurewise center, featurewise standard deviation normalization, horizontal flip, and random $32 \times 32$ crops of images padded with four pixels on all sides.  This setup was chosen to mirror the original WRN setup \citep{zagoruyko2016wide} as closely as possible.

During evolution of activation functions, the training was compressed to save time. The network was trained for only 100 epochs; the learning rate began at 0.1 and was decreased by a factor of 0.2 after epochs 30, 60, and 80.  Empirically, the accuracy achieved by this shorter schedule was sufficient to guide evolution; the computational cost saved by halving the time required to evaluate an activation function could then be used to search for additional activation functions.

\paragraph{Residual Network (ResNet-v1-56)}

As with WRN-10-4, when measuring final performance with ResNet-v1-56, the only change to the architecture was replacing the ReLU activations with an evolved activation function.  The network was optimized with stochastic gradient descent and momentum 0.9.  Dropout was not used, and L2 regularization of 0.0001 was applied to the weights.  In the original ResNet experiments \citep{he2016deep}, an initial learning rate of 0.01 was used for 400 iterations before increasing it to 0.1, and further decreasing it by a factor of 0.1 after 32K and 48K iterations.  An iteration represents a single forward and backward pass over one training batch, while an epoch consists of training over the entire training dataset.  In these experiments, the learning rate schedule was implemented by beginning with a learning rate of 0.01 for one epoch, increasing it to 0.1, and then decreasing it by a factor of 0.1 after epochs 91 and 137.  (For example, (48K iterations / 45K training images) * batch size of 128 $\approx$ 137.)  The network was trained for 200 epochs in total.  Data augmentation included a random horizontal flip and random $32 \times 32$ crops of images padded with four pixels on all sides, as in the original setup \citep{he2016deep}.

When evolving activation functions for ResNet-v1-56, the learning rate schedule was again compressed.  The network was trained for 100 epochs; the initial warmup learning rate of 0.01 still lasted one epoch, the learning rate increased to 0.1, and then decreased by a factor of 0.1 after epochs 46 and 68.  When evolving activation functions, their relative performance is more important than the absolute accuracies they achieve.  The shorter training schedule was therefore a cost-efficient way of discovering high-performing activation functions.

\paragraph{Preactivation Residual Network (ResNet-v2-56)}

The full training setup, data augmentation, and compressed learning rate schedule used during evolution for ResNet-v2-56 were all identical to those for ResNet-v1-56 with one exception: with ResNet-v2-56, it is not necessary to warm up training with an initial learning rate of 0.01 \citep{he2016identity}, so this step was skipped.

\paragraph{All-CNN-C} When measuring final performance with All-CNN-C, the ReLU activation function was replaced with an evolved one, but the setup otherwise mirrored that of \citet{springenberg2015striving} as closely as possible.  The network was optimized with stochastic gradient descent and momentum 0.9.  Dropout probability was 0.5, and L2 regularization of 0.001 was applied to the weights.  The data augmentation involved featurewise centering and normalizing, random horizontal flips, and random $32 \times 32$ crops of images padded with five pixels on all sides.  The initial learning rate was set to 0.01, and it was decreased by a factor of 0.1 after epochs 200, 250, and 300.  The network was trained for 350 epochs in total.

During evolution of activation functions, the same training setup was used.  It is not necessary to compress the learning rate schedule as was done with the residual networks because All-CNN-C trains more quickly.

\paragraph{CIFAR-10} As with CIFAR-100, a balanced validation set was created for CIFAR-10 by randomly selecting 500 images from each class, resulting in a training/validation/test split of 45K/5K/10K images.

\subsection{Implementing Custom Activation Functions}
\label{ap:details:pangaea_custom}

This section demonstrates how to implement different activation functions in a TensorFlow neural network.  For example, the code to create the All-CNN-C architecture with a custom activation function is:

{\scriptsize
\begin{verbatim}
def all_cnn_c(args):
    inputs = Input((32, 32, 3))
    x = Conv2D(96, kernel_size=3, strides=(1, 1), padding='same', kernel_regularizer=l2(0.001))(inputs)
    x = CustomActivation(args)(x)
    x = Conv2D(96, kernel_size=3, strides=(1, 1), padding='same', kernel_regularizer=l2(0.001))(x)
    x = CustomActivation(args)(x)
    x = Conv2D(96, kernel_size=3, strides=(2, 2), padding='same', kernel_regularizer=l2(0.001))(x)
    x = Dropout(0.5)(x)

    x = Conv2D(192, kernel_size=3, strides=(1, 1), padding='same', kernel_regularizer=l2(0.001))(x)
    x = CustomActivation(args)(x)
    x = Conv2D(192, kernel_size=3, strides=(1, 1), padding='same', kernel_regularizer=l2(0.001))(x)
    x = CustomActivation(args)(x)
    x = Conv2D(192, kernel_size=3, strides=(2, 2), padding='same', kernel_regularizer=l2(0.001))(x)
    x = CustomActivation(args)(x)
    x = Dropout(0.5)(x)

    x = Conv2D(192, kernel_size=3, strides=(1, 1), padding='same', kernel_regularizer=l2(0.001))(x)
    x = CustomActivation(args)(x)
    x = Conv2D(192, kernel_size=1, strides=(1, 1), padding='valid', kernel_regularizer=l2(0.001))(x)
    x = CustomActivation(args)(x)
    x = Conv2D(10, kernel_size=1, strides=(1, 1), padding='valid', kernel_regularizer=l2(0.001))(x)
    x = CustomActivation(args)(x)

    x = GlobalAveragePooling2D()(x)
    x = Flatten()(x)
    outputs = Activation('softmax')(x)

    return Model(inputs=inputs, outputs=outputs)
\end{verbatim}}

The \texttt{CustomActivation} is a wrapper that resolves to different activation functions depending on the \texttt{args} parameter.  After importing \texttt{from tensorflow.keras.layers import Activation}, built-in activation functions can be implemented as \texttt{Activation('relu')} or \texttt{Activation('tanh')}, for example.  Activation functions that are not built in and do not contain learnable parameters can be implemented with lambda functions.  For example, the Mish activation function can be implemented with \texttt{Activation(lambda x : x * tf.math.tanh(tf.keras.activations.softplus(x)))}.  Finally, activation functions with learnable parameters simply require subclassing a \texttt{Layer} object.  The code used to implement the PAU activation function is below; APL and SPLASH are implemented in a similar manner.

{\tiny
\begin{verbatim}
"""
Padé Activation Units: End-to-end Learning of Flexible Activation Functions in Deep Networks
https://arxiv.org/abs/1907.06732

PAU of degree (5, 4) initialized to approximate Leaky ReLU (0.01)
"""
import tensorflow as tf

from tensorflow.keras.layers import Layer
from tensorflow.keras.initializers import Constant

class PAU(Layer):
    def __init__(self, num_init=None, denom_init=None, param_shape='per-layer', **kwargs):
        super(PAU, self).__init__(**kwargs)
        self.num_init = num_init if num_init else [0.02979246, 0.61837738, 2.32335207, 3.05202660, 1.48548002, 0.25103717]
        self.denom_init = denom_init if denom_init else [1.14201226, 4.39322834, 0.87154450, 0.34720652]
        self.num_weights = []
        self.denom_weights = []
        self.param_shape = param_shape

    def build(self, input_shape):
        if self.param_shape == 'per-layer':
            param_shape = (1,)
        elif self.param_shape == 'per-channel':
            param_shape = list(input_shape[-1:])
        else:
            assert self.param_shape == 'per-neuron'
            param_shape = list(input_shape[1:])

        for i in range(6):
            self.num_weights.append(
                self.add_weight(
                    name=f'a{i}',
                    shape=param_shape,
                    initializer=Constant(self.num_init[i]),
                    trainable=True))
        for i in range(4):
            self.denom_weights.append(
                self.add_weight(
                    name=f'b{i+1}',
                    shape=param_shape,
                    initializer=Constant(self.denom_init[i]),
                    trainable=True))

    def call(self, inputs):
        num = tf.add_n([self.num_weights[i] * tf.math.pow(inputs, i) for i in range(6)])
        denom = 1 + tf.math.abs(tf.add_n([self.denom_weights[i] * tf.math.pow(inputs, i+1) for i in range(4)]))
        return num / denom

    def get_config(self):
        config = super(PAU, self).get_config()
        config.update({'num_init'    : self.num_init,
                       'denom_init'  : self.denom_init,
                       'param_shape' : self.param_shape})
        return config

\end{verbatim}
}

\section{AutoInit}
\label{ap:details:autoinit}

This section describes the details of experiments from Chapter \ref{chap:autoinit}.

\subsection{Convolutional, Residual, and Transformer Network Experiment Details}

This section contains implementation details for the experiments in Sections \ref{sec:autoinit:convolutional}-\ref{sec:autoinit:autoinit_vs_lsuv}.

\paragraph{All-CNN-C} The training setup follows that of \citet{springenberg2015striving} as closely as possible.  The network was trained with SGD and momentum 0.9.  The dropout rate was 0.5 and weight decay as L2 regularization was 0.001.  The data augmentation involved featurewise centering and normalizing, random horizontal flips, and random $32 \times 32$ crops of images padded with five pixels on all sides.  The initial learning rate was 0.01 and was decreased by a factor of 0.1 after epochs 200, 250, and 300 until training ends at epoch 350.

Because \citet{springenberg2015striving} did not specify how they initialized their weights, the networks were initialized with the ``Glorot Uniform'' strategy \citep[also called Xavier initialization; ][]{glorot2010understanding}, where weights were sampled from $\mathcal{U}\left(-\frac{\sqrt{6}}{\sqrt{\texttt{fan\_in} + \texttt{fan\_out}}}, \frac{\sqrt{6}}{\sqrt{\texttt{fan\_in} + \texttt{fan\_out}}}\right)$. This initialization is the default setting in TensorFlow,\footnote{\url{https://github.com/tensorflow/tensorflow/blob/v2.5.0/tensorflow/python/keras/layers/convolutional.py\#L608-L609}} and was sufficient to replicate the results reported by \citet{springenberg2015striving}. 

\paragraph{Residual Networks}
The networks were optimized with SGD and momentum 0.9.  Dropout was not used, and weight decay was 0.0001.  Data augmentation included a random horizontal flip and random $32 \times 32$ crops of images padded with four pixels on all sides.

\paragraph{CoAtNet}

A smaller variant of the CoAtNet architecture\footnote{\url{https://github.com/leondgarse/keras_cv_attention_models/blob/v1.3.0/keras_cv_attention_models/coatnet/coatnet.py\#L199}} was used in order to fit the model and data on the available GPU memory.  The architecture has three convolutional blocks with 64 channels, four convolutional blocks with 128 channels, six transformer blocks with 256 channels, and three transformer blocks with 512 channels.  This architecture is slightly deeper but thinner than the original CoAtNet-0 architecture, which has two convolutional blocks with 96 channels, three convolutional blocks with 192 channels, five transformer blocks with 384 channels, and two transformer blocks with 768 channels \cite{dai2021coatnet}.  The models are otherwise identical.

The training hyperparameters were inspired by \citet{wightman2021resnet} and are common in the literature.  Specifically, images were resized to $160 \times 160$.  The learning rate schedule was increased linearly from $1e^{-4}$ to $4e^{-4}$ for six epochs and it then followed a cosine decay until epoch 105.  Weight decay was set to 0.02 times the current learning rate at each epoch.  The model was trained with batch size 256 and optimized with AdamW \cite{loshchilov2017decoupled}.  Data augmentation included RandAugment applied twice with a magnitude of six \cite{cubuk2020randaugment}.  Mixup and Cutmix were also used with alpha 0.1 and 1.0, respectively \cite{zhang2017mixup, yun2019cutmix}.  The training images were augmented with random resized crops \cite{szegedy2015going} that were at minimum 8\% of the original image; after training the model was evaluated on 95\% center crops.

As discussed in Section \ref{sec:autoinit:mean_variance_estimation}, AutoInit maintains signal variance $\nu=1$, but it is also possible to adjust $\nu$ if desired.  In the CoAtNet experiments, $\nu = 0.01$ was found to give the best performance among $\nu = \{1, 0.1, 0.01, 0.001\}$.  The experiment removing normalization layers used the default of $\nu = 1$.

\paragraph{ImageNet} The experiment in Section \ref{sec:autoinit:imagenet} used the same training setup as the experiments with CoAtNet on Imagenette in the previous paragraph except for two changes.  The batch size was 2{,}048 (512 per GPU across four GPUs), and the maximum learning rate was $3.2e^{-2}$.

\paragraph{Data-Dependent Initialization Comparison}
In the experiments in Section \ref{sec:autoinit:autoinit_vs_lsuv}, a learning rate schedule inspired by superconvergence \cite{smith2019super} was used to save time.  The learning rate increased linearly to 0.1 during the first five epochs, and then decreased linearly for 20 epochs, after which training ended.  The weight decay for All-CNN-C was also decreased by a factor of 10.  This modification is common when networks are trained with superconvergence \cite{smith2019super}.

\paragraph{Initialization Time}
It is important to note that AutoInit does not incur a significant overhead.  Each layer must be visited once to be initialized, so the complexity is $O(L)$ where $L$ is the number of layers.  For example: All-CNN-C, ResNet-56, and ResNet-164 took 1, 33, and 106 seconds to initialize.  The costs are hardware-dependent, but only spent once, and are small compared to the cost of training.

\subsection{Neural Architecture Search Experiment Details}
\label{ap:details:autoinit_enn_details}

This appendix contains implementation details for the experiments in Section \ref{sec:autoinit:nas}.  Table \ref{tab:autoinit:hyperparameters} contains the training hyperparameters, neural network layers, evolutionary hyperparameters, and mutation probabilities used in each of the five tasks.  The five tasks used with CoDeepNEAT are:

\nocite{kingma2014adam}

\begin{table*}
    \centering
    \caption{Configuration of neural architecture search experiments in the five tasks.  Entries with a (B) or (M) suffix apply to CoDeepNEAT blueprints or modules, respectively.  These values were found to work well in preliminary experiments.  When AutoInit is applied to an evolved network, it replaces the weight initialization method selected by evolution, but the setup otherwise remains unchanged.  The neural architecture search experiments were designed to show that AutoInit improves performance in a wide variety of settings, including those with different data modalities, network topologies, computational complexities, and hyperparameter configurations.\\}
    \begin{adjustbox}{max width=0.9\linewidth}
    \begin{tabular}{llllll}
        \toprule
         & \textbf{MNIST} & \textbf{Omniglot} & \textbf{PMLB Adult} & \textbf{Wikipedia Toxicity} & \textbf{Oxford 102 Flower} \\ \midrule 
        \multicolumn{6}{l}{\textbf{Training Hyperparameters}} \\
        Activation & \multicolumn{5}{c}{All domains: \{ReLU, Linear, ELU, SELU\}} \\
        Batch Size & 128 & 1000 iterations & 32 & 128 & 8 \\
        Dropout Rate & $[0.0, 0.7]$ & $[0.0, 0.7]$ & $[0.0, 0.9]$ & $[0.0, 0.5]$ & $[0.0, 0.9]$ \\
        Epochs & 5 & 3 & 25 & 3 & 30 \\
        Filters/Units & [16, 96] & [16, 96] & [64, 192] & [64, 192] & [64, 192] \\
        Kernel Reg. & L2: $[10^{-9}, 10^{-3}]$ & L2: $[10^{-9}, 10^{-3}]$ & \{L1, L2\}: $[10^{-9}, 10^{-3}]$ & \{L1, L2\}: $[10^{-9}, 10^{-3}]$ & \{L1, L2\}: $[10^{-7}, 10^{-3}]$ \\
        Kernel Size & \{1, 3\} & \{1, 3\} & N/A & \{1, 3, 5, 7\} & N/A \\
        Learning Rate & $[10^{-4}, 10^{-2}]$ & $[10^{-4}, 10^{-3}]$ & $[10^{-4}, 10^{-2}]$ & $[10^{-4}, 10^{-2}]$ & $[10^{-4}, 10^{-1}]$ \\
        Optimizer & Adam & Adam & Adam & Adam & SGD (Nesterov=0.9) \\
        Weight Init. & \multicolumn{5}{c}{All domains: \{Glorot Normal, Glorot Uniform, He Normal, He Uniform\} or \textbf{AutoInit}} \\ \midrule
        
        \multicolumn{6}{l}{\textbf{Neural Network Layers}} \\
        Add & \checkmark & & \checkmark & \checkmark \\
        Concatenate & & & \checkmark & \checkmark \\
        Conv1D & & & & \checkmark & \\
        Conv2D & \checkmark & \checkmark & & & \\
        Dense & & & \checkmark & & \checkmark \\
        Dropout & \checkmark & \checkmark & \checkmark & & \checkmark \\
        GRU & & & & \checkmark & \\
        LSTM & & & & \checkmark & \\
        MaxPooling1D & & & & \checkmark & \\
        MaxPooling2D & \checkmark & \checkmark \\
        SpatialDropout1D & & & & \checkmark & \\
        WeightedSum & & \checkmark \\ \midrule
        
        \multicolumn{6}{l}{\textbf{Evolutionary Hyperparameters}} \\
        Elitism (B) & 0.4 & 0.4 & 0.4 & 0.2 & 0.1 \\
        Elitism (M) & 0.4 & 0.4 & 0.4 & 0.2 & 0.4 \\ 
        Evaluations (B) & 4 & 4 & 4 & 4 & 1 \\
        Generations & 30 & 40 & 30 & 30 & 30 \\
        Population Size (B) & 22 & 22 & 22 & 22 & 20 \\
        Population Size (M) & 56 & 56 & 56 & 56 & 20 \\
        Preserved Networks & 12 & 12 & 12 & 12 & 1 \\ 
        Species (B) & 1 & 1 & 1 & 1 & 1 \\
        Species (M) & 4 & 4 & 4 & 4 & 2 \\ \midrule
        
        \multicolumn{6}{l}{\textbf{Mutation Probabilities}} \\
        Change Hyperparam. & 0.25 & 0.25 & 0.25 & 0.5 & 0.5 \\
        New Connection (B) & 0.12 & 0.12 & 0.12 & 0.2 & 0.12 \\
        New Connection (M) & 0.08 & 0.08 & 0.08 & 0.2 & 0.08 \\
        New Layer (M) & 0.08 & 0.08 & 0.08 & 0.2 & 0.08 \\
        New Node (B) & 0.16 & 0.16 & 0.16 & 0.2 & 0.16 \\

        \bottomrule
    \end{tabular}
    \end{adjustbox}

    \label{tab:autoinit:hyperparameters}
\end{table*}

\paragraph{Vision} The MNIST dataset contains 28x28 grayscale images of handwritten digits 0-9.  There are 60,000 training images (5,000 of which were used for validation) and 10,000 test images \cite{lecun2010mnist}.  MNIST was used under the Creative Commons Attribution-Share Alike 3.0 license.

\paragraph{Language} In the Wikipedia Toxicity dataset, the task is to classify English Wikipedia comments as toxic or healthy contributions \cite{wulczyn2017ex}.  The dataset contains 92,835, 31,227, and 30,953 comments in the training, validation, and test sets, respectively.  

\paragraph{Tabular} In the Adult dataset \cite{kohavi1996scaling} from the Penn Machine Learning Benchmarks repository \citep[PMLB;][]{Olson2017PMLB}) the task is to predict whether an individual makes over \$50K per year based on 14 features.  Out of 48,842 total instances, 20\% were randomly separated to create a test set.  The dataset was used under the MIT License.

\paragraph{Multi-Task} The Omniglot dataset contains handwritten characters in 50 different alphabets \cite{lake2015human}; classifying characters in each alphabet is a natural multi-task problem.  The characters are $105 \times 105$ grayscale images, and there are 20 instances of each character.  To save compute resources, 20 of the 50 alphabets were randomly selected for experiments.  A fixed training, validation, and testing split of 50\%, 20\%, and 30\% was used with each task.  The learning rate decayed as $\texttt{learning\_rate} = 0.1^{\texttt{epoch}/10}*\texttt{initial\_learning\_rate}$ during training.  The dataset was used under the MIT License.

\paragraph{Transfer Learning} A DenseNet-121 network was first pretrained on the ImageNet dataset \cite{deng2009imagenet, huang2017densely}. Models were then evolved to utilize its embeddings to classify images in the Oxford 102 Flower dataset, consisting of 102 types of flowers found in the United Kingdom \cite{Nilsback08}.  Each class has between 40 and 258 images; the training and validation sets have 10 images per class, and the test set contains the remaining images from the dataset.  During training, the weight decay (L2 loss) was scaled by the current learning rate.  Images were also augmented to improve generalization performance.  Images were randomly flipped horizontally, rotated up to 40 degrees, shifted up/down and left/right up to 20\%, and shear intensity and zoom range varied up to 20\%.

\section{AQuaSurF}
\label{ap:details:aquasurf}

This section specifies the details for the experiments in Chapter \ref{chap:aquasurf}.

\subsection{Implementation Details}

\paragraph{Training Details}
\newcommand\tabwidthmul{0.745}

\begin{table}
    \centering
    \caption{Training details and hyperparameter values used in the experiments in Chapter \ref{chap:aquasurf}.\\}
    \adjustbox{max width=\tabwidthmul\linewidth}{%
    \begin{tabular}{ll}
        \toprule
        \multicolumn{2}{c}{All-CNN-C on CIFAR-10 and CIFAR-100} \\
        \midrule
        Batch Size & 128 \\
        Dropout & 0.5 \\
        Epochs & 25 for \texttt{Act-Bench-CNN} and search (Figure \ref{fig:aquasurf:search_progress}), 50 for full evaluation (Table \ref{tab:aquasurf:search_results}) \\
        Image Size & $32 \times 32$ \\
        Learning Rate & Linear warmup to 0.1 for five epochs, then linear decay \\
        Mean/Std. Normalization & Yes \\
        Momentum & 0.9 \\
        Optimizer & SGD \\
        Random Crops & $ 32 \times 32$ crops of images padded with four pixels on all sides \\
        Random Flips & Yes \\
        Weight Decay & $1e^{-4}$ \\
        Weight Initialization & AutoInit \cite{bingham2021autoinit} \\
        \bottomrule
    \end{tabular}
    }\\
    \vspace{0.5em}

    \adjustbox{max width=\tabwidthmul\linewidth}{%
    \begin{tabular}{ll}
        \toprule
        \multicolumn{2}{c}{ResNet-56 on CIFAR-10 and CIFAR-100} \\
        \midrule
        Batch Size & 128 \\
        Dropout & 0.0 \\
        Epochs & 25 for \texttt{Act-Bench-ResNet} and search (Figure \ref{fig:aquasurf:search_progress}), 50 for full evaluation (Table \ref{tab:aquasurf:search_results}) \\
        Image Size & $32 \times 32$ \\
        Learning Rate & Linear warmup to 0.1 for five epochs, then linear decay \\
        Mean/Std. Normalization & No \\
        Momentum & 0.9 \\
        Optimizer & SGD \\
        Random Crops & $ 32 \times 32$ crops of images padded with five pixels on all sides \\
        Random Flips & Yes \\
        Weight Decay & $1e^{-4}$ \\
        Weight Initialization & AutoInit \cite{bingham2021autoinit} \\
        \bottomrule
    \end{tabular}
    }\\
    \vspace{0.5em}

    \adjustbox{max width=\tabwidthmul\linewidth}{%
    \begin{tabular}{ll}
        \toprule
        \multicolumn{2}{c}{MobileViTv2-0.5 on Imagenette and ImageNet} \\
        \midrule
        Batch Size & 256 \\
        CutMix Alpha \cite{yun2019cutmix} & 1.0 \\
        Epochs & 105 \\
        Evaluation Center Crop & 95\% \\
        Image Size & $160 \times 160$ \\
        Learning Rate & Linear warmup from $1e^{-4}$ to $4e^{-3}$ for five epochs, then cosine decay to $1e^{-6}$\\
        Mixup Alpha \cite{zhang2017mixup} & 0.1 \\
        Optimizer & AdamW \cite{loshchilov2017decoupled} \\
        RandAugment \cite{cubuk2020randaugment} & Magnitude six, applied twice \\
        Random Resized Crop \cite{szegedy2015going} & Minimum 8\% of the original image \\
        Weight Decay & $0.02\times$ current learning rate \\
        \bottomrule
    \end{tabular}
    }\\
    \vspace{0.5em}

    \adjustbox{max width=\tabwidthmul\linewidth}{%
    \begin{tabular}{ll}
        \toprule
        \multicolumn{2}{c}{ResNet-50 on ImageNet} \\
        \midrule
        Batch Size & 256 \\
        CutMix Alpha \cite{yun2019cutmix} & 1.0 \\
        Epochs & 105 \\
        Evaluation Center Crop & 95\% \\
        Image Size & $160 \times 160$ \\
        Learning Rate & Linear warmup from $1e^{-4}$ to $2e^{-3}$ for five epochs, then cosine decay to $1e^{-6}$\\
        Mixup Alpha \cite{zhang2017mixup} & 0.1 \\
        Optimizer & AdamW \cite{loshchilov2017decoupled} \\
        RandAugment \cite{cubuk2020randaugment} & Magnitude six, applied twice \\
        Random Resized Crop \cite{szegedy2015going} & Minimum 8\% of the original image \\
        Weight Decay & $0.02\times$ current learning rate \\
        Weight Initialization & AutoInit \cite{bingham2021autoinit} \\
        \bottomrule
    \end{tabular}
    }

    \label{tab:aquasurf:training_details}
\end{table}

For CIFAR-10 and CIFAR-100, balanced validation sets were created by sampling 5{,}000 images from the training set.  Full training details and hyperparameters are listed in Table \ref{tab:aquasurf:training_details}.

\paragraph{Search Implementation}
In order to predict performance for an unevaluated activation function, the function outputs and FIM eigenvalues must first be computed.  Thus, the searches in Section \ref{sec:aquasurf:search_new_tasks} were implemented in three steps.  First, activation function outputs for all 425,896 activation functions in the search space were calculated.  This computation is inexpensive and easily parallelizable.  Second, eight workers operated in parallel to sample activation functions uniformly at random from the search space and calculate their FIM eigenvalues.  Third, once the number of activation functions with FIM eigenvalues calculated reached 5{,}000, seven of the workers began the search by evaluating the functions with the highest predicted performance.  The eighth worker continued calculating FIM eigenvalues for new functions so that their performance could be predicted during the search. This setup allowed taking best advantage of the available compute for the regression-type search methods.

The experiments on ImageNet required substantially more compute than the experiments on CIFAR-100.  For this reason, all eight workers evaluated activation functions once the number of functions with FIM eigenvalues reached 7{,}000.

Computing FIM eigenvalues took approximately 26 seconds, 84 seconds, and 37 seconds per activation function for All-CNN-C, ResNet-56, and MobileViTv2-0.5, respectively.  This cost is not trivial, but it is well worth it, as the experiments in Chapter \ref{chap:aquasurf} show.

\paragraph{Unique Activation Functions}
Different computation graphs can result in the same activation function (e.g.\ $\max\{x,0\}$ and $\max\{0,x\}$).  In the benchmark dataset and in the larger search space of Section \ref{sec:aquasurf:search_new_tasks}, repeated activation functions were filtered out.  A total of 1{,}000 inputs were sampled $\mathcal{N}(0,1)$ and truncated to $[-5,5]$.  Two activation functions were considered the same if their outputs were identical.

\subsection{Preliminary Experiments}

Several variations to the approach presented in the main text were also evaluated in preliminary experiments. The approach turned out to be robust to most of them, but the results also justify the choices used for the main experiments.

\begin{figure}
    \centering
    \includegraphics[width=\linewidth]{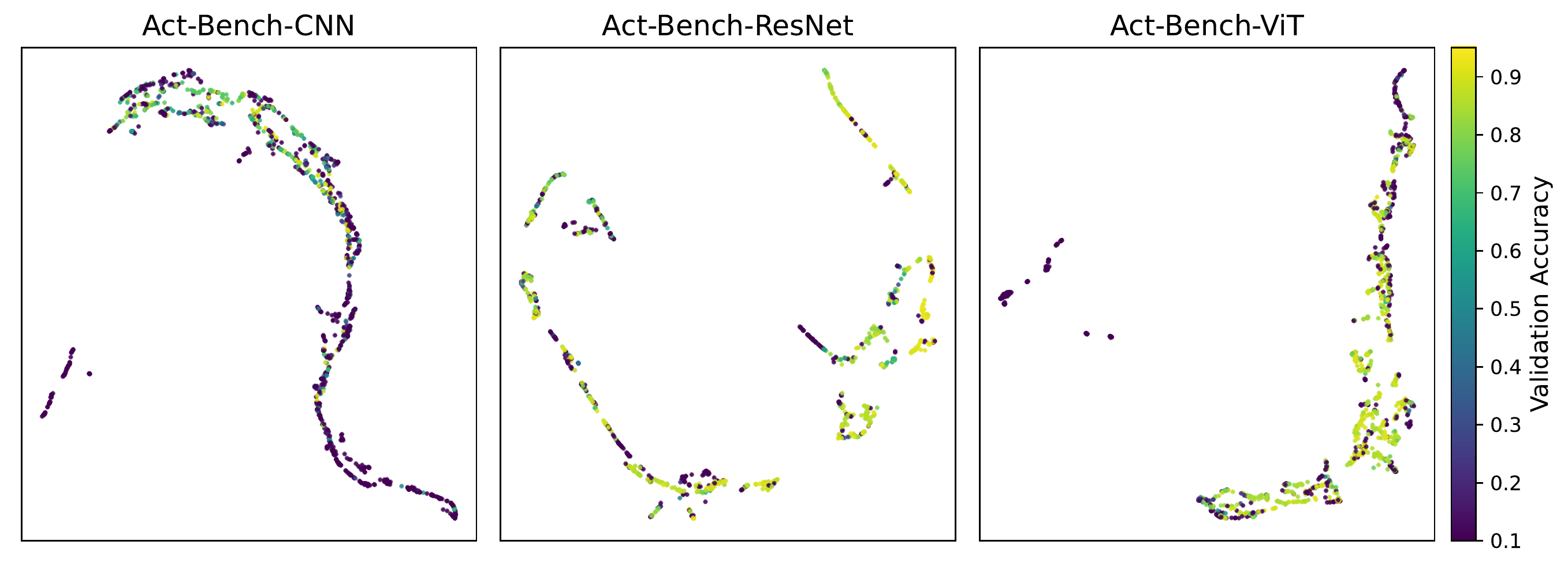}
    \caption{UMAP projections of FIM eigenvalues using the default hyperparameter of \texttt{n\_neighbors=15}.  The embedding is informative but also noisy. 
 Using \texttt{n\_neighbors=3}, as shown in the main text, improved performance.}
    \label{fig:aquasurf:fim_default_n_neighbors}
\end{figure}

\paragraph{Improving the Combined UMAP Projection}
Figure \ref{fig:aquasurf:fim_default_n_neighbors} displays a projection of FIM eigenvalues using default UMAP hyperparameters.  The plots show the eigenvalues organized in multiple distinct one-dimensional manifolds.  Again, FIM eigenvalues are noisy features; there are some clusters of activation functions achieving similar performance, but there are also regions where performance varies widely.  As mentioned in the main text, this issue was addressed by reducing the UMAP parameter \texttt{n\_neighbors} to three.  This change reduced the connectivity of the low-dimensional FIM eigenvalue representation, resulting in a space with many distinct clusters (as seen in Figure \ref{fig:aquasurf:umap_embeddings}).  

On its own, this setting did not improve the search on the benchmark datasets.  However, it did improve performance when the FIM eigenvalues were combined with activation function outputs (as was discussed in Section \ref{sec:aquasurf:visualizing_umap}).  The reason is that the UMAP model for the activation function outputs did not decrease \texttt{n\_neighbors}, and so the combined UMAP model relied more on the activation function outputs than it did on the FIM eigenvalues.  As Figure \ref{fig:aquasurf:umap_embeddings} shows, the activation function outputs are reliable but sometimes project good activation functions to distinct regions in the search space.  Introducing extra connectivity into the fuzzy topological representation via the FIM eigenvalues was sufficient to address this issue, bringing good activation functions to common regions of the space.

\paragraph{Increasing the Dimension of the UMAP Projections}
The UMAP plots show two-dimensional projections of FIM eigenvalues and activation function outputs.  Regression algorithms were also trained on five and 10-dimensional projections.  These runs resulted in comparable or worse performance. Therefore, the two-dimensional projections were selected for simplicity and for consistency between the algorithm implementation and figure visualizations.

\paragraph{Gaussian Process Regression}
As an alternative search method, Gaussian process regression (GPR) was evaluated in activation function search. Several different acquisition mechanisms were used, including expected improvement, probability of improvement, maximum predicted value, and upper confidence bound.  The approach worked well, but the results were inconsistent across the different acquisition mechanisms.  GPR was also more expensive to run compared to the algorithms in the main text (KNR, RFR, SVR), and so those algorithms were used instead for simplicity and efficiency.

\paragraph{Adjusting $k$ in KNR}
The initial experiments with the KNR algorithm used $k=3$.  Experimenting with $k=\{1,5,8\}$ did not reliably improve performance, so $k=3$ was kept.

\paragraph{Uniformly Spaced Inputs for Activation Function Outputs}
In an alternative implementation, equally spaced inputs from $-5$ to $5$ were given to the activation functions instead of normally distributed inputs. This variation did not noticeably change the quality of the embeddings nor the performance of the search algorithms. Therefore, normal inputs were used for consistency with Equation \ref{eq:aquasurf:dist_fn_outputs}.  Figure \ref{fig:aquasurf:interpolation} is the only exception; it used 80 inputs equally spaced from $-5$ to $5$ and increased the UMAP parameter \texttt{min\_dist} to 0.5.  These settings improved the quality of the reconstructed activation functions in the plot.

\subsection{Activation Function Search Spaces}
\label{ap:details:aquasurf_search_space}

The activation functions in this chapter were implemented as computation graphs from the PANGAEA search space \cite{bingham2022discovering}. The space includes unary and binary operators, in addition to existing activation functions \cite{nair2010rectified, elu, selu, DBLP:conf/iclr/RamachandranZL18, elfwing2018sigmoid}. This approach allows specifying families of functions in a compact manner. It is thus possible to focus the search on a space where good functions are likely to be located, and also to search it comprehensively.

\paragraph{Benchmark Datasets}
The benchmark datasets introduced in Section \ref{sec:aquasurf:activation_function_datasets} contain every activation function of the three-node form \texttt{binary(unary($x$),unary($x$))} using the operators in Table \ref{tab:aquasurf:search_space}.  The result is 5{,}103 activation functions, of which 2{,}913 are unique. This space is visualized in Figure~\ref{fig:aquasurf:umap_embeddings}.  

\begin{table}
    \centering
    \caption{Activation function search spaces were defined through computation graphs consisting of basic unary and binary operators as well as existing activation functions \cite{bingham2022discovering}.\\}
    \begin{adjustbox}{max width=\linewidth}
    \begin{tabular}{llll} \toprule 
        \multicolumn{3}{c}{\textbf{Unary}} & \multicolumn{1}{c}{\textbf{Binary}} \\ \midrule
        $0$         & $\textrm{erf}(x)$     & $\textrm{ReLU}(x)$        & $x_1 + x_2$        \\
        $1$         & $\textrm{erfc}(x)$    & $\textrm{ELU}(x)$         & $x_1 - x_2$        \\
        $x$         & $\textrm{sinh}(x)$    & $\textrm{SELU}(x)$        & $x_1 \cdot x_2$    \\
        $-x$        & $\textrm{cosh}(x)$    & $\textrm{Swish}(x)$       & $x_1 / x_2$        \\
        $|x|$       & $\textrm{tanh}(x)$    & $\textrm{Softplus}(x)$    & $x_1^{x_2}$        \\
        $x^{-1}$    & $\textrm{arcsinh}(x)$ & $\textrm{Softsign}(x)$    & $\max\{x_1, x_2\}$ \\
        $x^2$       & $\textrm{arctan}(x)$  & $\textrm{HardSigmoid}(x)$ & $\min\{x_1, x_2\}$ \\
        $e^x$       & $e^x-1$               & $\textrm{bessel\_i0e}(x)$ &                    \\
        $\sigma(x)$ & $\log(\sigma(x))$     & $\textrm{bessel\_i1e}(x)$ &                    \\
        \bottomrule
    \end{tabular}
    \end{adjustbox}
    \label{tab:aquasurf:search_space}
\end{table}

For \texttt{Act-Bench-CNN} and \texttt{Act-Bench-ResNet}, the accuracies are the median from three runs.  For \texttt{Act-Bench-ViT}, the results are from single runs due to computational costs.

\paragraph{New Tasks}
The experiments in Section \ref{sec:aquasurf:search_new_tasks} utilized a larger search space.  Specifically, it was based on the following four-node computation graphs: 
\texttt{binary(unary(unary($x$)),unary($x$))},
\texttt{binary(unary($x$),unary(unary($x$)))},
\texttt{n-ary(unary($x$),unary($x$),unary($x$))},\\
\texttt{unary(binary(unary($x$),unary($x$)))}, and
\texttt{unary(unary(unary(unary($x$))))}.
The unary and binary nodes used the operators in Table \ref{tab:aquasurf:search_space}, and the $n$-ary node used the sum, product, maximum, and minimum operators.  Together, these computation graphs create a search space with 1{,}023{,}516 functions, of which 425{,}896 are unique. This space is visualized in Figure~\ref{fig:aquasurf:large_search_space}.

\chapter{Compute Infrastructure and Cost}
\label{ap:infrastructure}

This appendix describes the infrastructure used to run the experiments in the main text, and the computational cost incurred from running them.

\section{Compute Infrastructure}

Four main systems were used to run experiments for this dissertation.  

\paragraph{HTCondor}
In the first system, HTCondor \cite{thain2005distributed} was used for scheduling jobs.  Jobs were placed on NVIDIA GeForce GTX 1080 and 1080 Ti GPUs.

\paragraph{Slurm}
The second system used the Slurm workload manager, and also used 1080 and 1080 Ti GPUs.

\paragraph{StudioML}
In the third system, StudioML software \cite{gorunner, StudioML} was used to place jobs on machines with NVIDIA GeForce GTX 1080 Ti and RTX 2080 Ti GPUs.

\paragraph{AWS}
The fourth system was an AWS \texttt{g5.48xlarge} instance with eight NVIDIA A10G GPUs.  The instance ran in Oregon (\texttt{us-west-2}) and was powered by renewable energy, so the experiments with this system contributed no carbon emissions.

\section{Cost}

The overall cost of each system depends on the hardware used.  With this detail in mind, the costs for the four systems in this dissertation in terms of GPU hours are estimated below.

\subsection{CAFE}
Ten generations of evolution took approximately 2,000 GPU hours using the HTCondor system.

\subsection{PANGAEA}
PANGAEA experiments used both the HTCondor and Slurm systems.  When a job began executing, a parent activation function was selected by sampling $S=16$ functions from the $P=64$ most recently evaluated activation functions.  This is a minor difference from the original regularized evolution \citep{real2019regularized}, which is based on a strict sliding window of size $P$.  This approach may haven given extra influence to some activation functions, depending on how quickly or slowly jobs were executed in each of the clusters.  In practice the method was highly effective; it allowed evolution to progress quickly by taking advantage of extra compute when demand on the clusters was low. 

It is difficult to know ahead of time how computationally expensive the evolutionary search will be.  Some activation functions immediately resulted in an undefined loss, causing training to end.  In that case only a few seconds had been spent and another activation function could immediately be evaluated.  Other activation functions trained successfully, but their complicated expressions resulted in longer-than-usual training times.  In these experiments, evolution for WRN-10-4 took 2,314 GPU hours, evolution for ResNet-v1-56 took 1,594 GPU hours, and evolution for ResNet-v2-56 took 2,175 GPU hours.  These numbers do not include costs for reranking and repeated runs in the final experiments.  

\subsection{AutoInit}
AutoInit experiments were run using the StudioML system, while the CoAtNet experiments (Section \ref{sec:autoinit:coatnet}) used the AWS system. Training CoAtNet on Imagenette required an average of 0.91 GPU hours per run.  Training CoAtNet on ImageNet once took 119.89 GPU hours.  Training on Imagenette instead of ImageNet therefore required $119.89 / 0.91 \approx 132$ times less compute.

\subsection{AQuaSurF}
AQuaSurF was implemented using the AWS system.  The total compute cost for the search experiments in Section \ref{sec:aquasurf:search_new_tasks} was 14.49 GPU-hours for All-CNN-C on CIFAR-100, 21.67 GPU-hours for ResNet-56 on CIFAR-100, and 196.25 GPU-days for MobileViTv2-0.5 on ImageNet.  This cost includes the time to train the eight baseline activation functions and then to evaluate 100 additional functions.


\phantomsection
\addcontentsline{toc}{chapter}{Bibliography}
\bibliographystyle{abbrvnat}  
\bibliography{references}        
\index{Bibliography@\emph{Bibliography}}



\begin{vita}
Garrett Joseph Bingham was born in Provo, Utah in 1996.  He received the Bachelor of Science degree in Computer Science \& Mathematics from Yale University in May 2019 and enrolled at the University of Texas at Austin in August 2019.

\end{vita}

\end{document}